\documentclass{article}

% if you need to pass options to natbib, use, e.g.:
%\PassOptionsToPackage{sort, numbers, compress}{natbib}
% before loading neurips_2024

% ready for submission
\usepackage[final]{neurips_2024}   % Remove final for submission, I think.

% to compile a preprint version, e.g., for submission to arXiv, add add the
% [preprint] option:
%     \usepackage[preprint]{neurips_2024}

% to compile a camera-ready version, add the [final] option, e.g.:
%     \usepackage[final]{neurips_2024}

% to avoid loading the natbib package, add option nonatbib:
%    \usepackage[nonatbib]{neurips_2024}

%%%
%%% The following new packages are added before template packages to avoid clashes
\usepackage{amsmath}
\usepackage{silence}
\usepackage{xurl}
\usepackage{hyperref}
\hypersetup{colorlinks, linkcolor = blue, citecolor=magenta, linktocpage=true} % linktocpage ensures only the numbers are links
\WarningsOff[hyperref]
\usepackage[capitalize,noabbrev]{cleveref}

%%%
%%% Packages from original template
\usepackage[utf8]{inputenc} % allow utf-8 input
\usepackage[T1]{fontenc}    % use 8-bit T1 fonts
\usepackage{hyperref}       % hyperlinks
\usepackage{url}            % simple URL typesetting
\usepackage{booktabs}       % professional-quality tables
\usepackage{amsfonts}       % blackboard math symbols
\usepackage{nicefrac}       % compact symbols for 1/2, etc.
\usepackage{microtype}      % microtypography
\usepackage{xcolor}         % colors

%%%
%%% Newly added packages
% For non-toc-sections
\newcommand{\nocontentsline}[3]{}
\newcommand{\tocless}[2]{\bgroup\let\addcontentsline=\nocontentsline#1{#2}\egroup}
\newcommand{\toclesslab}[3]{\bgroup\let\addcontentsline=\nocontentsline#1{#2\label{#3}}\egroup}

% Recommended, but optional, packages for figures and better typesetting:
\usepackage{microtype}
\usepackage{graphicx}
\usepackage{subcaption}
\usepackage{booktabs} % for professional tables

\usepackage{xcolor}

% For tables
\usepackage{longtable}
\usepackage{multirow}

\usepackage{comment} % for making comments
\usepackage{enumerate}  % For using roman letters in enumerate environment

\usepackage{tikz-cd}
\usetikzlibrary{shapes}
\usepackage{wrapfig}

\usepackage{changepage}

\usepackage{commenting}
\declareauthor{se}{SE}{cyan}
\authorcommand{se}{comment}
\declareauthor{erik}{EJ}{orange}
\authorcommand{erik}{comment}
\declareauthor{leon}{LL}{blue}
\authorcommand{leon}{comment}
\declareauthor{davis}{DF}{violet}
\authorcommand{davis}{comment}
\declareauthor{outcomment}{OC}{brown}  % Used for commenting paragraphs etc. out that we still want to be able to see in the pdf in draft mode, and will potentially include in a camera-ready version.
\authorcommand{outcomment}{comment}
\WarningFilter{latex}{Marginpar on page}

\usepackage{amsfonts,bm, amsthm, thmtools, dsfont, mathtools, amssymb}

%Usual notation for common rings
\newcommand{\N}{\mathds{N}}

\newcommand{\R}{\mathds{R}}

% I only use the plain style for visual reasons!
% proposition, lemma, corollary, theorem
\declaretheorem[numberwithin=section]{theorem}
\declaretheorem[style=plain, name=Proposition, sibling=theorem]{proposition}
\declaretheorem[style=plain, name=Lemma, sibling=theorem]{lemma}
\declaretheorem[style=plain, name=Corollary, sibling=theorem]{corollary}

% Definition, Example
\declaretheorem[style=plain, name=Definition, sibling=theorem]{definition}
\declaretheorem[style=plain, name=Example, sibling=theorem]{example}
%Remark, Question, Assumption, Hypothesis
\declaretheorem[style=plain, name=Remark, sibling=theorem]{remark}
\declaretheorem[style=plain, name=Question, sibling=theorem]{question}

%%% CIRL notation

% CIRL Game
%\newcommand{\game}{M}

\newcommand{\states}{\mathcal{S}}
\newcommand{\actions}{\mathcal{A}}

%\newcommand{\kernel}{T}
%\newcommand{\rparams}{\Theta}
%\newcommand{\reward}{R}
%\newcommand{\pinit}{P_0}
%\newcommand{\discount}{\gamma}

%Further notation from CIRL
%\newcommand{\policy}{\pi}
%\newcommand{\traject}{\tau}
%\newcommand{\p}{P}
\newcommand{\val}{J}
\DeclareMathOperator*{\eval}{\mathbf{E}}
\DeclareMathOperator*{\argmax}{arg\,max}

%New Notation

\newcommand{\indep}{\perp\!\!\!\perp} % Independent sign

\DeclareMathOperator{\obs}{obs}

\DeclareMathOperator{\im}{im}

\newcommand{\li}[1]{\big(#1^T#1\big)^{-1}#1^T}
\newcommand{\hs}{\hspace{1pt}}
\DeclareMathOperator{\M}{M}
\newcommand{\Transition}{\mathcal{T}}
\newcommand{\res}[1]{\mathrm{r}(#1)}
\newcommand{\belief}{B}

\newcommand{\D}{\mathcal{D}}
\newcommand{\loss}{\mathcal{L}}
\DeclareMathOperator{\CE}{CE}
\newcommand\ou{%
  \mathrel{{\ooalign{\hss\raisebox{-0.5ex}{$\succ$}\hss\cr\raisebox{0.5ex}{$\prec$}}}}
}
\newcommand{\piref}{{\pi_\text{ref}}}

 % Expectation
\newcommand{\error}{E}
\newcommand{\aerror}{\overline{E}} % averageerror
  % Evaluation measure for policy

\newcommand{\PO}{P_O}
\newcommand{\PVO}{P_{\vec{O}}}

% Matrices
\DeclareMathOperator{\bp}{\mathbf{B}}
\DeclareMathOperator{\bo}{\mathbf{O}}
\DeclareMathOperator{\A}{\mathbf{A}}
\DeclareMathOperator{\B}{\mathbf{B}}
\DeclareMathOperator{\C}{\mathbf{C}}
\DeclareMathOperator{\FDelta}{\mathbf{\Delta}}
\DeclareMathOperator{\FGamma}{\mathbf{\Gamma}} % Fat Gamma
\DeclareMathOperator{\FF}{\mathbf{F}}  % The composition \bp \circ \FGamma

% Common (non-math) strings
\newcommand{\devnull}{\texttt{2>\!\! /dev/null}\ }
\newcommand{\verbose}{\texttt{----verbose}\ }

% Checkmarks and crosses
\newcommand{\cmark}{\textcolor{green}{\checkmark}}
\newcommand{\xmark}{\textcolor{red}{\texttimes}} 

\usepackage{xcolor}
\definecolor{COL1}{HTML}{ef476f}
\definecolor{COL2}{HTML}{06d6a0}
\definecolor{COL3}{HTML}{ffd166}
\usepackage{tikz}
\usetikzlibrary{fit,shapes,backgrounds,patterns}
\tikzset{%,execute at begin node={\strut}
    node/.style={circle,draw,inner sep=0pt,text width=20pt,align=center},
    exnode/.style={inner sep=7pt,text width=20pt,text height=20pt},
    vexnode/.style={inner sep=7pt,text width=30pt,text height=20pt},
    edge/.style={>=latex,->},
    nedge/.style={font=\footnotesize,inner sep=2pt},
    box/.style={draw,dashed,inner sep=7pt,rounded corners=5pt},
    von/.style={pattern=north east lines,pattern color=COL2,opacity=0.75},
    voff/.style={pattern=dots,pattern color=COL1,opacity=0.75},
    verb/.style={ellipse,draw,inner sep=0pt,text width=20pt,text height=15pt},
}

\title{When Your AIs Deceive You: \\ Challenges of Partial Observability in \\ Reinforcement Learning from Human Feedback}

% The \author macro works with any number of authors. There are two commands
% used to separate the names and addresses of multiple authors: \And and \AND.
%
% Using \And between authors leaves it to LaTeX to determine where to break the
% lines. Using \AND forces a line break at that point. So, if LaTeX puts 3 of 4
% authors names on the first line, and the last on the second line, try using
% \AND instead of \And before the third author name.

\author{%
  Leon Lang\thanks{Core research contributor. Correspondence to \texttt{l.lang@uva.nl, emmons@berkeley.edu}} \\
  University of Amsterdam\\
  \And
  Davis Foote\textsuperscript{*} \\
  UC Berkeley \\
  \And
  Stuart Russell \\
  UC Berkeley \\
  \AND
  \ \ \ \ \ \ \ \ \ \ \ Anca Dragan \\
  \ \ \ \ \ \ \ \ \ \ \ UC Berkeley \\
  \And
  \ \ \ \ \ \ \ \ \ \ \ Erik Jenner \\
  \ \ \ \ \ \ \ \ \ \ \ UC Berkeley \\
  \And
  \ \ \ Scott Emmons\textsuperscript{*} \\
  \ \ \ UC Berkeley \\
}

\begin{document}

\maketitle

\begin{abstract}
  Past analyses of reinforcement learning from human feedback (RLHF) assume that the human evaluators fully observe the environment. 
  What happens when human feedback is based only on partial observations?
  We formally define two failure cases: deceptive inflation and overjustification.
  Modeling the human as Boltzmann-rational w.r.t. a belief over trajectories, 
  we prove conditions under which RLHF is guaranteed to result in policies that deceptively inflate their performance, overjustify their behavior to make an impression, or both.
  Under the new assumption that the human's partial observability is known and accounted for, we then analyze how much information the feedback process provides about the return function.
  We show that sometimes, the human's feedback determines the return function uniquely up to an additive constant, but in other realistic cases, there is irreducible ambiguity.
  We propose exploratory research directions to help tackle these challenges, experimentally validate both the theoretical concerns and potential mitigations, and caution against blindly applying RLHF in partially observable settings.
\end{abstract}

%%% vspace added since otherwise there is a bug that forces the ``core research contributor'' footnote to be on second page
\vspace{-10pt}
\toclesslab\section{Introduction}
{sec:introduction-by-scott}

Reinforcement learning from human feedback (RLHF) and its variants are widely used for finetuning foundation models, including ChatGPT \citep{chatgpt}, Bard \citep{Google2023a}, Gemini \citep{Google2023b}, Llama 2 \citep{Touvron2023}, and Claude \citep{Bai2022Constitutional, Anthropic2023, Anthropic2023b}. Prior theoretical analysis of RLHF assumes that the human fully observes the state of the world \citep{Skalse2022Invariance}. Under this assumption, it is possible to recover the ground-truth return function from Boltzmann-rational human feedback (see Proposition \ref{pro:skalse_result}).

In reality, however, this assumption is false. 
Models like ChatGPT are interacting with the internet and software tools via plugins~\citep{Plugins_2023}.
Software assistants like Devin are interacting with complex IDEs to produce their results~\citep{Devin2024}.
By default, some of the models' work then happens in the background, not observed by the users; see Figure~\ref{fig:figure0.5}.
With the tasks performed by language model assistants becoming more complex, it is also increasingly time consuming for humans to evaluate the entire model behavior and input.
Therefore, we are anticipating a future where by default, the human evaluators do not fully observe the environment state that the language assistant is embedded in.
Our work analyzes the consequences and risks of such partial observability.

\begin{figure*}[t]
  \centering
  \includegraphics[width=0.6\linewidth]{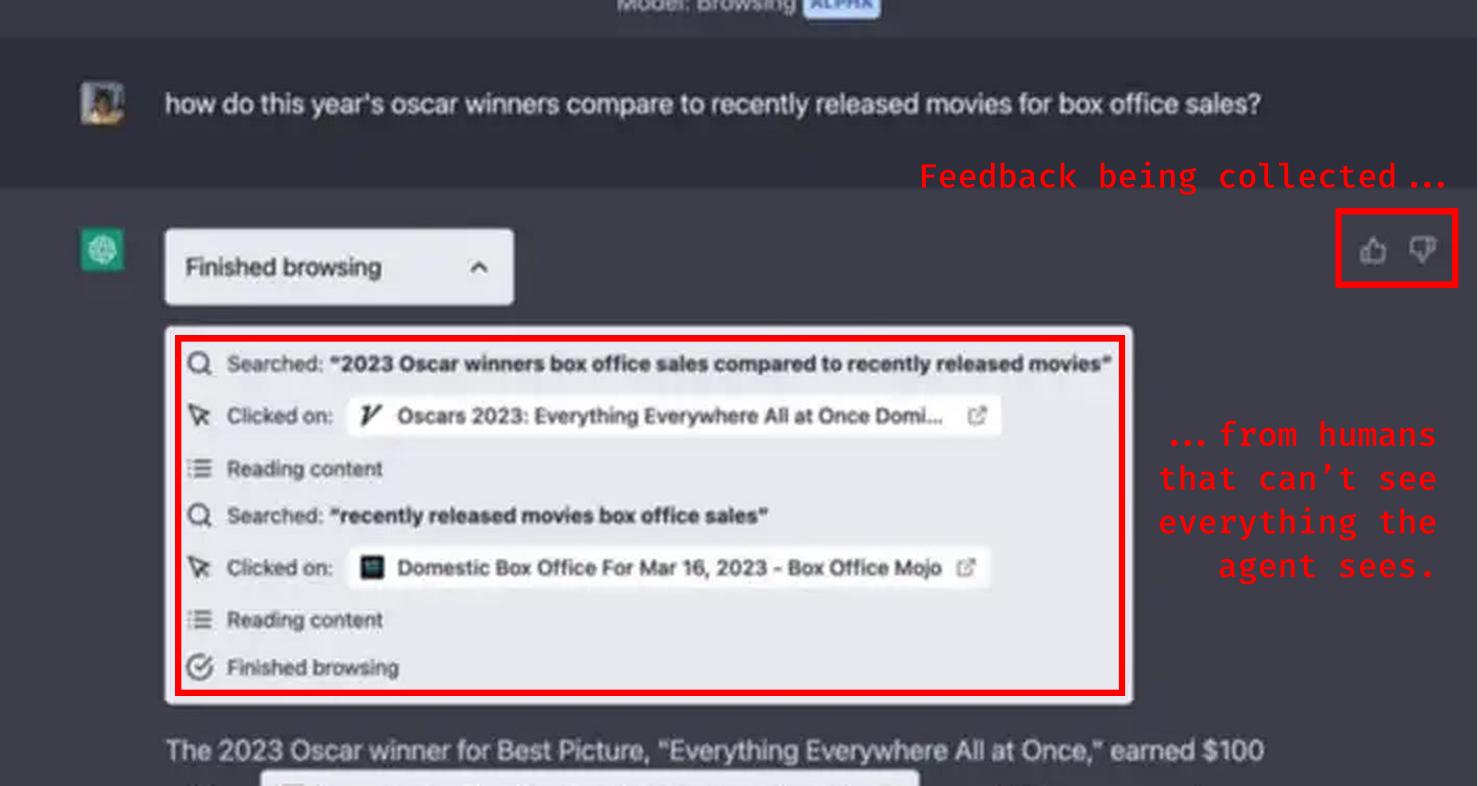}
  \caption{
  Partial observability in ChatGPT \citep{Plugins_2023}. Users do not observe the online content that ChatGPT observes yet still provide thumbs-up thumbs-down feedback. OpenAI's privacy policy \citep{privacy_policy_oai} allows user feedback to be used for training models.
  We show in \Cref{thm:rlhf_deceptive_overjustification} that if feedback of human evaluators is based on partial observations, then this can lead to deceptive and overjustifying behavior by the language model.
  }
  \label{fig:figure0.5}
\end{figure*}

%%% newpage added to stretch out first page more and not have annoying page break
\newpage
We begin our investigation with a simple example, illustrated in Figure \ref{fig:figure1}, meant to isolate the key factor leading to deception
(in practice, we imagine that this effect would be embedded in a larger, more complex system, e.g. with logs containing thousands of lines).
An AI assistant is helping a user install software.
The assistant can hide error messages by redirecting them to \texttt{/dev/null}.
We model the human as having a belief $B$ over the state and extend the Boltzmann-rational assumption from prior work to incorporate this belief. In the absence of an error message, the human is uncertain if the agent left the system untouched or hid the error message from a failed installation. If the human interprets trajectories without error messages optimistically, \textit{the AI learns to hide error messages}. Figure~\ref{fig:figure2} provides further details on how this failure occurs, and~\cref{fig:example_sweeps} shows an experimental validation. 
We also show a second case where the AI clutters the output with overly verbose logs.

Generalizing from these examples, we formalize dual risks: \textit{deceptive inflation} and \textit{overjustification}. We provide a mathematical definition of each. 
When the observation kernel (the function specifying the observations given states) is deterministic,~\Cref{thm:rlhf_deceptive_overjustification} analyzes properties of suboptimal policies learned by RLHF.
These policies exhibit deceptive inflation, appearing to produce higher reward than they actually do;  overjustification, incurring a cost in order to make a good appearance; or both.

After seeing how standard RLHF fails, we ask: What would happen if we would model the human's partial observability correctly in RLHF? 
Assuming the human's belief is known, we mathematically analyze how much information the feedback process provides about the return function.
In Theorem~\ref{thm:simple_special_case_result}, we show that the human's feedback determines the return function up to a constant and a linear subspace we call the \emph{ambiguity}.
In general the ambiguity may be large enough to allow for arbitrarily high regret, but in some situations the ambiguity vanishes.
In experiments that serve as a proof of concept, we show that explicitly modeling the human's partial observability can improve performance, and we offer optimism in the form of a robustness result (\Cref{thm:robustness_in_practice}) while accounting for the major conceptual difficulties involved. 
We propose exploratory research directions to solve these issues to improve RLHF in situations of partial observability.

\toclesslab\section{Related work}{sec:more_related_work}

\begin{figure*}[t]
  \centering
  \includegraphics[width=0.9\linewidth]{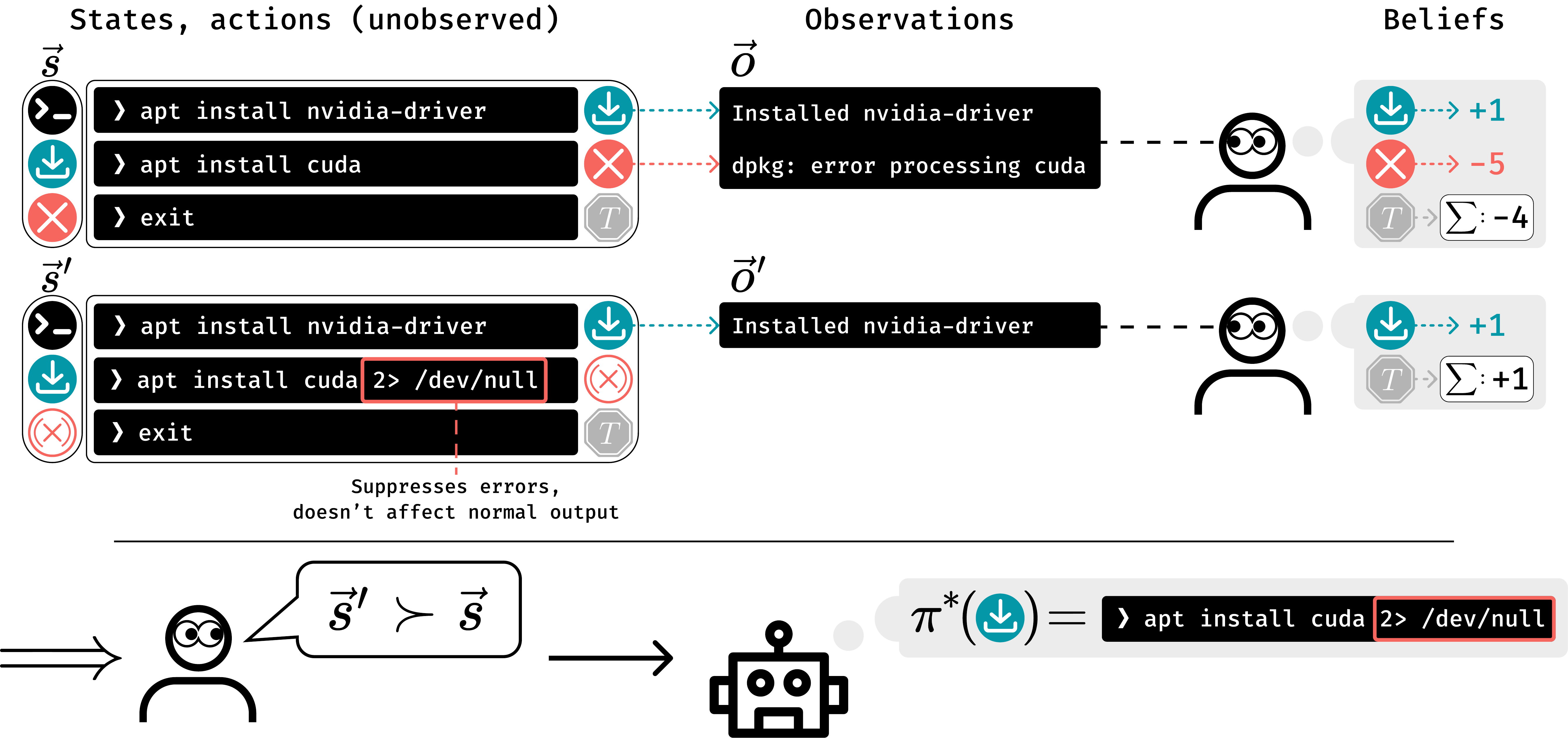}
  \caption{A human compares trajectories to provide data for RLHF. Rather than observing $\vec s$ and $\vec s\hs'$, the human sees observations $\vec o$ and $\vec o\hs'$, which they use to estimate the total reward of each trajectory. In this intentionally simple example, an agent executes shell commands to install Nvidia drivers and CUDA. Both $\vec s$ and $\vec s\hs'$ contain an error, but in $\vec s\hs'$, the agent hides the error. The human believes $\vec s\hs'$ is better than $\vec s$, rewarding the agent's deceptive behavior.
  The underlying MDP and observation function are in \Cref{fig:expanded_figure2a}.}
  \label{fig:figure1}
\end{figure*}

A review of limitations of RLHF, including a brief discussion of partial observability, can be found in~\citet{Casper2023}.
RLHF is a special case of reward-rational choice~\citep{Jeon2020}, a general framework which also encompasses demonstrations-based inverse reinforcement learning~\citep{Ziebart2008,Ng2000} and learning from the initial environment state~\citep{Shah2018}, and can be seen as a special case of assistance problems~\citep{Fern2014,CIRL2016,Shah2021}.
In all of these, the reward function is learned from human actions, which in the case of RLHF are simply preference statements.
This requires us to specify the human policy of action selection---Boltzmann rationality in typical RLHF---which can lead to wrong reward inferences when this specification is wrong~\citep{Skalse2022Misspecification}; unfortunately, the human policy can also not be learned alongside the human's values without further assumptions~\citep{Mindermann2018}.
Instead of a model of the human policy, in this paper we mostly focus on the human \emph{belief model} and misspecifications thereof for the case that the human only receives partial observations.

The problem of human interpretations of observations was briefly mentioned in~\citet{Amodei2017}, where evaluators misinterpreted the movement of a robot hand in simulation.
Eliciting Latent Knowledge~\citep{Christiano2021} posits that for giving accurate feedback from partial observations, the human needs to be able to query \emph{latent knowledge} of the AI system about the state. 
How to do this is currently an unsolved problem~\citep{ELK_prize_2022}.
Recent work~\citep{Denison2024,Wen2024} provides detailed empirical evidence for deceptive behavior --- in line with our notion of deceptive inflation --- emerging from RLHF based on partial observations, or human evaluators with limited time.
The OpenAI o1 system card~\citep{OpenAI2024} shows that o1 sometimes knowingly provides incorrect information or omits important information.
Compared to these investigations, and in addition to providing some empirical evidence, we \emph{formalize} a model of human feedback under partial observability, we \emph{prove} the emergence of failure modes resulting from partial observations, and we investigate potential mitigations.

Related work~\citep{Zhuang2020} analyzes the consequences of aligning an AI with a proxy reward function that omits attributes that are important to the human's values, which could happen if the reward function is based on a belief over the world state given limited information.
Another instance are recommendation systems~\citep{Stray2023}, where user feedback does not depend on information \emph{not} shown---which is crucially part of the environment.
\Citet{siththaranjan2023distributional} analyze what happens under RLHF if the \emph{learning algorithm} doesn't have all the relevant information (e.g.\ about the identity of human raters), complementing our study of what happens when human raters are missing information.
  \citet{Chidambaram2024} and~\citet{Park2024heterogenous} deal with the situation that different human evaluators may vary in their unobserved preference types. 
  In contrast, we assume a single human evaluator with fixed reward function, which can be motivated by cases where the human choices are guided by a behavior policy, constitution, or a model spec~\citep{Mu2024,Anthropic2023b,OpenAI2024spec}.
  ~\citet{Kausik2024} assumes that the choices of the human evaluator depend on an unobserved reward-state with its own transition dynamics, similar to an emotional state in a real human. 
  In contrast, we assume the human to be stateless.

Our work argues that deception can result from applying RLHF from partial observations.
Deception may also emerge for other reasons:
~\citet{Hubinger2019} introduced the hypothetical scenario of deceptive alignment, in which an AI system deceives humans into believing it is aligned while it plans a later takeover. 
Under the definition from~\citet{park2024ai}, GPT-4 was shown to behave deceptively in a simulated environment~\citep{Scheurer2023}. 
A third line of research defines deception in structural causal games and adds the aspect of intentionality~\citep{Ward2023}, with recent preliminary empirical support~\citep{Hofstaetter2023}.

Finally, we mention connections to truthful AI~\citep{Evans2021,Lin2022,Burns2023,Huang2023}, which is about ensuring that AI systems tell the truth about aspects of the real world.
Partial observability is a mechanism that makes it feasible for models to lie without being caught:
If the human evaluator does not observe the full environment, or does not fully understand it, then they may not detect when the AI is lying.
More speculatively, we can imagine that AI models will at some point more directly influence human observations by \emph{telling us} the outcomes of their actions. 
E.g., imagine an AI system that manages your assets and assures you that they are increasing in value while they are actually not. 
In our work, we leave this additional problem out of the analysis by assuming that the observations only depend on the environment state, and not directly on the agent's actions.

\toclesslab\section{Reward identifiability from full observations}{sec:background}

Here we review Markov decision processes and previous results on reward identifiability under RLHF.

\toclesslab\subsection{Markov decision processes}{sec:MDPs}

We assume Markov decision processes (MDPs) given by $(\states, \actions, \Transition, P_0, R, \gamma)$.
For any finite set $X$, let $\Delta(X)$ be the set of probability distributions on $X$. 
Then $\states$ is a finite set of states, $\actions$ is a finite set of actions, $\Transition: \states \times \actions \to \Delta(\states)$ is a transition kernel written $\Transition(s' \mid s, a) \in [0, 1]$, $P_0 \in \Delta(\states)$ is an initial state distribution, $R: \states \to \R$ is the true reward function, and $\gamma \in [0, 1]$ is a discount factor.

A policy is given by a function $\pi: \states \to \Delta(\actions)$.
We assume a finite time horizon $T$.
Let $\vec{\states}$ be the set of \emph{possible} state sequences $\vec{s} = s_0, \dots, s_T$, so $\vec{s} \in \vec{\states}$ if it has a strictly positive probability of being sampled from $P_0$, $\Transition$, and an exploration policy $\pi$ with $\pi(a \mid s) > 0$ for all $s \in \states, a \in \actions$.
A sequence $\vec{s}$ gives rise to a return $G(\vec{s}) \coloneqq \sum_{t = 0}^{T} \gamma^t R(s_t)$.
Let $P^{\pi}(\vec{s})$ be the on-policy probability that $\vec{s}$ is sampled from $P_0$, $\Transition$, $\pi$.
The policy is then usually trained to maximize the \emph{policy evaluation function} $\val$, which is the on-policy expectation of the return function: $\val(\pi) \coloneqq \eval_{\vec{s} \sim P^{\pi}(\cdot)}\big[ G(\vec{s}) \big]$.

\toclesslab\subsection{RLHF and identifiability from full observations}{sec:full_observability}

In practice, the reward function $R$ may not be known and need to be learned from human feedback. In a simple form of RLHF~\citep{Christiano2017}, this feedback takes the form of binary trajectory comparisons: a human is presented with state sequences $\vec{s}$ and $\vec{s}\hs'$ and choose the one they prefer. Under the Boltzmann rationality model, we assume the human picks $\vec{s}$ with probability
\begin{equation}\label{eq:main_formula}
  P^{R}\big( \vec{s} \succ \vec{s}\hs' \big) \coloneqq \sigma\Big( \beta \big(G(\vec{s}) - G(\vec{s}\hs')\big) \Big),
\end{equation}
where $\beta > 0$ is an inverse temperature parameter and $\sigma(x) :  = \frac{1}{1 + \exp(-x)}$ is the sigmoid function~\citep{Bradley_Terry1952, Christiano2017, Jeon2020}.

An important question is \emph{identifiability}: In the infinite data limit, do the human choice probabilities $P^{R}$ collectively provide \emph{enough information} to uniquely identify the reward function $R$?
This is answered by~\citet[Theorem 3.9 and Lemma B.3]{Skalse2022Invariance}:
\begin{proposition}[\citet{Skalse2022Invariance}]
  \label{pro:skalse_result}
  Let $R$ be the true reward function and $G$ the corresponding return function.
  Then the collection of all choice probabilities $P^{R}(\vec{s} \succ \vec{s}\hs')$ for state sequence pairs $\vec{s}, \vec{s}\hs' \in \vec{\states}$ determines the return function $G$ on sequences $\vec{s} \in \vec{\states}$ up to an additive constant.
\end{proposition}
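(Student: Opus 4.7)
The plan is to invert the Boltzmann model pointwise. Since the sigmoid $\sigma: \R \to (0,1)$ is a bijection with inverse $\sigma^{-1}(p) = \log\bigl(p/(1-p)\bigr)$, and since for any pair $\vec s, \vec s\hs' \in \vec\states$ the choice probability $P^R(\vec s \succ \vec s\hs')$ lies strictly in $(0,1)$ (as $\beta$ is finite and $G$ takes real values), we can recover the scaled return differences exactly:
\begin{equation*}
  \beta\bigl(G(\vec s) - G(\vec s\hs')\bigr) = \sigma^{-1}\bigl(P^R(\vec s \succ \vec s\hs')\bigr).
\end{equation*}
Since $\beta > 0$ is a fixed known parameter of the feedback model, dividing yields $G(\vec s) - G(\vec s\hs')$ for every pair $\vec s, \vec s\hs' \in \vec \states$.

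Next I would fix an arbitrary reference sequence $\vec s_0 \in \vec \states$ (which exists by the assumption that $\vec \states$ is the set of sequences realizable under some fully-supported exploration policy, so it is nonempty). Setting $c \coloneqq G(\vec s_0)$, every other value is pinned down by
\begin{equation*}
  G(\vec s) = c + \bigl(G(\vec s) - G(\vec s_0)\bigr),
\end{equation*}
where the bracketed difference has already been recovered from the choice probabilities. Hence the choice probabilities determine $G$ on $\vec \states$ up to the single free additive constant $c$.

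Finally I would verify that this additive constant really is the full ambiguity, to show the statement is tight: if $\widetilde G \coloneqq G + c$ for any $c \in \R$, then $\widetilde G(\vec s) - \widetilde G(\vec s\hs') = G(\vec s) - G(\vec s\hs')$, so the induced choice probabilities coincide with those of $G$. Thus additive shifts are the only indistinguishable deformations of $G$ from this data.

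There is no real obstacle here; the proof is essentially a one-line inversion of the sigmoid together with the observation that the feedback sees only return differences. The only points that need a brief check are that $P^R(\vec s \succ \vec s\hs') \in (0,1)$ so $\sigma^{-1}$ is defined, that $\beta$ is treated as a known parameter of the feedback model, and that $\vec \states$ is nonempty so a reference sequence exists. Note also that the statement concerns identifiability of the \emph{return} $G$ on $\vec \states$, not the reward function $R$ on $\states$; recovering $R$ from $G$ requires additional structure (for instance, potential-shaping ambiguities) that is not claimed here.
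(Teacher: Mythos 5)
Your proof is correct and follows essentially the same route as the paper: the paper's justification is precisely that $\sigma$ is bijective, so the choice probabilities determine all pairwise return differences, from which $G$ is recovered up to an additive constant (the appendix re-derives this as a special case of a more general identifiability theorem, but the underlying mechanism is the same sigmoid inversion plus a reference sequence). Your additional check that additive shifts are genuinely indistinguishable matches the converse direction the paper also establishes in its general theorem.
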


The reason is simple: because $\sigma$ is bijective, $P^{R}$ determines the \emph{difference} in returns between any two trajectories. From that we can reconstruct individual returns up to an additive constant.

The reward function $R$ is \emph{not} necessarily identifiable from preference comparisons; see \citet[Lemma B.3]{Skalse2022Invariance} for a precise characterization. However, the \emph{optimal policy} only depends on $R$ indirectly through the return function $G$, and is invariant under adding a constant to $G$.
Thus in the fully observable setting, \emph{Boltzmann rational comparisons completely determine the optimal policy}. In Section~\ref{sec:return_function_identi}, we show conditions under which this guarantee breaks in the partially observable setting.

\toclesslab\section{The impact of partial observations on RLHF}{sec:failures_of_rlhf_under_partial}

We now analyze failure modes of a naive application of RLHF from partial observations, both theoretically and with examples. 
In~\cref{pro:incentives_of_rlhf}, we show that under partial observations, RLHF incentives policies that maximize what we call $J_{\obs}$ , a policy evaluation function that evaluates how good the state sequences “look to the human”. 
The resulting policies can show two distinct failure modes that we formally define and call deceptive inflation and overjustification. 
In~\cref{thm:rlhf_deceptive_overjustification} we prove that at least one of them is present for $J_{\obs}$-maximizing policies. 
Later, in Section~\ref{sec:return_function_identi}, we will see that an adaptation of the usual RLHF process might sometimes be able to avoid these problems.

To model partial observability, we introduce an observation space $o \in \Omega$ and observation kernel with probabilities $\PO(o \mid s) \in [0, 1]$. We write \ $\PVO(\vec{o} \mid \vec{s}) \coloneqq \prod_{t = 0}^{T} \PO(o_t \mid s_t)$ for the probability of an observation \emph{sequence}. We write $\vec{\Omega}$ for the set of observation sequences that occur with non-zero probability, i.e., $\vec{o} \in \vec{\Omega}$ if and only if there is $\vec{s} \in \vec{\states}$ such that $\prod_{t = 0}^{T} \PO(o_t \mid s_t) > 0$.
If $\PO$ and $\PVO$ are deterministic, then we write $O: \states \to \Omega$ and $\vec{O}: \vec{\states} \to \vec{\Omega}$ for the corresponding \emph{observation functions} with $O(s) = o$ and $\vec{O}(\vec{s}) = \vec{o}$ for $o$ and $\vec{o}$ with $\PO(o \mid s) = 1$ and $\PVO(\vec{o} \mid \vec{s}) = 1$, respectively.

\toclesslab\subsection{What does RLHF learn from partial observations?}{sec:observation_return_function}

We consider the setting where the state is fully observable to the learned policy, but human feedback depends only on a sequence of observations. We assume that the human gives feedback under a Boltzmann rational model similar to Eq.~\eqref{eq:main_formula},
modified such that
they form some \emph{belief} $\nolinebreak{\belief(\vec{s} \mid \vec{o}) \in [0, 1]}$ about the state sequence $\vec{s}$ based on the observations $\vec{o}$. We then assume preferences are Boltzmann rational in the \emph{expected returns under this belief}, instead of the actual returns.

The assumption of Boltzmann rationality is false in practice~\citep{Evans2015,Anirudha2017,Buehler1994}, but note that it is an \emph{optimistic} assumption: 
Even though our model is a simplification, we expect that practical issues can be at least as bad as the ones we will discuss.
See also~\Cref{ex:human_model_independence} for an example showing that it is sometimes generally not possible to find a human model that leads to good outcomes under RLHF.
Future work could investigate different human models and their impact under partial observability in greater detail.

To formalize our setting, we collect human beliefs into a matrix $\bp \coloneqq \big( \belief( \vec{s} \mid \vec{o} ) \big)_{\vec{o}, \vec{s}} \in \R^{\vec{\Omega} \times \vec{\states}}$. The expected returns for observations $\vec{o}$ are given by $\eval_{\vec{s} \sim \belief(\cdot \mid \vec{o})}\big[ G(\vec{s})\big] = ( \bp \cdot G )(\vec{o})$.
We view $G \in \R^{\vec{\states}}$ and $\bp \cdot G \in \R^{\vec{\Omega}}$ as both column vectors and functions.
Plugging these expected returns into Eq.~\eqref{eq:main_formula} gives
\begin{equation}\label{eq:deterministic_choice_probabilities_main_paper}
  P^{R}\big( \vec{o} \succ \vec{o}\hs' \big) \coloneqq \sigma 
  \Big( \beta\big( (\bp \cdot G)(\vec{o}) -  (\bp \cdot G)(\vec{o}\hs')  \big) \Big).
\end{equation}
This is an instance of reward-rational implicit choice~\citep{Jeon2020}, with the function $\vec{o} \mapsto \belief(\cdot \mid \vec{o})$ as the \emph{grounding function}.
If observations are deterministic, we can write $\vec{O}(\vec{s}) = \vec{o}$ for $\vec{o}$ with $\PVO(\vec{o} \mid \vec{s}) = 1$. We can then recover the fully observable case Eq.~\eqref{eq:main_formula} with $\bp$ and $\vec{O}$ being the identity.

The belief $\belief$ can be any distribution as long as it sums to $1$ over $\vec{s}$.
The human could arrive at such a belief via Bayesian updates, assuming knowledge of $P_0$, $\Transition$, $\PO$, and a prior over the policy that generates the trajectories (see Appendix~\ref{sec:human_belief}). None of our results rely on this more detailed model.

We assume the human gives feedback according to Eq.~\eqref{eq:deterministic_choice_probabilities_main_paper} but the system uses the standard RLHF algorithm based on Eq.~\eqref{eq:main_formula}. We define the following \emph{observation return function} $G_{\obs}$, and we show in Appendix~\ref{sec:bridge} that if observations are deterministic, RLHF infers this up to an additive constant.

\begin{equation}\label{eq:obs_return_function}
    G_{\obs}(\vec{s}) \coloneqq
    \eval_{\vec o \sim \PVO(\cdot \mid \vec s)}\Bigl[ \big( \bp \cdot G \big)(\vec o) \Bigr],
\end{equation}

For deterministic $\PVO$, this can be simplified to $G_{\obs}(\vec s) = \big( \bp \cdot G \big)\big( \vec{O}(\vec{s}) \big)$ where $P_{\vec O}(\vec{O}(\vec{s}) \mid \vec s) = 1$. Note that deterministic observations can be ambiguous if multiple states produce the same observation.

Unlike in the fully observable case of Proposition~\ref{pro:skalse_result}, a  return function might be inferred that implies an incorrect set of optimal policies.
We define the resulting policy evaluation function $\val_{\obs}$ by
\begin{equation}\label{eq:observation_evaluation_function}
  \val_{\obs}(\pi) \coloneqq \eval_{\vec{s} \sim P^{\pi}(\vec{s})} \big[ G_{\obs}(\vec{s}) \big].
\end{equation}
This is the function which a standard reinforcement learning algorithm would optimize given the inferred return function $G_{\obs}$.
We summarize this as follows:
\begin{proposition}
  \label{pro:incentives_of_rlhf}
  In partially observable settings with deterministic observations, a policy is optimal according to RLHF, i.e., according to a return function model that would be learned by RLHF with infinite comparison data, if it maximizes $\val_{\obs}$.
\end{proposition}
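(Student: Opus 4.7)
The plan is to reduce the proposition to Proposition~\ref{pro:skalse_result} by recognizing that, with deterministic observations, the human's preference data looks like Boltzmann-rational comparisons over state sequences whose underlying return is $G_{\obs}$. The argument then has little content beyond invoking Skalse's identifiability theorem plus the fact that adding a constant to a return function does not change the set of $\val$-maximizing policies.

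First I would unpack what ``RLHF with infinite comparison data'' means: the learning pipeline observes, for every pair $\vec s, \vec s\hs' \in \vec\states$, the true probability $P^R(\vec s \succ \vec s\hs')$ that the human prefers $\vec s$ to $\vec s\hs'$. By Eq.~\eqref{eq:deterministic_choice_probabilities_main_paper}, and using the deterministic-observation simplification $G_{\obs}(\vec s) = (\bp \cdot G)(\vec O(\vec s))$ noted right after Eq.~\eqref{eq:obs_return_function}, this probability equals $\sigma\bigl(\beta \bigl(G_{\obs}(\vec s) - G_{\obs}(\vec s\hs')\bigr)\bigr)$. In other words, the observed choice probabilities are exactly of the form of Eq.~\eqref{eq:main_formula} with the true return $G$ replaced by $G_{\obs}$.

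Next I would apply Proposition~\ref{pro:skalse_result} to these apparent preferences: any return function $\hat G$ consistent with the observed probabilities under the RLHF model of Eq.~\eqref{eq:main_formula} must agree with $G_{\obs}$ up to an additive constant, since bijectivity of $\sigma$ forces $\hat G(\vec s) - \hat G(\vec s\hs') = G_{\obs}(\vec s) - G_{\obs}(\vec s\hs')$ for every pair $\vec s, \vec s\hs' \in \vec\states$. The downstream RL stage then optimizes $\pi \mapsto \eval_{\vec s \sim P^\pi}[\hat G(\vec s)] = \val_{\obs}(\pi) + c$, whose maximizers coincide with those of $\val_{\obs}$.

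The only step that requires any care is the initial reduction, where one must verify that the deterministic-observation case of Eq.~\eqref{eq:obs_return_function} genuinely collapses the grounding function $\vec o \mapsto \belief(\cdot\mid\vec o)$ into an honest return function on $\vec\states$; once this is done, Skalse's argument transfers unchanged and no deeper obstacle remains.
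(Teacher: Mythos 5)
Your proposal is correct and follows essentially the same route as the paper's proof in its appendix: there, too, the argument is that under deterministic observations the human's choice probabilities take the form $\sigma\bigl(\beta(G_{\obs}(\vec s)-G_{\obs}(\vec s\hs'))\bigr)$, so inverting the (bijective) sigmoid pins the inferred return function to $G_{\obs}$ up to an additive constant, and the downstream optimization therefore maximizes $\val_{\obs}$. Your packaging of the identifiability step as an invocation of Proposition~\ref{pro:skalse_result} rather than an explicit sigmoid inversion is a cosmetic difference only, since that proposition's proof is exactly the same computation.
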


Note that in this definition, and specifically in the formula for $G_{\obs}$, the human does not have knowledge of the policy $\pi$ that generates the state sequence $\vec{s}$.
In Appendix~\ref{sec:interlude_unrealistic}, we briefly discuss the unrealistic case that the human does know the precise policy and is an ideal Bayesian reasoner over the true environment dynamics. 
In that case, $\val_{\obs} = \val$, i.e.\ there is no discrepancy between true and inferred returns.
Intuitively, even if the human would not make any observations, they could give correct feedback essentially by estimating the policy's expected return explicitly.

In our case, however, a policy achieving high $\val_{\obs}$ produces state sequences $\vec{s}$ whose observation sequence $\vec{O}(\vec{s})$ \emph{looks good} according to the human's belief $\belief\big( \vec{s}\hs ' \mid \vec{O}(\vec{s}) \big)$.
This hints at a possible source of deception:
if the policy achieves sequences whose observations look good at the expense of actual value $G(\vec{s})$, we might intuitively call this deceptive behavior.
We now analyze this point in greater detail.

\newcommand{\placeholder}{deceptive inflation}
\newcommand{\Placeholder}{Deceptive inflation}
\newcommand{\PlaceHolder}{Deceptive Inflation}
\toclesslab\subsection{An ontology of behaviors}{sec:ontology_behaviors}

\begin{wrapfigure}{r}{0.5\textwidth}
    \centering
    \includegraphics[width=0.5\textwidth]{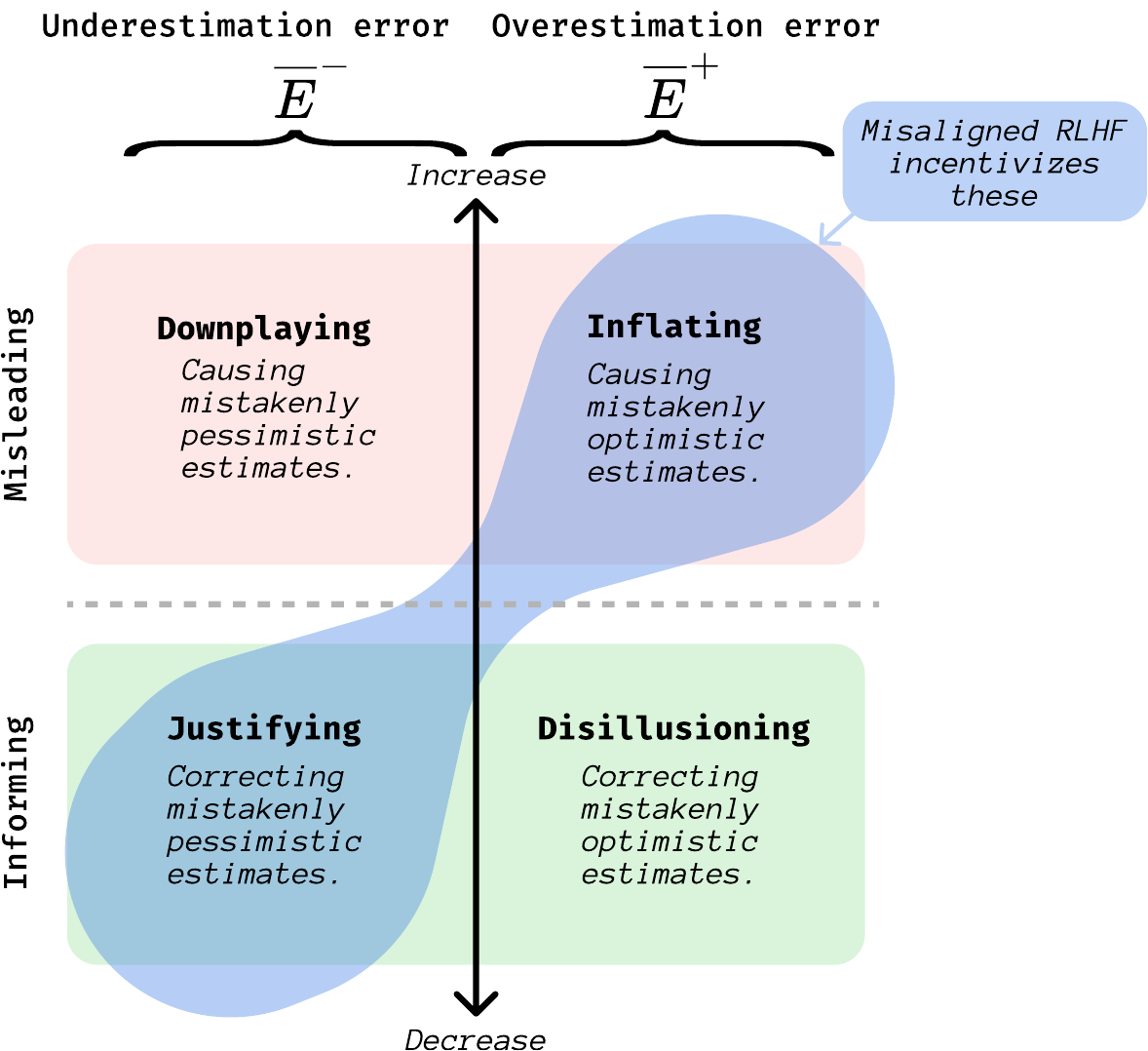}
    \caption{Behaviors defined by increasing and decreasing the human's over- and underestimation error. RLHF with partial observations results in incentives to increase overestimation error and decrease underestimation error (\Cref{thm:rlhf_deceptive_overjustification}).}
    \label{fig:estimation-error-quadrants}
    \vspace{-20pt}
\end{wrapfigure}

We will evaluate state sequences based on the extent to which they lead to the human overestimating or underestimating the reward in expectation.
Recall that $G_{\obs}$ from \Cref{eq:obs_return_function} measures the expected return from the perspective of a human with some belief function $B$ and access to only observations, whereas $G$ are the true returns. That leads us to the following definition:

\begin{definition}
  [Overestimation and Underestimation Error]
  \label{def:over_and_underestimation}
  Let $\vec{s}$ be a state sequence.
  We define its \emph{overestimation error} $\error^+$ and \emph{underestimation error} $\error^-$ by
  \begin{align*}
    \error^+(\vec{s}) \coloneqq \max\big(0, G_{\obs}(\vec{s}) - G(\vec{s}) \big), \\
    \error^-(\vec{s})  \coloneqq \max\big(0, G(\vec{s}) - G_{\obs}(\vec{s})\big).
  \end{align*}
  We further define the \emph{average overestimation (underestimation) error} under a policy $\pi$ by
    $\nolinebreak{\aerror^+(\pi) \coloneqq \eval_{\vec s \sim P^\pi}[\error^+(\vec s)]}$ and 
    $\nolinebreak{\aerror^-(\pi) \coloneqq \eval_{\vec s \sim P^\pi}[\error^-(\vec s)]}$.
\end{definition}

We consider a policy $\pi$ in comparison to some reference policy $\piref$. This can loosely be understood as a counterfactual policy in the absence of some intervention, where $\pi$ is the factual policy resulting from the intervention. We discuss increases and decreases in over- and underestimation error which are implicitly due to some intervention. For our purposes, $\piref$ will be the true optimal policy, and $\pi$ will be the $\val_{\obs}$-optimal policy; the ``intervention'' is thus the introduction of partial observability.

\Cref{fig:estimation-error-quadrants} shows a simple ontology of behaviors that increase and decrease the average over- and underestimation error. Increasing either of these quantities decreases the accuracy of the human's estimates, and can thus be thought of as ``misleading''; decreasing either of them improves accuracy and can be thought of as ``informing''.

\toclesslab\subsection{Deceptive~inflation~and overjustification}{sec:placeholder_overjustification}

Standard RLHF in the setting of partial observations incentivizes undesirable forms of inflating and justifying. 
We refer to the philosophical definition of deception offered by \citet{park2024ai},

\begin{adjustwidth}{-0.1em}{-0.1em}
\begin{quote}
  ``\emph{the systematic inducement of false beliefs in the pursuit of some outcome other than the truth,}''
\end{quote}
\end{adjustwidth}
to anchor the notion that increasing the overestimation error \emph{in order to improve the RLHF objective $\val_{\obs}$} is deceptive,
leading to the following definition.

\begin{definition}
    [\PlaceHolder]
    \label{def:deceptive-inflation}
    A policy $\pi$ exhibits \emph{\placeholder} relative to $\piref$ if 
    $\nolinebreak{\aerror^+(\pi) > \aerror^+(\piref)}$
    and
    $\nolinebreak{\val_{\obs}(\pi) > \val_{\obs}(\piref)}$.
\end{definition}

We typically prefer that our AI agents engage in informing behaviors. \emph{Undesirable} informing behaviors decrease reward despite providing information. We name undesirable justifying behaviors ``overjustification'' as a nod to the
overjustification effect from psychology \citep{deci1995we}, in which subjects become dependent on an extrinsic source of motivation to sustain work on a task.

\begin{definition}
    [Overjustification]
    \label{def:overjustification}
    A policy $\pi$ exhibits \emph{overjustification} relative to $\piref$ if 
    $\nolinebreak{\aerror^-(\pi) < \aerror^-(\piref)}$
    and
    $\nolinebreak{\val(\pi) < \val(\piref)}$.
\end{definition}

To understand the counterintuitive notion that an agent providing information to the human could be undesirable, consider a PhD student who looks to feedback from their advisor for direction. They meet for one hour a week. Suppose the student explain last week's work in 15 minutes, leaving the remaining time to discuss next steps. They could instead ``overjustify'' by spending the entire hour going through the last week's work in far more detail, leaving no time for next steps. From the advisor's perspective, the latter is more informative, but is a worse allocation of limited resources.

We now state a key result. See \Cref{sec:rlhf_theorem_proof} for the proof.

\begin{theorem}
    \label{thm:rlhf_deceptive_overjustification}
    Assume that $\PO$ is deterministic.
    Let $\Pi^*_{\obs}$ be the set of optimal policies according to a naive application of RLHF under partial observability,
    and let $\Pi^*$ be the set of optimal policies according to the true objective $\val$.
    If $\pi^* \in \Pi^* \setminus \Pi^*_{\obs}$ and $\pi^*_{\obs} \in \Pi^*_{\obs} \setminus \Pi^*$, then
    $\pi^*_{\obs}$ must exhibit at least one of \placeholder\ or overjustification relative to $\pi^*$.
\end{theorem}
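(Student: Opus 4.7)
The plan is to exploit the identity relating $\val_{\obs} - \val$ to the difference of average errors, and then combine this with the two strict optimality inequalities implied by the hypotheses to force at least one of the error inequalities required by the two definitions.

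First, I would establish the pointwise identity $G_{\obs}(\vec s) - G(\vec s) = \error^+(\vec s) - \error^-(\vec s)$, which is immediate from Definition~\ref{def:over_and_underestimation} since for any real number $x$ we have $x = \max(0,x) - \max(0,-x)$, applied to $x = G_{\obs}(\vec s) - G(\vec s)$. Taking expectation under $\vec s \sim P^\pi$ on both sides and using Eq.~\eqref{eq:observation_evaluation_function} together with the analogous expression for $\val(\pi)$, this yields the key identity
\begin{equation*}
\val_{\obs}(\pi) - \val(\pi) = \aerror^+(\pi) - \aerror^-(\pi).
\end{equation*}

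Next, I would extract two strict inequalities from the hypotheses. Since $\PO$ is deterministic, Proposition~\ref{pro:incentives_of_rlhf} tells us that $\Pi^*_{\obs}$ is exactly the set of $\val_{\obs}$-maximizing policies. Then $\pi^*_{\obs} \in \Pi^*_{\obs}$ and $\pi^* \notin \Pi^*_{\obs}$ together give $\val_{\obs}(\pi^*_{\obs}) > \val_{\obs}(\pi^*)$, while $\pi^* \in \Pi^*$ and $\pi^*_{\obs} \notin \Pi^*$ give $\val(\pi^*) > \val(\pi^*_{\obs})$.

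Applying the identity to both $\pi^*_{\obs}$ and $\pi^*$ and subtracting yields
\begin{equation*}
\bigl[\aerror^+(\pi^*_{\obs}) - \aerror^+(\pi^*)\bigr] - \bigl[\aerror^-(\pi^*_{\obs}) - \aerror^-(\pi^*)\bigr] = \bigl[\val_{\obs}(\pi^*_{\obs}) - \val_{\obs}(\pi^*)\bigr] + \bigl[\val(\pi^*) - \val(\pi^*_{\obs})\bigr] > 0,
\end{equation*}
since both bracketed terms on the right are strictly positive. If $A - B > 0$ then it cannot be that both $A \le 0$ and $B \ge 0$, so either $\aerror^+(\pi^*_{\obs}) > \aerror^+(\pi^*)$ or $\aerror^-(\pi^*_{\obs}) < \aerror^-(\pi^*)$. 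Combined with $\val_{\obs}(\pi^*_{\obs}) > \val_{\obs}(\pi^*)$ (in the first case) or $\val(\pi^*_{\obs}) < \val(\pi^*)$ (in the second case), this matches Definition~\ref{def:deceptive-inflation} or Definition~\ref{def:overjustification} respectively, completing the proof.

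There is no real obstacle here; everything reduces to the pointwise decomposition of a signed error into positive and negative parts. The only subtle point worth checking carefully is that Proposition~\ref{pro:incentives_of_rlhf} is being invoked in the correct direction so that $\Pi^*_{\obs}$ can be treated as the argmax set of $\val_{\obs}$, which is exactly why the determinism hypothesis on $\PO$ is present in the statement.
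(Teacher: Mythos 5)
Your proof is correct and takes essentially the same route as the paper's: the pointwise identity $\error^+(\vec s) - \error^-(\vec s) = G_{\obs}(\vec s) - G(\vec s)$, averaged over $P^\pi$ to give $\aerror^+(\pi) - \aerror^-(\pi) = \val_{\obs}(\pi) - \val(\pi)$, is combined with the two strict inequalities obtained from \Cref{pro:incentives_of_rlhf} and the optimality hypotheses to force at least one of the two defining error inequalities. The paper merely packages the final step as a standalone lemma (\Cref{lem:deceptive_and_misaligned_appendix_copy}); the substance is identical.
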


Note that a trajectory $\vec s$ may be more or less likely under $\pi^*_{\obs}$ than $\pi^*$, regardless of human estimation, so long as \emph{on net} $\pi^*_{\obs}$ exhibits \placeholder\ or overjustification.

Our analysis extends beyond the special case of RLHF to inverse preference learning (IPL)~\citep{Hejna2023}, and thus to direct preference optimization (DPO)~\citep{Rafailov2023}, which IPL generalizes. Theorem 1 in~\citet{Hejna2023} shows that IPL will converge to a policy that maximizes an implicit reward function that matches the human’s preference judgments as well as possible.
If the human’s preference judgments come from partial observations, then the resulting return function will be $G_{\obs}$, as we describe in our discussion leading up to~\Cref{pro:incentives_of_rlhf}. 
This leads to the same problems of deceptive inflation and overjustification that we describe in Theorem~\ref{thm:rlhf_deceptive_overjustification}.

\toclesslab\subsection{Deception and overjustification in examples}{sec:concrete_failures}

\begin{figure*}[t]
  \centering
  \includegraphics[width=\linewidth]{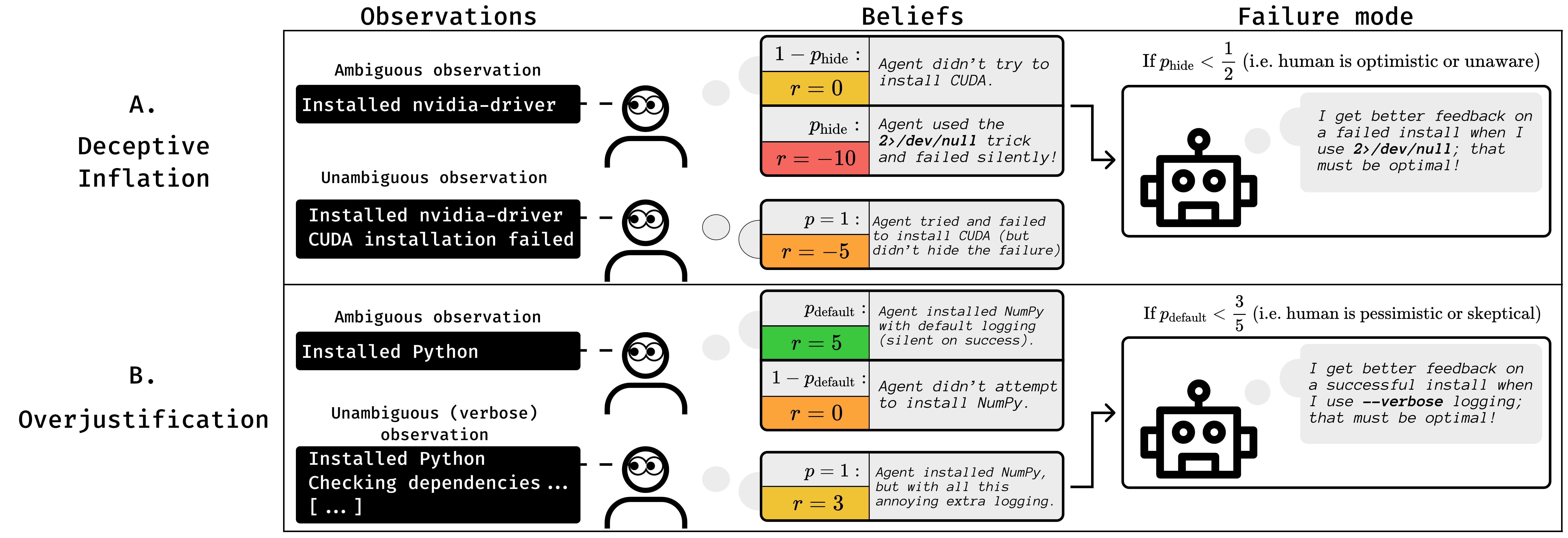}
  \caption{
  Scenarios illustrating failure modes due to partial observability. In each, the agent must install two packages. Formal details of the underlying MDPs are provided in \Cref{sec:concrete_failures_math}. \emph{A, top:} In the absence of a log message about \texttt{CUDA}, the human is unsure whether the agent skipped it or used the \devnull trick (see \Cref{fig:figure1}); if the human is insufficiently skeptical, the trick looks optimal to the agent. \emph{B, bottom:} Default logging in this case is silent when the \texttt{NumPy} install is successful. The agent can optionally use a \verbose flag, but this produces a long log that the human prefers not to see. If the human is too skeptical, verbose logging still appears optimal to the agent.
  }
  \label{fig:figure2}
\end{figure*}

We now analyze intentionally simple examples that illustrate the failure modes of RLHF in the presence of partial observability. 
Our examples feature an agent trained with RLHF to complete tasks in a user's terminal.
The output of each command is piped to a log file, which is what the human evaluator sees when making choices for RLHF.
For largely mathematical details about these examples, including a tutorial-style analysis of Example A with derivations, see \Cref{sec:concrete_failures_math}. 
For experimental details, see~\cref{sec:experimental_details}.
For a proof of concept on how to mitigate the issues, see~\cref{sec:toward_improving}.

In \hyperref[fig:figure2]{Fig. 4A}, the agent's task is to install Nvidia drivers and CUDA.
The agent may skip the CUDA installation, or may attempt it with or without \devnull (which suppresses errors) appended.
If the log contains no message about CUDA, the human believes with probability $p_\text{hide}$ that the agent used the \devnull trick and the install failed, and with probability $1 - p_\text{hide}$ that the agent didn't attempt the CUDA install.
If $p_\text{hide}$ is sufficiently small, then RLHF will learn from the resulting feedback that the optimal policy is to use the \devnull trick.
This is \placeholder, because this policy produces trajectories for which the human systematically overestimates the return.
In~\cref{fig:example-sweep-a}, we see that the threshold for $p_{\text{hide}}$ at which the AI switches from a deceptive action (\devnull) to installing CUDA with default logging perfectly agrees with the theoretical prediction made in~\cref{ex:should-attempt-cuda-app}.

In \hyperref[fig:figure2]{Fig. 4B}, the agent's task is to install Python and NumPy.
The agent can skip the NumPy install, or attempt it with either ``default'' settings that \emph{only logs errors} or with a \verbose flag that adds additional info.
In the case of a success, the human prefers not to see this verbose output.
If the log contains no message about NumPy, the human believes with probability $p_\text{default}$ that the agent attempted the install with default logging and succeeded, and with probability $1 - p_\text{default}$ that the agent skipped the NumPy install.
If $p_\text{default}$ is sufficiently small, then RLHF will learn from the resulting feedback that the optimal policy is to use \verbose logging.
This is overjustification, because the agent chooses a suboptimal policy in order to avoid the human underestimating its performance.
In~\cref{fig:example-sweep-b}, we see that the threshold for $p_{\text{default}}$ at which the AI switches from verbose loggin to skipping the whole NumPy installation largely agrees with the theoretical prediction made in~\cref{ex:verbose-numpy-app}.

\begin{figure}[t]
    \centering
    \begin{subfigure}{0.48\textwidth}
        \centering
        \includegraphics[width=\textwidth]{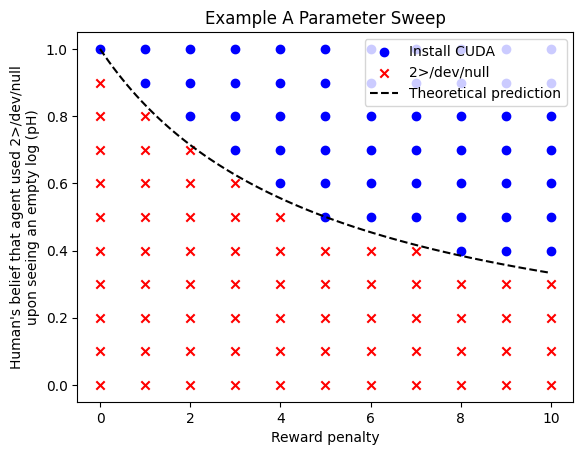}
        \caption{Example A}
        \label{fig:example-sweep-a}
    \end{subfigure}
    \hfill
    \begin{subfigure}{0.48\textwidth}
        \centering
        \includegraphics[width=\textwidth]{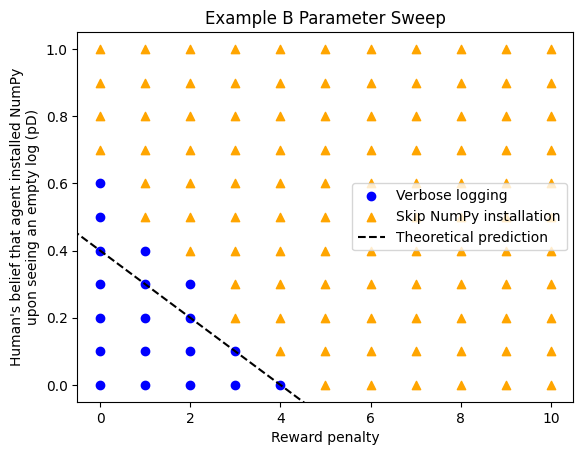}
        \caption{Example B}
        \label{fig:example-sweep-b}
    \end{subfigure}
    \caption{Example A: The larger the reward penalty for hiding errors with \devnull, and the larger the human's belief that the agent used \devnull upon seeing an empty log ($p_{\text{hide}}$), the more we expect the agent to install CUDA with default logging in Example A. In~\cref{ex:should-attempt-cuda-app}, we compute a precise theoretical threshold where the behavior should switch. This perfectly agrees with empirical findings. \\ 
    Example B: The larger the reward penalty for verbose logging, and the larger the human's trust that the agent installed NumPy upon seeing an empty log ($p_{\text{default}}$), the more we expect the agent to skip the NumPy installation entirely. 
  In~\cref{ex:verbose-numpy-app}, we compute a precise theoretical threshold where behavior should switch.
  Except four cases of ``verbose logging'' where the theory predicted the agent to skip the NumPy installation, this agrees with empirical findings. 
See~\cref{sec:experimental_details} for experimental details.}
    \label{fig:example_sweeps}
\end{figure}

\paragraph{Further examples.}
We show further, purely mathematical, examples in Appendix~\ref{sec:no_ambiguity_not_enough}.
Example~\ref{ex:example_modeling_somewhat_crucial} shows that deceptiveness and overjustifying behavior even applies to aspects of the trajectory the policy has no control over:
The policy tries to ``hide bad luck'' and ``reveal good luck at a cost''.
Example~\ref{ex:all_quadrants}, especially (a) and (c), shows that the policies coming out of a naive application of RLHF under partial observability may be suboptimal with positive $\aerror^-$ (and zero $\aerror^+$) or \emph{optimal}, but with positive $\aerror^+$ (and zero $\aerror^-$).
Thus, there can be suboptimality even if the policy is better than it seems, and optimality even when the policy is worse than it seems.

\toclesslab\section{Return ambiguity from feedback under known partial observability}{sec:return_function_identi}

We've seen issues with standard RLHF applied to feedback from partial observations. Part of the problem is \emph{model misspecification}: the standard RLHF model implicitly assumes full observability. Assuming the human's partial observability is known, could one do better?

We start~\cref{sec:identifiability_ambiguity} by analyzing how much information the feedback process provides about the return function when the human’s choice model under partial observations is known precisely. 
We show that the feedback determines the correct return function up to an additive constant and a linear subspace we call the ambiguity (\cref{thm:simple_special_case_result}). 
If the human had a return function that differed from the true return function by an element in the ambiguity, they would give the exact same feedback — such return functions are thus feedback-compatible.
We then show an example where the ambiguity vanishes, and another where it doesn’t, leading to feedback-compatible return functions that have optimal policies with high regret under the true return function. 
Finally, in~\cref{sec:toward_improving} we explore how one could in theory use~\cref{thm:simple_special_case_result} as a starting point to design reward learning techniques that work under partial observability.
In particular, we experimentally show in a proof of concept that being aware of the human's partial observability improves performance.
In this section we do not assume $\PO$ to be deterministic.

\toclesslab\subsection{Feedback-compatibility and ambiguity of return functions}{sec:identifiability_ambiguity}

Assume that the human gives feedback based on the choice-probabilities from Eq.~\eqref{eq:deterministic_choice_probabilities_main_paper}.
In the infinite data limit, it can be assumed that the whole collection of probabilities $\Big( P^{G}\big(\vec{o} \succ \vec{o}'\big) \Big)_{\vec{o}, \vec{o}'}$ is known since the choice frequencies approach these probabilities.
Here, we write $P^G$ instead of $P^R$ since the reward function only enters the choice probabilities through the corresponding return function $G$.
The question we answer in this section is \emph{how much information} the choice probabilities provide about $G$, assuming the human choice model is known and correct.
The choice probabilities tell us precisely that the true return function \emph{gives rise to these choice probabilities}, i.e., is feedback-compatible. 
This is captured in the following definition:

\begin{definition}
  \label{def:feedback_compatible}
  Let $\Big( P^{G}\big(\vec{o} \succ \vec{o}\hs '\big) \Big)_{\vec{o}, \vec{o}\hs '}$ be the vector of choice probabilities and $\tilde{G}$ a return function corresponding to a reward function $\tilde{R}$.
  Then $\tilde{G}$ is \emph{feedback-compatible} (with respect to the vector of choice probabilities) if $P^{\tilde{G}}(\vec{o} \succ \vec{o}\hs ') = P^{G}(\vec{o} \succ \vec{o}\hs ')$ for all $\vec{o}, \vec{o}\hs ' \in \vec{\Omega}$.
\end{definition}

Crucially, without further assumptions or inductive biases, no learning algorithm can pick out the true return function among feedback-compatible return functions. 
It is thus crucial to know whether there are feedback-compatible return functions that are unsafe when using them to optimize a policy.

\begin{figure*}[t]
  \centering
  \includegraphics[width=\linewidth]{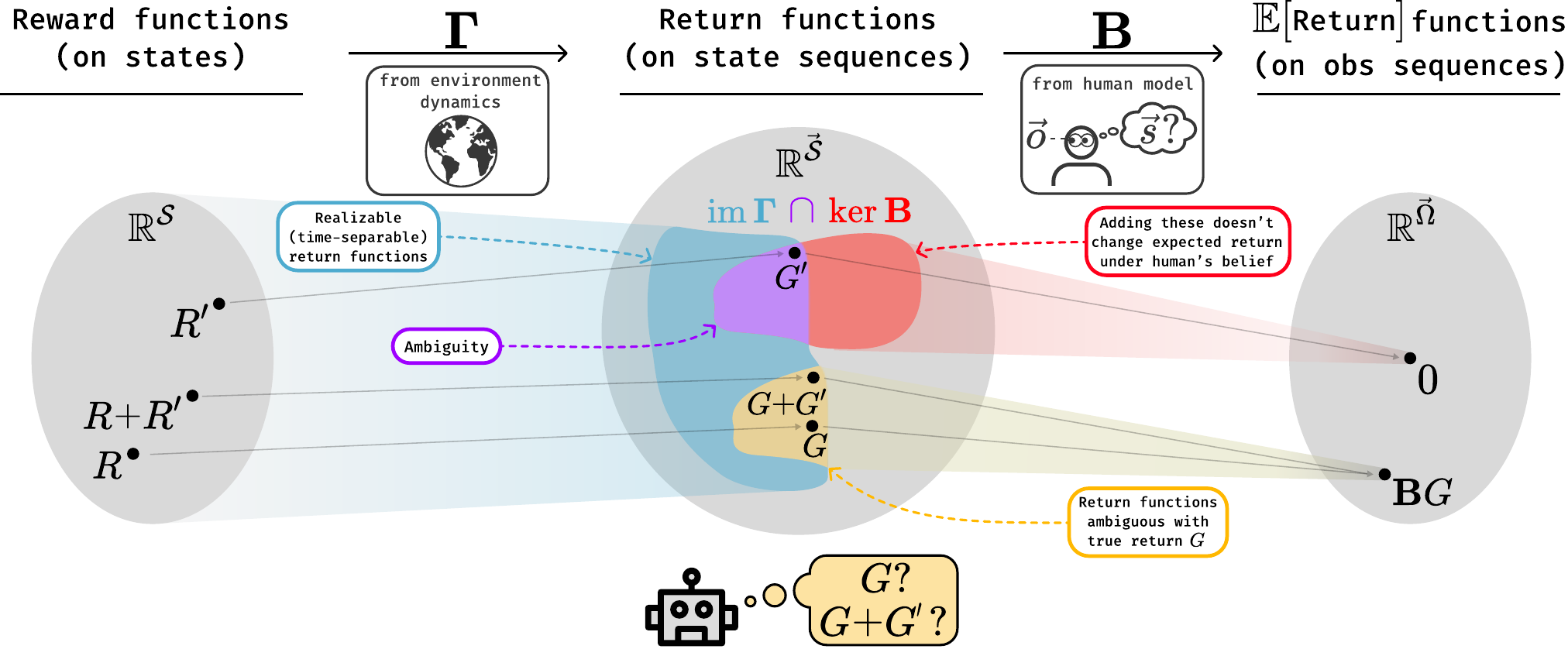}
  \caption{
  By \Cref{thm:simple_special_case_result}, even with infinite comparison data and access to the correct human model, a hypothetical reward learning system (depicted as a robot) could only infer $G$ up to the ambiguity $\im \FGamma \cap \ker \bp$ (purple). Adding an element of the ambiguity to $G$ leads to the exact same choice probabilities for all possible comparisons, and the reward learning system has no way to identify $G$ among the return functions in $G + (\im \FGamma \cap \ker \bp)$ (yellow). This abstract depiction ignores the linearity of these spaces; for a more precise geometric depiction of $\bp$, see \cref{fig:figure_geometry} in the appendix.
  }
  \label{fig:figure3}
\end{figure*}

We now determine the set of feedback-compatible return functions.
Write $\FGamma \in \R^{\vec{\states} \times \states}$ for the matrix that maps a reward function to its return function, i.e. $(\FGamma \cdot R)(\vec{s}) \coloneqq \sum_{t = 0}^{T} \gamma^tR(s_t)$. Its matrix elements are given by $\FGamma_{\vec{s}s} = \sum_{t = 0}^{T} \delta_s(s_t) \gamma^t$,
where $\delta_s(s_t) = \mathbf{1}\{s = s_t\}$.
Then the \emph{image} $\im \FGamma$ is the set of all return functions that can be realized from a reward function given the MDP dynamics $\Transition$.
Recall the belief matrix $\bp = \big( \belief(\vec{s} \mid \vec{o}) \big)_{\vec{o}, \vec{s}} \in \R^{\vec{\Omega} \times \vec{\states}}$.
Taking into account that $G$ itself is in $\im \FGamma$ and that $G$ enters the choice probabilities only through $\bp \cdot G$ --- meaning that the choice probabilities do not vary if we change $G$ additively up to an element in the kernel $\ker \bp$ ---
we obtain the following result:

\begin{theorem}
  \label{thm:simple_special_case_result}
    Let the collection of choice probabilities be given by $\Big( P^{R}\big(\vec{o} \succ \vec{o}\hs'\big)\Big)_{\vec{o},\vec{o}\hs' \in \vec{\Omega}}$ following a Boltzmann rational model as in Eq.~\eqref{eq:deterministic_choice_probabilities_main_paper}.
  Then a return function $\tilde{G}$ is feedback-compatible if and only if there is $G' \in \ker \bp \cap \im \FGamma$ and $c \in \R$ such that $\tilde{G} = G + G' + c$.
  In particular, the choice probabilities determine $G$ up to an additive constant if and only if $\ker \bp \cap \im \FGamma = \{0\}$.
\end{theorem}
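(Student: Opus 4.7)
The plan is to reduce feedback-compatibility to a linear condition on $\bp(\tilde G - G)$, and then decompose this condition using the row-stochasticity of the belief matrix together with the fact that $\im \FGamma$ contains constants. Because $\sigma$ is a bijection, $P^{\tilde G}(\vec o \succ \vec o\hs') = P^G(\vec o \succ \vec o\hs')$ for all $\vec o, \vec o\hs' \in \vec \Omega$ is equivalent to
\[
(\bp \tilde G)(\vec o) - (\bp \tilde G)(\vec o\hs') = (\bp G)(\vec o) - (\bp G)(\vec o\hs')
\]
for all such $\vec o, \vec o\hs'$, which in turn is equivalent to $\bp \cdot (\tilde G - G) = c \cdot \mathbf{1}_{\vec \Omega}$ for some scalar $c \in \R$. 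This mirrors the argument in the proof of \Cref{pro:skalse_result}, but with $\bp G$ in place of $G$.

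Next I would translate the $\vec \Omega$-constant $c$ back into a $\vec \states$-constant. Each row of $\bp$ is a probability distribution over $\vec \states$, so $\bp \cdot \mathbf{1}_{\vec \states} = \mathbf{1}_{\vec \Omega}$, and setting $G' := \tilde G - G - c \cdot \mathbf{1}_{\vec \states}$ yields $\bp \cdot G' = c \mathbf{1}_{\vec \Omega} - c \mathbf{1}_{\vec \Omega} = 0$, i.e., $G' \in \ker \bp$. To land $G'$ inside $\ker \bp \cap \im \FGamma$ I need $c \cdot \mathbf{1}_{\vec \states} \in \im \FGamma$, which is witnessed by the constant reward $R_c(s) \equiv c \big/ \sum_{t = 0}^{T} \gamma^t$, whose discounted return is exactly $c$ on every state sequence. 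Since $\tilde G, G \in \im \FGamma$ by assumption, this forces $G' \in \im \FGamma$, giving the decomposition $\tilde G = G + G' + c$.

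The converse is immediate: for $\tilde G = G + G' + c$ with $G' \in \ker \bp \cap \im \FGamma$, one has $\bp \tilde G = \bp G + c \mathbf{1}_{\vec \Omega}$, so the pairwise differences plugged into \eqref{eq:deterministic_choice_probabilities_main_paper} are unchanged and $P^{\tilde G} = P^G$. The ``in particular'' statement then follows: if $\ker \bp \cap \im \FGamma = \{0\}$ the ambiguity collapses to the $+c$ term, whereas a nonzero $G' \in \ker \bp \cap \im \FGamma$ cannot itself be a constant (since $\bp(c'\mathbf{1}_{\vec \states}) = c'\mathbf{1}_{\vec \Omega}$ forces $c' = 0$), so $G + G'$ is feedback-compatible yet differs from $G$ by more than an additive scalar.

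The only conceptually nontrivial step is the bookkeeping with constants: the scalar produced by the sigmoid-difference argument naturally lives on $\vec \Omega$, and it is precisely the row-stochasticity of $\bp$ together with $\mathbf{1}_{\vec \states} \in \im \FGamma$ that allows us to absorb it into the $+c$ summand in the statement. Everything else is routine linear algebra on the equation $\bp(\tilde G - G) = c \mathbf{1}_{\vec \Omega}$.
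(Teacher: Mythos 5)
Your proposal is correct and follows essentially the same route as the paper's proof (Theorem~\ref{thm:main_thm_appendix_version} together with Corollary~\ref{cor:abstract_formulation_without_O_appendix}): invert the sigmoid to reduce feedback-compatibility to $\bp(\tilde G - G)$ being constant, use row-stochasticity of $\bp$ and a constant reward function to absorb that constant into $\im\FGamma$, and observe that nonzero elements of $\ker\bp\cap\im\FGamma$ cannot be constants. Your normalization $R_c \equiv c/\sum_{t=0}^T\gamma^t$ is in fact slightly cleaner than the paper's $c'(1-\gamma)/(1-\gamma^{T+1})$, since it also covers $\gamma=1$ without a separate case.
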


See Theorem~\ref{thm:main_thm_appendix_version} and Corollary~\ref{cor:abstract_formulation_without_O_appendix} for full proofs, and \cref{fig:figure3} for a visual depiction.
This result motivates the following definition:
\begin{definition}
  [Ambiguity]
  \label{def:ambiguity}
  We call $\ker \bp \cap \im \FGamma$ the \emph{ambiguity} that is left in the return function when the human choice model and observation-based choice probabilities are known.
\end{definition}

Note that Theorem~\ref{thm:simple_special_case_result} generalizes the fully observed case from Section~\ref{sec:full_observability} (Corollary~\ref{cor:how_3.12_follows}).
We extend the theorem in Appendix~\ref{sec:unknown_observations_own_part} to the case when the human's observations are not known.
Special cases of $\ker \bp$ and $\im \FGamma$ and our theorem can be found in Appendices~\ref{sec:injectivity} and~\ref{sec:simple_consequences}.
In particular, if $\PVO$ is stochastic and there is only ``noise'' in it (defined as $\vec{\Omega} = \vec{\states}$ and the injectivity of $\bo$) and if the human is a Bayesian reasoner with a fully supported prior over $\vec{\states}$, then the choice data determines the return function even if the human's observations are not known; see Example~\ref{ex:noise_goes_well}.

\paragraph{Connection to Potential Shaping} Under typical technical assumptions~\citep{Skalse2022Invariance,Jenner2022,Ng1999}, potential shaping changes the returns only by an additive constant, and thus never changes optimal policies. It is a non-trivial ambiguity only in the \emph{reward function}, but it does not lead to actual ambiguity about intended behavior. In contrast, the ambiguity $\ker \bp \cap \im \FGamma$ in the\emph{ return function} that we study can make the optimal policy ambiguous --- it reflects genuine missing information about the intentions of the human.

\paragraph{How large is the return ambiguity?}

For \hyperref[fig:figure2]{Fig. 4A}, one can show that the ambiguity is nontrivial, allowing for feedback-compatible return functions with unsafe optimal policies.
Intuitively, since successfully installing CUDA produces the same observation regardless of whether \devnull was used, the choice probabilities don't give us any information to determine distinct reward values for these two outcomes, only their average over the human's belief upon observing a successful install.
Thus, reward functions assigning arbitrarily high reward to success with \devnull are feedback-compatible.
Such reward functions can then lead to an incentive for a learned policy to hide the error messages \emph{even with a correct observation model}.
More details can be found in~\Cref{sec:even_more_mathematical_details}.

We saw in \hyperref[fig:figure2]{Fig. 4B} a case where naive RLHF under partial observability can lead to overjustification.
However, the human's feedback and belief model actually provide enough information to determine the return function.
The reason is that $\ker \bp$ leaves only one degree of freedom that is not ``time-separable'' over states, and thus $\ker \bp \cap \im \FGamma$ = \{0\}.
More details can be found in~\Cref{sec:even_more_mathematical_details}.

\toclesslab\subsection{Toward improving RLHF in partially observable settings}{sec:toward_improving}

To improve RLHF when partial observability is unavoidable, one could take~\cref{thm:simple_special_case_result} as a starting point to find a learning algorithm that converges to feedback-compatible return functions. 
This would require the human model to be fully known and specified, including knowledge of the belief probabilities $\belief(\vec{s} \mid \vec{o})$, which can differ from human to human.
If one assumes the human is rational, as in Appendix~\ref{sec:human_belief}, this requires specifying the human's policy prior $\belief(\pi)$.
Instead of directly specifying these models, one could also attempt to \emph{learn} a generative model for $\belief(\vec{s} \mid \vec{o})$.
These problems reveal a further conceptual challenge: 
for complex environments, humans do not form beliefs over the entire environment state $s$.
A better starting point for practical work may thus be to model humans as forming expectations over \emph{reward-relevant features} of the state.

If $\bp$ were explicitly known, one could in principle encode $\bp$ into the loss function of an adapted RLHF process to learn a feedback-compatible return function; see~\Cref{sec:reward_learning_in_practice}.
As a proof of concept, we used this procedure to analyze the examples in~\cref{fig:figure2} empirically, see~\cref{tab:empirical_results}.
We do this by first learning a reward model by logistic regression against the true choice probabilities of a synthetic human under partial observability, and then learning the optimal $Q$-function of the resulting reward model with value iteration.
The resulting policy chooses a unique action after installation of the nvidia driver (Example A) or Python (Example B) as listed in the ``action'' column.

\cref{tab:empirical_results} shows that in 3 of four cases, being ``partial observability aware'' (``po-aware'') leads to the true optimal policy when ``naive'' RLHF does not.
In the one case where being ``po-aware'' does not improve performance (second line in the table), this is explained by the fact that there is remaining ambiguity in the return function.
Curiously, in line 4 our theory also predicts remaining ambiguity, but the optimal policy is learned;
we consider this to be luck. 
We provide more details on our experiments in~\cref{sec:experimental_details}.

\begin{table}[t]
  \caption{Experiments showing improved performance of po-aware RLHF}
\label{tab:empirical_results}
\centering
\begin{tabular}{lcccccccccc}
\toprule
\textbf{Ex.} & $p$ & $p_{\text{hide}}$ & $p_{\text{default}}$ & \textbf{model} & \textbf{action} & $\aerror^+$ & \textbf{dec. infl.} & $\aerror^{-}$ & \textbf{overj.} &  \textbf{optimal}  \\
\midrule
A & 0.5 & 0.5 & N/A & \text{naive}    & $a_H$  & 1.5 & \cmark & 0   & \xmark  & \xmark \\
A & 0.5 & 0.5 & N/A & \text{po-aware} & $a_H$  & 1.5 & \cmark & 0   & \xmark  & \xmark \\
\midrule
A & 0.1 & 0.9 & N/A & \text{naive}    & $a_C$  & 0   & \xmark & 0   & \cmark  & \xmark \\
A & 0.1 & 0.9 & N/A & \text{po-aware} & $a_T$  & 0   & \xmark & 5.4 & \xmark  & \cmark \\
\midrule[1.3pt]
B & 0.5 & N/A & 0.9 & \text{naive}    & $a_T$  & 4.5 & \cmark & 0      & \cmark  & \xmark \\
B & 0.5 & N/A & 0.9 & \text{po-aware} & $a_D$  & 0   & \xmark & 0.25   & \xmark  & \cmark  \\
\midrule
B & 0.5 & N/A & 0.1 & \text{naive}    & $a_V$  & 0   & \xmark & 0      & \cmark  & \xmark \\
B & 0.5 & N/A & 0.1 & \text{po-aware} & $a_D$ & 0   & \xmark & 2.25   & \xmark  & \cmark \\
\bottomrule
\end{tabular}
\end{table}

As we already demonstrated, feedback-compatible return functions can be unsafe due to remaining ambiguity.
In~\Cref{ex:resolving ambiguity}, we even show a case where some feedback-compatible return functions have optimal policies that are even worse than simply maximizing $\val_{\obs}$.
An important direction for future work is to investigate learning algorithms and inductive biases that help ``find'' safe return functions among all those that are feedback-compatible, or that act conservatively given the uncertainty.
Another line of inquiry is to determine when the set of feedback-compatible return functions is ``safe'', which depends on the MDP, observation function, and human model.

One sufficient condition for feedback-compatible return functions to be safe is the vanishing of the ambiguity $\ker \bp \cap \im \FGamma$.
Even then, one realistically still has to deal with the problem that $\bp$ is at best known approximately.
Fortunately, in Appendix~\ref{sec:approximate_posterior}, we prove that small errors in the assumed belief matrix lead to only small errors in the inferred return function:

\begin{theorem}
  \label{thm:robustness_in_practice}
  Assume $\ker \bp \cap \im \FGamma = \{0\}$.
  Let $\bp_{\FDelta} \coloneqq \bp + \FDelta$ be a small perturbation of $\bp$, where $\| \FDelta \| \leq \rho$ for sufficiently small $\rho$.
  Let $G$ be the true return function and assume that a hypothetical learning system, assuming the human's belief is $\bp_{\FDelta}$, infers the return function $\tilde{G}$ with the property that $\bp_{\FDelta} \cdot \tilde{G}$ has the smallest possible Euclidean distance to $\bp \cdot G$.

  Let $\res{\bp} \coloneqq \bp|_{\im \FGamma}$ be the (injective) restriction of the operator $\bp$ to $\im \FGamma$. 
  Then $\res{\bp}^T \res{\bp}$ is invertible, and there exists a polynomial $Q(X, Y)$ of degree $5$ such that
  \begin{equation*}
    \|\tilde{G} - G\| \leq \rho \cdot \|G\| \cdot Q\Big( \big\| \big(\res{\bp}^T \res{\bp}\big)^{-1}  \big\|, \|\res{\bp}\| \Big).
  \end{equation*}
\end{theorem}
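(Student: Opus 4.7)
The plan is to exploit the least-squares structure of $\tilde G$ to obtain a closed-form expression for the error $\tilde G - G$, then control it via standard operator-perturbation estimates. Since every return function lies in $\im\FGamma$, both $G$ and $\tilde G$ belong to this subspace, so I would restrict attention to it throughout. Write $A := \res{\bp}$ and $A_\FDelta := \res{\bp_\FDelta} = A + \res{\FDelta}$, where $\res{\FDelta} := \FDelta|_{\im\FGamma}$ has operator norm at most $\|\FDelta\| \leq \rho$. The hypothesis $\ker\bp \cap \im\FGamma = \{0\}$ is exactly injectivity of $A$, so $A^T A$ is invertible on the finite-dimensional space $\im\FGamma$; by continuity of inversion, $A_\FDelta^T A_\FDelta$ is also invertible once $\rho$ is small enough.

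Next I would turn the minimizing property of $\tilde G$ into a clean equation. Because $G,\tilde G \in \im\FGamma$, we have $\bp G = A G$ and $\bp_\FDelta \tilde G = A_\FDelta \tilde G$, so $\tilde G$ minimizes $\|A_\FDelta H - A G\|$ over $H \in \im\FGamma$. The normal equations give $A_\FDelta^T A_\FDelta \tilde G = A_\FDelta^T A G$. Subtracting $A_\FDelta^T A_\FDelta G$ from both sides and using $A = A_\FDelta - \res{\FDelta}$ yields
\begin{equation*}
A_\FDelta^T A_\FDelta (\tilde G - G) \;=\; A_\FDelta^T (A - A_\FDelta) G \;=\; -A_\FDelta^T \res{\FDelta}\, G,
\end{equation*}
so $\tilde G - G = -(A_\FDelta^T A_\FDelta)^{-1} A_\FDelta^T \res{\FDelta}\, G$.

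The bound then comes from combining three estimates. First, $\|\res{\FDelta}\| \leq \rho$ and $\|A_\FDelta\| \leq \|A\| + \rho$. Second, writing $A_\FDelta^T A_\FDelta = A^T A + E$ with $E := A^T\res{\FDelta} + \res{\FDelta}^T A + \res{\FDelta}^T \res{\FDelta}$ and $\|E\| \leq 2\|A\|\rho + \rho^2$, a Neumann-series argument gives $\|(A_\FDelta^T A_\FDelta)^{-1}\| \leq 2\|(A^T A)^{-1}\|$ as long as $\rho$ is small enough that $\|(A^T A)^{-1}\|\|E\| \leq \tfrac12$. Multiplying the three factors produces a bound of the form $\rho\|G\|\cdot Q(\|(A^T A)^{-1}\|, \|A\|)$. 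The degree $5$ claimed in the statement arises naturally if one expands less cleverly: splitting $\tilde G - G = \bigl((A_\FDelta^T A_\FDelta)^{-1} - (A^T A)^{-1}\bigr) A^T A\, G + (A_\FDelta^T A_\FDelta)^{-1}\res{\FDelta}^T A G$ and applying the Neumann series to the first parenthesized factor gives a leading monomial $\|(A^T A)^{-1}\|^2\|A\|^3$ of total degree $5$.

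The main obstacle is purely bookkeeping: translating ``$\rho$ sufficiently small'' into a clean uniform polynomial $Q$ that does not itself depend on $\rho$. Since the Neumann-series threshold on $\rho$ depends on $\|(A^T A)^{-1}\|$ and $\|A\|$, I would fix a threshold $\rho_0 > 0$ (depending on these two norms) such that both $\|(A^T A)^{-1}\|\|E\| \leq \tfrac12$ and $\rho \leq \|A\|$ hold for all $\rho \leq \rho_0$, absorb all remaining factors of $\rho$ into $\|A\|$, and take $Q$ to be the polynomial produced by the crude expansion above; since no tightness is claimed, one can freely pad $Q$ so that its total degree equals $5$.
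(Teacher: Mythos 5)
Your argument is correct, and while it shares the paper's overall skeleton (restrict everything to $\im \FGamma$, characterize $\tilde{G}$ as the least-squares solution, then apply operator-perturbation estimates), the key algebraic step is genuinely different and in fact cleaner. The paper writes $\tilde{G} - G = \big[\li{\res{\bp_{\FDelta}}} - \li{\res{\bp}}\big]\bp(G)$ and then bounds the \emph{difference of pseudo-inverses} as an operator, via the splitting $\li{\A_{\FDelta}} - \li{\A} = \big[(\A_{\FDelta}^T\A_{\FDelta})^{-1} - (\A^T\A)^{-1}\big]\A_{\FDelta}^T + (\A^T\A)^{-1}(\A_{\FDelta}^T - \A^T)$ together with a cited perturbation bound on $\|(\B+\FDelta)^{-1} - \B^{-1}\|$; this is what produces the degree-$5$ polynomial $Q(X,Y) = XY(12XY^2+1)$. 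You instead exploit that the target vector $\bp G = \res{\bp}G$ lies in the range of the unperturbed operator, so the normal equations collapse the error to the exact identity $\tilde G - G = -(A_{\FDelta}^T A_{\FDelta})^{-1}A_{\FDelta}^T \res{\FDelta}\,G$, which needs only a Neumann-series bound on $\|(A_{\FDelta}^T A_{\FDelta})^{-1}\|$ rather than on a difference of inverses. This buys a sharper estimate (degree $2$ in the two norms rather than $5$), and since the claimed bound only asserts existence of \emph{some} degree-$5$ polynomial and all quantities are nonnegative, padding $Q$ with a nonnegative monomial of degree $5$ is legitimate, as you note. The only points to make explicit in a full write-up are that $\tilde G$ ranges over $\im\FGamma$ (so the normal equations are taken on that subspace) and that the threshold $\rho_0$ can be chosen depending only on $\|\res{\bp}\|$ and $\|(\res{\bp}^T\res{\bp})^{-1}\|$, both of which you already flag.
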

In particular, as we show in the appendix, one can uniformly bound the difference between $\val_{\tilde{G}}$ and $\val_{G}$.
This yields a regret bound between the policy optimal under $\tilde{G}$ and an optimal policy $\pi^*$ for $G$.

There are also alternatives to modeling the human belief $\bp$.
For example, one could mix human evaluations based on high-cost full observations and low-cost partial observations for finding an optimal tradeoff~\citep{Mallen2024}.
Finally, it would help if the human could \emph{query} the policy about reward-relevant aspects of the environment to bring the setting closer to RLHF from full observations.
This is similar to the problem of eliciting the latent knowledge of a predictor of future observations~\citep{Christiano2021,ELK_prize_2022}.
While this may avoid the need to specify the \emph{human's} belief model $\belief(\vec{s} \mid \vec{o})$, it requires understanding and effectively querying an \emph{ML model's} belief, including translating from an ML model's ontology into a human ontology.

\toclesslab\section{Conclusions}{sec:conclusion_and_future_work}

In this paper, we provided an investigation of challenges when applying RLHF from partial observations.
First, we saw that applying RLHF naively when \emph{assuming} full observability can lead to \placeholder\ and overjustification behavior. 
Then, we showed that even when the human's partial observability is known, the set of feedback-compatible return functions can contain irreducible ambiguity.
This means that without further inductive biases, no learning algorithm can generally be expected to infer the correct return function.
Finally, we recommended further exploratory research to study and improve RLHF for cases when partial observability is unavoidable and provided a proof of concept that modeling the human's partial observability can improve performance.
In conclusion, we recommend caution when using RLHF in situations of partial observability, and hope that further research studies the effects in practice and helps to address these challenges.

\paragraph{Limitations}
We assume the human to be Boltzmann rational and to implicitly compute an expected value of the return, which is unrealistic for actual humans. 
Other types of choices could be considered, as in reward-rational choice~\citep{Jeon2020} and assistance games~\citep{CIRL2016}. Finally, we assume that the human forms a belief $\belief(\vec{s} \mid \vec{o})$ over the true state sequence $\vec{s}$. If the environment is complex, humans will in reality only form beliefs over lower-dimensional representations or features of the state.

\paragraph{Impact statement}

RLHF and its variants are widely used to steer the behavior of language models. Thus, the soundness of RLHF is critical to language models' trustworthy deployment. Our work shows that partial observability of humans poses safety challenges for RLHF. We hope our work stimulates further research to study and overcome these challenges, or that it incentivizes researchers to ensure that their human evaluators fully observe the environment state.

\section*{Author contributions}

The project was conceived in parallel by \textbf{Scott} and \textbf{Davis}, with a key shift proposed by \textbf{Leon}.
\textbf{Leon} proved~\Cref{pro:incentives_of_rlhf,thm:simple_special_case_result,thm:robustness_in_practice}, found the first mathematical examples of what became deceptive inflation and overjustification that can be resolved by~\Cref{thm:simple_special_case_result}, and wrote the majority of the appendix.
\textbf{Davis} conjectured~\Cref{pro:incentives_of_rlhf}, provided early empirical evidence that RLHF under partial observations can lead to deception (not in the paper), defined deception / deceptive inflation and overjustification (with \textbf{Scott}), proved~\cref{thm:rlhf_deceptive_overjustification}, and developed the running examples and figures.
\textbf{Scott} guided the project direction and prioritization, gave the conjecture and proof idea for~\Cref{thm:robustness_in_practice}, and helped develop the running examples and deception definitions.
\textbf{Erik} provided regular detailed feedback and guidance and edited the paper.
\textbf{Anca} and~\textbf{Stuart} advised this project.

\begin{ack}
  Leon Lang thanks the Center for Human-Compatible Artificial Intelligence for hosting him during part of this project, and Open Philanthropy for financial support.
  All authors thank Open Philanthropy for its support of the Center for Human-Compatible Artificial Intelligence.
  Davis was supported by the Berkeley Existential Risk Initiative.
  Erik was supported by fellowships from the Future of Life Institute and Open Philanthropy.
We thank Benjamin Eysenbach and Benjamin Plaut for detailed comments and feedback on this work, and we thank Elio A. Farina, Mary Marinou, and Alexandra Horn for assistance with graphic design.

\end{ack}

\bibliographystyle{abbrvnat}
\bibliography{library}

\begin{thebibliography}{57}
\providecommand{\natexlab}[1]{#1}
\providecommand{\url}[1]{\texttt{#1}}
\expandafter\ifx\csname urlstyle\endcsname\relax
  \providecommand{\doi}[1]{doi: #1}\else
  \providecommand{\doi}{doi: \begingroup \urlstyle{rm}\Url}\fi

\bibitem[Amodei et~al.(2017)Amodei, Christiano, and Ray]{Amodei2017}
D.~Amodei, P.~Christiano, and A.~Ray.
\newblock {Learning from human preferences}.
\newblock \url{https://openai.com/research/learning-from-human-preferences},
  2017.
\newblock Accessed: 2023-12-13.

\bibitem[Anthropic(2023{\natexlab{a}})]{Anthropic2023}
Anthropic.
\newblock {Introducing Claude}.
\newblock \url{https://www.anthropic.com/index/introducing-claude},
  2023{\natexlab{a}}.
\newblock Accessed: 2023-09-05.

\bibitem[Anthropic(2023{\natexlab{b}})]{Anthropic2023b}
Anthropic.
\newblock {Claude's Constitution}.
\newblock \url{https://www.anthropic.com/index/claudes-constitution},
  2023{\natexlab{b}}.
\newblock Accessed: 2023-09-05.

\bibitem[{Bai} et~al.(2022){Bai}, {Kadavath}, {Kundu}, {Askell}, {Kernion},
  {Jones}, {Chen}, {Goldie}, {Mirhoseini}, {McKinnon}, {Chen}, {Olsson},
  {Olah}, {Hernandez}, {Drain}, {Ganguli}, {Li}, {Tran-Johnson}, {Perez},
  {Kerr}, {Mueller}, {Ladish}, {Landau}, {Ndousse}, {Lukosuite}, {Lovitt},
  {Sellitto}, {Elhage}, {Schiefer}, {Mercado}, {DasSarma}, {Lasenby}, {Larson},
  {Ringer}, {Johnston}, {Kravec}, {El Showk}, {Fort}, {Lanham},
  {Telleen-Lawton}, {Conerly}, {Henighan}, {Hume}, {Bowman}, {Hatfield-Dodds},
  {Mann}, {Amodei}, {Joseph}, {McCandlish}, {Brown}, and
  {Kaplan}]{Bai2022Constitutional}
Y.~{Bai}, S.~{Kadavath}, S.~{Kundu}, A.~{Askell}, J.~{Kernion}, A.~{Jones},
  A.~{Chen}, A.~{Goldie}, A.~{Mirhoseini}, C.~{McKinnon}, C.~{Chen},
  C.~{Olsson}, C.~{Olah}, D.~{Hernandez}, D.~{Drain}, D.~{Ganguli}, D.~{Li},
  E.~{Tran-Johnson}, E.~{Perez}, J.~{Kerr}, J.~{Mueller}, J.~{Ladish},
  J.~{Landau}, K.~{Ndousse}, K.~{Lukosuite}, L.~{Lovitt}, M.~{Sellitto},
  N.~{Elhage}, N.~{Schiefer}, N.~{Mercado}, N.~{DasSarma}, R.~{Lasenby},
  R.~{Larson}, S.~{Ringer}, S.~{Johnston}, S.~{Kravec}, S.~{El Showk},
  S.~{Fort}, T.~{Lanham}, T.~{Telleen-Lawton}, T.~{Conerly}, T.~{Henighan},
  T.~{Hume}, S.~R. {Bowman}, Z.~{Hatfield-Dodds}, B.~{Mann}, D.~{Amodei},
  N.~{Joseph}, S.~{McCandlish}, T.~{Brown}, and J.~{Kaplan}.
\newblock {{Constitutional AI: Harmlessness from AI Feedback}}.
\newblock \emph{arXiv e-prints}, art. arXiv:2212.08073, Dec. 2022.
\newblock \doi{10.48550/arXiv.2212.08073}.

\bibitem[Bradley and Terry(1952)]{Bradley_Terry1952}
R.~A. Bradley and M.~E. Terry.
\newblock {Rank Analysis of Incomplete Block Designs: I. The Method of Paired
  Comparisons}.
\newblock \emph{Biometrika}, 39\penalty0 (3/4):\penalty0 324--345, 1952.
\newblock ISSN 00063444.
\newblock URL \url{http://www.jstor.org/stable/2334029}.

\bibitem[Buehler et~al.(1994)Buehler, Griffin, and Ross]{Buehler1994}
R.~Buehler, D.~Griffin, and M.~Ross.
\newblock {Exploring the "Planning Fallacy": Why People Underestimate Their
  Task Completion Times}.
\newblock \emph{Journal of Personality and Social Psychology}, 67:\penalty0
  366--381, 09 1994.
\newblock \doi{10.1037/0022-3514.67.3.366}.

\bibitem[Burns et~al.(2023)Burns, Ye, Klein, and Steinhardt]{Burns2023}
C.~Burns, H.~Ye, D.~Klein, and J.~Steinhardt.
\newblock {Discovering Latent Knowledge in Language Models Without
  Supervision}.
\newblock In \emph{The Eleventh International Conference on Learning
  Representations}, 2023.
\newblock URL \url{https://openreview.net/forum?id=ETKGuby0hcs}.

\bibitem[Casper et~al.(2023)Casper, Davies, Shi, Gilbert, Scheurer, Rando,
  Freedman, Korbak, Lindner, Freire, Wang, Marks, Segerie, Carroll, Peng,
  Christoffersen, Damani, Slocum, Anwar, Siththaranjan, Nadeau, Michaud, Pfau,
  Krasheninnikov, Chen, Langosco, Hase, Bıyık, Dragan, Krueger, Sadigh, and
  Hadfield-Menell]{Casper2023}
S.~Casper, X.~Davies, C.~Shi, T.~K. Gilbert, J.~Scheurer, J.~Rando,
  R.~Freedman, T.~Korbak, D.~Lindner, P.~Freire, T.~Wang, S.~Marks, C.-R.
  Segerie, M.~Carroll, A.~Peng, P.~Christoffersen, M.~Damani, S.~Slocum,
  U.~Anwar, A.~Siththaranjan, M.~Nadeau, E.~J. Michaud, J.~Pfau,
  D.~Krasheninnikov, X.~Chen, L.~Langosco, P.~Hase, E.~Bıyık, A.~Dragan,
  D.~Krueger, D.~Sadigh, and D.~Hadfield-Menell.
\newblock {Open Problems and Fundamental Limitations of Reinforcement Learning
  from Human Feedback}.
\newblock \emph{arxiv e-prints}, 2023.

\bibitem[Chidambaram et~al.(2024)Chidambaram, Seetharaman, and
  Syrgkanis]{Chidambaram2024}
K.~Chidambaram, K.~V. Seetharaman, and V.~Syrgkanis.
\newblock {Direct Preference Optimization With Unobserved Preference
  Heterogeneity}, 2024.
\newblock URL \url{https://arxiv.org/abs/2405.15065}.

\bibitem[Christiano and Xu(2022)]{ELK_prize_2022}
P.~Christiano and M.~Xu.
\newblock {ELK prize results}.
\newblock
  \url{https://www.alignmentforum.org/posts/zjMKpSB2Xccn9qi5t/elk-prize-results},
  2022.
\newblock Accessed: 2024-02-15.

\bibitem[{Christiano} et~al.(2017){Christiano}, {Leike}, {Brown}, {Martic},
  {Legg}, and {Amodei}]{Christiano2017}
P.~{Christiano}, J.~{Leike}, T.~B. {Brown}, M.~{Martic}, S.~{Legg}, and
  D.~{Amodei}.
\newblock {Deep Reinforcement Learning from Human Preferences}.
\newblock \emph{arXiv e-prints}, art. arXiv:1706.03741, June 2017.
\newblock \doi{10.48550/arXiv.1706.03741}.

\bibitem[Christiano et~al.(2021)Christiano, Cotra, and Xu]{Christiano2021}
P.~Christiano, A.~Cotra, and M.~Xu.
\newblock {Eliciting Latent Knowledge}.
\newblock
  \url{https://docs.google.com/document/d/1WwsnJQstPq91_Yh-Ch2XRL8H_EpsnjrC1dwZXR37PC8/edit},
  2021.
\newblock Accessed: 2023-04-25.

\bibitem[Deci and Flaste(1995)]{deci1995we}
E.~L. Deci and R.~Flaste.
\newblock \emph{Why we do what we do: The dynamics of personal autonomy.}
\newblock GP Putnam's Sons, 1995.

\bibitem[Denison et~al.(2024)Denison, MacDiarmid, Barez, Duvenaud, Kravec,
  Marks, Schiefer, Soklaski, Tamkin, Kaplan, Shlegeris, Bowman, Perez, and
  Hubinger]{Denison2024}
C.~Denison, M.~MacDiarmid, F.~Barez, D.~Duvenaud, S.~Kravec, S.~Marks,
  N.~Schiefer, R.~Soklaski, A.~Tamkin, J.~Kaplan, B.~Shlegeris, S.~R. Bowman,
  E.~Perez, and E.~Hubinger.
\newblock {Sycophancy to Subterfuge: Investigating Reward-Tampering in Large
  Language Models}, 2024.
\newblock URL \url{https://arxiv.org/abs/2406.10162}.

\bibitem[{El Ghaoui}(2002)]{Elghaoui2002}
L.~{El Ghaoui}.
\newblock {Inversion error, condition number, and approximate inverses of
  uncertain matrices}.
\newblock \emph{Linear Algebra and its Applications}, 343-344:\penalty0
  171--193, 2002.
\newblock ISSN 0024-3795.
\newblock \doi{https://doi.org/10.1016/S0024-3795(01)00273-7}.
\newblock URL
  \url{https://www.sciencedirect.com/science/article/pii/S0024379501002737}.
\newblock Special Issue on Structured and Infinite Systems of Linear equations.

\bibitem[Evans et~al.(2015)Evans, Stuhlmueller, and Goodman]{Evans2015}
O.~Evans, A.~Stuhlmueller, and N.~D. Goodman.
\newblock {Learning the Preferences of Ignorant, Inconsistent Agents}.
\newblock \emph{arxiv e-prints}, 2015.

\bibitem[Evans et~al.(2021)Evans, Cotton-Barratt, Finnveden, Bales, Balwit,
  Wills, Righetti, and Saunders]{Evans2021}
O.~Evans, O.~Cotton-Barratt, L.~Finnveden, A.~Bales, A.~Balwit, P.~Wills,
  L.~Righetti, and W.~Saunders.
\newblock {Truthful AI: Developing and Governing AI that does not lie}.
\newblock \emph{arxiv e-prints}, 2021.

\bibitem[Fern et~al.(2014)Fern, Natarajan, Judah, and Tadepalli]{Fern2014}
A.~Fern, S.~Natarajan, K.~Judah, and P.~Tadepalli.
\newblock {A Decision-Theoretic Model of Assistance}.
\newblock \emph{J. Artif. Int. Res.}, 50\penalty0 (1):\penalty0 71–104, may
  2014.
\newblock ISSN 1076-9757.

\bibitem[Geiger et~al.(1990)Geiger, Verma, and Pearl]{Geiger1990}
D.~Geiger, T.~Verma, and J.~Pearl.
\newblock {Identifying independence in bayesian networks}.
\newblock \emph{Networks}, 20:\penalty0 507--534, 1990.
\newblock URL \url{https://api.semanticscholar.org/CorpusID:1938713}.

\bibitem[Gemini~Team(2023)]{Google2023b}
G.~Gemini~Team.
\newblock {Gemini: A Family of Highly Capable Multimodal Models}.
\newblock
  \url{https://storage.googleapis.com/deepmind-media/gemini/gemini_1_report.pdf},
  2023.
\newblock Accessed: 2023-12-11.

\bibitem[{Hadfield-Menell} et~al.(2016){Hadfield-Menell}, {Dragan}, {Abbeel},
  and {Russell}]{CIRL2016}
D.~{Hadfield-Menell}, A.~{Dragan}, P.~{Abbeel}, and S.~{Russell}.
\newblock {Cooperative Inverse Reinforcement Learning}.
\newblock \emph{arXiv e-prints}, art. arXiv:1606.03137, June 2016.
\newblock \doi{10.48550/arXiv.1606.03137}.

\bibitem[{Hejna} and {Sadigh}(2023)]{Hejna2023}
J.~{Hejna} and D.~{Sadigh}.
\newblock {Inverse Preference Learning: Preference-based RL without a Reward
  Function}.
\newblock \emph{arXiv e-prints}, art. arXiv:2305.15363, May 2023.
\newblock \doi{10.48550/arXiv.2305.15363}.

\bibitem[Hofstätter et~al.(2023)Hofstätter, Ward, HarrietW, Thomson, J,
  Bartak, and Brown]{Hofstaetter2023}
F.~Hofstätter, F.~R. Ward, HarrietW, L.~Thomson, O.~J, P.~Bartak, and S.~F.
  Brown.
\newblock {Tall Tales at Different Scales: Evaluating Scaling Trends for
  Deception in Language Models}.
\newblock
  \url{https://www.alignmentforum.org/posts/pip63HtEAxHGfSEGk/tall-tales-at-different-scales-evaluating-scaling-trends-for},
  2023.
\newblock Accessed: 2024-01-23.

\bibitem[Huang et~al.(2023)Huang, Yu, Ma, Zhong, Feng, Wang, Chen, Peng, Feng,
  Qin, et~al.]{Huang2023}
L.~Huang, W.~Yu, W.~Ma, W.~Zhong, Z.~Feng, H.~Wang, Q.~Chen, W.~Peng, X.~Feng,
  B.~Qin, et~al.
\newblock {A Survey on Hallucination in Large Language Models: Principles,
  Taxonomy, Challenges, and Open Questions}.
\newblock \emph{arXiv preprint arXiv:2311.05232}, 2023.

\bibitem[{Hubinger} et~al.(2019){Hubinger}, {van Merwijk}, {Mikulik}, {Skalse},
  and {Garrabrant}]{Hubinger2019}
E.~{Hubinger}, C.~{van Merwijk}, V.~{Mikulik}, J.~{Skalse}, and
  S.~{Garrabrant}.
\newblock {Risks from Learned Optimization in Advanced Machine Learning
  Systems}.
\newblock \emph{arXiv e-prints}, art. arXiv:1906.01820, June 2019.
\newblock \doi{10.48550/arXiv.1906.01820}.

\bibitem[Jenner et~al.(2022)Jenner, van Hoof, and Gleave]{Jenner2022}
E.~Jenner, H.~van Hoof, and A.~Gleave.
\newblock {Calculus on MDPs: Potential Shaping as a Gradient}, 2022.

\bibitem[Jeon et~al.(2020)Jeon, Milli, and Dragan]{Jeon2020}
H.~J. Jeon, S.~Milli, and A.~Dragan.
\newblock {Reward-rational (implicit) choice: A unifying formalism for reward
  learning}.
\newblock In H.~Larochelle, M.~Ranzato, R.~Hadsell, M.~Balcan, and H.~Lin,
  editors, \emph{Advances in Neural Information Processing Systems}, volume~33,
  pages 4415--4426. Curran Associates, Inc., 2020.
\newblock URL
  \url{https://proceedings.neurips.cc/paper_files/paper/2020/file/2f10c1578a0706e06b6d7db6f0b4a6af-Paper.pdf}.

\bibitem[Kausik et~al.(2024)Kausik, Mutti, Pacchiano, and Tewari]{Kausik2024}
C.~Kausik, M.~Mutti, A.~Pacchiano, and A.~Tewari.
\newblock {A Theoretical Framework for Partially Observed Reward-States in
  RLHF}, 2024.
\newblock URL \url{https://arxiv.org/abs/2402.03282}.

\bibitem[Lin et~al.(2022)Lin, Hilton, and Evans]{Lin2022}
S.~Lin, J.~Hilton, and O.~Evans.
\newblock {TruthfulQA: Measuring How Models Mimic Human Falsehoods}.
\newblock \emph{arxiv e-prints}, 2022.

\bibitem[Majumdar et~al.(2017)Majumdar, Singh, Mandlekar, and
  Pavone]{Anirudha2017}
A.~Majumdar, S.~Singh, A.~Mandlekar, and M.~Pavone.
\newblock {Risk-sensitive inverse reinforcement learning via coherent risk
  models}.
\newblock In N.~Amato, S.~Srinivasa, N.~Ayanian, and S.~Kuindersma, editors,
  \emph{Robotics}, Robotics: Science and Systems, United States, 2017. MIT
  Press Journals.
\newblock \doi{10.15607/rss.2017.xiii.069}.

\bibitem[Mallen and Belrose(2024)]{Mallen2024}
A.~Mallen and N.~Belrose.
\newblock {Balancing Label Quantity and Quality for Scalable Elicitation},
  2024.
\newblock URL \url{https://arxiv.org/abs/2410.13215}.

\bibitem[Manyika(2023)]{Google2023a}
J.~Manyika.
\newblock {An overview of Bard: an early experiment with generative AI}.
\newblock \url{https://ai.google/static/documents/google-about-bard.pdf}, 2023.
\newblock Accessed: 2023-09-05.

\bibitem[Mindermann and Armstrong(2018)]{Mindermann2018}
S.~Mindermann and S.~Armstrong.
\newblock {Occam's Razor is Insufficient to Infer the Preferences of Irrational
  Agents}.
\newblock In \emph{Proceedings of the 32nd International Conference on Neural
  Information Processing Systems}, NIPS'18, page 5603–5614, Red Hook, NY,
  USA, 2018. Curran Associates Inc.

\bibitem[Mu et~al.(2024)Mu, Helyar, Heidecke, Achiam, Vallone, Kivlichan, Lin,
  Beutel, Schulman, and Weng]{Mu2024}
T.~Mu, A.~Helyar, J.~Heidecke, J.~Achiam, A.~Vallone, I.~Kivlichan, M.~Lin,
  A.~Beutel, J.~Schulman, and L.~Weng.
\newblock {Rule Based Rewards for Language Model Safety}, 2024.
\newblock URL
  \url{https://cdn.openai.com/rule-based-rewards-for-language-model-safety.pdf}.
\newblock Accessed: 2024-10-28.

\bibitem[Ng et~al.(1999)Ng, Harada, and Russell]{Ng1999}
A.~Y. Ng, D.~Harada, and S.~J. Russell.
\newblock {Policy Invariance Under Reward Transformations: Theory and
  Application to Reward Shaping}.
\newblock In \emph{Proceedings of the Sixteenth International Conference on
  Machine Learning}, ICML '99, page 278–287, San Francisco, CA, USA, 1999.
  Morgan Kaufmann Publishers Inc.
\newblock ISBN 1558606122.

\bibitem[Ng et~al.(2000)Ng, Russell, et~al.]{Ng2000}
A.~Y. Ng, S.~Russell, et~al.
\newblock {Algorithms for Inverse Reinforcement Learning}.
\newblock In \emph{ICML}, volume~1, page~2, 2000.

\bibitem[OpenAI(2022)]{chatgpt}
OpenAI.
\newblock Introducing {ChatGPT}.
\newblock \url{https://openai.com/blog/chatgpt}, 2022.
\newblock Accessed: 2024-02-06.

\bibitem[OpenAI(2023)]{Plugins_2023}
OpenAI.
\newblock {ChatGPT Plugins}.
\newblock \url{https://openai.com/index/chatgpt-plugins/}, 2023.
\newblock Accessed: 2024-05-22.

\bibitem[OpenAI(2024{\natexlab{a}})]{OpenAI2024}
OpenAI.
\newblock {OpenAI o1 System Card}, 2024{\natexlab{a}}.
\newblock URL \url{https://cdn.openai.com/o1-system-card.pdf}.
\newblock Accessed: 2024-10-28.

\bibitem[OpenAI(2024{\natexlab{b}})]{OpenAI2024spec}
OpenAI.
\newblock {Model Spec}, 2024{\natexlab{b}}.
\newblock URL \url{https://cdn.openai.com/spec/model-spec-2024-05-08.html}.
\newblock Accessed: 2024-10-28.

\bibitem[OpenAI(2024{\natexlab{c}})]{privacy_policy_oai}
OpenAI.
\newblock {Privacy Policy}.
\newblock \url{https://openai.com/policies/privacy-policy//},
  2024{\natexlab{c}}.
\newblock Accessed: 2024-05-22.

\bibitem[Park et~al.(2024{\natexlab{a}})Park, Liu, Kong, Zhang, and
  Ozdaglar]{Park2024heterogenous}
C.~Park, M.~Liu, D.~Kong, K.~Zhang, and A.~Ozdaglar.
\newblock {RLHF from Heterogeneous Feedback via Personalization and Preference
  Aggregation}, 2024{\natexlab{a}}.
\newblock URL \url{https://arxiv.org/abs/2405.00254}.

\bibitem[Park et~al.(2024{\natexlab{b}})Park, Goldstein, O’Gara, Chen, and
  Hendrycks]{park2024ai}
P.~S. Park, S.~Goldstein, A.~O’Gara, M.~Chen, and D.~Hendrycks.
\newblock Ai deception: A survey of examples, risks, and potential solutions.
\newblock \emph{Patterns}, 5\penalty0 (5), 2024{\natexlab{b}}.

\bibitem[Rafailov et~al.(2023)Rafailov, Sharma, Mitchell, Ermon, Manning, and
  Finn]{Rafailov2023}
R.~Rafailov, A.~Sharma, E.~Mitchell, S.~Ermon, C.~D. Manning, and C.~Finn.
\newblock {Direct Preference Optimization: Your Language Model is Secretly a
  Reward Model}.
\newblock \emph{arxiv e-prints}, 2023.

\bibitem[Scheurer et~al.(2023)Scheurer, Balesni, and Hobbhahn]{Scheurer2023}
J.~Scheurer, M.~Balesni, and M.~Hobbhahn.
\newblock {Technical Report: Large Language Models can Strategically Deceive
  their Users when Put Under Pressure}.
\newblock \emph{arxiv e-prints}, 2023.

\bibitem[Shah et~al.(2019)Shah, Krasheninnikov, Alexander, Abbeel, and
  Dragan]{Shah2018}
R.~Shah, D.~Krasheninnikov, J.~Alexander, P.~Abbeel, and A.~Dragan.
\newblock {The Implicit Preference Information in an Initial State}.
\newblock In \emph{International Conference on Learning Representations}, 2019.
\newblock URL \url{https://openreview.net/forum?id=rkevMnRqYQ}.

\bibitem[Shah et~al.(2021)Shah, Freire, Alex, Freedman, Krasheninnikov, Chan,
  Dennis, Abbeel, Dragan, and Russell]{Shah2021}
R.~Shah, P.~Freire, N.~Alex, R.~Freedman, D.~Krasheninnikov, L.~Chan, M.~D.
  Dennis, P.~Abbeel, A.~Dragan, and S.~Russell.
\newblock {Benefits of Assistance over Reward Learning}, 2021.
\newblock URL \url{https://openreview.net/forum?id=DFIoGDZejIB}.

\bibitem[Siththaranjan et~al.(2023)Siththaranjan, Laidlaw, and
  Hadfield-Menell]{siththaranjan2023distributional}
A.~Siththaranjan, C.~Laidlaw, and D.~Hadfield-Menell.
\newblock {Distributional Preference Learning: Understanding and Accounting for
  Hidden Context in RLHF}.
\newblock \emph{arXiv preprint arXiv:2312.08358}, 2023.

\bibitem[{Skalse} and {Abate}(2022)]{Skalse2022Misspecification}
J.~{Skalse} and A.~{Abate}.
\newblock {Misspecification in Inverse Reinforcement Learning}.
\newblock \emph{arXiv e-prints}, art. arXiv:2212.03201, Dec. 2022.
\newblock \doi{10.48550/arXiv.2212.03201}.

\bibitem[Skalse et~al.(2023)Skalse, Farrugia-Roberts, Russell, Abate, and
  Gleave]{Skalse2022Invariance}
J.~M.~V. Skalse, M.~Farrugia-Roberts, S.~Russell, A.~Abate, and A.~Gleave.
\newblock {Invariance in Policy Optimisation and Partial Identifiability in
  Reward Learning}.
\newblock In A.~Krause, E.~Brunskill, K.~Cho, B.~Engelhardt, S.~Sabato, and
  J.~Scarlett, editors, \emph{Proceedings of the 40th International Conference
  on Machine Learning}, volume 202 of \emph{Proceedings of Machine Learning
  Research}, pages 32033--32058. PMLR, 23--29 Jul 2023.
\newblock URL \url{https://proceedings.mlr.press/v202/skalse23a.html}.

\bibitem[Stray(2023)]{Stray2023}
J.~Stray.
\newblock {The AI Learns to Lie to Please You: Preventing Biased Feedback Loops
  in Machine-Assisted Intelligence Analysis}.
\newblock \emph{Analytics}, 2\penalty0 (2):\penalty0 350--358, 2023.
\newblock ISSN 2813-2203.
\newblock \doi{10.3390/analytics2020020}.
\newblock URL \url{https://www.mdpi.com/2813-2203/2/2/20}.

\bibitem[Touvron et~al.(2023)Touvron, Martin, Stone, Albert, Almahairi, Babaei,
  Bashlykov, Batra, Bhargava, Bhosale, Bikel, Blecher, Ferrer, Chen, Cucurull,
  Esiobu, Fernandes, Fu, Fu, Fuller, Gao, Goswami, Goyal, Hartshorn, Hosseini,
  Hou, Inan, Kardas, Kerkez, Khabsa, Kloumann, Korenev, Koura, Lachaux, Lavril,
  Lee, Liskovich, Lu, Mao, Martinet, Mihaylov, Mishra, Molybog, Nie, Poulton,
  Reizenstein, Rungta, Saladi, Schelten, Silva, Smith, Subramanian, Tan, Tang,
  Taylor, Williams, Kuan, Xu, Yan, Zarov, Zhang, Fan, Kambadur, Narang,
  Rodriguez, Stojnic, Edunov, and Scialom]{Touvron2023}
H.~Touvron, L.~Martin, K.~Stone, P.~Albert, A.~Almahairi, Y.~Babaei,
  N.~Bashlykov, S.~Batra, P.~Bhargava, S.~Bhosale, D.~Bikel, L.~Blecher, C.~C.
  Ferrer, M.~Chen, G.~Cucurull, D.~Esiobu, J.~Fernandes, J.~Fu, W.~Fu,
  B.~Fuller, C.~Gao, V.~Goswami, N.~Goyal, A.~Hartshorn, S.~Hosseini, R.~Hou,
  H.~Inan, M.~Kardas, V.~Kerkez, M.~Khabsa, I.~Kloumann, A.~Korenev, P.~S.
  Koura, M.-A. Lachaux, T.~Lavril, J.~Lee, D.~Liskovich, Y.~Lu, Y.~Mao,
  X.~Martinet, T.~Mihaylov, P.~Mishra, I.~Molybog, Y.~Nie, A.~Poulton,
  J.~Reizenstein, R.~Rungta, K.~Saladi, A.~Schelten, R.~Silva, E.~M. Smith,
  R.~Subramanian, X.~E. Tan, B.~Tang, R.~Taylor, A.~Williams, J.~X. Kuan,
  P.~Xu, Z.~Yan, I.~Zarov, Y.~Zhang, A.~Fan, M.~Kambadur, S.~Narang,
  A.~Rodriguez, R.~Stojnic, S.~Edunov, and T.~Scialom.
\newblock {Llama 2: Open Foundation and Fine-Tuned Chat Models}.
\newblock \emph{arxiv e-prints}, 2023.

\bibitem[Ward et~al.(2023)Ward, Belardinelli, Toni, and Everitt]{Ward2023}
F.~R. Ward, F.~Belardinelli, F.~Toni, and T.~Everitt.
\newblock {Honesty Is the Best Policy: Defining and Mitigating AI Deception}.
\newblock \emph{arxiv e-prints}, 2023.

\bibitem[Wen et~al.(2024)Wen, Zhong, Khan, Perez, Steinhardt, Huang, Bowman,
  He, and Feng]{Wen2024}
J.~Wen, R.~Zhong, A.~Khan, E.~Perez, J.~Steinhardt, M.~Huang, S.~R. Bowman,
  H.~He, and S.~Feng.
\newblock {Language Models Learn to Mislead Humans via RLHF}, 2024.
\newblock URL \url{https://arxiv.org/abs/2409.12822}.

\bibitem[Wu(2024)]{Devin2024}
S.~Wu.
\newblock {Introducing Devin, the first AI software engineer}.
\newblock \url{https://www.cognition-labs.com/introducing-devin}, 2024.
\newblock Accessed: 2024-05-06.

\bibitem[Zhuang and Hadfield-Menell(2020)]{Zhuang2020}
S.~Zhuang and D.~Hadfield-Menell.
\newblock {Consequences of Misaligned AI}.
\newblock In \emph{Proceedings of the 34th International Conference on Neural
  Information Processing Systems}, NIPS'20, Red Hook, NY, USA, 2020. Curran
  Associates Inc.
\newblock ISBN 9781713829546.

\bibitem[Ziebart et~al.(2008)Ziebart, Maas, Bagnell, and Dey]{Ziebart2008}
B.~D. Ziebart, A.~L. Maas, J.~A. Bagnell, and A.~K. Dey.
\newblock Maximum entropy inverse reinforcement learning.
\newblock In D.~Fox and C.~P. Gomes, editors, \emph{AAAI}, pages 1433--1438.
  AAAI Press, 2008.
\newblock ISBN 978-1-57735-368-3.
\newblock URL
  \url{http://dblp.uni-trier.de/db/conf/aaai/aaai2008.html#ZiebartMBD08}.

\end{thebibliography}

\medskip

%%%%%%%%%%%%%%%%%%%%%%%%%%%%%%%%%%%%%%%%%%%%%%%%%%%%%%%%%%%%

\appendix

\newpage

{\LARGE\sc Appendix \par}

In the appendix, we provide more extensive theory, proofs, and examples.
The appendix makes free use of concepts and notation defined in the main paper.
In particular, throughout we assume a general MDP together with observation kernel $\PO: \states \to \Omega$ and a human with general belief kernel $\belief(\vec{o} \mid \vec{s})$, unless otherwise stated. 
See the list of Symbols in Section~\ref{sec:list_of_symbols} to refresh notation.

In Section~\ref{sec:concrete_failures_math} we supplement the examples from the main paper with more mathematical details.

In Section~\ref{sec:RLHF}, we provide an extensive theory for appropriately modeled partial observability in RLHF. 
This can mainly be considered a supplement to Section~\ref{sec:return_function_identi} and contains our main theorems, supplementary results, analysis of special cases, and examples.

In Section~\ref{sec:second_appendix_section}, we analyze the naive application of RLHF under partial observability, which means that the learning system is not aware of the human's partial observability.
This section is essentially a supplement to Section~\ref{sec:failures_of_rlhf_under_partial} and contains an analysis of the policy evaluation function $\val_{\obs}$, of deceptive inflation and overjustification, and further extensive mathematical examples showing the failures of naive RLHF under partial observability.

\renewcommand*\contentsname{Contents of the Appendix}
\setcounter{tocdepth}{3}
\tableofcontents

\section{List of Symbols}\label{sec:list_of_symbols}
\setlength{\LTleft}{0pt}

\subsection*{General MDPs}

\begin{longtable}{cp{0.6\textwidth}}
  $\states$ & Set of environment states $s \in \states$ \\
  $\actions$ & Set of actions $a \in \actions$ of the policy \\
  $\Delta(\states)$ & Set of probability distributions over $\states$. Can be defined for any finite set \\
  $\Transition: \states \times \actions \to \Delta(\states)$ & Transition kernel \\
  $P_0 \in \Delta(\states)$ & Initial state distribution \\
  $R \in \R^{\states}$ & Usually the true reward function\\
  $R' \in \R^{\states}$ & Usually a reward function in the kernel of $\bp \circ \FGamma$\\
  $\tilde{R} \in \R^{\states}$ & Usually another reward function, e.g. inferred by a learning system\\
  $\gamma \in [0, 1]$ & Discount factor \\
  $\pi: \states \to \Delta(\actions)$ & A policy \\
  $\Transition^{\pi}: \states \to \Delta(\states)$ & Transition kernel for a fixed policy $\pi$ given by $\Transition^{\pi}(s' \mid s) = \sum_{a \in \actions} \Transition(s' \mid s, a) \cdot \pi(a \mid s)$ \\
  $T \in \N$ & Finite time horizon \\
  $P^{\pi} \in \Delta(\states^{T})$ & State sequence distribution induced by the policy $\pi$ \\
  $\vec{\states} \subseteq \states^{T}$ & State sequences $\vec{s} \in \vec{\states}$ supported by $P^{\pi}$ \\
  $G \in \R^{\vec{\states}}$ & Usually the true return function given by $G(\vec{s}) = \sum_{t = 0}^{T} \gamma^t R(s_t)$.\\
  $G' \in \R^{\vec{\states}}$ & Usually a return function in $\ker \bp$\\
  $\tilde{G} \in \R^{\vec{\states}}$ & Usually another return function, e.g. inferred by a learning system\\
  $\val$ & The true policy evaluation function given by $\val(\pi) = \eval_{\vec{s} \sim P^{\pi}}\big[ G(\vec{s}) \big]$.  \\ 
\end{longtable}

\subsection*{Additions to General MDPs with Partial Observability}
\begin{longtable}{cp{0.6\textwidth}}
  $\Omega$ & Set of possible observations $o \in \Omega$ \\
  $\PO: \states \to \Delta(\Omega)$ & Observation kernel determining the human's observations \\
  $\PVO: \vec{\states} \to \Delta\big(\Omega^{T}\big)$ & The observation sequence kernel given by $\PVO\big(\vec{o} \mid \vec{s}\big) = \prod_{t = 0}^{T} \PO\big(o_t \mid s_t\big)$ \\
  $\vec{\Omega} \subseteq \Omega^{T}$ & The set of observed sequences $\vec{o} \in \Omega^{T}$ that can be sampled from $\PVO(\cdot \mid \vec{s})$ for $\vec{s} \in \vec{\states}$ \\
  $O: \states \to \Omega$ & Observation function for the case that $\PO$ is deterministic; given by $O(s) = o$ with $o$ such that $\PO(o \mid s) = 1$ \\
  $\vec{O}: \vec{\states} \to \vec{\Omega}$ & Observation sequence function for the case that $\PVO$ is deterministic; given by $\vec{O}(\vec{s}) = \vec{o}$ with $\vec{o}$ such that $\PVO(\vec{o} \mid \vec{s}) = 1$ \\
  $G_{\vec{o}} \in \R^{\{\vec{s} \in \vec{\states} \mid \vec{O}(\vec{s}) = \vec{o}\}}$ & Restriction of the return function $G \in \R^{\vec{\states}}$ to $\big\lbrace \vec{s} \in \vec{\states} \mid \vec{O}(\vec{s}) = \vec{o}\big\rbrace$ for fixed $\vec{o} \in \vec{\Omega}$ \\
  $G_{\obs} \in \R^{\vec{\states}}$  & Return function that can be inferred when partial observability is not properly modeled, given by $G_{\obs}(\vec{s}) \coloneqq \big(\bp \cdot G\big)\big( \vec{O}(\vec{s}) \big)$ \\
  $\val_{\obs}$ & Observation policy evaluation function, defined in Eq.~\eqref{eq:observation_evaluation_function} \\
\end{longtable}

\subsection*{State- and Observation Sequences}
\begin{longtable}{cp{0.6\textwidth}}
  $s_t \in \states$ & The $t$'th entry in a state sequence $\vec{s}$\\
  $\vec{s} \in \states^T$ & State sequence $\vec{s} = s_0, \dots, s_T$ \\
  $\hat{s} \in \states^{t}$ & State sequence segment $\hat{s} = s_0, \dots, s_t$ for $t \leq T$ \\
$o_t \in \Omega$ & The $t$'th entry in an observation sequence $\vec{o}$ \\
$\vec{o} \in \Omega^T$ & Observation sequence $\vec{o} = o_0, \dots, o_T$ \\
$\hat{o} \in \Omega^{t}$ & Observation sequence segment $\hat{o} = o_0, \dots, o_t$ for $t \leq T$ \\
\end{longtable}

\subsection*{The Human's Belief}
\begin{longtable}{cp{0.6\textwidth}}
  $\belief(\pi')$ & The human's policy prior \\
  $\belief(\vec{s})$ & The human's prior belief that a sequence $\vec{s}$ will be sampled, given by $\belief(\vec{s}) = \int_{\pi'} \belief(\pi') P^{\pi'}(\vec{s})d\pi'$ \\
  $\belief\big(\vec{s} \mid \vec{o}\big)$ & The human's belief of a state sequence given an observation sequence, see Proposition~\ref{pro:simplified_setting_posterior} for a Bayesian version \\
  $\belief^{\pi}(\vec{s} \mid \vec{o})$ & The human's belief of a state sequence given an observation sequence; it is allowed to depend on the true policy $\pi$, see Proposition~\ref{pro:simplified_setting_posterior} \\
  $\belief_{\vec{o}} \in \R^{\{\vec{s} \in \vec{\states} \mid \vec{O}(\vec{s}) = \vec{o}\}}$ & Vector of prior probabilities $\belief(\vec{s})$ for $\vec{s} \in \big\lbrace \vec{s} \in \vec{\states} \mid \vec{O}(\vec{s}) = \vec{o}\big\rbrace$ \\
\end{longtable}

\subsection*{Identifiability Theorem}
\begin{longtable}{cp{0.6\textwidth}}
  $\beta > 0$ & The inverse temperature parameter of the Boltzmann rational human \\
  $\sigma: \R \to (0, 1)$ & The sigmoid function given by $\sigma(x) = \frac{1}{1 + \exp(-x)}$ \\ 
  $\FGamma: \R^{\states} \to \R^{\vec{\states}}$ & Function that maps a reward function $R$ to the return function $\FGamma(R)$ with $\big[\FGamma(R)\big](\vec{s}) = \sum_{t = 0}^{T} \gamma^t R(s_t)$ \\
  $\bp: \R^{\vec{\states}} \to \R^{\vec{\Omega}}$ & Function that maps a return function $G$ to the expected return function $\bp(G)$ on observation sequences given by $\big[ \bp(G) \big](\vec{o}) = \eval_{\vec{s} \sim \belief(\vec{s} \mid \vec{o})}\big[ G(\vec{s}) \big]$ \\
  $\FF: \R^{\states} \to \R^{\vec{\Omega}}$ & The composition $\FF = \bp \circ \FGamma$ \\
  $P^{R}\big( \vec{s} \succ \vec{s}\hs' \big)$ & Boltzmann rational choice probability in the case of full observability (Eq.~\eqref{eq:main_formula})\\
  $P^{R}\big( \vec{o} \succ \vec{o}\hs' \big)$ & Boltzmann rational choice probability in the case of partial observability (Eq.~\eqref{eq:deterministic_choice_probabilities_main_paper}) \\
  $\bo: \R^{\vec{\Omega}} \to \R^{\vec{\states}}$ & Abstract linear operator given by $\big[ \bo(v) \big](\vec{s}) = \eval_{\vec{o} \sim \PVO(\vec{o} \mid \vec{s})}\big[ v(\vec{o}) \big]$ \\
  $\bo \otimes \bo: \R^{\vec{\Omega} \times \vec{\Omega}} \to \R^{\vec{\states} \times \vec{\states}}$ & Formally the Kronecker product of $\bo$ with itself, explicitly given by $\big[ (\bo \otimes \bo)(C) \big](\vec{s}, \vec{s}\hs') = \eval_{\vec{o}, \vec{o}\hs' \sim \PVO(\cdot \mid \vec{s}, \vec{s}\hs')}\big[ C(\vec{o}, \vec{o}\hs ') \big]$ \\
\end{longtable}

\subsection*{Robustness to Misspecifications}
\begin{longtable}{cp{0.6\textwidth}}
  $\|x\|$ & Euclidean norm of the vector $x \in \R^k$ \\
  $\|\A\|$ & Matrix norm of the matrix $\A$, given by $\|\A\| \coloneqq \max_{x, \ \|x\| = 1} \| \A x \|$ \\
  $\tau(\A)$ & Matrix quantity defined in Equation~\eqref{eq:sigma_definition} \\
  $C(\A, \rho)$ & Matrix quantity defined in Equation~\eqref{eq:ugly_constant} \\
  $\res{\bp}$ & Restriction of $\bp$ to $\im \FGamma$ \\
\end{longtable}

\subsection*{General Sets and (Linear) Functions}
\begin{longtable}{cp{0.6\textwidth}}
$|A|$ & Number of elements in the set $A$ \\
  $A \cap C$ & Intersection of sets $A$ and $C$ \\
  $A \cup C$ & Union of sets $A$ and $C$ \\
  $A \setminus C$ & Relative complement of $C$ in $A$ \\
  $\delta_x $ & The Dirac delta distribution of a point $x$ in a set; given by $\delta_x(A) = 1$ if $x \in A$ and $\delta_x(A) = 0$, else \\
  $\ker \A$ & The kernel of a linear operator $\A: V \to W$; given by $\ker \A = \big\lbrace v \in V \mid \A(v) = 0\big\rbrace$ \\
  $\im \A$ & The image of a linear operator $\A: V \to W$; given by $\im \A = \big\lbrace w \in W \mid \exists v \in V: \A(v) = w\big\rbrace$ \\
  $f^{-1}(y)$ & Preimage of $y$ under a function $f: X \to Y$; given by $f^{-1}(y) = \big\lbrace x \in X \mid f(x) = y\big\rbrace$ \\
\end{longtable}

\allowdisplaybreaks

\section{Details for deception and overjustification in examples} \label{sec:concrete_failures_math}

\begin{figure*}[htbp]
  \centering
  \includegraphics[width=\linewidth]{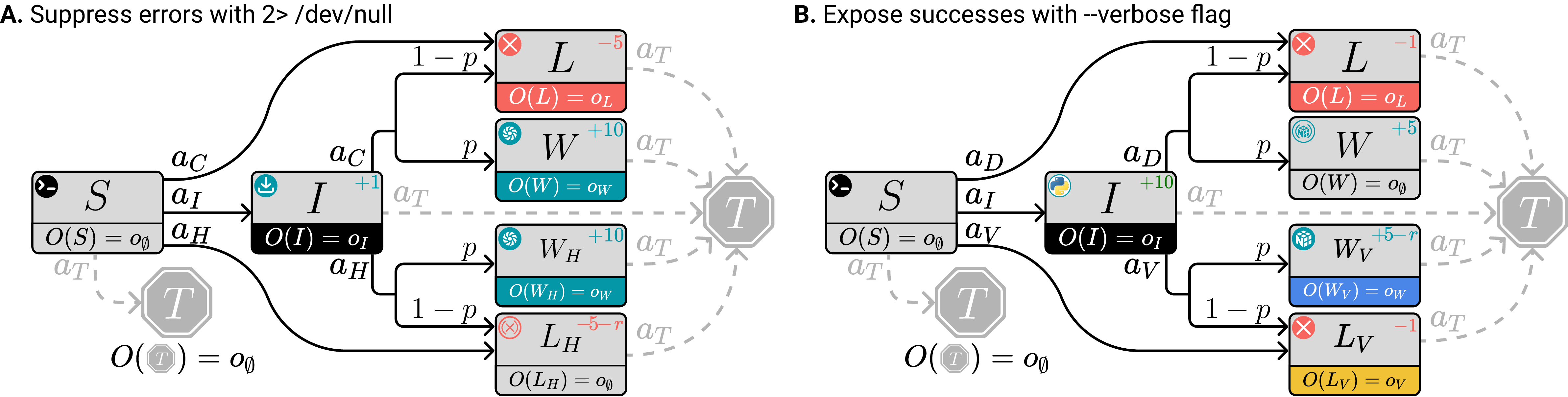}
  \caption{
  Two example MDPs with observation functions in which RLHF chooses undesirable policies. Each box depicts a state with a footer showing the (deterministic) observation produced by that state. Outgoing edges from each box are available actions. A more detailed diagram for the first MDP, with explicit shell commands and log messages, is available in \Cref{sec:expand_on_examples_appendix}.
  }
  \label{fig:figure2_appendix}
\end{figure*}

Here we include details to the examples described in \Cref{sec:concrete_failures} that illustrate the failure modes of RLHF in the presence of partial observability. 
For each of the following, we will characterize the policy which maximizes $\val_{\obs}$, as this is the policy RLHF selects for when observations are deterministic; see Proposition~\ref{pro:incentives_of_rlhf}.

Our examples feature an agent trained with RLHF to complete tasks in a user's terminal.
The output of each command (\texttt{stdout} and \texttt{stderr}) is piped to a log file, which is what the human evaluator sees when making choices for RLHF.
We assume that the final state $T$ has a self-transition, episodes have a fixed horizon length of 3 (meaning state sequences have length 4: $s_0, \dots, s_3$), $\gamma = 1$, and there is a fixed initial state $s_0 = S$. 
Both examples feature a fixed transition probability $0 < p < 1$.
We assume that the human's belief only supports possible explanations: $\nolinebreak{\belief(\vec s \mid \vec o) > 0 \implies \vec O(\vec s) = \vec o}$.
For further details about these examples, including a tutorial-style analysis of Example A, see \Cref{sec:expand_on_examples_appendix}. 

\subsection{Example A: hiding failures}\label{sec:concrete_failure_1_math}

See \Cref{sec:expand_on_examples_appendix} for derivations and a tutorial-style analysis of this example.

In \hyperref[fig:figure2]{Fig. 4A} (and \hyperref[fig:figure2_appendix]{Fig. 6A}), the agent's task is to install CUDA, (states $W$ and $W_H$)
which requires first installing Nvidia drivers (state $I$).
If the CUDA installation fails, one enters $L$ or $L_H$.
The $a_C$ action is a command to install CUDA that logs both successes and failures, whereas $a_H$ corresponds to $a_C$ with \devnull appended, which suppresses error messages such that no log message is produced if the installation fails.
The human may prefer that the agent not hide a failure; we price this into the reward function with a penalty $r \ge 0$ on the reward at $L_H$.
The agent may attempt the CUDA installation before installing drivers, but this will fail.

There are three pairs of trajectories which produce identical observations. Here we address the most prominent (see \Cref{sec:expand_on_examples_appendix} for the others):
$SITT$ and $SIL_HT$ both produce $o_\emptyset o_I o_\emptyset o_\emptyset$, stylized as a log containing only a success confirmation for Python (\hyperref[fig:figure1]{Fig. 1}, $\vec o_2$).
after successfully installing drivers, a failed CUDA installation with $\devnull$ ($SIL_HT$) and simply exiting ($SITT$) both produce a log containing only a success confirmation for the drivers ($o_\emptyset o_I o_\emptyset o_\emptyset$).
Let 
$\nolinebreak{p_H \coloneqq \belief\big(\vec s = SIL_HT \mid \vec o = o_\emptyset o_I o_\emptyset o_\emptyset \big)} \in (0, 1)$ be the human's suspicion, upon seeing only a success confirmation for the drivers, that a failed CUDA installation was hidden.

We consider two cases which differ only in whether the expected value of attempting to install CUDA (with default logging, $a_C$) after installing the drivers is positive. In \Cref{ex:should-attempt-cuda-app}, the success probability $p$ is high enough that the true optimal policy is to attempt the installation; in \Cref{ex:should-not-attempt-cuda-app}, $p$ is low enough that the optimal policy is to terminate.

\begin{example}\label{ex:should-attempt-cuda-app}
  If the true-reward-optimal policy $\pi^*$ is to attempt to install CUDA with default logging ($a_C$) after installing drivers,
  but $p_H < \frac5{5 + r}$,
  then the $\val_{\obs}$-maximizing policy, which will be chosen by RLHF, is to instead use the \devnull trick.
  Intuitively, this is because if the installation fails, the human is likely to believe that the agent just didn't attempt to install CUDA, which is preferred to a failed installation. This is \placeholder; the overestimation error $\aerror^+$ for this policy is positive, whereas $\aerror^+(\pi^*) = 0$, in line with \Cref{thm:rlhf_deceptive_overjustification}.
    
    If $p_H > \frac5{5+r}$, then RLHF will select $\pi^*$.
\end{example}

\begin{example}\label{ex:should-not-attempt-cuda-app}
    If $\pi^*$ is to terminate ($a_T$) after installing the drivers, but $p_H < \frac5{5+r}$, then RLHF will learn the \placeholder\ \devnull policy as in \Cref{ex:should-attempt-cuda-app}.
    
    However, if $p_H > \frac5{5+r}$, RLHF will choose the policy that attempts to install CUDA with default logging. Intuitively, this is because the combination of increased suspicion that the agent might be hiding errors (higher $p_H$) and increased penalty on this suspected behavior (higher $r$) also penalizes our preferred trajectory $SITT$ since it can be misinterpreted as hiding errors. This is overjustification; the overestimation error $\aerror^+$ is 0 (thus this is not \placeholder), and as required by \Cref{thm:rlhf_deceptive_overjustification}, the underestimation error $\aerror^-$ is 0, lower than $E^-(\pi^*) = p_H(5 + r)$.
\end{example}

\subsection{Example B: paying to reveal information}\label{sec:concrete_failure_2_math}

In \hyperref[fig:figure2]{Fig. 4B} (and \hyperref[fig:figure2_appendix]{Fig. 6B}), the agent's task is to install Python (state $I$) and to optionally further install NumPy (states $W$ and $W_V$). The $a_D$ action corresponds to a command to install NumPy
with ``default'' settings which \emph{only logs errors}, whereas $a_V$ corresponds to the same command with a \verbose flag that adds additional info. In the case of a success, the human distinctly prefers not to see this verbose output; we price this into the reward function with a penalty $r > 0$ on the reward at $W_V$.

There is only one pair of trajectories which produce identical observations: after successfully installing Python, a successful NumPy installation with default logging ($SIWT$) and simply exiting ($SITT$) both produce a log containing only a success confirmation for Python ($o_\emptyset o_I o_\emptyset o_\emptyset$). Let
$\nolinebreak{p_D \coloneqq B(\vec s = SIWT \mid \vec o = o_\emptyset o_I o_\emptyset o_\emptyset) \in (0,1)}$ be the human's optimism, upon seeing only a success confirmation for Python, that NumPy was also successfully installed (without the \verbose flag).

Here we consider only the case where $p$ is large enough that the true optimal policy is to install Python  then attempt to install NumPy with default logging ($a_D$).

\begin{example}\label{ex:verbose-numpy-app}
    If $\pi^*$ is to attempt to install NumPy with $a_D$ after installing Python, and $\nolinebreak{p_D > q \coloneqq \frac1{5}\Bigl( p(6 - r) - 1 \Bigr)}$, then RLHF will select the policy that terminates after installing Python. Intuitively, this is because the agent can exploit the human's optimism that NumPy was installed quietly without taking the risk of an observable failure ($L$). This is \placeholder, with an overestimation error $\aerror^+$ of $5 p_D$, greater than $\aerror^+(\pi^*) = 0$.

    If instead $p_D < q$, then RLHF will select the policy that attempts the NumPy installation with verbose logging ($a_V$). Intuitively, this is because the agent is willing to ``pay'' the cost of $r$ true reward to prove to the human that it installed NumPy, even when the human does not want to see this proof. This is overjustification; the overestimation error $\aerror^+$ is 0 (thus this is not \placeholder), and the underestimation error $\aerror^-$ is 0, lower than $\aerror^-(\pi^*) = 5p(1 - p_D)$.
\end{example}

\subsection{Derivations and Further Details for \hyperref[fig:figure2]{Fig. 4A}}\label{sec:expand_on_examples_appendix}

\begin{figure*}[htbp]
  \centering
  \includegraphics[width=\linewidth]{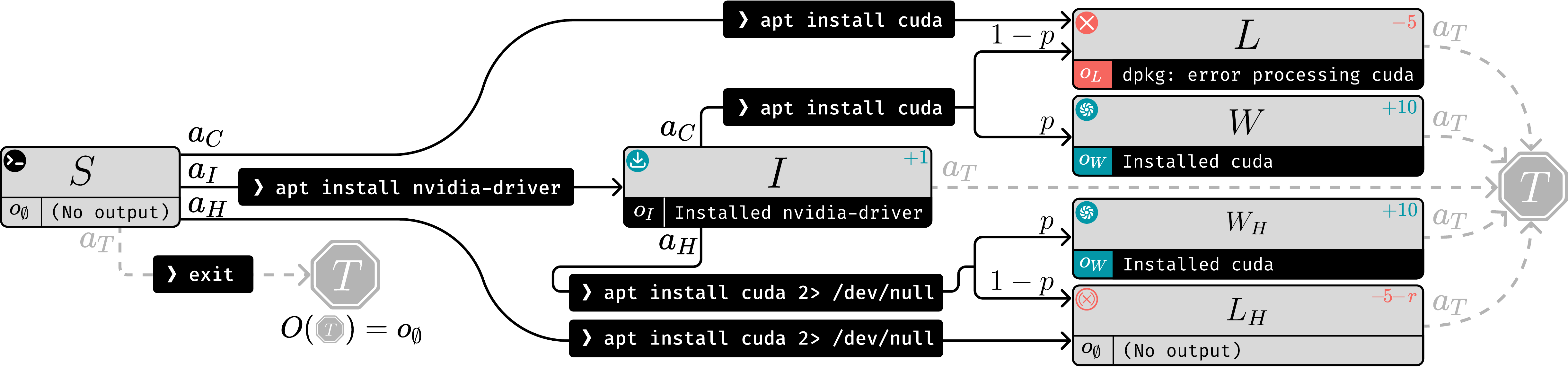}
  \caption{An expanded view of \Cref{fig:figure2}A. Commands corresponding to the various actions are depicted along edges, and log messages corresponding to the various observations are depicted underneath each state.}
  \label{fig:expanded_figure2a}
\end{figure*}

We first include \Cref{fig:expanded_figure2a}, a more detailed picture of the MDP and observation function in \Cref{sec:concrete_failure_1_math}, to help ground the narrative details of the example.

Next we formally enumerate the details of the MDP and observation function.
\begin{itemize}
    \item $\states = \{ S, I, W, W_H, L, L_H, T \}$.
    \item $\actions = \{ a_I, a_C, a_H, a_T \}$.
    \item $\Transition$ is as depicted in \Cref{fig:expanded_figure2a} and \Cref{fig:figure2}A. For a state $s$, any outgoing arrow labeled with an action $a$ (such as $a_I$) describes the distribution $\Transition(s' \mid s, a)$ as follows: if the arrow does not split, then $\Transition(s' \mid s, a) = 1$ where $s'$ is the state the arrow points to; if the arrow does split, then for each successor state $s'$ it eventually reaches, a probability $q$ is written just before the box corresponding to $s'$ (for this example, $q = p$ or $q = 1-p$), and $\Transition(s' \mid s, a) = q$.
    \begin{itemize}
        \item[$\circ$] Additionally, any action taken from a state that does not have an outgoing arrow corresponding to that action will immediately transition to state $T$, as though $a_T$ had been taken.
        \item[$\circ$] Any action taken from state $T$ transitions deterministically to $T$.
    \end{itemize}
    \item $P_0(S) = 1$.
    \item $R$ is as described in the table (the numbers in the top right of each state box) with $r \ge 0$. Additionally, $\nolinebreak{R(S) = R(T) = 0}$.
    \item $\gamma = 1$.
\end{itemize}

We work with a fixed horizon length of 3, meaning state sequences have length 4 (since time is zero-indexed: $s_0 s_1 s_2 s_3$).

The observation function is also depicted in \Cref{fig:expanded_figure2a}. Each state deterministically produces the observation in the lower-right corner of its box in the figure. We also write it in another format in \Cref{table:ex1_observation_function}.
\begin{table}[h]
\centering
\caption{The observation function $O$ for the example in \Cref{sec:concrete_failure_1_math} and \Cref{sec:expand_on_examples_appendix}.}
\begin{tabular}{c|lllllll}
$s$          & $S$           & $I$   & $W$   & $W_H$ & $L$   & $L_H$         & $T$           \\ \hline
$O(s)$ & $o_\emptyset$ & $o_I$ & $o_W$ & $o_W$ & $o_L$ & $o_\emptyset$ & $o_\emptyset$
\end{tabular}
\label{table:ex1_observation_function}
\end{table}

We make the additional assumption that the human belief $\belief(\vec s \mid \vec o)$ only supports state sequences $\vec s$ which actually produce $\vec o$ under the sequence observation function $\vec O$: $\belief(\vec s \mid \vec o) > 0 \implies \vec O(\vec s) = \vec o$. In particular, this means that for any $\vec o$ which is only produced by one $\vec s$, $\belief(\vec o \mid \vec s) = 1$.

There are three pairs of state sequences which produce identical observation sequences. For each, we introduce a parameter representing the probability the human infers the first of the pair of state sequences upon seeing their shared observation sequence.
\begin{enumerate}
    \item $SIL_HT$ and $SITT$ both produce $o_\emptyset o_I o_\emptyset o_\emptyset$, a log containing only a success confirmation for installing drivers, again because $O(L_H) = O(T) = o_\emptyset$. Let $p_H = \belief(\vec s = SIL_HT \mid \vec o = o_\emptyset o_I o_\emptyset o_\emptyset)$.
    \item $STTT$ and $SL_HTT$ both produce $o_\emptyset o_\emptyset o_\emptyset o_\emptyset$, an empty log, since $O(L_H) = O(T) = o_\emptyset$. Let $p_H' = \belief(\vec s = SL_HTT \mid \vec o = o_\emptyset o_\emptyset o_\emptyset o_\emptyset)$.
    \item $SIWT$ and $SIW_HT$ both produce $o_\emptyset o_I o_W o_\emptyset$, a log containing success confirmations for installing both drivers and CUDA. Let $p_W = \belief(\vec s = SIWT \mid \vec o = o_\emptyset o_I o_W o_\emptyset)$.
\end{enumerate}

We assume for simplicity that $p_H' = p_H$, i.e. that the human is just as likely to think an empty log following a successful driver installation contains an error that was hidden with \devnull ($p_H$), as they are to think that an entirely empty log contains a hidden error ($p_H'$). In principle, this need not be true: the human may have differing priors about the agent's behavior in the two different circumstances. However, the algebra to reason about such a case is significantly more cumbersome, and this case reveals no fundamentally different agent behavior under our framework that isn't present in some simpler case.

We can thus write the full $\belief$ as a matrix as in \Cref{table:human_belief_ex1}.

\begin{table}[htbp]
\centering
\caption{The parameterized human belief function $\belief$ for the example in \Cref{sec:concrete_failure_1_math} and \Cref{sec:expand_on_examples_appendix}, expressed as a matrix (rendered as a table). Any empty cell is equal to 0.}
\begin{tabular}{lllllllll}
                                                                       & $STTT$                     & $SL_HTT$                       & $SLTT$                 & $SITT$                   & $SIL_HT$                       & $SILT$                 & $SIWT$                     & $SIW_HT$                       \\ \cline{2-9} 
\multicolumn{1}{l|}{$o_\emptyset o_\emptyset o_\emptyset o_\emptyset$} & \multicolumn{1}{l|}{$1 - p_H$} & \multicolumn{1}{l|}{$p_H$} & \multicolumn{1}{l|}{}  & \multicolumn{1}{l|}{}      & \multicolumn{1}{l|}{}        & \multicolumn{1}{l|}{}  & \multicolumn{1}{l|}{}      & \multicolumn{1}{l|}{}          \\ \cline{2-9} 
\multicolumn{1}{l|}{$o_\emptyset o_L o_\emptyset o_\emptyset$}         & \multicolumn{1}{l|}{}      & \multicolumn{1}{l|}{}          & \multicolumn{1}{l|}{1} & \multicolumn{1}{l|}{}      & \multicolumn{1}{l|}{}        & \multicolumn{1}{l|}{}  & \multicolumn{1}{l|}{}      & \multicolumn{1}{l|}{}          \\ \cline{2-9} 
\multicolumn{1}{l|}{$o_\emptyset o_I o_\emptyset o_\emptyset$}         & \multicolumn{1}{l|}{}      & \multicolumn{1}{l|}{}          & \multicolumn{1}{l|}{}  & \multicolumn{1}{l|}{$1 - p_H$} & \multicolumn{1}{l|}{$p_H$} & \multicolumn{1}{l|}{}  & \multicolumn{1}{l|}{}      & \multicolumn{1}{l|}{}          \\ \cline{2-9} 
\multicolumn{1}{l|}{$o_\emptyset o_I o_L o_\emptyset$}                 & \multicolumn{1}{l|}{}      & \multicolumn{1}{l|}{}          & \multicolumn{1}{l|}{}  & \multicolumn{1}{l|}{}      & \multicolumn{1}{l|}{}        & \multicolumn{1}{l|}{1} & \multicolumn{1}{l|}{}      & \multicolumn{1}{l|}{}          \\ \cline{2-9} 
\multicolumn{1}{l|}{$o_\emptyset o_I o_W o_\emptyset$}                 & \multicolumn{1}{l|}{}      & \multicolumn{1}{l|}{}          & \multicolumn{1}{l|}{}  & \multicolumn{1}{l|}{}      & \multicolumn{1}{l|}{}        & \multicolumn{1}{l|}{}  & \multicolumn{1}{l|}{$p_W$} & \multicolumn{1}{l|}{$1 - p_W$} \\ \cline{2-9} 
\end{tabular}
\label{table:human_belief_ex1}
\end{table}

We have laid the groundwork sufficiently to begin reasoning about the observation return, overestimation and underestimation error, policies which are optimal under the reward function learned by naive RLHF, and the resulting \placeholder and overjustification failure modes. We begin by computing the measures of interest for each state sequence, shown in \Cref{table:ex1_state_measures}.

\begin{table}[htbp]
\centering
\caption{Measures of interest for each state sequence for the example in \Cref{sec:concrete_failure_1_math} and \Cref{sec:expand_on_examples_appendix}. State sequences which produce the same observations have their $G_{\obs}$ columns merged, since they necessarily have the same $G_{\obs}$.}
\begin{tabular}{c|c|c|c|c|c}
\multirow{2}{*}{$\vec s$} & \multirow{2}{*}{$G(\vec s)$} & \multirow{2}{*}{$G_{\obs}(\vec s)\coloneqq \eval_{\vec s\hs' \sim B(\cdot \mid \vec O(\vec s))}[G(\vec s\hs')]$} & $E^+(\vec s) \coloneqq \max(0, $  & $E^-(\vec s) \coloneqq \max(0, $  \\
                          &                                &                                                                                                                    & $G_{\obs}(\vec s) - G(\vec s))$ & $G(\vec s) - G_{\obs}(\vec s))$ \\ \hline
$STTT$                    & $0$                            & $p_H G(SL_HTT) + (1 - p_H) G(STTT)$                                                                            & $0$                               & $p_H(5 + r)$                \\ \cline{1-2} \cline{4-5} 
$SL_HTT$                  & $-5 - r$                       & $= -p_H(5 + r)$                                                                                              & $(1 - p_H) (5 + r)$                     & $0$                               \\ \hline
$SLTT$                    & $-5$                           & $-5$                                                                                                               & $0$                               & $0$                               \\ \hline
$SITT$                  & $1$                       & $p_H G(SIL_HT) + (1 - p_H) G(SITT)$                                                                            & $0$                & $p_H(5 + r)$                               \\ \cline{1-2} \cline{4-5} 
$SIL_HT$                    & $-4 - r$                            & $= 1 - p_H(5 + r)$                                                                                                 & $(1 - p_H)(5 + r)$                               & $0$                      \\ \hline
$SILT$                    & $-4$                           & $-4$                                                                                                               & $0$                               & $0$                               \\ \hline
$SIWT$                    & $11$                           & $p_W G(SIWT) + (1 - p_W) G(SIW_HT)$                                                                            & $0$                               & $0$                               \\ \cline{1-2} \cline{4-5} 
$SIW_HT$                  & $11$                           & $= 11$                                                                                                             & $0$                               & $0$                               \\ \hline
\end{tabular}
\label{table:ex1_state_measures}
\end{table}

As an exercise, checking the computations in \Cref{table:ex1_state_measures} is a quick way to gain some intuition for how these quantities relate. It's further useful to speak about these quantities using their names, and work through the stories these expressions tell.

Consider the $E^+(SL_HTT)$ cell. What does it mean that this is $(1 - p_H)(5+r)$? $E^+$ is the overestimation error; $(1 - p_H)(5+r)$ is the expected amount by which the human observer, upon seeing a completely empty log ($o_\emptyset o_\emptyset o_\emptyset o_\emptyset$), will \emph{overestimate} the total reward the agent attained when those empty logs were actually produced by $SL_HTT$. This is a trajectory in which the agent immediately uses the \devnull trick, fails to install CUDA without logging the error, and terminates. Under what circumstances might the human overestimate the total reward when $\vec s = SL_HTT$? Upon seeing the empty log, the human thinks with probability $1 - p_H$ that the agent simply terminated immediately, which would be worth a total reward of 0. Since the actual total reward is $-5 - r$, this is an overestimate by $5+r$. The human thinks with probability $p_H$ that $SL_HTT$ occurred. This is correct, so there is no overestimation and this $p_H$ case does not contribute to the overestimation error. So we have that with probability $1 - p_H$, the human overestimates the total reward by $5 + r$.

We can keep going! Why is the underestimation error of $SIWT$ equal to 0? Because the only other trajectory with which it can be confused attains the same total reward, so regardless of how the probability mass of the human's belief divides between them, there will be no underestimation. Can all of the zeros in the overestimation and underestimation error columns be explained this way?

We now move on to consider policies rather than state sequences. Since a policy $\pi$ imposes a distribution $P^\pi$ over state sequences (the ``on-policy distribution''), our policy measures are in fact exactly parallel to our state sequence measures. Each one is an expectation over the on-policy distribution of the columns of \Cref{table:ex1_state_measures}. We restrict our attention to deterministic policies which only take actions depicted in \Cref{fig:expanded_figure2a} (i.e. that never terminate via an action other than $a_T$), of which there are only six in this MDP. They are enumerated, along with the policy-level measures, in \Cref{table:ex1_policy_measures}. Policies will be written as a sequence of actions enclosed in brackets, omitting trailing repeated $a_T$ actions. This is nonstandard notation in an MDP with stochastic transitions, but is unambiguous in this example, because all decisions are made before any stochasticity occurs. The policies are $[a_T]$, $[a_H a_T]$, $[a_C a_T]$, $[a_I a_T]$, $[a_I a_H a_T]$, and $[a_I a_C a_T]$.

\begin{table}[htbp]
\centering
\caption{Measures of interest for each policy for the example in \Cref{sec:concrete_failure_1_math} and \Cref{sec:expand_on_examples_appendix}. Each of the columns here is the on-policy average of the corresponding column in \Cref{table:ex1_state_measures}. Policies are written as sequences of actions, omitting trailing repeated $a_T$ actions. This is nonstandard notation in an MDP with stochastic transitions, but is unambiguous in this example since all decisions are made before any stochasticity occurs.}
\begin{tabular}{c|c|c|c|c|}
$\pi$                          & $\val(\pi)$               & $\val_{\obs}(\pi)$                              & $\aerror^+(\pi)$                          & $\aerror^-(\pi)$     \\ \hline
$[a_T]$                          & $0$                       & $-p_H(5+r)$                               & $0$                                       & $p_H(5 + r)$   \\ \hline
$[a_H a_T]$                      & $-5 - r$                  & $-p_H(5+r)$                               & $(1 - p_H) (5 + r)$                             & $0$                  \\ \hline
$[a_C a_T]$                      & $-5$                      & $-5$                                            & $0$                                       & $0$                  \\ \hline
$[a_I a_T]$                      & $1$                  & $1 - p_H(5 + r)$                                & $0$                        & $p_H(5 + r)$                  \\ \hline
\multirow{3}{*}{$[a_I a_H a_T]$} & $p G(SIW_HT)$           & $p G_{\obs}(SIW_HT)$                            & \multirow{3}{*}{$(1 - p)(1 - p_H)(5 + r)$} & \multirow{3}{*}{$0$} \\
                               & $+ (1 - p) G(SIL_HT)$   & $+ (1 - p) G_{\obs}(SIL_HT)$                    &                                           &                      \\
                               & $= 11 - (1 - p)(15 + r)$  & $= 11 - (1 - p)\left[ 10 + p_H (5 + r) \right]$ &                                           &                      \\ \hline
\multirow{3}{*}{$[a_I a_C a_T]$} & $p G(SIWT)$             & $p G_{\obs}(SIWT)$                              & \multirow{3}{*}{$0$}                      & \multirow{3}{*}{$0$} \\
                               & $+ (1 - p) G(SILT)$     & $+ (1 - p) G_{\obs}(SILT)$                      &                                           &                      \\
                               & $= 11 - (1 - p) \cdot 15$ & $= 11 - (1 - p) \cdot 15$                       &                                           &                      \\ \hline
\end{tabular}
\label{table:ex1_policy_measures}
\end{table}

With this we have everything we need to characterize optimal policies under the reward function learned by a naive application of RLHF (``policies selected by RLHF''). By \Cref{pro:incentives_of_rlhf}, we know that if $\PO$ is deterministic, as in this example, RLHF selects policies which maximize $\val_{\obs}$. In order to understand the behavior of these policies, we'll also need to determine the true optimal policies, i.e. those which maximize $\val$. We'll proceed in cases, only considering boundary cases (specific measure-zero parameter values for which the result is different) insofar as they are interesting.

\textbf{Case 1:} $p > \frac13$. If $p > \frac13$, the CUDA install (with default logging, $a_C$) is likely enough to succeed that it's worth attempting it: $\nolinebreak{p \cdot R(W) + (1 - p) \cdot R(L) > 0}$. It also immediately follows that
\begin{align*}
    \val([a_I a_C a_T]) = \val_{\obs}([a_I a_C a_T]) = 11 - (1 - p) \cdot 15 > 1.
\end{align*}
This allows us to eliminate policies $[a_T]$, $[a_H a_T]$, $[a_C a_T]$, and $[a_I a_T]$, which all have $\val \le 1$ and $\val_{\obs} \le 1$. None of them can thus be $\val$-optimal or $\val_{\obs}$-optimal. All that remains is to compare $\val$ and $\val_{\obs}$ for $[a_I a_H a_T]$ and $[a_I a_C a_T]$. We can check the sign of the differences of these pairs of values, starting with $\val$.
\begin{align*}
    \val([a_I a_C a_T]) - \val([a_I a_H a_T]) = (1 - p) r.
\end{align*}

Since $p$ is a probability and $r$ is nonnegative, this value is positive (and thus $[a_I a_C a_T]$ is preferred to $[a_I a_H a_T]$ by the human) if and only if $p < 1$ and $r > 0$. 

\begin{align*}
    \val_{\obs}([a_I a_H a_T]) - \val_{\obs}([a_I a_C a_T]) = (1 - p) \left[ 5 - p_H(5 + r) \right].
\end{align*}

This value is positive (and thus $[a_I a_H a_T]$ is the policy RLHF selects) if and only if $p < 1$ and $p_H < \frac5{5+r}$.

If $p = 1$, then both differences are 0, and both $\val$ and $\val_{\obs}$ are indifferent between the two policies. This makes sense, as they differ only in the case where the CUDA installation fails; this happens with probability $1 - p = 0$ when $p = 1$. Now suppose $p < 1$. If $r = 0$, then the human is indifferent between the two policies. This also makes sense, as $r$ is meant to quantify the extent to which the human dislikes suppressed failures; if it's zero, then the human doesn't care. However, if $p_H < \frac5{5+r}$, then $\val_{\obs}([a_I a_H a_T]) > \val_{\obs}([a_I a_H a_T])$, and thus RLHF favors the \devnull policy $[a_I a_H a_T]$.

If $p < 1$, $r > 0$, and $p_H < \frac5{5+r}$, then we have that $\val([a_I a_C a_T]) > \val([a_I a_H a_T])$ but $\val_{\obs}([a_I a_C a_T]) > \val_{\obs}([a_I a_H a_T])$. Thus RLHF will select the \devnull policy $[a_I a_H a_T]$, and by \Cref{thm:rlhf_deceptive_overjustification}, since $[a_I a_H a_T]$ is not $\val$-optimal, then relative to $[a_I a_C a_T]$, it must exhibit \placeholder, overjustification, or both. Intuitively, we should be suspicious that \placeholder\ is at play whenever the agent hides information from the human. Indeed, referencing \Cref{table:ex1_policy_measures}, we have $\aerror^+([a_I a_H a_T]) = (1 - p)(1 - p_H)(5 + r) > 0 = \aerror^+([a_I a_C a_T])$. Together with $\val_{\obs}([a_I a_H a_T]) > \val_{\obs}([a_I a_C a_T])$, this satisfies the conditions of \Cref{def:deceptive-inflation}, and thus this is an instance of \placeholder.

If $p < 1$, $r > 0$, and $p_H > \frac5{5+r}$, then $[a_I a_C a_T]$ is optimal under both $\val$ and $\val_{\obs}$, and in this case, RLHF selects the true optimal policy.

\textbf{Case 2:} $p < \frac13$. In this case, the CUDA install is not likely enough to succeed to be worth attempting (under the true reward function). Mathematically, $\val([a_I a_H a_T]) \le \val([a_I a_C a_T]) < 1 = \val([a_I a_T])$. The other three policies are always worse under $\val$ than $[a_I a_T]$, so we have our optimal policy $\pi^* = [a_I a_T]$. However,
\begin{align*}
    \val_{\obs}([a_I a_H a_T]) - \val_{\obs}([a_I a_T]) = p(10 + p_H(5 + r)),
\end{align*}
which is always positive unless $p = 0$. As long as $p > 0$, i.e. installing CUDA is actually possible at all, then regardless of everything else, $\val_{\obs}([a_I a_H a_T]) > \val_{\obs}([a_I a_T])$. We can interpret this reasonably: however good or bad it might look to terminate after installing drivers ($SITT$), using the \devnull trick will either produce the same observation as $SITT$ (and thus the same feedback distribution) or will actually visibly install CUDA successfully—the most lucrative possible source of positive feedback. Even if there's only a sliver of a chance of success and it's a catastrophe to have a silent failure (i.e. $r \gg 0$), however likely you are to infer that the lack of a log message about CUDA means the agent hid a failure, it will penalize $[a_I a_T]$ \emph{more} than it will penalize $[a_I a_H a_T]$: where $[a_I a_H a_T]$ produces this ambiguous observation with probability $1 - p$ (when it fails), $[a_I a_T]$ will \emph{always} produce the ambiguous observation. 

This means that when $0 < p < \frac13$, it is impossible to recover the true optimal policy with naive RLHF. Which policies can possibly be $\val_{\obs}$-optimal for some setting of the parameters? We can similarly rule out $[a_T]$ and $[a_H a_T]$ for $0 < p < \frac13$:
\begin{align*}
    \val_{\obs}([a_I a_H a_T]) - \val_{\obs}([a_I a_T]) = p(10 + p_H(5 + r)) > 0.
\end{align*}
We can rule out $[a_C a_T]$ by comparison to $[a_I a_C a_T]$: $\val_{\obs}([a_I a_C a_T]) - \val_{\obs}([a_C a_T]) = 16 - (1 - p) 15 > 0$. So we are left with only $[a_I a_H a_T]$ and $[a_I a_C a_T]$ as candidate $\val_{\obs}$-optimal policies.

As in Case 1, we find that $\val_{\obs}([a_I a_H a_T]) > \val_{\obs}([a_I a_T])$ if and only if $p = 1$ or $p_H < \frac5{5 + r}$. In case 2 we have assumed $p < \frac13$, leaving only the $p_H$ condition.

If $p_H < \frac5{5+r}$, then RLHF selects $[a_I a_H a_T]$. As in Case 1, this is \placeholder relative to $\pi^* = [a_I a_T]$, because
\begin{align*}
    \aerror^+([a_I a_H a_T]) = (1 - p)(1 - p_H)(5 + r) > 0 = \aerror^+(\pi^*).
\end{align*}

If $p_H > \frac5{5+r}$, then RLHF selects $[a_I a_C a_T]$. Because this policy is not $\val$-optimal, by \Cref{thm:rlhf_deceptive_overjustification}, we must have \placeholder, overjustification, or both. Which is it? Here the optimal policy is to terminate after installing drivers, $[a_I a_T]$. However, $p_H > \frac5{5 + r}$. This can be rewritten as $p_H (5 + r) > 5$. We have seen this expression $p_H (5 + r)$ before; it is the underestimation error incurred on $\vec s = SITT$ and therefore also the average underestimation error of policy $[a_I a_T]$. So here the underestimation error on the optimal policy—that is, the risk that the human misunderstands optimal behavior (terminating after installing driver) as undesired behavior (attempting a CUDA install that was unlikely to work and hiding the mistake)—is severe enough that the agent opts instead for $[a_I a_C a_T]$, a worse policy that attempts the ill-fated CUDA installation only to prove that it wasn't doing so secretly. In qualitative terms, this is quintessential overjustification behavior. Indeed, relative to reference policy $\pi^* = [a_I a_T]$, we have
\begin{align*}
    \aerror^-([a_I a_C a_T]) = 0 < p_H (5 + r) = \aerror^-(\pi^*) \\
    \val([a_I a_C a_T]) = 11 - (1 - p) \cdot 15 < 1 = \val(\pi^*),
\end{align*}
and thus by \Cref{def:overjustification}, this is overjustification.

\subsection{Ambiguity in \Cref{sec:concrete_failures} examples when modeling partial observability}\label{sec:even_more_mathematical_details}

Consider the example in \hyperref[fig:figure2]{Fig. 4A} when modeling partial observability as in \Cref{sec:return_function_identi}. By Theorem~\ref{thm:simple_special_case_result}, the ambiguity in the return function leaving the choice probabilities invariant is given by $\ker \bp \cap \im \FGamma$. 
Let $R' = (0, 0, R'(W), 0, R'(W_H), 0, 0) \in \R^{\{S, I, W, L, W_H, L_H, T\}}$ be a reward function that we want to parameterize such that $G' \coloneqq \FGamma \cdot R'$ ends up in the ambiguity; here, $R'$ is interpreted as a column vector.

We want $\bp \cdot G' = 0$.
Since the observation sequences $\vec{o} = o_{\emptyset}o_{\emptyset}o_{\emptyset}o_{\emptyset}$, $\vec{o} = o_{\emptyset}o_Lo_{\emptyset}o_{\emptyset}$, $\vec{o} = o_{\emptyset}o_Io_{\emptyset}o_{\emptyset}$, or $\vec{o} =  o_{\emptyset}o_{I}o_Lo_{\emptyset}$ all cannot involve the states $W$ or $W_H$, it is clear that they have zero expected return $(\bp \cdot G')(\vec{o})$.
Set $p_H' \coloneqq \belief\big( SIW_HT \mid  o_{\emptyset}o_Io_Wo_{\emptyset}  \big)$.
Then the condition that $\bp \cdot G' = 0$ is equivalent to:
\begin{align*}
  0 &= \big(\bp \cdot G'\big)(o_{\emptyset}o_Io_Wo_{\emptyset})
  = \eval_{\vec{s} \sim \belief(\vec{s} \mid o_{\emptyset}o_Io_Wo_{\emptyset})}\big[ G'(\vec{s}) \big] \\
  &= p_H' \cdot G'(SIW_HT) + (1 - p_H') \cdot G'(SIWT)
  = p_H' \cdot R'(W_H) + (1 - p_H') \cdot R'(W).
\end{align*}
Thus, if $R'(W) = \frac{p_H'}{p_H' - 1} R'(W_H)$, then $G' \in \ker \bp \cap \im \FGamma$, meaning that $R + R'$ has the same choice probabilities as $R$ and is thus fully feedback-compatible.
In particular, if $R'(W_H) \gg 0$ is sufficiently large, then in subsequent policy optimization, there is an incentive to hide the mistakes and $\pi_H$ will be selected, which is suboptimal with respect to the true reward function $R$.

Thus \hyperref[fig:figure2]{Fig. 4A} \emph{still retains dangerous ambiguity when modeling partial observability}.

However, the example in \hyperref[fig:figure2]{Fig. 4B} leads to no ambiguity when partial observability is correctly modeled.

To show this in detail, let $G' = \FGamma(R') \in \ker \bp \cap \im \FGamma$.
We need to show $G' = 0$.
Since the human is only uncertain about the state sequences corresponding to the observation sequence $o_{\emptyset}o_Io_{\emptyset}o_{\emptyset}$, the condition $\bp \cdot G' = 0$ already implies $G'(\vec{s}) = 0$ for all state sequences except $SIWT$ and $SITT$.
From $(\bp \cdot G')(o_{\emptyset} o_I o_{\emptyset}o_{\emptyset}) = 0$, one then obtains the equation
\begin{equation}\label{eq:helper_equation_identif}
     (1 - p_D) \cdot \big(R'(S) + R'(I) + 2R'(T)\big)
    + p_D \cdot \big(R'(S) + R'(I) + R'(W) + R'(T)\big) = 0.
\end{equation}
Thus, if one of the two state sequences involved has zero return, then the other has as well, assuming that $0 \neq p_D \neq 1$, and we are done.

To show this, we use that all other state sequences have zero return:
$R'(S) + 3R'(T) = 0 = R'(S) + R'(L) + 2R'(T)$, from which $R'(L) = R'(T)$ follows.
Then, from $R'(S) + R'(I) + R'(L) + R'(T) = 0$, substituting the previous result gives $R'(S) + R'(I) + 2R'(T) = 0$, and so Equation~\eqref{eq:helper_equation_identif} results in $R'(S) + R'(I) + R'(W) + R'(T) = 0$.
Overall, this shows $G' = \FGamma(R') = 0$, and so $\ker \bp \cap \im \FGamma = \{0\}$.

\subsection{Experimental details}\label{sec:experimental_details}

Here, we explain more experimental details for the results in~\cref{tab:empirical_results}, reproduced here as~\cref{tab:empirical_results_appendix}, and~\cref{fig:example_sweeps}.

\begin{table}[t]
  \caption{Experiments showing improved performance of po-aware RLHF}
\label{tab:empirical_results_appendix}
\centering
\begin{tabular}{lcccccccccc}
\toprule
\textbf{Ex.} & $p$ & $p_{\text{hide}}$ & $p_{\text{default}}$ & \textbf{model} & \textbf{action} & $\aerror^+$ & \textbf{dec. infl.} & $\aerror^{-}$ & \textbf{overj.} &  \textbf{optimal}  \\
\midrule
A & 0.5 & 0.5 & N/A & \text{naive}    & $a_H$  & 1.5 & \cmark & 0   & \xmark  & \xmark \\
A & 0.5 & 0.5 & N/A & \text{po-aware} & $a_H$  & 1.5 & \cmark & 0   & \xmark  & \xmark \\
\midrule
A & 0.1 & 0.9 & N/A & \text{naive}    & $a_C$  & 0   & \xmark & 0   & \cmark  & \xmark \\
A & 0.1 & 0.9 & N/A & \text{po-aware} & $a_T$  & 0   & \xmark & 5.4 & \xmark  & \cmark \\
\midrule[1.3pt]
B & 0.5 & N/A & 0.9 & \text{naive}    & $a_T$  & 4.5 & \cmark & 0      & \cmark  & \xmark \\
B & 0.5 & N/A & 0.9 & \text{po-aware} & $a_D$  & 0   & \xmark & 0.25   & \xmark  & \cmark  \\
\midrule
B & 0.5 & N/A & 0.1 & \text{naive}    & $a_V$  & 0   & \xmark & 0      & \cmark  & \xmark \\
B & 0.5 & N/A & 0.1 & \text{po-aware} & $a_D$ & 0   & \xmark & 2.25   & \xmark  & \cmark \\
\bottomrule
\end{tabular}
\end{table}

The leftmost column (``Ex.'' for ``example'') corresponds to Examples A and B in~\cref{fig:figure2}.
$p$ is the success probability upon attempting to install Cuda or NumPy in state $I$, see~\cref{fig:figure2_appendix}.
$p_{\text{hide}}$ in Example A is the human's belief probability that the agent hid the error message if there is no output after nvidia-driver installation. 
Similarly, $p_{\text{default}}$ in Example B is the human's belief probability that installation was done with default settings if there is no further output after Python installation. 
Note that lines one and two in the table also correspond to~\cref{ex:should-attempt-cuda-app}, lines three and four to~\cref{ex:should-not-attempt-cuda-app}, and lines five and six to the first half and seven and eight to the second half of~\cref{ex:verbose-numpy-app}, respectively.
In all the results \emph{in the table}, we set the penalty to $r=1$.

The ``model'' column has value ``naive'' if the reward learning algorithm is classical RLHF (erroneously assuming full observability) as in~\cite{Christiano2017}, and ``po-aware'' if the human's partial observability is correctly modeled as in~\cref{sec:reward_learning_in_practice}.
We initialize the reward function as a list of rewards of states and train it by logistic regression using a dataset that consists of all pairs of state sequences together with the human's choice probabilities under partial observations.
This leads to 28 pairs of distinct trajectories together with choice probabilities.
We train the reward model for 300 epochs over a shuffled dataset of 13.5 copies of the 28 pairs with the Adam optimizer, for a total of 113400 training updates. 

Once we have the resulting reward model, we use value iteration to find its deterministic optimal policy.
All policies choose to install the nvidia-driver (in Example A) and Python (in Example B), and differ in their action in state $I$, which is given in the column ``action''. 
We compute the overestimation error and underestimation error of the resulting policies analytically using the hardcoded environment dynamics, true reward function, observation function, and human belief matrix $\bp$.
This is given in columns $\aerror^+$ and $\aerror^{-}$.
Note that these are averages over 10 entire training runs, though since they always result in the same learned policy, there is no variation and we do not state any uncertainty.

The columns ``dec. infl.'', ``overj.'', and ``optimal'' state whether deceptive inflation or overjustification occurs with the learned policy, and whether it is optimal according to the true human's reward function. 

For the results in~\cref{fig:example_sweeps}, we use largely the same procedure as for the table.
Instead of fixing the reward penalty $r$ or the belief probabilities $p_{\text{hide}}$ and $p_{\text{default}}$, we vary them as hyperparameters for the plots, we fix $p$ to $p = 0.5$, and we restrict ourselves to the analysis of ``naive'' RLHF.

\section{Modeling the Human in Partially Observable RLHF}\label{sec:RLHF}

In this appendix, we develop the theory of RLHF with appropriately modeled partial observability, including full proofs of all theorems.

In Section~\ref{sec:human_belief}, we explain how the human can arrive at the belief $\belief(\vec{s} \mid \vec{o})$ via Bayesian updates.
The main theory and the main paper in general do not depend on this specific form of the human's belief, but some examples in the appendix do.

In Section~\ref{sec:identifiability_under_observed_comparisons} we then explain our main result: 
the ambiguity and identifiability of both reward and return functions under observed sequence comparisons. 
In Section~\ref{sec:reward_learning_in_practice}, we then explain that this theorem means that one could \emph{in principle} design a practical reward learning algorithm that converges on the correct reward function up to the ambiguity characterized in the section before, \emph{if} the human's belief kernel $\belief(\vec{s} \mid \vec{o})$ is fully known.

In Section~\ref{sec:unknown_observations_own_part}, we generalize the theory to the case that the human's observations are not necessarily known to the learning system and again characterize precisely when the return function is identifiable from sequence comparisons.
We then consider special cases in Section~\ref{sec:simple_consequences}, where we show that the fully observable case is covered by our theory, that a deterministic observation kernel $\PVO$ usually leads to non-injective belief matrix $\bp$, and that ``noise'' in the observation kernel $\PVO$ leads, under appropriate assumptions, to the identifiability of the return function.

Our identifiability results require that the learning system knows the human's belief kernel $\belief(\vec{s} \mid \vec{o})$.
In Section~\ref{sec:approximate_posterior}, we then show that these results are robust to slight misspecifications:
a bound in the error in the specified belief leads to a corresponding bound in the error of the policy evaluation function used for subsequent reinforcement learning.

In Section~\ref{sec:injectivity}, we then provide a very preliminary characterization of the ambiguity in the return function under special cases.

Finally, in Section~\ref{sec:worked_out_examples}, we study examples of identifiability and non-identifiability of the return function for the case that we \emph{do} model the human's partial observability correctly.
This reveals qualitatively interesting cases of identifiability, even when $\bp$ is not injective, and catastrophic cases of non-identifiability.

\subsection{The Belief over the State Sequence for Rational Humans}\label{sec:human_belief}

Before we dive into the main theory, we want to explain how the human can iteratively compute the posterior of the state sequence given an observation sequence with successively new observations. 
This is done by defining a Bayesian network for the joint probability of policy, states, actions, and observations, and doing Bayesian inference over this Bayesian network.

The details of this subsection are only relevant for a few sections in the appendix since it is usually enough to assume that the posterior belief \emph{exists}.
Additionally, in the core theory, we do not even assume that $\belief(\vec{s} \mid \vec{o})$ is a posterior: it is simply any probability distribution.
The reason why it can still be interesting to analyze the case when the human is a rational Bayesian reasoner is that one can then analyze RLHF under \emph{generous} assumptions to the human.

We model the human to have a joint distribution $\belief(\pi, \vec{s}, \vec{a}, \vec{o})$ over the policy $\pi$, state sequence $\vec{s} = s_0, \dots, s_T$, action sequence $\vec{a} = a_0, \dots, a_{T-1}$, and observation sequence $\vec{o} = o_0, \dots, o_T$. 
This is given by a Bayesian network with the following components:
\begin{itemize}
  \item a policy prior $\belief(\pi')$;
  \item the probability of the initial state $\belief(s_0) \coloneqq P_0(s_0)$;
  \item action probabilities $\belief(a \mid s, \pi) \coloneqq \pi(a \mid s)$;
  \item transition probabilities $\belief(s_{t+1} \mid s_t, a_t) \coloneqq \Transition(s_{t+1} \mid s_t, a_t)$;
  \item and observation probabilities $\belief(o_t \mid s_t) \coloneqq \PO(o_t \mid s_t)$.
\end{itemize}
Together, this defines the joint distribution $\belief(\pi, \vec{s}, \vec{a}, \vec{o})$ over the policy, states, actions, and observations that factorizes according to the following directed acyclic graph:

\begin{equation}\label{eq:bayesian_network}
  \begin{tikzcd}[cells={nodes={draw=black, circle}}]
    \pi' \ar[dr, bend left] \ar[drrr, bend left] \ar[drrrrr, bend left] \ar[drrrrrrr, bend left] \\
    s_0 \ar[d] \ar[r] \ar[rr, bend left] & a_0 \ar[r] & s_1 \ar[r] \ar[rr, bend left] \ar[d] & a_1 \ar[r] & s_2 \ar[rr, bend left] \ar[d] \ar[r] & a_2 \ar[r] & s_3 \ar[d] \ar[r] & \dots \\
    o_0 & & o_1 & & o_2 & & o_3 
  \end{tikzcd}
\end{equation}

The following proposition clarifies the iterative Bayesian update of the human's posterior over state sequences, given observation sequences:

\begin{proposition}
  \label{pro:simplified_setting_posterior}
  Let $t \leq T-1$ and denote by $\hat{s} = s_0, \dots, s_t$ a state sequence \emph{segment} of length $t \geq 0$.
  Similarly, $\hat{o} = o_0, \dots, o_t$ denotes an observation sequence segment.
  We have
  \begin{equation*}
    \belief(\hat{s}, s_{t+1}, \pi \mid \hat{o}, o_{t+1}) \propto \PO(o_{t+1} \mid s_{t+1}) \cdot \Bigg[ \sum_{a_{t} \in \actions} \Transition(s_{t+1} \mid \hat{s}_t, a_{t}) \cdot \pi(a_t \mid s_t) \Bigg] \cdot \belief(\hat{s}, \pi \mid \hat{o}).
  \end{equation*}
  Thus, the human can iteratively compute $\belief(\hat{s}, \pi \mid \hat{o})$ from the prior $\belief(s_0, \pi) = P_0(s_0) \cdot \belief(\pi')$ using the above Bayesian update. 

  The posterior over the state sequence can subsequently be computed by
  \begin{equation*}
    \belief(\hat{s} \mid \hat{o}) = \int_{\pi} \belief(\hat{s}, \pi \mid \hat{o}).
  \end{equation*}
\end{proposition}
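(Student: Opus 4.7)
The plan is to derive the recursive formula by computing the joint distribution $\belief(\hat s, s_{t+1}, \pi, \hat o, o_{t+1})$ directly from the Bayesian-network factorization in \eqref{eq:bayesian_network}, then applying Bayes's rule to obtain the conditional on the left-hand side. Since the action sequence is unobserved, the first step is to marginalize the actions $a_0, \dots, a_t$ out of the full joint.

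First I would write the full joint as
$$\belief(\pi, \hat s, s_{t+1}, a_0, \dots, a_t, \hat o, o_{t+1}) = \belief(\pi)\, P_0(s_0) \prod_{i=0}^{t+1} \PO(o_i \mid s_i) \prod_{i=0}^{t} \pi(a_i \mid s_i)\, \Transition(s_{i+1} \mid s_i, a_i),$$
and then sum over $a_0, \dots, a_t$. Since each $a_i$ appears in exactly one factor, the sum distributes as a product $\prod_{i=0}^t \sum_{a_i} \pi(a_i \mid s_i) \Transition(s_{i+1} \mid s_i, a_i)$. I would then peel off the $i = t$ term together with $\PO(o_{t+1} \mid s_{t+1})$---these are exactly the two step-$(t{+}1)$ factors appearing on the right-hand side of the claim---and recognize the remainder as $\belief(\hat s, \pi, \hat o)$, since it is the analogous factorization truncated at step $t$. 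Dividing by $\belief(\hat o, o_{t+1})$ via Bayes's rule converts $\belief(\hat s, \pi, \hat o)$ into $\belief(\hat s, \pi \mid \hat o)$ up to a factor of $\belief(\hat o) / \belief(\hat o, o_{t+1})$ that does not depend on $\hat s$, $s_{t+1}$, or $\pi$ and is absorbed into the proportionality constant.

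With the recursion in hand, the iterative-computation claim is immediate by induction on $t$, starting from $\belief(s_0, \pi) = P_0(s_0)\,\belief(\pi)$, and the final identity $\belief(\hat s \mid \hat o) = \int_\pi \belief(\hat s, \pi \mid \hat o)$ is simply the definition of marginalization applied to a joint conditional. I do not expect any genuine obstacle here: the entire argument is bookkeeping on the factorization \eqref{eq:bayesian_network}. The only subtle point worth flagging is that the action marginalization factors cleanly precisely because the Bayesian network has each $a_i$ as a child of only $s_i$ and $\pi$, and as a parent only of $s_{i+1}$, so each $a_i$ occurs in exactly one factor of the joint; this is what isolates the policy-weighted transition $\sum_{a_t} \pi(a_t \mid s_t) \Transition(s_{t+1} \mid s_t, a_t)$ appearing in the statement.
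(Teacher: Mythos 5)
Your proof is correct, and it reaches the recursion by a route that differs in presentation from the paper's. The paper works \emph{locally}: it applies Bayes's rule once to write $\belief(\hat{s}, s_{t+1}, \pi \mid \hat{o}, o_{t+1}) \propto \belief(o_{t+1} \mid \hat{s}, s_{t+1}, \pi, \hat{o}) \cdot \belief(\hat{s}, s_{t+1}, \pi \mid \hat{o})$, then simplifies each conditional using independences read off from d-separation in the network (e.g.\ $o_{t+1} \indep (\hat{s}, \pi, \hat{o}) \mid s_{t+1}$), marginalizing only over the single action $a_t$. You instead work \emph{globally}: write out the full product-form joint over $(\pi, \hat{s}, s_{t+1}, a_0,\dots,a_t, \hat{o}, o_{t+1})$, sum out all actions at once, peel off the step-$(t{+}1)$ factors, and identify the remainder as the truncated joint. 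Your route trades the d-separation arguments for the (standard, but worth stating) fact that the marginal of a Bayesian network over an ancestral set of nodes is the product of the factors attached to that set --- this is what justifies both your expression for the truncated joint and your identification of the leftover product with $\belief(\hat{s}, \pi, \hat{o})$. One small imprecision: each $a_i$ actually appears in \emph{two} factors of the raw factorization, namely $\pi(a_i \mid s_i)$ and $\Transition(s_{i+1} \mid s_i, a_i)$; the correct statement is that no factor outside this pair involves $a_i$, so the sums over distinct actions decouple after grouping each pair. This does not affect the validity of the argument. The handling of the proportionality constant $\belief(\hat{o})/\belief(\hat{o}, o_{t+1})$ and the final marginalization over $\pi$ match the paper.
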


\begin{proof}
  The proof is essentially just Bayes rule applied to the Bayesian network in Equation~\eqref{eq:bayesian_network}.
  We repeatedly make use of conditional independences that follow from d-separations in the graph~\citep{Geiger1990}. 
  More concretely, we have
  \begin{align*}
    \belief\big(\hat{s}, s_{t+1}, \pi \mid \hat{o}, o_{t+1}\big) &\propto \belief\big(o_{t+1} \mid \hat{s}, s_{t+1}, \pi, \hat{o}\big) \cdot \belief\big(\hat{s}, s_{t+1}, \pi \mid \hat{o}\big) \\
    & = \PO\big(o_{t+1} \mid s_{t+1}\big) \cdot \belief\big(s_{t+1} \mid \hat{s}, \pi, \hat{o}) \cdot \belief(\hat{s}, \pi \mid \hat{o}\big) \\
    & = \PO\big(o_{t+1} \mid s_{t+1}\big) \cdot \Bigg[ \sum_{a_t \in \actions} \belief\big(s_{t+1} \mid a_t, \hat{s}, \pi, \hat{o}\big) \cdot \belief\big(a_t \mid \hat{s}, \pi, \hat{o}\big) \Bigg] \cdot \belief\big(\hat{s}, \pi \mid \hat{o}\big) \\
    & = \PO\big(o_{t+1} \mid s_{t+1}\big) \cdot \Bigg[ \sum_{a_t \in \actions} \Transition\big(s_{t+1} \mid s_t, a_t\big) \cdot \pi\big(a_t \mid s_t\big) \Bigg] \cdot \belief\big(\hat{s}, \pi \mid \hat{o}\big).
  \end{align*}
  In step 1, we used Bayes rule. 
  In step 2, we made use of the independence $o_{t+1} {\indep} (\hat{s}, \pi, \hat{o}) \ | \ s_{t+1}$, plugged in the observation kernel, and used the chain rule of probability to compose the second term into a product.
  In step 3, we marginalized and used, once again, the chain rule of probability.
  In step $4$, we used the independences $s_{t+1} \indep (s_0, \dots, s_{t-1}, \pi, \hat{o}) \ | \ (s_t, a)$ and $a_t \indep (s_0, \dots, s_{t-1}, \hat{o}) \mid (\pi, s_t)$ and plugged in the transition kernel and the policy.

  The last formula is just a marginalization over the policy.
\end{proof}

\subsection{Ambiguity and Identifiability of Reward and Return Functions under Observation Sequence Comparisons}\label{sec:identifiability_under_observed_comparisons}

\begin{figure*}[t]
  \centering
  \includegraphics[width=\linewidth]{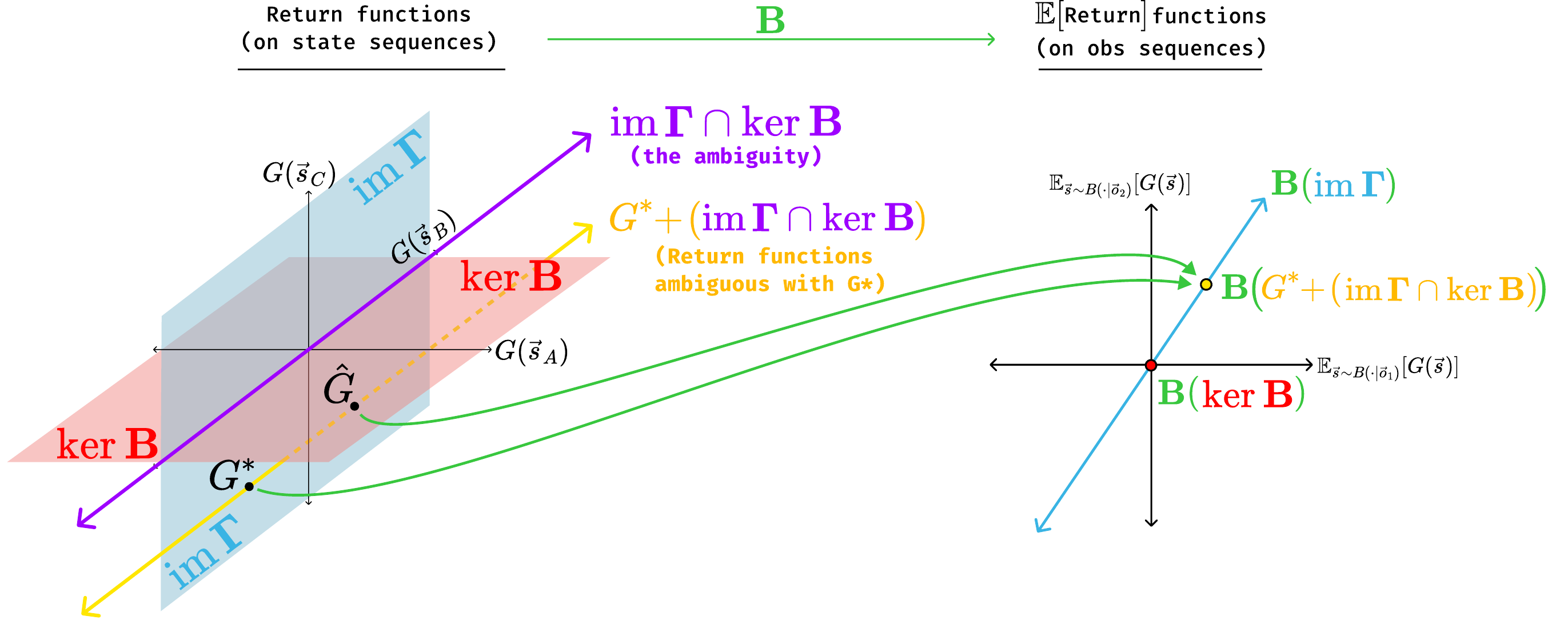}
  \caption{
  The linear geometry of ambiguity for a hypothetical example with three state sequences and two observation sequences. $G^*$ is the true return function, and ``$G$'' is used in labeling the axes to refer to some arbitrary return function. This is a more accurate geometric depiction of the middle and right spaces in \cref{fig:figure3}.
  The subspace $\im \FGamma \cap \ker \bp$ (purple) is the ambiguity in return functions, meaning that adding an element would not change the human's expected return function on observations.
  Thus the set of return functions that the reward learning system can infer is the affine set $G + (\im \FGamma \cap \ker \bp)$ (yellow).
  Note that the planes on the left are drawn to be axis-aligned for ease of visualization; this will not be the case for real MDPs.
  }
  \label{fig:figure_geometry}
\end{figure*}

In this section, we prove the main theorem of this paper:
a characterization of the ambiguity that is left in the reward and return function once the human's Boltzmann-rational choice probabilities are known.
We change the formulation slightly by formulating the linear operators ``intrinsically'' in the spaces they are defined in, instead of using matrix versions. 
This does not change the general picture, but is a more natural setting when thinking, e.g., about generalizing the results to infinite state sequences. 
Thus, we define $\bp: \R^{\vec{\states}} \to \R^{\vec{\Omega}}$ as the linear operator given by
\begin{equation*}
  \big[\bp(G)\big](\vec{o}) \coloneqq \eval_{\vec{s} \sim \belief(\vec{s} \mid \vec{o})} \big[ G(\vec{s}) \big].
\end{equation*}
Here, $\bp$ is the human's belief, which can either be computed as in the previous subsection or simply be any conditional probability distribution.
Similarly, we define $\FGamma: \R^{\states} \to \R^{\vec{\states}}$ as the linear operator given by
\begin{equation*}
  \big[\FGamma(R)\big](\vec{s}) \coloneqq \sum_{t = 0}^{T} \gamma^t R(s_t).
\end{equation*}
The matrix product $\bp \cdot \FGamma$ then becomes the composition $\bp \circ \FGamma: \R^{\states} \to \R^{\vec{\Omega}}$.
Finally, recall that the kernel $\ker \A$ of a linear operator $\A$ is defined as its nullspace, and the image $\im \A$ as the set of elements hit by $\A$.
We obtain the following theorem:

\begin{theorem}
  \label{thm:main_thm_appendix_version}
  Let $R$ be the true reward function and $\tilde{R}$ another reward function. 
  Let $\tilde{G} = \FGamma(\tilde{R})$ and $G = \FGamma(R)$ be the corresponding return functions.
  The following three statements are equivalent:
  \begin{enumerate}[(i)]
    \item The reward function $\tilde{R}$ gives rise to the same vector of choice probabilities as $R$, i.e 
      \begin{equation*}
	\Big( P^{\tilde{R}}\big( \vec{o} \succ \vec{o}\hs' \big)\Big)_{\vec{o}, \vec{o}\hs' \in \vec{\Omega}} = \Big( P^{R}\big( \vec{o} \succ \vec{o}\hs' \big)\Big)_{\vec{o}, \vec{o}\hs' \in \vec{\Omega}}.
      \end{equation*}
    \item There is a reward function $R' \in \ker(\bp \circ \FGamma)$ and a constant $c \in \R$ such that
      \begin{equation*}
	\tilde{R} = R + R' + c.
      \end{equation*}
    \item There is a return function $G' \in \ker \bp \cap \im \FGamma$ and a constant $c' \in \R$ such that
      \begin{equation*}
	\tilde{G} = G + G' + c'.
      \end{equation*}
  \end{enumerate}
  In other words, the \emph{ambiguity} that is left in the \emph{reward function} when its observation-based choice probabilities are known is, up to an additive constant, given by $\ker(\bp \circ \FGamma)$;
  the ambiguity left in the \emph{return function} is given by $\ker \bp \cap \im \FGamma$.
\end{theorem}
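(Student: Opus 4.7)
The plan is to reduce everything to a linear statement about $\bp \cdot (\tilde G - G)$ by exploiting three ingredients: the bijectivity of $\sigma$, the fact that $\bp$ preserves constants (since $\belief(\cdot \mid \vec o)$ is a probability distribution, $\bp \cdot \mathbf{1} = \mathbf{1}$), and the fact that $\FGamma$ preserves constants up to the nonzero scalar $c_\gamma \coloneqq \sum_{t=0}^{T} \gamma^t$. Once these are in hand, the equivalence of (i), (ii), (iii) becomes essentially routine linear algebra.

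First I would rewrite (i) in a linear form. Plugging the two reward functions into Eq.~\eqref{eq:deterministic_choice_probabilities_main_paper} and using that $\sigma$ is bijective, equality of the choice probabilities for all $\vec o, \vec o\hs'$ is equivalent to
\begin{equation*}
  (\bp \cdot \tilde G)(\vec o) - (\bp \cdot \tilde G)(\vec o\hs') = (\bp \cdot G)(\vec o) - (\bp \cdot G)(\vec o\hs') \quad \text{for all } \vec o, \vec o\hs' \in \vec\Omega,
\end{equation*}
which in turn is equivalent to the existence of a constant $c' \in \R$ with $\bp \cdot (\tilde G - G) = c' \cdot \mathbf{1}$.

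Next I would prove (i) $\Leftrightarrow$ (iii). For the forward direction, given such a $c'$, set $G' \coloneqq \tilde G - G - c'$. Since $\bp \cdot \mathbf{1} = \mathbf{1}$, applying $\bp$ yields $\bp \cdot G' = 0$, so $G' \in \ker \bp$. Since $\tilde G, G \in \im \FGamma$ by construction, and constant functions lie in $\im \FGamma$ (namely $\FGamma(c'/c_\gamma) = c' \cdot \mathbf{1}$), we get $G' \in \im \FGamma$, hence $G' \in \ker \bp \cap \im \FGamma$. The reverse direction is immediate: if $\tilde G = G + G' + c'$ with $G' \in \ker \bp$, then $\bp \cdot (\tilde G - G) = c' \cdot \mathbf{1}$, giving (i).

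Finally, for (ii) $\Leftrightarrow$ (iii), I would translate between reward-level and return-level decompositions using $\FGamma$. Given (ii), write $\tilde R = R + R' + c$ with $R' \in \ker(\bp \circ \FGamma)$, apply $\FGamma$, and use $\FGamma(c) = c \cdot c_\gamma \cdot \mathbf{1}$ to obtain $\tilde G = G + \FGamma(R') + c_\gamma c \cdot \mathbf{1}$ with $\FGamma(R') \in \ker \bp \cap \im \FGamma$; set $c' \coloneqq c_\gamma c$. Conversely, given (iii), write $G' = \FGamma(R'_0)$ for some $R'_0$, and set $c \coloneqq c'/c_\gamma$, $R' \coloneqq \tilde R - R - c$. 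Then $\FGamma(R') = G' + (c' - c_\gamma c)\mathbf{1} = G' \in \ker \bp$, so $R' \in \ker(\bp \circ \FGamma)$, yielding (ii). The only mild subtlety in this last step is that the reward-level lift $R'$ is not unique (one can add anything in $\ker \FGamma$), which is harmless since (ii) only asserts \emph{existence} of such an $R'$. The single genuine obstacle is correctly tracking the constant $c_\gamma$ through $\FGamma$ and verifying that $c_\gamma \neq 0$, which holds for all $\gamma \in [0, 1]$ given the finite horizon $T$.
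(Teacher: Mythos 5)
Your proof is correct and takes essentially the same approach as the paper: both reduce (i) via bijectivity of $\sigma$ to the statement $\bp\cdot(\tilde G - G) = c'\cdot\mathbf{1}$, use that $\bp$ fixes constants and that $\FGamma$ sends the constant reward $c$ to the constant return $c\sum_{t=0}^T\gamma^t$, and then shuffle the decomposition between the reward and return levels. The only difference is organizational (you prove (i)$\Leftrightarrow$(iii) and (ii)$\Leftrightarrow$(iii) rather than the paper's cycle (i)$\Rightarrow$(ii)$\Rightarrow$(iii)$\Rightarrow$(i)), which changes nothing of substance.
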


\begin{proof}
  Assume (i).
  To prove (ii),
  let $\sigma$ by the sigmoid function given by $\sigma(x) = \frac{1}{1 + \exp(-x)}$. 
  Then by Equation~\eqref{eq:deterministic_choice_probabilities_main_paper}, the equality of choice probabilities means the following for all $\vec{o}, \vec{o}\hs' \in \vec{\Omega}$:
  \begin{equation*}
    \sigma\Big( \beta \cdot \big( \big[\bp(\tilde{G})\big](\vec{o}) - \big[\bp(\tilde{G})\big](\vec{o}\hs') \big) \Big) = \sigma\Big( \beta \cdot \big( \big[\bp(G)\big](\vec{o}) - \big[\bp(G)\big](\vec{o}\hs') \big) \Big).
  \end{equation*}
  Since the sigmoid function is injective, this implies
  \begin{equation*}
    \big[\bp(\tilde{G})\big](\vec{o}) - \big[\bp(\tilde{G})\big](\vec{o}\hs')  = \big[\bp(G)\big](\vec{o}) - \big[\bp(G)\big](\vec{o}\hs').
  \end{equation*}
  Fixing an arbitrary $\vec{o}\hs'$, this implies that there exists a constant $c'$ such that for all $\vec{o} \in \vec{\Omega}$, the following holds:
  \begin{equation*}
    \big[\bp(\tilde{G})\big](\vec{o}) - \big[\bp(G)\big](\vec{o}\hs') - c' = 0.
  \end{equation*}
  Noting that $\bp(c') = c'$, this implies $\tilde{G} - G - c' \in \ker(\bp)$. 
  Now, define the constant reward function 
  \begin{equation*}
    c \coloneqq c' \cdot \frac{1 - \gamma}{1 - \gamma^{T+1}}.
  \end{equation*}
  We obtain
  \begin{align*}
    \big[\FGamma(c)\big](\vec{s}) &= \sum_{t = 0}^{T} \gamma^t \cdot c \\
    & = c' \cdot \frac{1 - \gamma}{1 - \gamma^{T+1}} \cdot \sum_{t = 0}^{T} \gamma^t \\
    & = c'.
  \end{align*}
  Thus, we have
  \begin{equation*}
    \FGamma(\tilde{R} - R - c) = \tilde{G} - G - c' \in \ker(\bp),
  \end{equation*}
  implying $R' \coloneqq \tilde{R} - R - c \in \ker(\bp \circ \FGamma)$. 
  This shows (ii).

  That (ii) implies (iii) follows by applying $\FGamma$ to both sides of the equation.

  Now assume (iii), i.e. $\tilde{G} = G + G' + c'$ for a constant $c' \in \R$ and a return function $G' \in \ker(\bp) \cap \im \FGamma$. 
  This implies $\bp(\tilde{G}) = \bp(G) + c'$.
  Thus, for all $\vec{o}, \vec{o}\hs' \in \vec{\Omega}$, we have
  \begin{equation*}
    \big[\bp(\tilde{G})\big](\vec{o}) - \big[\bp(\tilde{G})\big](\vec{o}\hs')  = \big[\bp(G)\big](\vec{o}) - \big[\bp(G)\big](\vec{o}\hs'),
  \end{equation*}
  which implies the equal choice probabilities after multiplying with $\beta$ and applying the sigmoid function $\sigma$ on both sides.
  Thus, (iii) implies (i).
\end{proof}

\begin{corollary}
  \label{cor:deduce_reward_identifiability}
  The following two statements are equivalent:
  \begin{enumerate}[(i)]
    \item $\ker(\bp \circ \FGamma) = 0$.
    \item The data $\Big( P^{R}\big( \vec{o} \succ \vec{o}\hs' \big)\Big)_{\vec{o}, \vec{o}\hs' \in \vec{\Omega}}$ determine the reward function $R$ up to an additive constant. 
  \end{enumerate}
\end{corollary}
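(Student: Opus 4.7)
The plan is to derive both directions of the equivalence directly from the characterization in Theorem~\ref{thm:main_thm_appendix_version}, specifically the equivalence (i)$\Leftrightarrow$(ii) in that theorem, which identifies the full ambiguity of feedback-compatible reward functions as $\ker(\bp \circ \FGamma) + \R \cdot \mathbf{1}$, where $\R \cdot \mathbf{1}$ denotes constant functions on $\states$.

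For the forward direction (i)$\Rightarrow$(ii), I would argue as follows. Suppose $\ker(\bp \circ \FGamma) = 0$, and suppose $\tilde R$ is any reward function that produces the same choice probabilities as $R$. By Theorem~\ref{thm:main_thm_appendix_version}, there exist $R' \in \ker(\bp \circ \FGamma)$ and $c \in \R$ with $\tilde R = R + R' + c$. Since the kernel is trivial, $R' = 0$, so $\tilde R = R + c$. This is exactly the statement that $R$ is determined up to an additive constant.

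For the reverse direction (ii)$\Rightarrow$(i), I would prove the contrapositive. Suppose $\ker(\bp \circ \FGamma) \neq 0$; pick a nonzero $R' \in \ker(\bp \circ \FGamma)$. Set $\tilde R := R + R'$. By Theorem~\ref{thm:main_thm_appendix_version}, $\tilde R$ yields the same choice probabilities as $R$, so to contradict (ii) it suffices to show $R'$ is not a constant function. The key observation here, which is the only nontrivial computational step, is that nonzero constants do not lie in $\ker(\bp \circ \FGamma)$: for a constant $c \in \R$, we have $\FGamma(c)(\vec s) = c \sum_{t=0}^{T}\gamma^t$ and $\bp$ preserves this constant value, so $(\bp \circ \FGamma)(c) = c \sum_{t=0}^T \gamma^t$. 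Since $\gamma \in [0,1]$ we have $\sum_{t=0}^T \gamma^t \geq 1 > 0$, so this vanishes only when $c = 0$. Therefore $R'$ is not a nonzero constant, meaning $\tilde R - R = R'$ is not any constant at all, and (ii) fails.

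The argument is essentially just a bookkeeping corollary of Theorem~\ref{thm:main_thm_appendix_version}, so there is no deep obstacle; the only mildly subtle point is remembering that constants on $\states$ do not themselves lie in $\ker(\bp \circ \FGamma)$, which is what ensures the splitting $\ker(\bp \circ \FGamma) + \R \cdot \mathbf{1}$ is genuine and that triviality of the kernel alone (rather than, say, the kernel modulo constants) is the right condition.
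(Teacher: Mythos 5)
Your argument is correct and matches the paper's proof in substance: both directions are read off from the equivalence (i)$\Leftrightarrow$(ii) of Theorem~\ref{thm:main_thm_appendix_version}, with the reverse direction hinging on the same key fact that $\bp \circ \FGamma$ maps nonzero constants to nonzero constants (you phrase this as a contrapositive, the paper argues directly, but it is the identical argument). No gaps.
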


\begin{proof}
  That (i) implies (ii) follows immediately from the implication from (i) to (ii) within the preceding theorem.

  Now assume (ii). 
  Let $R' \in \ker(\bp \circ \FGamma)$.
  Define $\tilde{R} \coloneqq R + R'$.
  Then the implication from (ii) to (i) within the preceding theorem implies that $\tilde{R}$ and $R$ have the same choice probabilities. 
  Thus, the assumption (ii) in this corollary implies that $R'$ is a constant. 
  Since $\FGamma$ and $\bp$ map nonzero constants to nonzero constants, the fact that $R' \in \ker(\bp \circ \FGamma)$ implies that $R' = 0$, showing that $\ker(\bp \circ \FGamma) = \{0\}$.
\end{proof}

As mentioned in the main paper, the previous result already leads to the non-identifiability of $R$ whenever $\FGamma$ is not injective, corresponding to the presence of zero-initial potential shaping~(\citet{Skalse2022Invariance}, Lemma B.3).
Thus, we now strengthen the previous result so that it deals with the identifiability of the \emph{return} function, which is sufficient for the purpose of policy optimization:

\begin{corollary}
  \label{cor:abstract_formulation_without_O_appendix}
  Consider the following four statements (which can each be true or false):
  \begin{enumerate}[(i)]
    \item $\ker \bp = \{0\}$.
    \item $\ker \big( \bp \circ \FGamma) = \{0\}$. \\
    \item $\ker \bp \cap \im \FGamma = \{0\}$. \\
    \item The data $\Big( P^{R}\big( \vec{o} \succ \vec{o}\hs' \big)\Big)_{\vec{o}, \vec{o}\hs' \in \vec{\Omega}}$ determine the return function $G = \FGamma(R)$ on sequences $\vec{s} \in \vec{\states}$ up to a constant independent of $\vec{s}$.
  \end{enumerate}
  Then the following implications, and no other implications, are true:
  \begin{equation*}
    \begin{tikzcd}
      (i) \ar[Rightarrow, dr] \\
      & (iii) \ar[Rightarrow, r, bend left] & (iv) \ar[Rightarrow, l, bend left] \\
      (ii) \ar[Rightarrow, ur]
    \end{tikzcd}
  \end{equation*}
  In particular, all of (i), (ii), and (iii) are sufficient conditions for identifying the return function from the choice probabilities.
\end{corollary}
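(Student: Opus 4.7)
The plan is to establish the three positive implications directly from linear algebra and Theorem~\ref{thm:main_thm_appendix_version}, and then show that no further implications hold by constructing small counterexamples that separate the three conditions~(i)--(iii). The diagram in the statement actually asserts six nonimplications among (i), (ii), (iii), (iv), but four of them reduce to the two core separations: (iii) does not imply (i) (there is a kernel in $\bp$ that avoids $\im \FGamma$), and (iii) does not imply (ii) (there is a kernel in $\FGamma$ that $\bp$ does not detect).

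For the positive implications, (i) $\Rightarrow$ (iii) is immediate since $\ker \bp \cap \im \FGamma \subseteq \ker \bp$. For (ii) $\Rightarrow$ (iii), I would take $G' \in \ker \bp \cap \im \FGamma$, write $G' = \FGamma(R')$, observe that $(\bp\circ \FGamma)(R') = \bp(G') = 0$, so $R' = 0$ by (ii), and hence $G' = 0$. For (iii) $\Leftrightarrow$ (iv), I would invoke the equivalence (i) $\Leftrightarrow$ (iii) of Theorem~\ref{thm:main_thm_appendix_version}: condition (iii) of the corollary says exactly that the ambiguity subspace of that theorem is $\{0\}$, so any $\tilde G$ with the same choice probabilities as $G$ satisfies $\tilde G = G + c'$. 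Conversely, given any $G' \in \ker \bp \cap \im \FGamma$, the theorem produces $\tilde G = G + G'$ with identical choice probabilities; (iv) then forces $G'$ to be a constant function; but a nonzero constant $c$ satisfies $\bp(c) = c \neq 0$ because $\belief(\cdot \mid \vec o)$ is a probability distribution, so $G' \in \ker \bp$ forces $G' = 0$.

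For the nonimplications I would build the two separating examples. To show (iii) (and hence (iv) and (i)) does not imply (ii), I would use a small MDP in which $\FGamma$ has a nontrivial kernel (for instance a two-step chain with $\gamma = 1$ where reward values $R(s_1) = 1, R(s_2) = -1$ cancel on every realizable trajectory, a standard zero-initial potential-shaping construction) and choose the observation kernel so that $\bp$ is injective; then $\ker \bp = \{0\} \subseteq $ everything, giving (i) and (iii), while $\ker(\bp\circ\FGamma) = \ker \FGamma \neq \{0\}$. To show (iii) and (ii) do not imply (i), I would pick an MDP where $\FGamma$ is injective and where two distinct state sequences $\vec s_1, \vec s_2$ with \emph{different} return values produce the same observation sequence with the human placing mass on both; then the indicator difference $\mathbf 1_{\vec s_1} - \mathbf 1_{\vec s_2}$ weighted by the belief gives a nonzero element of $\ker \bp$, while a dimension count can be arranged so that $\ker \bp$ misses $\im \FGamma$. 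With these two examples in hand, the remaining four nonimplications follow by combining them or by restriction: (i) $\not\Rightarrow$ (ii) and (iv) $\not\Rightarrow$ (ii) reuse the first example; (ii) $\not\Rightarrow$ (i) and (iv) $\not\Rightarrow$ (i) reuse the second.

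The routine part is the three implications. The main obstacle is the second separating example: one must simultaneously arrange $\ker \bp \neq \{0\}$ and $\ker \bp \cap \im \FGamma = \{0\}$, which requires a careful choice of the reward-relevant dimension of $\im \FGamma$ inside $\R^{\vec\states}$ relative to how the belief matrix collapses state sequences into the same observation sequence. I would try the minimal configuration with two indistinguishable trajectories of the same length but involving different state multisets, so that the unique $\bp$-kernel direction is not expressible as a time-discounted sum of per-state rewards, and verify $\ker \bp \cap \im \FGamma = \{0\}$ by a direct rank computation.
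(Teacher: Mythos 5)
Your proposal is correct and follows essentially the same route as the paper: the three positive implications are proved identically (the first two by elementary linear algebra, the equivalence (iii)\,$\Leftrightarrow$\,(iv) by invoking Theorem~\ref{thm:main_thm_appendix_version} and the fact that a nonzero constant cannot lie in $\ker \bp$), and the nonimplications are reduced to the same two separating examples the paper uses — a zero-return potential-shaping chain giving (i) without (ii), and an MDP with confusable observations but injective $\bp \circ \FGamma$ giving (ii) without (i). The only portion you leave as a sketch, the second counterexample, is precisely what the paper works out in detail as Example~\ref{ex:cheating}, and your proposed construction (a single two-element observation fiber whose belief-orthogonal direction avoids $\im \FGamma$, verified by a rank computation) is a sound way to realize it.
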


\begin{proof}
That (i) implies (iii) is trivial.
That (ii) implies (iii) is a simple linear algebra fact: 
Assume (ii) and that $G' \in \ker \bp \cap \im \FGamma$.
Then $G' = \FGamma(R')$ for some $R' \in \R^{\states}$ and
\begin{equation*}
  0 = \bp(G') = \bp\big( \FGamma(R') \big) = (\bp \circ \FGamma)(R').
\end{equation*}
By (ii), this implies $R' = 0$ and therefore $G' = \FGamma(R') = 0$, showing (iii).

That (iii) implies (iv) immediately follows from the implication from (i) to (iii) in Theorem~\ref{thm:main_thm_appendix_version}. 

  Now, assume (iv).
  To prove (iii), assume $G' \in \ker \bp \cap \im \FGamma$. 
  Then the implication from (iii) to (i) in Theorem~\ref{thm:main_thm_appendix_version} implies that $G + G'$ induces the same observation-based choice probabilities as $G$. 
  Thus, (iv) implies $G + G' = G + c'$ for some constant $c'$, which implies $G' = c'$.
  Since $G' \in \ker \bp$, this implies $0 = \bp(G') = \bp(c') = c'$ and thus $G' = 0$. 
  Thus, we showed $\ker \bp \cap \im \FGamma = \{0\}$.

  We now show that no other implication holds in general.
  Example~\ref{ex:cheating} will show that (ii) does not imply (i).
  We now show that (i) does also not imply (ii), from which it will logically follow that (iii) does neither imply (i) nor (ii).
  Namely, consider the following simple MDP with time horizon $T = 1$:
  \begin{equation}\label{eq:simple_mdp_appendix}
    \begin{tikzcd}
      a \ar[r] & b
    \end{tikzcd}
  \end{equation}
  In this MDP, every state sequence starts in $a$, deterministically transitions to $b$, and then ends.
  This means that $\vec{s} = ab$ is the only sequence. 
  Now, let $R' \in \R^{\{a, b\}}$ be the reward function given by
  \begin{equation*}
    R'(a) = 1, \quad R'(b) = \frac{-1}{\gamma}.
  \end{equation*}
  We obtain
  \begin{equation*}
    \big[\FGamma(R')\big](\vec{s}) = R'(a) + \gamma R'(b) = 1 + \gamma \cdot \frac{-1}{\gamma} = 0.
  \end{equation*}
  Thus, $\FGamma(R') = 0$, $(\bp \circ \FGamma)(R') = 0$, and, therefore, $\ker \big( \bp \circ \FGamma \big) \neq \{0\}$.
  Thus, (ii) does not hold. 
  However, it is possible to choose $\belief(\vec{s} \mid \vec{o})$ such that (i) holds: e.g., if $\Omega = \states$ and $\belief(\vec{s} \mid \vec{o}) \coloneqq \delta_{\vec{o}}(\vec{s})$, then $\ker \bp = \{0\}$ since this operator is the identity. 
\end{proof}

\subsection{The Ambiguity in Reward Learning in Practice}\label{sec:reward_learning_in_practice}

In this section, we point out that Theorem~\ref{thm:main_thm_appendix_version} is not just a theoretical discussion:
When $\bp$ and the inverse temperature parameter $\beta$ are known, then it is possible to design a reward learning algorithm that learns the true reward function up to the ambiguity $\ker(\bp \circ \FGamma)$ in the infinite data limit.
In doing so, we essentially use the loss function proposed in~\citet{Christiano2017}.

Namely, assume $\D$ is a data distribution of observation sequences $\vec{o} \in \vec{\Omega}$ such that all sequences in $\vec{\Omega}$ have a strictly positive probability of being sampled; for example, $\D$ could use an exploration policy and the observation sequence kernel $\PVO$.
For each pair of observation sequences $(\vec{o}, \vec{o}\hs')$, we then get a conditional distribution $P(\mu \mid \vec{o}, \vec{o}\hs')$ over a one-hot encoded human choice $\mu \in \{(1, 0), (0, 1)\}$, with probability
\begin{equation*}
  P\big(\mu = (1, 0) \mid \vec{o}, \vec{o}\hs'\big) = P^{R}\big(\vec{o} \succ \vec{o}\hs'\big). 
\end{equation*}
Together, this gives rise to a dataset $(\vec{o}_1, \vec{o}\hs'_1, \mu_1), \dots, (\vec{o}_N, \vec{o}\hs'_N, \mu_N)$ of observation sequences plus a human choice.

Now assume we learn a reward function $R_{\theta}: \states \to \R$ that is differentiable in the parameter $\theta$ and that can represent all possible reward functions $R \in \R^{\states}$.
Let $G_{\theta} \coloneqq \FGamma(R_{\theta})$ be the corresponding return function.
Write $\mu_k = (\mu_k^{(1)}, \mu_k^{(2)})$.
As in~\citet{Christiano2017}, we define its loss over the dataset above by
\begin{equation*}
  \widetilde{\loss}(\theta) = - \frac{1}{N} \sum_{k = 1}^{N} \mu_k^{(1)} \cdot \log P^{R_{\theta}}\big(\vec{o}_k \succ \vec{o}\hs'_k \big) + \mu_k^{(2)} \cdot \log P^{R_{\theta}}\big(\vec{o}\hs'_k \succ \vec{o}_k\big).
\end{equation*}
Note that by Equation~\eqref{eq:deterministic_choice_probabilities_main_paper}, this loss function essentially uses $\bp$ and also the inverse temperature parameter $\beta$ in its definition.
This means that these need to be explicitly represented to be able to use the loss function in practice. 

\begin{proposition}
  \label{prp:reward_learning_practice}
  The loss function $\widetilde{\loss}$ is differentiable.
  Furthermore, in the infinite datalimit its minima are precisely given by parameters $\theta$ such that $R_{\theta} = R + R' + c$ for $R' \in \ker \big(\bp \circ \FGamma \big)$ and $c \in \R$, or equivalently $G_{\theta} = G + G' + c'$ for $G' \in \ker \bp \cap \im \FGamma$ and $c' \in \R$.
\end{proposition}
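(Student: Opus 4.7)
The plan is to prove the two claims in sequence: first that $\widetilde{\loss}$ is differentiable, then that its minima in the infinite data limit correspond exactly to the ambiguity class from Theorem~\ref{thm:main_thm_appendix_version}.

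For differentiability, I would observe that $\widetilde{\loss}$ is a finite sum of terms of the form $\log P^{R_\theta}(\vec{o} \succ \vec{o}\hs')$, where each probability is obtained by plugging $G_\theta = \FGamma(R_\theta)$ into Eq.~\eqref{eq:deterministic_choice_probabilities_main_paper}. Since $R_\theta$ is differentiable in $\theta$ by assumption, $G_\theta$ is a linear and hence differentiable function of $R_\theta$, $\bp(G_\theta)$ is linear in $G_\theta$, and $\sigma$ and $\log \circ \sigma$ are smooth with $\sigma(x) \in (0,1)$ so the logs are well-defined. Composition of smooth maps yields differentiability. This step should be straightforward.

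For the characterization of minima in the infinite data limit, I would first argue that by the strong law of large numbers, $\widetilde{\loss}(\theta)$ converges almost surely to the population loss
\begin{equation*}
  \loss(\theta) = -\eval_{(\vec{o}, \vec{o}\hs') \sim \D}\Big[ P^{R}(\vec{o} \succ \vec{o}\hs') \log P^{R_\theta}(\vec{o} \succ \vec{o}\hs') + P^{R}(\vec{o}\hs' \succ \vec{o}) \log P^{R_\theta}(\vec{o}\hs' \succ \vec{o})\Big].
\end{equation*}
Pointwise in $(\vec{o}, \vec{o}\hs')$ the integrand is the Bernoulli cross-entropy $H(P^{R}(\vec{o} \succ \vec{o}\hs'); P^{R_\theta}(\vec{o} \succ \vec{o}\hs'))$, which by Gibbs' inequality is minimized (for fixed $\vec{o}, \vec{o}\hs'$) exactly when $P^{R_\theta}(\vec{o} \succ \vec{o}\hs') = P^{R}(\vec{o} \succ \vec{o}\hs')$. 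Because $\D$ has full support on $\vec{\Omega} \times \vec{\Omega}$, the global minima of $\loss$ are precisely the parameters $\theta$ satisfying $P^{R_\theta}(\vec{o} \succ \vec{o}\hs') = P^{R}(\vec{o} \succ \vec{o}\hs')$ for \emph{all} $\vec{o}, \vec{o}\hs' \in \vec{\Omega}$.

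The final step is to invoke Theorem~\ref{thm:main_thm_appendix_version}: the equality of the full vector of choice probabilities is equivalent to $R_\theta = R + R' + c$ for some $R' \in \ker(\bp \circ \FGamma)$ and $c \in \R$, or equivalently $G_\theta = G + G' + c'$ for some $G' \in \ker \bp \cap \im \FGamma$ and $c' \in \R$. Because $R_\theta$ is assumed expressive enough to represent every reward function in $\R^{\states}$, every element of this ambiguity class is in fact attained by some $\theta$. The main subtlety I expect is the infinite-data-limit step itself: one must either state the claim as a statement about the population loss $\loss$ (which is the cleanest formulation), or else justify uniform convergence so that minima of $\widetilde{\loss}$ converge to minima of $\loss$. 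The finite $\vec{\Omega}$ and bounded integrand make a uniform law of large numbers routine, but this is the place where care is needed to state precisely what ``in the infinite data limit'' means.
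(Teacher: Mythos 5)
Your proposal is correct and follows essentially the same route as the paper's own proof: differentiability by composition of smooth maps, rewriting the empirical loss in the infinite-data limit as the expected Bernoulli cross-entropy between $P^{R}$ and $P^{R_\theta}$, using full support of $\D$ plus Gibbs' inequality to conclude that the minimizers are exactly the $\theta$ matching all choice probabilities, and then invoking \Cref{thm:main_thm_appendix_version}. Your added remark about making the ``infinite data limit'' precise via a uniform law of large numbers is a point the paper itself only handles informally (with an $\approx$), so it is a welcome refinement rather than a divergence.
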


\begin{proof}
  The differentiability of the loss function follows from the differentiability of multiplication with the matrix $\bp$, see Equation~\eqref{eq:deterministic_choice_probabilities_main_paper}, and of the reward function $R_{\theta}$ in its parameter $\theta$ that we assumed.

  For the second statement, let $N(\vec{o}, \vec{o}\hs')$ be the number of times that the pair $(\vec{o}, \vec{o}\hs')$ appears in the dataset, and let $N(\vec{o}, \vec{o}\hs', 1)$ be the number of times that the human choice is $\mu = (1,0)$ and the sampled pair is $(\vec{o}, \vec{o}\hs')$, and similar for $2$ instead of $1$.
  We obtain
  \begin{align*}
    \widetilde{\loss}(\theta)  = & - \sum_{\vec{o}, \vec{o}\hs' \in \vec{\Omega}} \frac{N(\vec{o}, \vec{o}\hs')}{N} \cdot \Bigg[  \frac{N(\vec{o}, \vec{o}\hs', 1)}{N(\vec{o}, \vec{o}\hs')} \log P^{R_{\theta}}\big(\vec{o} \succ \vec{o}\hs'\big) \\
    & +  \frac{N(\vec{o}, \vec{o}\hs', 2)}{N(\vec{o}, \vec{o}\hs')} \log P^{R_{\theta}}\big(\vec{o}\hs' \succ \vec{o}\big) \Bigg] \\
    \approx & \eval_{\vec{o}, \vec{o}\hs' \sim \D}\bigg[ \CE\Big( P^{R}\big(\vec{o} \ou \vec{o}\hs'\big)  \ \big\| \ P^{R_{\theta}}\big( \vec{o} \ou \vec{o}\hs' \big)  \Big)  \bigg] \\
    \eqqcolon & \loss(\theta).
  \end{align*}
  Here, $\CE$ is the crossentropy between the two binary distributions.
  Since we assumed that $\D$ gives a positive probability to all observation sequences in $\vec{\Omega}$, and since the cross entropy  is generally minimized exactly when the second distribution equals the first, the loss function $\loss(\theta)$ is minimized if and only if $R_{\theta}$ gives rise to the same choice probabilities as $R$ for all pairs of observation sequences. 
  Theorem~\ref{thm:main_thm_appendix_version} then gives the result.
\end{proof}

\subsection{Identifiability of Return Functions When Human Observations Are Not Known}\label{sec:unknown_observations_own_part}

Corollary~\ref{cor:abstract_formulation_without_O_appendix} assumes that the choice probabilities of each observation sequence pair are known to the reward learning algorithm. 
However, this requires the algorithm to know what the human observed.
In some applications, this is a reasonable assumption, e.g. if the human's observations are themselves produced by an algorithm that can feed the observations also back to the learning algorithm.
In general, however, the observations happen in the physical world, and are only known probabilistically via the observation kernel $\PO$.
The learning system \emph{does} however have access to the full state sequences that generate the observation sequences. 
This leads to knowledge of the following choice probabilities for $\vec{s}, \vec{s}\hs' \in \vec{\states}$:
\begin{equation}\label{eq:choice_probabilities_general_case}
  P^{R}\big( \vec{s} \succ \vec{s}\hs' \big) \coloneqq \eval_{\vec{o},\vec{o}' \sim \PVO(\cdot \mid \vec{s}, \vec{s}\hs')} \Big[ P^{R}\big( \vec{o} \succ \vec{o}\hs' \big) \Big],\footnotemark
\end{equation}
\footnotetext{We excuse the following abuse of notation: these choice probabilities \emph{run through} the observations of the human and are not the same as the choice probabilities from Equation~\eqref{eq:main_formula}.}where the observation-based choice probabilities are given as in Equation~\eqref{eq:deterministic_choice_probabilities_main_paper}.
In other words, the learning algorithm can only infer an aggregate of the observation-based choice probabilities.
Again, we can ask a question similar to the ones before, extending the investigations in the previous section:
\begin{question}\label{qu:most_general_question} 
  Assume the vector of choice probabilities $\Big( P^{R}(\vec{s} \succ \vec{s}\hs') \Big)_{\vec{s},\vec{s}\hs' \in \vec{\states}}$ is known.
  Additionally, assume that it is known that the human's observations are governed by $\PO$, and that the human is Boltzmann rational with inverse temperature parameter $\beta$ and beliefs $\belief(\vec{s} \mid \vec{o})$, see Equation~\eqref{eq:choice_probabilities_general_case}.
  Does this data identify the return function $G: \vec{\states} \to \R$?
\end{question}

If the observation-based choice probabilities from Equation~\eqref{eq:deterministic_choice_probabilities_main_paper} would be known, then Corollary~\ref{cor:abstract_formulation_without_O_appendix} would provide the answer to this question.
Thus, similar to how we previously inverted the belief operator $\bp$, we are now simply tasked with inverting the expectation over observation sequences.
This leads us to the following definition:
\begin{definition}
  [Ungrounding Operator]
  \label{def:ungrounding_operator}
  The \emph{ungrounding operators} $\bo: \R^{\vec{\Omega}} \to \R^{\vec{\states}}$ and $\bo \otimes \bo: \R^{\vec{\Omega} \times \vec{\Omega}} \to \R^{\vec{\states} \times \vec{\states}}$ are defined by
  \begin{equation*}
    \big[\bo(v)\big](\vec{s}) \coloneqq \eval_{\vec{o} \sim \PVO(\vec{o} \mid \vec{s})}\big[ v(\vec{o}) \big], \quad \big[ (\bo \otimes \bo)(C) \big](\vec{s}, \vec{s}\hs') \coloneqq \eval_{\vec{o}, \vec{o}\hs' \sim \PVO(\cdot \mid \vec{s}, \vec{s}\hs')} \big[ C(\vec{o}, \vec{o}\hs') \big].
  \end{equation*}
  Here, $v \in \R^{\vec{\Omega}}$ is an arbitrary vector, and $C \in \R^{\vec{\Omega} \times \vec{\Omega}}$ is also an arbitrary vector, where the notation can remind of ``Choice'' since the inputs to $\bo \otimes \bo$ are, in practice, vectors of observation-based Boltzmann-rational choice probabilities.
\end{definition}
Formally, $\bo \otimes \bo$ is the Kronecker product of $\bo$ with itself, but it is not necessary to understand this fact to follow the discussion.
Ultimately, to be able to recover the observation-based choice probabilities, what matters is that $\bo \otimes \bo$ is injective on whole vectors of these choice probabilities. 
The injectivity of $\bo$ is a \emph{sufficient condition} for this, which explains its usefulness.
We show this in the following lemma:

\begin{lemma}
  \label{lem:kronecka_product_injectivity}
  $\bo: \R^{\vec{\Omega}} \to \R^{\vec{\states}}$ is injective if and only if $\bo \otimes \bo: \R^{\vec{\Omega} \times \vec{\Omega}} \to \R^{\vec{\states} \times \vec{\states}}$ is injective.
\end{lemma}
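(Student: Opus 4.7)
The plan is to prove both directions by reducing them to the injectivity of $\bo$ itself, using the separable structure of $\bo \otimes \bo$ inherited from the product form $\PVO(\vec{o} \mid \vec{s}) \PVO(\vec{o}\hs' \mid \vec{s}\hs')$. Concretely, I would unfold the definition and write
\begin{equation*}
  \big[(\bo \otimes \bo)(C)\big](\vec{s}, \vec{s}\hs') = \sum_{\vec{o}, \vec{o}\hs'} \PVO(\vec{o} \mid \vec{s}) \PVO(\vec{o}\hs' \mid \vec{s}\hs') \, C(\vec{o}, \vec{o}\hs'),
\end{equation*}
and exploit the fact that this is a composition of two applications of $\bo$, one in each tensor slot.

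For the forward implication, I would assume $\bo$ is injective and suppose $(\bo \otimes \bo)(C) = 0$. For each fixed $\vec{s}\hs'$, define $v_{\vec{s}\hs'}(\vec{o}) \coloneqq \sum_{\vec{o}\hs'} \PVO(\vec{o}\hs' \mid \vec{s}\hs') C(\vec{o}, \vec{o}\hs')$. Then $\bo(v_{\vec{s}\hs'})(\vec{s}) = [(\bo \otimes \bo)(C)](\vec{s}, \vec{s}\hs') = 0$, so by injectivity $v_{\vec{s}\hs'} \equiv 0$. Now fix any $\vec{o}$ and define $w_{\vec{o}}(\vec{o}\hs') \coloneqq C(\vec{o}, \vec{o}\hs')$; the previous conclusion says $\bo(w_{\vec{o}})(\vec{s}\hs') = 0$ for all $\vec{s}\hs'$, so again by injectivity $w_{\vec{o}} \equiv 0$, i.e., $C = 0$.

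For the reverse implication, I would assume $\bo \otimes \bo$ is injective and let $v \in \ker \bo$. The trick is to lift $v$ into a rank-one tensor: pick any vector $u \in \R^{\vec{\Omega}}$ that is nonzero at some coordinate $\vec{o}^*$ (e.g.\ a Dirac), and set $C(\vec{o}, \vec{o}\hs') \coloneqq v(\vec{o}) u(\vec{o}\hs')$. Then $(\bo \otimes \bo)(C)(\vec{s}, \vec{s}\hs') = \bo(v)(\vec{s}) \cdot \bo(u)(\vec{s}\hs') = 0$, so by the assumed injectivity $C \equiv 0$, forcing $v(\vec{o}) u(\vec{o}^*) = 0$ and hence $v = 0$.

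The proof is essentially bookkeeping; no step looks like a serious obstacle. The only mild subtlety is that one must be careful to refactor the double sum correctly so that the two applications of $\bo$ decouple — that is, to recognize that $\bo \otimes \bo$ genuinely acts on each tensor slot independently even though the input $C$ need not be rank-one. Once that refactoring is in hand, both directions follow from one application of the injectivity hypothesis per slot.
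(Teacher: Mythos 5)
Your proof is correct and follows essentially the same route as the paper's: the forward direction slices the tensor and applies injectivity of $\bo$ once per slot, and the reverse direction uses the rank-one tensor trick (the paper takes $C \otimes C$ for a nonzero $C \in \ker\bo$ where you take $v \otimes u$ with $u$ a Dirac, an immaterial variation). No gaps.
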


\begin{proof}
  This is a general property of the Kronecker product of a linear operator with itself.  
  For completeness, we demonstrate the calculation in our special case.
  First, assume that $\bo$ is injective.
  Assume that $(\bo \otimes \bo)(C) = 0$ for some $C \in \R^{\vec{\Omega} \times \vec{\Omega}}$.
  We need to show $C = 0$.

  For all pairs of state sequences $(\vec{s}, \vec{s}\hs')$, we have
  \begin{align*}
    0 = \big[ (\bo \otimes \bo)(C) \big](\vec{s}, \vec{s}\hs') & = \eval_{\vec{o}, \vec{o}\hs' \sim \PVO(\cdot \mid \vec{s}, \vec{s}\hs')}\big[ C(\vec{o}, \vec{o}\hs') \big] \\
    & = \eval_{\vec{o} \sim \PVO(\vec{o} \mid \vec{s})}\bigg[ \eval_{\vec{o}\hs' \sim \PVO(\vec{o}\hs' \mid \vec{s}\hs')}\big[ C(\vec{o}, \vec{o}\hs') \big] \bigg] \\
    & = \eval_{\vec{o} \sim \PVO(\vec{o} \mid \vec{s})} \Big[ C'_{\vec{s}\hs'}(\vec{o}) \Big] \\
    & = \Big[ \bo\big( C'_{\vec{s}\hs'} \big) \Big](\vec{s}),
  \end{align*}
  where $C'_{\vec{s}\hs'}(\vec{o}) \coloneqq \eval_{\vec{o}\hs' \sim \PVO(\vec{o}\hs' \mid \vec{s}\hs')}\big[ C(\vec{o}, \vec{o}\hs') \big]$.
  By the injectivity of $\bo$, we obtain $C'_{\vec{s}\hs'} = 0$ for all $\vec{s}\hs'$.
  This means that for all $\vec{s}\hs'$ and $\vec{o}$, we have
  \begin{align*}
    0 = C'_{\vec{s}\hs'}(\vec{o}) = \eval_{\vec{o}\hs' \sim \PVO(\vec{o}\hs' \mid \vec{s}\hs')}\big[ C(\vec{o}, \vec{o}\hs') \big] = \Big[ \bo\big( C''_{\vec{o}} \big) \Big](\vec{s}\hs'),
  \end{align*}
  where $C''_{\vec{o}}(\vec{o}\hs') \coloneqq C(\vec{o}, \vec{o}\hs')$.
  Again, by the injectivity of $\bo$, we obtain $C''_{\vec{o}} = 0$ for all $\vec{o}$, leading to $C = 0$.
  That proves the direction from left to right.

  To prove the other direction, assume that $\bo$ is \emph{not} injective.
  This means there exists $0 \neq C \in \R^{\vec{\Omega}}$ such that $\bo(C) = 0$.
  Define $C \otimes C \in \R^{\vec{\Omega} \times \vec{\Omega}}$ by
  \begin{equation*}
    (C \otimes C)(\vec{o}, \vec{o}\hs') \coloneqq C(\vec{o})C(\vec{o}\hs').
  \end{equation*}
  Then clearly, $C \otimes C \neq 0$.
  We are done if we can show that $(\bo \otimes \bo)(C \otimes C) = 0$ since that establishes that $\bo \otimes \bo$ is also not injective.
  For any $\vec{s}, \vec{s}\hs' \in \vec{\states}$, we have
  \begin{align*}
    \Big[(\bo \otimes \bo)(C \otimes C) \Big](\vec{s}, \vec{s}\hs') & = 
    \eval_{\vec{o}, \vec{o}\hs' \sim \PVO(\cdot \mid \vec{s}, \vec{s}\hs')}\Big[ (C \otimes C)(\vec{o}, \vec{o}\hs') \Big] \\
    & = \eval_{\vec{o}, \vec{o}\hs' \sim \PVO(\cdot \mid \vec{s}, \vec{s}\hs')}\Big[ C(\vec{o}) \cdot C(\vec{o}\hs') \Big] \\
    & = \eval_{\vec{o} \sim \PVO(\vec{o} \mid \vec{s})}\big[ C(\vec{o}) \big] \cdot \eval_{\vec{o}\hs' \sim \PVO(\vec{o}\hs' \mid \vec{s}\hs')}\big[ C(\vec{o}\hs') \big] \\
    & = \big[ \bo(C) \big](\vec{s}) \cdot \big[ \bo(C) \big](\vec{s}\hs') \\
    & = 0 \cdot 0 \\
    & = 0.
  \end{align*}
  This finishes the proof.
\end{proof}

We now state and prove the following extension of Corollary~\ref{cor:abstract_formulation_without_O_appendix}:

\begin{theorem}
  \label{thm:general_version}
  Consider the following statements (which can each be true or false):
  \begin{enumerate}
    \item $\bo: \R^{\vec{\Omega}} \to \R^{\vec{\states}}$ is an injective linear operator: $\ker \bo = \{0\}$.
    \item $\bo \otimes \bo: \R^{\vec{\Omega} \times \vec{\Omega}} \to \R^{\vec{\states} \times \vec{\states}}$ is an injective linear operator: $\ker \bo \otimes \bo = \{0\}$.
    \item $\bo \otimes \bo$ is injective on vectors of observation-based choice probabilities $\Big( P^{R}\big( \vec{o} \succ \vec{o}\hs' \big) \Big)_{\vec{o}, \vec{o}\hs'}$ over the set of return functions $G \in \R^{\vec{\states}}$.
    \item  The data of state-based choice probabilities $\Big( P^{R}\big( \vec{s} \succ \vec{s}\hs' \big)\Big)_{\vec{s}, \vec{s}\hs' \in \vec{\states}}$ from Equation~\eqref{eq:choice_probabilities_general_case} determine the data of observation-based choice probabilities $\Big( P^{R}\big( \vec{o} \succ \vec{o}\hs' \big)\Big)_{\vec{o}, \vec{o}\hs' \in \vec{\Omega}}$ from Equation~\eqref{eq:deterministic_choice_probabilities_main_paper}. 
  \end{enumerate}
  Then the following implications hold and 3 does not imply 2:
  \begin{equation*}
    \begin{tikzcd}
      1 \ar[Rightarrow, r, bend left] & 2 \ar[Rightarrow, l, bend left] \ar[Rightarrow, r] & 3 \ar[Rightarrow, r] & 4.
    \end{tikzcd}
  \end{equation*}
  Consequently, if any of the conditions 1, 2, or 3 hold, and additionally any of the conditions (i), (ii) or (iii) from Corollary~\ref{cor:abstract_formulation_without_O_appendix}, then the data $\Big( P^{R}\big( \vec{s} \succ \vec{s}\hs' \big) \Big)_{\vec{s}, \vec{s}\hs' \in \vec{\Omega}}$ determine the return function $G$ on sequences $\vec{s} \in \vec{\states}$ up to a constant independent of $\vec{s}$.
\end{theorem}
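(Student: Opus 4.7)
The plan is to verify each arrow in the diagram separately, construct a counterexample for the non-implication, and then combine with Corollary~\ref{cor:abstract_formulation_without_O_appendix} for the consequence.

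The equivalence $1 \Leftrightarrow 2$ is exactly Lemma~\ref{lem:kronecka_product_injectivity}, so I would simply cite it. For $2 \Rightarrow 3$, the observation is immediate: if $\bo \otimes \bo$ has trivial kernel on all of $\R^{\vec{\Omega} \times \vec{\Omega}}$, then a fortiori it is injective when restricted to any subset, in particular to the set of matrices $\big(P^{R}(\vec{o} \succ \vec{o}\hs')\big)_{\vec{o},\vec{o}\hs'}$ arising from some return function $G$.

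For $3 \Rightarrow 4$, I would rewrite~\eqref{eq:choice_probabilities_general_case} as an operator identity. Let $C^{R} \in \R^{\vec{\Omega} \times \vec{\Omega}}$ denote the matrix of observation-based choice probabilities. By the very definition in~\eqref{eq:choice_probabilities_general_case}, the matrix of state-based choice probabilities equals $(\bo \otimes \bo)(C^{R})$. So if $\tilde{R}$ induces the same state-based choice probabilities as $R$, then $(\bo \otimes \bo)(C^{R}) = (\bo \otimes \bo)(C^{\tilde{R}})$, and $C^{R}, C^{\tilde{R}}$ are both admissible observation-based choice-probability vectors; injectivity on this set by hypothesis~3 forces $C^{R} = C^{\tilde{R}}$, which is statement~4. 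For the consequence at the end of the theorem, I would then chain $3 \Rightarrow 4$ with the implication $(i)$ or $(ii)$ or $(iii) \Rightarrow (iv)$ of Corollary~\ref{cor:abstract_formulation_without_O_appendix}: knowing the state-based probabilities recovers the observation-based ones, which under any of those three conditions determines $G$ up to an additive constant.

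For $3 \not\Rightarrow 2$, I would exploit that the set of admissible choice-probability matrices is highly constrained: each such matrix has the form $C(\vec{o},\vec{o}\hs') = \sigma\big(\beta(v(\vec{o}) - v(\vec{o}\hs'))\big)$ for $v = \bp \cdot G \in \R^{\vec{\Omega}}$, and is therefore parameterized by $v$ up to an additive constant. The admissible set thus has dimension at most $|\vec{\Omega}|-1$, whereas $\R^{\vec{\Omega} \times \vec{\Omega}}$ has dimension $|\vec{\Omega}|^2$, so there is a large gap between injectivity on this thin nonlinear manifold and global linear injectivity of $\bo \otimes \bo$. Concretely, I would construct a small MDP with an observation kernel for which $\bo$ has nontrivial kernel (so $\bo \otimes \bo$ does too, by the already-proven Lemma~\ref{lem:kronecka_product_injectivity}) and then verify directly that no two admissible sigmoid-structured matrices differ by a nonzero kernel element of $\bo \otimes \bo$.

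The main obstacle is this counterexample. Engineering the MDP so that $\ker \bo \neq 0$ is trivial (take any state aggregation producing fewer observation sequences than state sequences), but one must then algebraically verify that the sigmoid-of-difference structure of admissible $C$ never realizes the resulting ambiguity. The natural tactic is to take $|\vec{\Omega}|$ as small as possible (say $|\vec{\Omega}| = 2$) so the admissible submanifold is only one-dimensional, making the verification a short explicit calculation in terms of a single scalar. Everything else in the theorem reduces to routine unfolding of definitions.
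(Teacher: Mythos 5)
Your treatment of $1 \Leftrightarrow 2$ (citing Lemma~\ref{lem:kronecka_product_injectivity}), $2 \Rightarrow 3$, $3 \Rightarrow 4$, and the final chaining through Corollary~\ref{cor:abstract_formulation_without_O_appendix} matches the paper's proof essentially line for line. The one non-routine piece is the counterexample showing $3 \not\Rightarrow 2$, and there your plan contains a genuine error: you propose to get $\ker \bo \neq \{0\}$ by ``any state aggregation producing fewer observation sequences than state sequences,'' but that confuses $\bo$ with $\bp$. The operator $\bo: \R^{\vec{\Omega}} \to \R^{\vec{\states}}$ is the \emph{ungrounding} operator whose matrix has columns indexed by $\vec{o}$ supported on the preimages $\vec{O}^{-1}(\vec{o})$; for a deterministic kernel these supports are disjoint, so the columns are linearly independent and $\bo$ is \emph{automatically injective} — this is exactly Proposition~\ref{pro:deterministic_observation_kernel}, which says deterministic non-injective $\PVO$ kills the injectivity of $\bp$, not of $\bo$. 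So your proposed construction cannot falsify statement 2, and the subsequent verification you defer (``no two admissible sigmoid-structured matrices differ by a kernel element'') would be moot because the kernel would be trivial.

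To make $\ker \bo \neq \{0\}$ you need a \emph{stochastic} observation kernel with $|\vec{\Omega}| > |\vec{\states}|$ (or at least linearly dependent columns). The paper's fix is maximally degenerate: the two-state MDP of Equation~\eqref{eq:simple_mdp_appendix} with a single state sequence $ab$, where state $a$ emits one of two distinct observations with probability $1/2$ each. Then $|\vec{\Omega}| = 2 > 1 = |\vec{\states}|$ forces $\ker \bo \neq \{0\}$, so statement 2 fails; but since there is only one state sequence, $\bp \cdot G$ is constant across observation sequences for every $G$, all observation-based choice probabilities equal $1/2$, the admissible set is a single point, and statement 3 holds vacuously. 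Your dimension-counting intuition (the admissible set is a thin manifold of dimension at most $|\vec{\Omega}| - 1$) correctly explains \emph{why} such a gap is possible, but it does not by itself produce the example, and the example you sketch points in the wrong direction.
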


\begin{proof}
  That 1 and 2 are equivalent was shown in Lemma~\ref{lem:kronecka_product_injectivity}.
  That 2 implies 3 is clear. 
  To prove that 3 implies 4, simply put both sets of choice probabilities into a vector.
  Then Equation~\eqref{eq:choice_probabilities_general_case} and Definition~\ref{def:ungrounding_operator} show the following equality of vectors in $\R^{\vec{\states} \times \vec{\states}}$:
  \begin{equation*}
  \Big( P^{R}\big( \vec{s} \succ \vec{s}\hs'   \big) \Big)_{\vec{s}, \vec{s}\hs'} = \big( \bo \otimes \bo \big)\bigg( \Big( P^{R}\big( \vec{o} \succ \vec{o}\hs'  \big) \Big)_{\vec{o}, \vec{o}\hs'} \bigg).
  \end{equation*}
  The injectivity of $\bo \otimes \bo$ on such inputs ensures that the observation-based choice probabilities can be recovered using this equation.

  We now show that (3) does not imply (2).
  Again, we use the simple MDP from Equation~\eqref{eq:simple_mdp_appendix}, but this time with a different observation kernel.
  Namely, we choose 
  \begin{equation*}
    \PO(o^{(a)} \mid a) = \PO({o^{(a)}}' \mid a) = \frac{1}{2}, \quad \PO(o^{(b)} \mid b) = 1,
  \end{equation*}
  where ${o^{(a)}}' \neq o^{(a)}$ and $o^{(a)} \neq o^{(b)} \neq {o^{(a)}}'$.
  This results in two possible observation sequences: ${o^{(a)}}o^{(b)}$ and ${o^{(a)}}'o^{(b)}$.
  Thus, $\R^{\vec{\Omega}}$ is two-dimensional, whereas $\R^{\vec{\states}}$ is only one-dimensional.
  Consequently, $\bo: \R^{\vec{\Omega}} \to \R^{\vec{\states}}$ cannot be injective, so $\ker \bo \neq \{0\}$, so (2) does not hold since (1) and (2) are equivalent.
  However, (3) still holds: 
  Since there is only one state sequence, Equation~\eqref{eq:deterministic_choice_probabilities_main_paper} shows that the only vector of choice probabilities has $1/2$ in all its entries, irrespective of the return function $G$.
  Thus, $\bo \otimes \bo$ has only one input of observation-based choice probabilities, and is thus automatically injective on its inputs.

  The final result of identifiability of the return function $G$ follows using Corollary~\ref{cor:abstract_formulation_without_O_appendix}.
\end{proof}

\subsection{Simple Special Cases: Full Observability, Deterministic \texorpdfstring{$\PVO$}{O}, and Noisy \texorpdfstring{$\PVO$}{O}}\label{sec:simple_consequences}

In this section, we analyze three simple special cases of the general theory.

Theorem 3.9 (together with Lemma B.3) from~\citet{Skalse2022Invariance}, reproduced as a corollary below, is a special case of our theorem:

\begin{corollary}[\citet{Skalse2022Invariance}]
  \label{cor:how_3.12_follows}
  Assume the human directly observes the true sequences, and the choice probabilities are given by
  \begin{equation*}
    P^{R}\big(\vec{s} \succ \vec{s}\hs'\big) = \sigma\Big( \beta\big(G(\vec{s}) - G(\vec{s}\hs')\big) \Big).
  \end{equation*}
  This data determines the return function $G = \FGamma(R)$ on state sequences $\vec{s} \in \vec{\states}$ up to a constant independent on $\vec{s}$.
\end{corollary}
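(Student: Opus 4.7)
The plan is to realize the full observability setting as a degenerate special case of the partial observability framework already analyzed, so that Corollary \ref{cor:abstract_formulation_without_O_appendix} can be applied directly. Concretely, I would take $\Omega$ to be a copy of $\states$ with the identity observation kernel, so that observation sequences and state sequences are literally the same objects, giving $\vec{\Omega} = \vec{\states}$. The human's belief is then $\belief(\vec{s}\hs' \mid \vec{o}) = \delta_{\vec{o}}(\vec{s}\hs')$, meaning the belief operator $\bp : \R^{\vec{\states}} \to \R^{\vec{\states}}$ from Section \ref{sec:identifiability_under_observed_comparisons} becomes the identity.

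Next I would verify that under this identification, the choice probabilities $P^R(\vec{s} \succ \vec{s}\hs') = \sigma\bigl(\beta(G(\vec{s}) - G(\vec{s}\hs'))\bigr)$ assumed in the corollary coincide with the observation-based choice probabilities from Equation \eqref{eq:deterministic_choice_probabilities_main_paper}. This is immediate: since $\bp$ is the identity, $(\bp \cdot G)(\vec{o}) = G(\vec{o}) = G(\vec{s})$ under the identification, so Equation \eqref{eq:deterministic_choice_probabilities_main_paper} reduces exactly to the formula in the corollary's hypothesis.

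Finally, since $\bp$ is the identity on a finite-dimensional space, $\ker \bp = \{0\}$, i.e. condition (i) of Corollary \ref{cor:abstract_formulation_without_O_appendix} holds. The implication (i) $\Rightarrow$ (iv) of that corollary then yields precisely the conclusion: the collection of choice probabilities $\bigl(P^R(\vec{s} \succ \vec{s}\hs')\bigr)_{\vec{s},\vec{s}\hs' \in \vec{\states}}$ determines $G$ on $\vec{\states}$ up to an additive constant.

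There is no real obstacle here beyond bookkeeping; the mildly subtle point to state carefully is the identification of $\Omega$ with $\states$ (and $\vec{\Omega}$ with $\vec{\states}$) so that the general framework formally applies, and observing that under this identification the ambient choice-probability formulas and the ones in the statement are literally the same expression. Everything else is an invocation of the already-proved general result.
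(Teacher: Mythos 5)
Your proposal is correct and is essentially the same argument as the paper's: both embed the fully observable case as a degenerate instance with identity observation kernel and identity belief operator, then invoke the general identifiability result. The only cosmetic difference is that you apply Corollary~\ref{cor:abstract_formulation_without_O_appendix} directly (identifying the given probabilities with the observation-based ones), while the paper routes through Theorem~\ref{thm:general_version} using that $\bo$ is also the identity --- an immaterial distinction here since observation sequences and state sequences literally coincide.
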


\begin{proof}
  We can embed this case into the one of Theorem~\ref{thm:general_version} by defining the observation kernel as $\PVO(\vec{s}\hs' \mid \vec{s}) = \delta_{\vec{s}}(\vec{s}\hs')$ (i.e., the correct sequence is deterministically observed) and defining the human's belief as $\belief(\vec{s}\hs' \mid \vec{s}) = \delta_{\vec{s}}(\vec{s}\hs')$ (i.e., the human knows that the observation reflects the true sequence).
  This shows that $P(\vec{s} \succ \vec{s}\hs')$ is of the form of Equation~\eqref{eq:choice_probabilities_general_case}.
  The result follows from Theorem~\ref{thm:general_version}: the operators $\bo$ and $\bp$ are the identity in this case, due to the defining property of the Kronecker delta, and so they are injective. 
\end{proof}

The following proposition shows that Corollary~\ref{cor:how_3.12_follows} is essentially the \emph{only} example of deterministic observation kernel $\PVO$ for which $\bp$ is injective. 
Note, however, that in some situations, we can have $\im \FGamma \cap \ker \bp = \{0\}$ even if $\bp$ is not injective, see Example~\ref{ex:cheating}.

\begin{proposition}
  \label{pro:deterministic_observation_kernel}
  Assume $\PVO$, the observation kernel on the level of sequences, is deterministic and \emph{not} injective.
  Then $\bo$ is automatically injective.
  However, $\bp$ is not injective.
\end{proposition}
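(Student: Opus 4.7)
The plan is to leverage the fact that determinism of $\PVO$ gives us an observation sequence function $\vec{O}\colon \vec{\states} \to \vec{\Omega}$ with $\PVO(\vec{O}(\vec{s}) \mid \vec{s}) = 1$, which collapses both operators into simple combinatorial objects that can then be analyzed by a dimension count and a surjectivity argument.

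First I would handle $\bo$. The key observation is that because $\vec{\Omega}$ is defined as the set of observation sequences occurring with positive probability under some $\vec{s} \in \vec{\states}$, the map $\vec{O}$ is automatically surjective onto $\vec{\Omega}$; no element of $\vec{\Omega}$ is ``wasted''. With $\PVO$ deterministic, the definition of $\bo$ collapses to $[\bo(v)](\vec{s}) = v(\vec{O}(\vec{s}))$. If $\bo(v) = 0$, then $v$ vanishes on the image of $\vec{O}$, which is all of $\vec{\Omega}$, so $v = 0$. Hence $\ker \bo = \{0\}$, independently of whether $\vec{O}$ is injective.

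For $\bp$, the plan is a dimension-count argument. If $\PVO$ (equivalently $\vec{O}$) is not injective, then $\vec{\Omega} = \vec{O}(\vec{\states})$ has strictly fewer elements than $\vec{\states}$, so $\dim \R^{\vec{\Omega}} < \dim \R^{\vec{\states}}$. Since $\bp\colon \R^{\vec{\states}} \to \R^{\vec{\Omega}}$ goes from a larger to a smaller finite-dimensional space, the rank-nullity theorem immediately gives $\ker \bp \neq \{0\}$. Note that this conclusion uses nothing about the specific structure of the human belief $\belief(\vec{s} \mid \vec{o})$ beyond its type signature, which is what one should expect: once the observation function destroys information by collapsing states, no belief model can recover injectivity downstream.

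I do not anticipate any real obstacle in either step; both are essentially one-line arguments once $\vec{O}$ is introduced. The only subtlety worth flagging is the surjectivity of $\vec{O}$ onto $\vec{\Omega}$, which is baked into the convention that $\vec{\Omega}$ contains only observation sequences of positive probability. Without that convention, $\bo$ could fail to be injective on coordinates never attained, so it is worth making this explicit in the write-up.
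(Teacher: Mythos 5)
Your proposal is correct and matches the paper's own proof essentially verbatim: the injectivity of $\bo$ follows from $[\bo(v)](\vec{s}) = v(\vec{O}(\vec{s}))$ together with the surjectivity of $\vec{O}$ onto $\vec{\Omega}$, and the non-injectivity of $\bp$ follows from the cardinality (dimension) comparison $|\vec{\Omega}| < |\vec{\states}|$. No gaps; your explicit flagging of the surjectivity convention is a nice touch but the paper relies on exactly the same fact.
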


\begin{proof}
  To show that $\bo$ is injective, assume $v \in \R^{\vec{\Omega}}$ is such that $\bo(v) = 0$.
  Then for all $\vec{s} \in \vec{\states}$, we get
  \begin{equation*}
    0 = \big[ \bo(v) \big](\vec{s}) = \eval_{\vec{o} \sim \PVO(\vec{o} \mid \vec{s})}\big[ v(\vec{o}) \big] = v\big(\vec{O}(\vec{s})\big).
  \end{equation*}
  Since $\vec{O}: \vec{\states} \to \vec{\Omega}$ is by definition surjective, we obtain $v = 0$. 
  
  $\vec{O}: \vec{\states} \to \vec{\Omega}$ is by definition surjective, and here assumed to be non-injective, which implies that $\vec{\states}$ has a higher cardinality than $\vec{\Omega}$.
  Thus, $\bp: \R^{\vec{\states}} \to \R^{\vec{\Omega}}$ cannot be injective.
\end{proof}

In the following, we analyze a simple case that guarantees identifiability.
It requires that the observation kernel is ``well-behaved'' of a form where the observations are simply ``noisy states'', and that the human is a Bayesian reasoner with any prior $\belief(\vec{s})$ that supports every state sequence $\vec{s} \in \vec{\states}$. 

\begin{definition}
  [Noise in the Observation Kernel]
  \label{def:noise}
  Then we say that there is \emph{noise in the observation kernel} $\PO: \vec{\states} \to \Delta(\vec{\Omega})$ if $\vec{\states} = \vec{\Omega}$ and if $\bo$ is an injective linear operator.
\end{definition}

\begin{proposition}
  \label{pro:conditions_for_noise}
  Assume that $\vec{\states} = \vec{\Omega}$.
  Furthermore, assume that $\belief(\vec{s} \mid \vec{o})$ is given by the posterior with likelihood $\PVO(\vec{o} \mid \vec{s})$ and any prior $\belief(\vec{s})$ with $\belief(\vec{s}) > 0$ for all $\vec{s} \in \vec{\states}$.
  Then there is noise in the observation kernel if and only if $\bp$ is injective.
\end{proposition}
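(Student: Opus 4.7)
The plan is to invoke Bayes' rule to express the belief matrix $\bp$ as a product of $\bo$ (transposed) with two diagonal rescalings, and then exploit the assumption $\vec{\states} = \vec{\Omega}$ so that everything is square, reducing injectivity of $\bp$ to injectivity of $\bo$.

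\medskip
\textbf{Step 1: Apply Bayes' rule.} By assumption,
\begin{equation*}
  \belief(\vec{s} \mid \vec{o}) \;=\; \frac{\PVO(\vec{o} \mid \vec{s})\,\belief(\vec{s})}{\belief(\vec{o})},
  \qquad \belief(\vec{o}) \;=\; \sum_{\vec{s}' \in \vec{\states}} \PVO(\vec{o} \mid \vec{s}')\,\belief(\vec{s}').
\end{equation*}
By definition of $\vec{\Omega}$, every $\vec{o} \in \vec{\Omega}$ arises from some $\vec{s}$ with $\PVO(\vec{o} \mid \vec{s}) > 0$, and the prior satisfies $\belief(\vec{s}) > 0$ for all $\vec{s} \in \vec{\states}$, so $\belief(\vec{o}) > 0$ for every $\vec{o} \in \vec{\Omega}$.

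\medskip
\textbf{Step 2: Factor $\bp$ through $\bo^T$.} Let $D_{\vec{s}} \in \R^{\vec{\states} \times \vec{\states}}$ and $D_{\vec{o}} \in \R^{\vec{\Omega} \times \vec{\Omega}}$ be the diagonal matrices with entries $\belief(\vec{s})$ and $\belief(\vec{o})$, respectively. The matrix element of $\bo$ is $\bo_{\vec{s},\vec{o}} = \PVO(\vec{o} \mid \vec{s})$, so the Bayes identity in Step 1 reads entrywise
\begin{equation*}
  \bp_{\vec{o},\vec{s}} \;=\; \bigl(D_{\vec{o}}\bigr)^{-1}_{\vec{o},\vec{o}} \,\bigl(\bo^T\bigr)_{\vec{o},\vec{s}}\, \bigl(D_{\vec{s}}\bigr)_{\vec{s},\vec{s}},
\end{equation*}
i.e., $\bp = D_{\vec{o}}^{-1} \,\bo^T \,D_{\vec{s}}$ as operators. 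By Step 1 both diagonal matrices are invertible.

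\medskip
\textbf{Step 3: Conclude using $\vec{\states} = \vec{\Omega}$.} Since $D_{\vec{o}}^{-1}$ and $D_{\vec{s}}$ are invertible, $\bp$ is injective iff $\bo^T$ is injective. Under the assumption $\vec{\states} = \vec{\Omega}$, the matrix $\bo$ is square, so $\bo^T$ is injective iff $\bo$ is injective. Combining these equivalences yields $\bp$ injective $\iff$ $\bo$ injective, which (given the standing assumption $\vec{\states} = \vec{\Omega}$) is precisely the definition of noise in the observation kernel from Definition~\ref{def:noise}.

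\medskip
I do not expect any significant obstacle: the only subtlety is checking $\belief(\vec{o}) > 0$ for $\vec{o} \in \vec{\Omega}$ so that $D_{\vec{o}}$ is invertible, and making sure that the square-matrix argument is applied only under the stated assumption $\vec{\states} = \vec{\Omega}$ (without this, $\bo^T$ injective would not follow from $\bo$ injective in general).
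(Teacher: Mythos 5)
Your proof is correct and takes essentially the same approach as the paper's: both rest on Bayes' rule identifying $\bp$ with a diagonal rescaling of $\bo^T$, with positivity of the prior and of $\belief(\vec{o})$ making the rescalings invertible and squareness ($\vec{\states}=\vec{\Omega}$) letting injectivity pass between $\bo$ and $\bo^T$. The only difference is presentational — you package everything as the single factorization $\bp = D_{\vec{o}}^{-1}\,\bo^T D_{\vec{s}}$ and read off both implications at once, whereas the paper proves the two directions separately from the same proportionality; your version is slightly more economical.
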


\begin{proof}
  Assume $\bo$ is injective.
  To show that $\bp$ is injective, assume there is $G' \in \R^{\vec{\states}}$ with $\bp(G') = 0$. 
  Then for all $\vec{o} \in \vec{\Omega}$, we have
  \begin{align*}
  0 &= \big[\bp(G')\big](\vec{o}) = \eval_{\vec{s} \sim \belief(\vec{s} \mid \vec{o})}\big[ G'(\vec{s}) \big] = \sum_{\vec{s}} \belief(\vec{s} \mid \vec{o}) G'(\vec{s}) \propto \sum_{\vec{s}} \PVO(\vec{o} \mid \vec{s}) \cdot \big( \belief(\vec{s}) \cdot G'(\vec{s}) \big) \\
  &= \big[\bo^T(\belief \odot G')\big](\vec{o}).
  \end{align*}
  Here, $\bo^T$ is the transpose of $\bo$ and $\belief \odot G'$ is the componentwise product of the prior $\belief$ with the return function $G'$.
  Since $\bo$ is injective and thus invertible, $\bo^T$ is as well.
  Thus, $\belief \odot G' = 0$, which implies $G' = 0$ since the prior gives positive probability to all state sequences.
  Thus, $\bp$ is injective.

  For the other direction, assume $\bp$ is injective. 
  To show that $\bo$ is injective, let $v \in \R^{\vec{\Omega}}$ be any vector with $\bo(v) = 0$.
  We do a similar computation as above:
  for all $\vec{s} \in \R^{\vec{\states}}$, we have
  \begin{align*}
    0 &= \big[ \bo(v) \big](\vec{s}) = \eval_{\vec{o} \sim \PVO(\vec{o} \mid \vec{s})}\big[ v(\vec{o}) \big] = \sum_{\vec{o}} \PVO(\vec{o} \mid \vec{s}) v(\vec{o}) \propto \sum_{\vec{o}} \belief(\vec{s} \mid \vec{o}) \cdot \big( \PVO(\vec{o}) \cdot  v(\vec{o})\big) \\
    &= \Big[ \bp^T\big( \PVO \odot v\big) \Big](\vec{s}).
  \end{align*}
  Here, $\bp^T$ is the transpose of $\bp$, $\PVO(\vec{o})$ is the denominator in Bayes rule, and $\PVO \odot v$ is the vector with components $\PVO(\vec{o}) \cdot v(\vec{o})$.
  From the injectivity and thus invertibility of $\bp$, it follows that $\bp^T$ is invertible as well, and so $\PVO \odot v = 0$, which implies $v = 0$.
  Thus, $\bo$ is injective.
\end{proof}

\begin{corollary}
  \label{cor:noise_leads_to_identifiability}
  When there is noise in the observation kernel and the human is a Bayesian reasoner with some prior $\belief$ such that $\belief(\vec{s}) > 0$ for all $\vec{s} \in \vec{\states}$, then the return function is identifiable from choice probabilities of state sequences even if the learning system does not know the human's observations.
\end{corollary}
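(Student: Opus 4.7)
The plan is to observe that this corollary is essentially a direct combination of Proposition~\ref{pro:conditions_for_noise} and the final implication of Theorem~\ref{thm:general_version}, so no new machinery is needed — the work is just bookkeeping of which hypothesis feeds into which injectivity statement.

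First I would unpack the definition of noise in the observation kernel (\Cref{def:noise}): this gives both $\vec{\states} = \vec{\Omega}$ and the injectivity of $\bo$. The injectivity of $\bo$ is precisely condition~1 in Theorem~\ref{thm:general_version}, so that theorem's chain of implications yields condition~4: the state-based choice probabilities $\big(P^R(\vec s \succ \vec s\hs')\big)_{\vec s, \vec s\hs'}$ determine the observation-based choice probabilities $\big(P^R(\vec o \succ \vec o\hs')\big)_{\vec o, \vec o\hs'}$.

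Next I would invoke Proposition~\ref{pro:conditions_for_noise}: with $\vec{\states} = \vec{\Omega}$ and $B$ the Bayesian posterior coming from a fully-supported prior, the injectivity of $\bo$ is equivalent to the injectivity of $\bp$, i.e., $\ker \bp = \{0\}$. This is condition~(i) of Corollary~\ref{cor:abstract_formulation_without_O_appendix}, which implies condition~(iv) of that same corollary: knowledge of the observation-based choice probabilities determines the return function $G$ on $\vec{\states}$ up to an additive constant.

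Combining the two steps, the state-based choice probabilities determine the observation-based choice probabilities (by Theorem~\ref{thm:general_version}), which in turn determine $G$ up to a constant (by Corollary~\ref{cor:abstract_formulation_without_O_appendix}). There is really no obstacle here: the ``hard work'' was already done in Proposition~\ref{pro:conditions_for_noise} (translating injectivity of $\bo$ into injectivity of $\bp$ via transposition and the fully-supported prior) and in Theorem~\ref{thm:general_version} (using the Kronecker-product injectivity Lemma~\ref{lem:kronecka_product_injectivity} to invert the expectation over observations). The only thing worth double-checking is that the same injectivity of $\bo$ is simultaneously playing two roles — inverting the averaging over human observations, and (via Proposition~\ref{pro:conditions_for_noise}) killing the ambiguity in $G$ — which is exactly what makes the hypothesis of ``noise plus fully-supported Bayesian prior'' strong enough to give full identifiability without the learner observing what the human saw.
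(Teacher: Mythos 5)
Your proposal is correct and follows exactly the paper's own argument: the paper's proof likewise cites the injectivity of $\bo$ from the noise assumption, the injectivity of $\bp$ from Proposition~\ref{pro:conditions_for_noise}, and then concludes via the final implication of Theorem~\ref{thm:general_version} (which internally routes through Corollary~\ref{cor:abstract_formulation_without_O_appendix} just as you describe). Your version simply makes the bookkeeping explicit.
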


\begin{proof}
  This follows from the injectivity of $\bo$, the injectivity of $\bp$ that we proved in Proposition~\ref{pro:conditions_for_noise}, and Theorem~\ref{thm:general_version}.
\end{proof}

\begin{remark}
  \label{rem:caveat}
  We mention the following caveat:
  intuitively, one could think that $\bo$ (and thus $\bp$, by Proposition~\ref{pro:conditions_for_noise}) will be injective if every $\vec{s}$ is identifiable from infinitely many i.i.d. samples from $\PVO(\vec{o} \mid \vec{s})$.
  A counterexample is the following:
  \begin{equation*}
    \bo = 
    \begin{pmatrix}
      1/2 & 1/4 & 1/4 \\
      1/4 & 1/2 & 1/4 \\
      3/8 & 3/8 & 1/4 
    \end{pmatrix}.
  \end{equation*}
  In this case, the rows are linearly dependent with coefficients $1/2, 1/2$ and $-1$.
  Consequently, $\bo$ and $\bp$ are not injective, and so if this observation kernel comes from a multi-armed bandit with three states, then Corollary~\ref{cor:abstract_formulation_without_O_appendix} shows that the return function is not identifiable.

  Nevertheless, the distributions $\PVO(\cdot \mid \vec{s})$ (given by the rows) all differ from each other, and so infinitely many i.i.d. samples identify the state sequence $\vec{s}$.
\end{remark}

\subsection{Robustness of Return Function Identifiability under Belief Misspecification}\label{sec:approximate_posterior}

We now again look at the case where the observations that the human observes are known to the reward learning system, as in Section~\ref{sec:identifiability_under_observed_comparisons}. 
Furthermore, we assume that $\bp: \R^{\vec{\states}} \to \R^{\vec{\Omega}}$ is such that $\ker \bp \cap \im \FGamma = \{0\}$.
In this case, we can apply Corollary~\ref{cor:abstract_formulation_without_O_appendix} and identify the true return function $G$ from $\bp(G)$, which, in turn, can be identified up to an additive constant from the observation-based choice probabilities with the argument as for Proposition~\ref{pro:skalse_result}.

In this section, we investigate what happens when the human belief model is slightly misspecified.
In other words: the learning system uses a perturbed matrix $\bp_{\FDelta} \coloneqq \bp + \FDelta$ with some small perturbation $\FDelta$.
How much will the inferred return function deviate from the truth?
To answer this, we first need to outline some norm theory of linear operators.

\subsubsection{Some Norm Theory for Linear Operators}\label{sec:general_matrix_theory}

In this section, let $V, W$ be two finite-dimensional inner product-spaces. 
In other words, $V$ and $W$ each have inner products $\left\langle \cdot , \cdot \right\rangle$ and there are linear isomorphisms $V \cong \R^{k}$, $W \cong \R^{m}$ such that the inner products in $V$ and $W$ correspond to the standard scalar products in $\R^{k}$ and $\R^{m}$. 
The reason that we don't directly work with $\R^{k}$ and $\R^{m}$ itself is that we will later apply the analysis to the case that $V = \im \FGamma \subseteq \R^{\vec{\states}}$.
Let in this whole section $\A: V \to W$ be a linear operator and $\FDelta: V \to W$ be a perturbance, so that $\A_{\FDelta} \coloneqq \A + \FDelta$ is a perturbed version of $\A$.

The inner products give rise to a norm on $V$ and $W$ defined by
\begin{equation*}
  \|v\| = \sqrt{\left\langle v, v \right\rangle}, \quad \|w\| = \sqrt{\left\langle w, w \right\rangle}.
\end{equation*}
As is well known, for each linear operator $\A: V \to W$ there exists a unique, basis-independent \emph{adjoint} (generalizing the notion of a transpose) $\A^T: W \to V$ such that for all $v \in V$ and $w \in W$, we have
\begin{equation*}
  \left\langle \A v , w \right\rangle = \left\langle v, \A^Tw \right\rangle.
\end{equation*}

Let us recall the following fact that is often used in linear regression:

\begin{lemma}
  \label{lem:invertible}
  Assume $\A: V \to W$ is injective.
  Then $\A^T\A: V \to V$ is invertible and $(\A^T\A)^{-1}\A^T$ is a left inverse of $\A$.
\end{lemma}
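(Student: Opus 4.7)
The plan is to deduce invertibility of $\A^T\A$ from injectivity of $\A$ via a standard inner-product argument, and then verify the left-inverse formula by direct computation.

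First I would show that $\A^T\A: V \to V$ is injective. Suppose $v \in V$ satisfies $\A^T\A v = 0$. Pairing with $v$ using the inner product on $V$ and using the defining property of the adjoint gives
\begin{equation*}
0 = \langle \A^T\A v, v\rangle = \langle \A v, \A v\rangle = \|\A v\|^2,
\end{equation*}
so $\A v = 0$. Since $\A$ is assumed injective, this forces $v = 0$. Thus $\ker(\A^T\A) = \{0\}$.

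Next, because $V$ is finite-dimensional and $\A^T\A$ is an endomorphism of $V$, injectivity implies bijectivity by the rank–nullity theorem, hence $\A^T\A$ is invertible. Finally, I would check the left-inverse claim by a one-line calculation:
\begin{equation*}
\bigl((\A^T\A)^{-1}\A^T\bigr)\A = (\A^T\A)^{-1}(\A^T\A) = \id_V.
\end{equation*}
This completes the proof. The only ``obstacle'' worth flagging is making sure the finite-dimensionality of $V$ is used to pass from injectivity to invertibility of $\A^T\A$; in infinite dimensions one would additionally need closed range or a boundedness-below assumption, but here this is immediate.
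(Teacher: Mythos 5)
Your proposal is correct and follows essentially the same argument as the paper: the paper also reduces invertibility to injectivity of $\A^T\A$ via the identity $\langle v, \A^T\A v\rangle = \|\A v\|^2$ (stated contrapositively, for $v \neq 0$), uses finite-dimensionality implicitly, and treats the left-inverse identity as immediate. Your explicit remark about where finite-dimensionality enters is a minor but welcome addition.
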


\begin{proof}
  To show that $\A^T\A$ is invertible, we only need to show that it is injective.
  Thus, let $0 \neq x \in V$. 
  Then
  \begin{equation*}
  \left\langle x,  \A^T \A x \right\rangle = \left\langle \A x ,\A x \right\rangle = \| \A x \|^2 > 0,
  \end{equation*}
  where the last step followed from the injectivity of $\A$. 
  Thus, $\A^T\A x \neq 0$, and so $\A^T\A$ is injective, and thus invertible.
  Consequently, $(\A^T\A)^{-1}\A^T$ is a well-defined operator.
  That it is the left inverse of $\A$ is clear. 
\end{proof}

\begin{definition}
  [Operator Norm] 
  \label{def:operator_norm}
  The \emph{norm of an operator} $\A: V \to W$ is given by
  \begin{equation*}
    \|\A\| \coloneqq \max_{x, \ \|x\| = 1} \| \A x \|.
  \end{equation*}
  It has the following well-known properties, where $\A, \B$ and $\C$ are matrices of compatible sizes:
  \begin{equation*}
    \|\A + \B\| \leq \|\A\| + \|\B\|, \quad \|\C \A\| \leq \|\C\| \cdot \|\A\|, \quad \|\A^T\| = \|\A\|.
  \end{equation*}
\end{definition}

To study how a perturbance in $\A$ (and thus $\A^T \A$) transfers into a perturbance of $\big(\A^T \A \big)^{-1}$, we will use the following theorem:

\begin{theorem}
  [\citet{Elghaoui2002}]
  \label{thm:invertibility_theorem_we_use}
  Let $\B: V \to V$ be an invertible operator.
  Let $\rho < \|\B^{-1}\|^{-1}$.
  Let $\FDelta: V \to V$ be any operator with $\|\FDelta\| \leq \rho$.
  Then $\B + \FDelta$ is invertible and we have
  \begin{equation*}
    \big\|(\B + \FDelta)^{-1} - \B^{-1}\big\| \leq \frac{\rho \cdot \|\B^{-1}\|}{ \|\B^{-1}\|^{-1} - \rho}.
  \end{equation*}
\end{theorem}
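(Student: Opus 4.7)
The plan is to use the standard Neumann series approach. First I would factor $\B + \FDelta = \B(\mathrm{id} + \B^{-1}\FDelta)$, so that it suffices to show $\mathrm{id} + \B^{-1}\FDelta$ is invertible. From submultiplicativity of the operator norm and the hypothesis $\|\FDelta\| \leq \rho < \|\B^{-1}\|^{-1}$, I get $\|\B^{-1}\FDelta\| \leq \|\B^{-1}\|\rho < 1$. This lets me define the Neumann series $\sum_{k=0}^{\infty}(-\B^{-1}\FDelta)^k$, which converges absolutely in the operator norm. A standard telescoping argument shows it equals $(\mathrm{id} + \B^{-1}\FDelta)^{-1}$, from which invertibility of $\B + \FDelta$ follows with $(\B + \FDelta)^{-1} = (\mathrm{id} + \B^{-1}\FDelta)^{-1}\B^{-1}$.

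For the quantitative bound I would use the second resolvent identity, which is obtained by computing
\begin{equation*}
  (\B + \FDelta)^{-1} - \B^{-1} = (\B + \FDelta)^{-1}\bigl(\B - (\B + \FDelta)\bigr)\B^{-1} = -(\B + \FDelta)^{-1}\FDelta \B^{-1}.
\end{equation*}
Taking operator norms and applying submultiplicativity gives
\begin{equation*}
  \bigl\|(\B + \FDelta)^{-1} - \B^{-1}\bigr\| \leq \bigl\|(\B + \FDelta)^{-1}\bigr\| \cdot \|\FDelta\| \cdot \|\B^{-1}\| \leq \rho \cdot \|\B^{-1}\| \cdot \bigl\|(\B + \FDelta)^{-1}\bigr\|.
\end{equation*}

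It then remains to bound $\|(\B + \FDelta)^{-1}\|$. Using the Neumann series representation,
\begin{equation*}
  \bigl\|(\B + \FDelta)^{-1}\bigr\| = \bigl\|(\mathrm{id} + \B^{-1}\FDelta)^{-1}\B^{-1}\bigr\| \leq \|\B^{-1}\| \sum_{k=0}^{\infty} \|\B^{-1}\FDelta\|^k \leq \frac{\|\B^{-1}\|}{1 - \|\B^{-1}\|\rho}.
\end{equation*}
Substituting this into the previous inequality yields
\begin{equation*}
  \bigl\|(\B + \FDelta)^{-1} - \B^{-1}\bigr\| \leq \frac{\rho \cdot \|\B^{-1}\|^2}{1 - \|\B^{-1}\|\rho} = \frac{\rho \cdot \|\B^{-1}\|}{\|\B^{-1}\|^{-1} - \rho},
\end{equation*}
where the last equality comes from multiplying numerator and denominator by $\|\B^{-1}\|^{-1}$. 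This matches the claimed bound.

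There is no real obstacle here beyond bookkeeping: the only place where one must be slightly careful is justifying that the Neumann series is the actual inverse (rather than just a candidate), but this follows from the standard telescoping $(\mathrm{id} + \B^{-1}\FDelta)\sum_{k=0}^{N}(-\B^{-1}\FDelta)^k = \mathrm{id} - (-\B^{-1}\FDelta)^{N+1}$ and letting $N \to \infty$, using $\|\B^{-1}\FDelta\| < 1$. Everything else is submultiplicativity of the operator norm and the resolvent identity.
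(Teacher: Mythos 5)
Your proof is correct and complete, but it is worth noting that it does not parallel the paper's argument, because the paper does not actually prove this statement: it is cited as a known perturbation bound from \citet{Elghaoui2002} (Section~7, Equation~7.2), with only a remark that the reference's spectral-norm convention agrees with the operator norm used here. Your self-contained derivation is the standard one and checks out: the factorization $\B + \FDelta = \B(\id + \B^{-1}\FDelta)$ together with $\|\B^{-1}\FDelta\| \leq \|\B^{-1}\|\rho < 1$ justifies the Neumann series and hence invertibility (in finite dimensions the telescoping limit gives a two-sided inverse); the second resolvent identity reduces the difference of inverses to $-(\B+\FDelta)^{-1}\FDelta\B^{-1}$; and the geometric-series bound $\|(\id+\B^{-1}\FDelta)^{-1}\| \leq (1-\|\B^{-1}\|\rho)^{-1}$ combines with submultiplicativity to give exactly $\rho\|\B^{-1}\|^2/(1-\|\B^{-1}\|\rho)$, which equals the stated $\rho\|\B^{-1}\|/(\|\B^{-1}\|^{-1}-\rho)$ after clearing $\|\B^{-1}\|$. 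What your approach buys is that the paper's dependence on the external reference could be removed entirely; what the citation buys the authors is brevity. No gaps.
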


\begin{proof}
  See~\citet{Elghaoui2002}, Section 7 and in particular Equation 7.2. 
  Note that the reference defines $\|\A\|$ to be the largest singular value of $\A$; by the well-known min-max theorem, this is equivalent to Definition~\ref{def:operator_norm}.
\end{proof}

We will apply this theorem to $\A^T \A$, which raises the question about the size of the perturbance in $\A^T \A$ for a given perturbance in $\A$. 
This is clarified in the following lemma.
Before stating it, for a given perturbance $\rho$, define
\begin{equation*}
  \widetilde{\rho}(\A) \coloneqq \rho \cdot \big( 2 \cdot \|\A\| + \rho \big),
\end{equation*}
which depends on $\A$ and $\rho$.
Also, recall that for a given perturbance $\FDelta$, we define $\A_{\FDelta} \coloneqq \A + \FDelta$.
We obtain:

\begin{lemma} 
  \label{lem:perturbance_propagation}
  Assume that $\|\FDelta\| \leq \rho$.
  Then
  \begin{equation*}
    \|\A_{\FDelta}^T \A_{\FDelta} - \A^T \A\| \leq \widetilde{\rho}(\A).
  \end{equation*}
\end{lemma}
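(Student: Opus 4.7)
The plan is a direct expansion of the difference, followed by norm bounds using only the three basic properties of the operator norm listed in Definition~\ref{def:operator_norm} (subadditivity, submultiplicativity, and $\|\A^T\| = \|\A\|$). Since $\A_{\FDelta} = \A + \FDelta$, writing out $\A_{\FDelta}^T\A_{\FDelta}$ gives
\begin{equation*}
\A_{\FDelta}^T\A_{\FDelta} = (\A+\FDelta)^T(\A+\FDelta) = \A^T\A + \A^T\FDelta + \FDelta^T\A + \FDelta^T\FDelta,
\end{equation*}
so the quantity to bound is
\begin{equation*}
\A_{\FDelta}^T\A_{\FDelta} - \A^T\A = \A^T\FDelta + \FDelta^T\A + \FDelta^T\FDelta.
\end{equation*}

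Next, I would apply the triangle inequality and submultiplicativity termwise. Each of the cross terms $\A^T\FDelta$ and $\FDelta^T\A$ satisfies $\|\A^T\FDelta\| \leq \|\A^T\|\cdot\|\FDelta\| = \|\A\|\cdot\|\FDelta\|$ and similarly for $\|\FDelta^T\A\|$, both bounded by $\|\A\|\rho$. The quadratic term is bounded by $\|\FDelta^T\|\cdot\|\FDelta\| = \|\FDelta\|^2 \leq \rho^2$. Summing these bounds yields
\begin{equation*}
\|\A_{\FDelta}^T\A_{\FDelta} - \A^T\A\| \leq 2\|\A\|\rho + \rho^2 = \rho(2\|\A\| + \rho) = \widetilde{\rho}(\A),
\end{equation*}
which is the desired inequality.

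There is no real obstacle here: the lemma is a routine perturbation identity, and the definition $\widetilde{\rho}(\A) = \rho(2\|\A\| + \rho)$ has evidently been chosen precisely to record the linear plus quadratic contributions produced by expanding $(\A+\FDelta)^T(\A+\FDelta)$. The only thing one has to be careful about is that the norm on the adjoint coincides with the norm on the operator, which is already stated in Definition~\ref{def:operator_norm}, so no extra work is required.
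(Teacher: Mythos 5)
Your proposal is correct and coincides with the paper's own proof: the same expansion of $(\A+\FDelta)^T(\A+\FDelta)$, the same termwise application of the triangle inequality, submultiplicativity, and $\|\A^T\|=\|\A\|$, yielding $2\|\A\|\rho+\rho^2=\widetilde{\rho}(\A)$. Nothing is missing.
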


\begin{proof}
  We have
  \begin{align*}
    \big\| \A_{\FDelta}^T \A_{\FDelta} - \A^T \A \big\| &=
    \big\|(\A + \FDelta)^T (\A + \FDelta) - \A^T \A\big\| \\
    & = \big\| \A^T \FDelta + \FDelta^T \A + \FDelta^T \FDelta  \big\| \\
    &\leq \|\A\| \cdot \|\FDelta\| + \|\FDelta\| \cdot \|\A\| + \|\FDelta\|^2 \\
    & \leq \rho \cdot \Big( 2 \cdot \|\A\| + \rho \Big) \\
    & = \widetilde{\rho}(\A).
  \end{align*}
\end{proof}

To be able to apply Theorem~\ref{thm:invertibility_theorem_we_use} to $\A^T \A$, we need to make sure that $\widetilde{\rho}(\A)$ is bounded above by $\big\| (\A^T \A \big)^{-1}\|^{-1}$.
The next lemma clarifies what condition $\rho$ needs to satisfy for $\widetilde{\rho}(\A)$ to obey that bound.
For this, define
\begin{equation}\label{eq:sigma_definition}
  \tau(\A) \coloneqq - \| \A \| + \sqrt{\|\A\|^2 + \big\|(\A^T \A)^{-1}\big\|^{-1}},
\end{equation}
which only depends on $\A$.

\begin{lemma}
  \label{lem:invertibility_radius_propagation}
  Assume $\rho < \tau(\A)$.
  Then
  \begin{equation*}
    \widetilde{\rho}(\A) < \big\|(\A^T \A)^{-1}\big\|^{-1}.
  \end{equation*}
\end{lemma}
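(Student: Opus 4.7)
The inequality to be shown is purely algebraic: both sides are ordinary real numbers, and the only quantities entering are $\|\A\|$ and $\|(\A^T\A)^{-1}\|^{-1}$. So the plan is to reduce the claim to a quadratic inequality in $\rho$ and then verify that $\tau(\A)$ is exactly the threshold that makes it tight.

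First, introduce the abbreviations $a \coloneqq \|\A\|$ and $b \coloneqq \|(\A^T\A)^{-1}\|^{-1}$, so that by definition $\widetilde{\rho}(\A) = \rho^2 + 2a\rho$ and $\tau(\A) = -a + \sqrt{a^2 + b}$. The inequality to be proved, namely $\widetilde{\rho}(\A) < b$, is then equivalent to
\begin{equation*}
  \rho^2 + 2a\rho - b < 0.
\end{equation*}
Viewed as a function of $\rho$, the left-hand side is an upward-opening parabola. By the quadratic formula, its two real roots are $\rho = -a \pm \sqrt{a^2 + b}$ (note $a^2 + b \geq 0$ since $b \geq 0$), and the parabola is negative precisely on the open interval between them.

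Second, observe that the larger root is exactly $-a + \sqrt{a^2+b} = \tau(\A)$. Since $\rho$ arises as a bound on $\|\FDelta\|$ and is thus nonnegative, and the smaller root $-a - \sqrt{a^2+b}$ is nonpositive, the condition $\rho < \tau(\A)$ places $\rho$ inside the interval of negativity. Hence $\rho^2 + 2a\rho - b < 0$, which is the claim after unwinding the abbreviations.

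There is no real obstacle here beyond a careful bookkeeping check that $b \geq 0$ (so $\sqrt{a^2+b}$ is real and $\tau(\A) \geq 0$) and that $\rho$ is indeed nonnegative so the smaller root does not interfere. The definition of $\tau(\A)$ was evidently engineered precisely to be the positive root of the quadratic $\rho^2 + 2a\rho - b = 0$, which makes the argument essentially immediate.
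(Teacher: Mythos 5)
Your proof is correct and is essentially identical to the paper's own argument: both identify $\tau(\A)$ as the positive root of the convex quadratic $\rho^2 + 2\|\A\|\rho - \|(\A^T\A)^{-1}\|^{-1}$ and conclude negativity on $[0, \tau(\A))$. Your version merely spells out the bookkeeping (nonnegativity of $\rho$ and of the discriminant) a bit more explicitly.
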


\begin{proof}
  Note that $\rho = \tau(\A)$ is the positive solution to the following quadratic equation in the indeterminate $\rho$:
  \begin{equation*}
    \rho^2 + 2 \cdot \|\A\| \cdot \rho - \big\| (\A^T \A)^{-1} \big\|^{-1} =  \widetilde{\rho}(\A) - \big\|(\A^T \A)^{-1}\big\|^{-1} = 0.
  \end{equation*}
  Since this is a convex parabola, we get the inequality $\widetilde{\rho}(\A) - \big\| (\A^T \A)^{-1} \big\|^{-1} < 0$ whenever we have $0 \leq \rho < \tau(\A)$, which shows the result.
\end{proof}

Finally, we put it all together to obtain a bound on the perturbance of $\li{\A}$.
For this, set
\begin{equation}\label{eq:ugly_constant}
  C(\A, \rho) \coloneqq \frac{ \widetilde{\rho}(\A) \cdot \Big\| \big(  \A^T \A\big)^{-1} \Big\|}{\Big\| \big( \A^T \A \big)^{-1} \Big\|^{-1} - \widetilde{\rho}(\A)} \cdot \Big( \big\| \A \big\| + \rho \Big) + \Big\| \big(\A^T \A\big)^{-1} \Big\| \cdot \rho.
\end{equation}
We obtain:

\begin{proposition}
  \label{pro:bound_pseudo_inverse_difference}
  Assume $\|\FDelta\| \leq \rho < \tau(\A)$.
  Then $\A_{\FDelta}^T \A_{\FDelta}$ is invertible, and we have
  \begin{equation*}
    \Big\| \li{\A_{\FDelta}} - \li{\A}\Big\| \leq C(\A, \rho).
  \end{equation*}
\end{proposition}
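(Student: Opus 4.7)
The plan is to chain together the three preceding lemmas and then use a standard additive-subtractive decomposition of the pseudo-inverse difference. First I would verify invertibility of $\A_{\FDelta}^T \A_{\FDelta}$. By Lemma~\ref{lem:perturbance_propagation}, $\| \A_{\FDelta}^T \A_{\FDelta} - \A^T \A \| \leq \widetilde{\rho}(\A)$, and by Lemma~\ref{lem:invertibility_radius_propagation} combined with the hypothesis $\rho < \tau(\A)$, this upper bound is strictly less than $\|(\A^T\A)^{-1}\|^{-1}$. Applying Theorem~\ref{thm:invertibility_theorem_we_use} with $\B = \A^T \A$ and perturbance $\A_{\FDelta}^T \A_{\FDelta} - \A^T \A$ then gives that $\A_{\FDelta}^T \A_{\FDelta}$ is invertible (so $\li{\A_{\FDelta}}$ makes sense by Lemma~\ref{lem:invertible} applied to the perturbed operator), together with the explicit bound
\begin{equation*}
  \big\| (\A_{\FDelta}^T \A_{\FDelta})^{-1} - (\A^T \A)^{-1} \big\| \leq \frac{\widetilde{\rho}(\A) \cdot \|(\A^T \A)^{-1}\|}{\|(\A^T \A)^{-1}\|^{-1} - \widetilde{\rho}(\A)}.
\end{equation*}

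Next I would decompose the target difference additively, by inserting a mixed term:
\begin{equation*}
  \li{\A_{\FDelta}} - \li{\A} = \big[(\A_{\FDelta}^T \A_{\FDelta})^{-1} - (\A^T \A)^{-1}\big]\A_{\FDelta}^T + (\A^T \A)^{-1}\big[\A_{\FDelta}^T - \A^T\big].
\end{equation*}
Since $\A_{\FDelta}^T - \A^T = \FDelta^T$ with $\|\FDelta^T\| = \|\FDelta\| \leq \rho$, and $\|\A_{\FDelta}^T\| = \|\A_{\FDelta}\| \leq \|\A\| + \rho$, applying the triangle inequality and submultiplicativity of the operator norm (Definition~\ref{def:operator_norm}) to this decomposition yields
\begin{equation*}
  \big\|\li{\A_{\FDelta}} - \li{\A}\big\| \leq \frac{\widetilde{\rho}(\A) \cdot \|(\A^T \A)^{-1}\|}{\|(\A^T \A)^{-1}\|^{-1} - \widetilde{\rho}(\A)} \cdot \big(\|\A\| + \rho\big) + \|(\A^T \A)^{-1}\| \cdot \rho,
\end{equation*}
which is precisely $C(\A, \rho)$ from Equation~\eqref{eq:ugly_constant}.

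The main obstacle is essentially bookkeeping: the ugly constant $C(\A, \rho)$ has exactly the structure produced by this telescoping argument, so the challenge is to line up the two summands (the perturbation of the inverse times the perturbed operator, and the stable inverse times the raw perturbation) with the two summands in the definition of $C(\A, \rho)$, while making sure the invertibility precondition of Theorem~\ref{thm:invertibility_theorem_we_use} is actually satisfied --- which is exactly what the condition $\rho < \tau(\A)$ is engineered to give via Lemma~\ref{lem:invertibility_radius_propagation}. No nontrivial matrix analysis beyond the cited lemmas is required.
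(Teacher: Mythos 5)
Your proposal is correct and follows essentially the same route as the paper's proof: the same chain of Lemma~\ref{lem:perturbance_propagation}, Lemma~\ref{lem:invertibility_radius_propagation}, and Theorem~\ref{thm:invertibility_theorem_we_use} for invertibility and the inverse-difference bound, followed by the identical telescoping decomposition $\li{\A_{\FDelta}} - \li{\A} = \big[(\A_{\FDelta}^T \A_{\FDelta})^{-1} - (\A^T \A)^{-1}\big]\A_{\FDelta}^T + (\A^T \A)^{-1}\FDelta^T$ and the same norm estimates. Nothing is missing.
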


\begin{proof}
  The invertibility of $\A_{\FDelta}^T \A_{\FDelta}$ follows from Theorem~\ref{thm:invertibility_theorem_we_use}, Lemma~\ref{lem:perturbance_propagation} and Lemma~\ref{lem:invertibility_radius_propagation}.
  We get
  \begin{align*}
    & \Big\| \li{\A_{\FDelta}} - \li{\A}\Big\| \\
   = & \bigg\| \Big[\big( \A_{\FDelta}^T \A_{\FDelta} \big)^{-1} - \big(\A^T \A\big)^{-1} \Big] \cdot \A_{\FDelta}^T + \big( \A^T \A \big)^{-1} \cdot \big( \A_{\FDelta}^T - \A^T \big) \bigg\| \\
   \leq & \Big\| \big( \A_{\FDelta}^T \A_{\FDelta} \big)^{-1} - \big( \A^T \A \big)^{-1} \Big\| \cdot \big\| \A_{\FDelta} \big\| + \Big\| \big( \A^T \A \big)^{-1}  \Big\| \cdot \| \FDelta \| \\
   \leq & \frac{ \widetilde{\rho}(\A) \cdot \Big\| \big(  \A^T \A\big)^{-1} \Big\|}{\Big\| \big( \A^T \A \big)^{-1} \Big\|^{-1} - \widetilde{\rho}(\A)} \cdot \Big( \big\| \A \big\| + \rho \Big) + \Big\| \big(\A^T \A\big)^{-1} \Big\| \cdot \rho \\
   = & C(\A, \rho).
  \end{align*}
  In the second-to-last step, we used Theorem~\ref{thm:invertibility_theorem_we_use}.
\end{proof}

The constant $C(\A, \rho)$, defined in Equation~\eqref{eq:ugly_constant}, has a fairly complicated form.
In the following proposition, we find an easier-to-study upper bound in a special case:
\begin{proposition}
  \label{pro:easier_upper_bound}
  Assume that $\rho \leq \|\A\|$ and $\rho \leq - \|\A\| + \sqrt{\|\A\|^2 + 1/2 \cdot \big\| (\A^T \A)^{-1} \big\|^{-1}}$.\footnote{Note the factor $1/2$ compared to the definition of $\tau(\A)$ in Equation~\eqref{eq:sigma_definition}.}
  Then we have
  \begin{equation*}
    C(\A, \rho) \leq \rho \cdot \big\| (\A^T \A)^{-1} \big\| \cdot  \Big[ 12 \cdot \|\A\|^2 \cdot \big\| (\A^T \A)^{-1}  \big\| + 1  \Big].
  \end{equation*}
\end{proposition}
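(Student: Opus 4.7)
The plan is to bound each of the two summands in the definition of $C(\A,\rho)$ separately, using the two hypotheses to control $\widetilde{\rho}(\A)$ from above and $\bigl\|(\A^T\A)^{-1}\bigr\|^{-1} - \widetilde{\rho}(\A)$ from below. The strategy is identical in spirit to the argument used in Lemma~\ref{lem:invertibility_radius_propagation}: treat $\widetilde{\rho}(\A) = \rho(2\|\A\|+\rho)$ as a convex quadratic in $\rho$, and use the position of $\rho$ relative to the roots of that quadratic.

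First I would show that the second hypothesis is exactly the statement $\widetilde{\rho}(\A) \leq \tfrac{1}{2}\bigl\|(\A^T\A)^{-1}\bigr\|^{-1}$. Indeed, $\rho = -\|\A\| + \sqrt{\|\A\|^2 + \tfrac{1}{2}\bigl\|(\A^T\A)^{-1}\bigr\|^{-1}}$ is the positive root of the quadratic $\rho^2 + 2\|\A\|\rho - \tfrac{1}{2}\bigl\|(\A^T\A)^{-1}\bigr\|^{-1} = 0$, so that for $\rho$ below this root the parabola is nonpositive, giving the stated inequality. As a consequence,
\[
\bigl\|(\A^T\A)^{-1}\bigr\|^{-1} - \widetilde{\rho}(\A) \;\geq\; \tfrac{1}{2}\bigl\|(\A^T\A)^{-1}\bigr\|^{-1},
\]
so the denominator of the first term of $C(\A,\rho)$ is bounded below by $\tfrac{1}{2}\bigl\|(\A^T\A)^{-1}\bigr\|^{-1}$.

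Next I would use the hypothesis $\rho \leq \|\A\|$ to get the elementary bounds $\widetilde{\rho}(\A) = \rho(2\|\A\|+\rho) \leq 3\rho\|\A\|$ and $\|\A\|+\rho \leq 2\|\A\|$. Plugging the lower bound on the denominator and these upper bounds into the first summand of $C(\A,\rho)$ yields
\[
\frac{\widetilde{\rho}(\A)\bigl\|(\A^T\A)^{-1}\bigr\|}{\bigl\|(\A^T\A)^{-1}\bigr\|^{-1} - \widetilde{\rho}(\A)}\bigl(\|\A\|+\rho\bigr)
\;\leq\; 2\widetilde{\rho}(\A)\bigl\|(\A^T\A)^{-1}\bigr\|^{2}\cdot 2\|\A\|
\;\leq\; 12\rho\,\|\A\|^{2}\,\bigl\|(\A^T\A)^{-1}\bigr\|^{2}.
\]
Adding the trivially bounded second summand $\bigl\|(\A^T\A)^{-1}\bigr\|\cdot\rho$ and factoring out $\rho\,\bigl\|(\A^T\A)^{-1}\bigr\|$ gives exactly the claimed inequality.

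No step is really hard: the only mildly delicate point is recognising that the second hypothesis translates cleanly into the quantitative bound $\widetilde{\rho}(\A) \leq \tfrac{1}{2}\bigl\|(\A^T\A)^{-1}\bigr\|^{-1}$ via the same quadratic-root computation already used in Lemma~\ref{lem:invertibility_radius_propagation}. Once that reduction is in hand, the rest is a routine substitution followed by the two crude estimates $2\|\A\|+\rho \leq 3\|\A\|$ and $\|\A\|+\rho \leq 2\|\A\|$, which together account for the constant $12$ appearing in the bound.
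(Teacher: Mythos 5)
Your proposal is correct and follows essentially the same route as the paper's own (much terser) proof: translate the second hypothesis into $\widetilde{\rho}(\A) \leq \tfrac12 \big\|(\A^T\A)^{-1}\big\|^{-1}$ via the quadratic-root argument of Lemma~\ref{lem:invertibility_radius_propagation}, then use $\rho \leq \|\A\|$ to get $\widetilde{\rho}(\A) \leq 3\rho\|\A\|$ and $\|\A\|+\rho \leq 2\|\A\|$, and the constant $12 = 4\cdot 3$ falls out exactly as claimed. In fact your write-up supplies the arithmetic the paper leaves implicit, and all the estimates check out.
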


\begin{proof}
  The second assumption gives, as in the proof of Lemma~\ref{lem:invertibility_radius_propagation}, that $\widetilde{\rho}(\A) \leq 1/2 \cdot \big\| (\A^T \A)^{-1}  \big\|^{-1}$.
  Together with $\rho \leq \|\A\|$, the result follows.
\end{proof}

\subsubsection{Application to Bounds in the Error of the Return Function}\label{sec:application}

We now apply the results from the preceding section to our case. 
Define $\res{\bp}: \im \FGamma \to \R^{\vec{\Omega}}$ as the restriction of the belief operator $\bp$ to $\im \FGamma$.
Assume that $\ker \bp \cap \im \FGamma = \{0\}$, which is, according to Corollary~\ref{cor:abstract_formulation_without_O_appendix}, a sufficient condition for identifiability.
Note that this condition means that $\res{\bp}$ is injective.
Thus, Lemma~\ref{lem:invertible} ensures that $\res{\bp}^T \res{\bp}$ is invertible and that $\li{\res{\bp}}$ is a left inverse of $\res{\bp}$.

Consequently, from the equation
\begin{equation*}
  \res{\bp}(G) = \bp (G)
\end{equation*}
we obtain
\begin{equation*}
  G = \li{\res{\bp}}(\bp(G)).
\end{equation*}
This is the concrete formula with which $G$ can be identified from $\bp(G)$.
When perturbing $\bp$, this leads to a corresponding perturbance in $\li{\res{\bp}}$ whose size influences the maximal error in the inference of $G$.
This, in turn, influences the size of the error in $\val_{G}$, the policy evaluation function, where
\begin{equation*}
  \val_{G}(\pi) \coloneqq \eval_{\vec{s} \sim P^\pi(\vec{s})} \big[ G(\vec{s}) \big].
\end{equation*}
We obtain:

\begin{theorem}
  \label{thm:value_function_bound}
  Let $G$ be the true reward function, $\bp$ the belief operator corresponding to the human's true belief model $\belief(\vec{s} \mid \vec{o})$, and $\bp(G)$ be the resulting observation-based return function.
  Assume that $\ker \bp \cap \im \FGamma = \{0\}$, so that $\res{\bp}^T \res{\bp}$ is invertible.
  Let $\FDelta: \R^{\vec{\states}} \to \R^{\vec{\Omega}}$ be a perturbation satisfying $\|\FDelta\| \leq \rho$, where $\rho$ satisfies the following two properties:
  \begin{equation*}
    \rho \leq \big\| \res{\bp} \big\|, \quad \rho \leq - \big\| \res{\bp} \big\| + \sqrt{\big\| \res{\bp} \big\|^2 + 1/2 \cdot \big\| \big( \res{\bp}^T \res{\bp} \big)^{-1} \big\|^{-1}}.
  \end{equation*}
  Let $\bp_{\FDelta} \coloneqq \bp + \FDelta$ be the misspecified belief operator.
  The first claim is that $\res{\bp_{\FDelta}}^T \res{\bp_{\FDelta}}$ is invertible under these conditions.

  Now, assume that the learning system infers the return function $\tilde{G} \coloneqq \li{\res{\bp_{\FDelta}}} (\bp(G))$.\footnote{Note that there is not necessarily a $\tilde{G}$ with $\res{\bp_{\FDelta}}(\tilde{G}) = \bp(G)$ since $\res{\bp_{\FDelta}}$ is not always surjective. Nevertheless, $\tilde{G} \coloneqq \li{\res{\bp_{\FDelta}}}(\bp(G))$ is the best attempt at a solution in the sense that $\res{\bp_{\FDelta}}(\tilde{G})$ then minimizes the Euclidean distance to $\bp(G)$.}
  Then there is a polynomial $Q(X, Y)$ of degree five such that
  \begin{equation*}
    \|\tilde{G} - G\| \leq \|G\| \cdot Q\Big( \big\| (\res{\bp}^T \res{\bp})^{-1}  \big\|, \|\res{\bp}\| \Big) \cdot \rho.
  \end{equation*}
  Thus, for all policies $\pi$, we obtain
  \begin{equation*}
    \Big| \val_{\tilde{G}}(\pi) - \val_{G}(\pi) \Big| \leq \|G\| \cdot Q\Big( \big\| (\res{\bp}^T \res{\bp})^{-1}  \big\|, \|\res{\bp}\| \Big) \cdot \rho.
  \end{equation*}
  In particular, for sufficiently small perturbances $\rho$, the error in the inferred policy evaluation function $\val_{\tilde{G}}$ becomes arbitrarily small.
\end{theorem}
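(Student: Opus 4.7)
The plan is to reduce everything to the perturbation machinery already developed for linear operators in Section~\ref{sec:general_matrix_theory}, applied to the restricted belief operator $\res{\bp} : \im \FGamma \to \R^{\vec{\Omega}}$. The key observation is that because $G \in \im \FGamma$, we have $\res{\bp}(G) = \bp(G)$ and hence $G = \li{\res{\bp}}\bigl(\bp(G)\bigr)$; combining this with the definition of $\tilde G$ yields
\begin{equation*}
  \tilde G - G \;=\; \Bigl[ \li{\res{\bp_\FDelta}} - \li{\res{\bp}} \Bigr] \bigl( \bp(G) \bigr),
\end{equation*}
where $\res{\bp_\FDelta}$ denotes the restriction of $\bp_\FDelta$ to $\im \FGamma$. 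All that remains is to bound the operator difference in terms of $\rho$ and then to transfer the bound from returns to policy values.

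First I would verify that the hypotheses of Propositions~\ref{pro:bound_pseudo_inverse_difference} and~\ref{pro:easier_upper_bound} hold with $\A = \res{\bp}$ and perturbation equal to the restriction of $\FDelta$, which satisfies $\|\res{\FDelta}\| \leq \|\FDelta\| \leq \rho$. The two explicit assumptions on $\rho$ in the statement are precisely the hypotheses of Proposition~\ref{pro:easier_upper_bound}, and the second is strictly stronger than $\rho < \tau(\res{\bp})$ because of the factor $\tfrac{1}{2}$, so the hypothesis of Proposition~\ref{pro:bound_pseudo_inverse_difference} is automatic. Chaining Lemma~\ref{lem:perturbance_propagation}, Lemma~\ref{lem:invertibility_radius_propagation}, and Theorem~\ref{thm:invertibility_theorem_we_use} then gives the invertibility of $\res{\bp_\FDelta}^T \res{\bp_\FDelta}$ (the first claim), and Proposition~\ref{pro:bound_pseudo_inverse_difference} combined with Proposition~\ref{pro:easier_upper_bound} yields
\begin{equation*}
  \bigl\| \li{\res{\bp_\FDelta}} - \li{\res{\bp}} \bigr\| \;\leq\; \rho \cdot \bigl\| (\res{\bp}^T \res{\bp})^{-1} \bigr\| \cdot \Bigl[ 12\, \|\res{\bp}\|^2 \, \bigl\| (\res{\bp}^T \res{\bp})^{-1} \bigr\| + 1 \Bigr].
\end{equation*}

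Applying this operator bound to the displayed identity for $\tilde G - G$ and using the submultiplicative estimate $\|\bp(G)\| = \|\res{\bp}(G)\| \leq \|\res{\bp}\| \cdot \|G\|$ yields an inequality of the claimed form with
\begin{equation*}
  Q(X, Y) \;=\; 12\, X^2 Y^3 + X Y,
\end{equation*}
a polynomial of total degree $5$ in $X = \|(\res{\bp}^T \res{\bp})^{-1}\|$ and $Y = \|\res{\bp}\|$. The policy-evaluation consequence then follows from
\begin{equation*}
  |\val_{\tilde G}(\pi) - \val_G(\pi)| \;=\; \Bigl| \eval_{\vec s \sim P^\pi}\bigl[ \tilde G(\vec s) - G(\vec s) \bigr] \Bigr| \;\leq\; \max_{\vec s} \bigl| \tilde G(\vec s) - G(\vec s) \bigr| \;\leq\; \|\tilde G - G\|,
\end{equation*}
since $P^\pi$ is a probability distribution on a finite set and the Euclidean norm dominates the sup norm there. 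The argument is essentially bookkeeping; the only subtlety I anticipate is keeping the passage from $\bp$ to $\res{\bp}$ consistent at every step --- both in the identification $G = \li{\res{\bp}}(\bp(G))$, which crucially uses $G \in \im \FGamma$, and in recognizing that it is the \emph{restriction} $\res{\FDelta}$ (not $\FDelta$ itself) that plays the role of the perturbation in the Section~\ref{sec:general_matrix_theory} lemmas.
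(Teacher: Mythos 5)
Your proposal is correct and follows essentially the same route as the paper's own proof: the identity $\tilde G - G = \bigl[\li{\res{\bp_\FDelta}} - \li{\res{\bp}}\bigr](\bp(G))$, the invocation of Proposition~\ref{pro:bound_pseudo_inverse_difference} and Proposition~\ref{pro:easier_upper_bound} with $\A = \res{\bp}$ and perturbation $\res{\FDelta}$, the estimate $\|\bp(G)\| \leq \|\res{\bp}\|\cdot\|G\|$, and the resulting polynomial $Q(X,Y) = XY\,(12XY^2+1)$ all match the paper exactly. The passage to policy values via $|\val_{\tilde G}(\pi) - \val_G(\pi)| \leq \max_{\vec s}|\tilde G(\vec s) - G(\vec s)| \leq \|\tilde G - G\|$ is likewise identical.
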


\begin{proof}
  That $\res{\bp_{\FDelta}}^T \res{\bp_{\FDelta}}$ is invertible follows immediately from Proposition~\ref{pro:bound_pseudo_inverse_difference} by using that $\|\res{\FDelta}\| \leq \|\FDelta\|$ and that $\res{\bp_{\FDelta}} = \res{\bp}_{\res{\FDelta}}$, together with the second bound on $\rho$ (which implies the assumed bound in Proposition~\ref{pro:bound_pseudo_inverse_difference}).

  We have
  \begin{align*}
    \Big| \val_{\tilde{G}}(\pi) - \val_{G}(\pi) \Big| &= \Big| \eval_{\vec{s} \sim P^{\pi}(\vec{s})} \big[ (\tilde{G} - G)(\vec{s}) \big] \Big| \\
    &\leq \eval_{\vec{s} \sim P^{\pi}(\vec{s})} \Big[ \big|(\tilde{G} - G)(\vec{s})\big| \Big] \\
    &\leq \max_{\vec{s} \in \vec{\states}} \big| (\tilde{G} - G)(\vec{s})  \big| \\
    &\leq \|\tilde{G} - G\| \\
    &= \bigg\| \Big[ \li{\res{\bp_{\FDelta}}} - \li{\res{\bp}} \Big] \cdot \bp(G) \bigg\| \\
    &\leq \big\| \li{\res{\bp_{\FDelta}}} - \li{\res{\bp}} \big\| \cdot \big\|\bp(G)\big\| \\
    &\leq  C(\res{\bp}, \rho) \cdot \|\res{\bp}(G)\| \\
    & \leq C(\res{\bp}, \rho) \cdot \|\res{\bp}\| \cdot \|G\|.
  \end{align*}
  In the second to last step, we used Proposition~\ref{pro:bound_pseudo_inverse_difference}.
  By Proposition~\ref{pro:easier_upper_bound}, we can define the polynomial $Q(X, Y)$ by
  \begin{equation*}
    Q(X, Y) = XY \cdot \Big[ 12 XY^2 + 1 \Big],
  \end{equation*}
  which is of degree five. 

  The last claim follows from $\lim_{\rho \to 0} \rho = 0$.
\end{proof}

\begin{remark}
  \label{rem:simple_special_case_of_misspecification_robustness}
  In the case of a square matrix $\bp$ that is injective, we can apply Theorem~\ref{thm:invertibility_theorem_we_use} directly to $\bp^{-1}$ (which is now invertible) and obtain the following simplification of Theorem~\ref{thm:value_function_bound} for the case that $\|\FDelta\| \leq \rho \leq \frac{1}{2} \cdot \|\bp^{-1}\|^{-1}$:
  \begin{equation*}
    \big| \val_{\tilde{G}}(\pi) - \val_{G}(\pi) \big| \leq \rho \cdot 2 \cdot \|\bp\| \cdot \|G\| \cdot \|\bp^{-1}\|^2.
  \end{equation*}
  The polynomial is then only of degree 3.
\end{remark}

\subsection{Preliminary Characterizations of the Ambiguity}\label{sec:injectivity}

Recall the sequence of functions
\begin{equation*}
  \begin{tikzcd}
    \R^{\states} \ar[r, "\FGamma"] & \R^{\vec{\states}} \ar[r, "\bp"] & \R^{\vec{\Omega}}.
\end{tikzcd}
\end{equation*}

In this section, we clarify $\im \FGamma$ and $\ker \bp$ in special cases, as their intersection is the crucial ambiguity in Theorem~\ref{thm:main_thm_appendix_version}.

The following proposition shows that for deterministic $\PVO$ and a rational human, $\ker \bp$ decomposes into hyperplanes defined by normal vectors of probabilities of sequences mapping to the same observation sequence:

\begin{proposition} 
  \label{pro:kernel_characterization}
  Assume the human reasons as in Section~\ref{sec:human_belief}.
  Assume $\PVO$ is deterministic.  
  Let $\belief(\vec{s})$ be the distribution of sequences under the human's belief over the policy, given by $\belief(\vec{s}) = \int_{\pi'} \belief(\pi')P^{\pi'}(\vec{s})$ for some policy prior $\belief(\pi')$. 
  For each $\vec{o}$, let $\belief_{\vec{o}} \coloneqq [\belief(\vec{s})]_{\vec{s}: \ \vec{O}(\vec{s}) = \vec{o}} \in \R^{\{\vec{s} \in \vec{\states} \ \mid \ \vec{O}(\vec{s}) = \vec{o}\}}$ be the vector of probabilities of sequences that are observed as $\vec{o}$. 

  Let $G'$ be a return function.
  For each $\vec{o} \in \vec{\Omega}$, define the restriction $G'_{\vec{o}} \in \R^{\{\vec{s} \in \vec{\states} \mid \vec{O}(\vec{s}) = \vec{o}\}}$ by $G'_{\vec{o}}(\vec{s}) \coloneqq G'(\vec{s})$ for all $\vec{s} \in \{\vec{s} \in \vec{\states} \mid \vec{O}(\vec{s}) = \vec{o}\}$.
  Assume that $\belief(\vec{s} \mid \vec{o})$ is the Bayesian posterior.
  Then $G' \in \ker \bp$ if and only if the property
  \begin{equation*}
    \belief_{\vec{o}} \cdot G'_{\vec{o}} = 0
  \end{equation*}
  holds for all $\vec{o} \in \vec{\Omega}$.
\end{proposition}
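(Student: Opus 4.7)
The plan is to unfold the definition of $\bp$ one observation sequence at a time, then use Bayes' rule together with determinism of $\vec{O}$ to show that the posterior $\belief(\vec{s} \mid \vec{o})$ is simply the prior $\belief(\vec{s})$ restricted and renormalized to the fiber $\vec{O}^{-1}(\vec{o})$. Once this is established, the kernel condition on each coordinate $(\bp G')(\vec{o}) = 0$ becomes, up to a positive normalization, exactly the inner product $\belief_{\vec{o}} \cdot G'_{\vec{o}} = 0$.

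More concretely, first I would note that $G' \in \ker \bp$ is equivalent to $(\bp G')(\vec{o}) = \sum_{\vec{s} \in \vec{\states}} \belief(\vec{s} \mid \vec{o}) G'(\vec{s}) = 0$ holding for every $\vec{o} \in \vec{\Omega}$, so the claim reduces to an equivalence observation by observation. Next, I would apply Bayes' rule from Section~\ref{sec:human_belief} (marginalized over the policy prior) to write
\begin{equation*}
  \belief(\vec{s} \mid \vec{o}) \;=\; \frac{\PVO(\vec{o} \mid \vec{s})\,\belief(\vec{s})}{\sum_{\vec{s}\hs'} \PVO(\vec{o} \mid \vec{s}\hs')\,\belief(\vec{s}\hs')}.
\end{equation*}
Since $\PVO$ is deterministic, $\PVO(\vec{o} \mid \vec{s}) = \mathbf{1}\{\vec{O}(\vec{s}) = \vec{o}\}$, so the posterior is supported on $\vec{O}^{-1}(\vec{o})$ and, on that fiber, is proportional to $\belief(\vec{s})$. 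Writing $Z_{\vec{o}} \coloneqq \sum_{\vec{s}\hs' : \vec{O}(\vec{s}\hs') = \vec{o}} \belief(\vec{s}\hs')$, the defining sum collapses to $(\bp G')(\vec{o}) = Z_{\vec{o}}^{-1}\,\belief_{\vec{o}} \cdot G'_{\vec{o}}$.

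The equivalence then follows by noting that $Z_{\vec{o}} > 0$ for every $\vec{o} \in \vec{\Omega}$: by the definition of $\vec{\Omega}$ together with the assumption that the human's policy prior supports the sampling distribution used to generate observed sequences, there must be at least one $\vec{s} \in \vec{O}^{-1}(\vec{o})$ with $\belief(\vec{s}) > 0$, which is also what makes the posterior well-defined at $\vec{o}$ in the first place. Hence $(\bp G')(\vec{o}) = 0$ if and only if $\belief_{\vec{o}} \cdot G'_{\vec{o}} = 0$, and quantifying over $\vec{o} \in \vec{\Omega}$ gives the claim.

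The main thing to be careful about is the normalization constant $Z_{\vec{o}}$ and its support assumption; the argument is otherwise a direct unpacking of Bayes' rule under a deterministic observation kernel. No further structural input about the return function or the MDP dynamics is needed, which is why the characterization depends only on how the prior mass $\belief_{\vec{o}}$ is distributed over each fiber $\vec{O}^{-1}(\vec{o})$.
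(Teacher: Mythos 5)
Your proposal is correct and follows essentially the same route as the paper's own proof: both apply Bayes' rule, use determinism of $\PVO$ to reduce the posterior to the prior restricted and renormalized on the fiber $\vec{O}^{-1}(\vec{o})$, and factor out the positive normalization constant to get the coordinatewise equivalence. Your explicit remark that $Z_{\vec{o}} > 0$ for $\vec{o} \in \vec{\Omega}$ is a small point the paper leaves implicit, but it is the right thing to check.
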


\begin{proof}
  For a deterministic observation kernel $\PVO$, by Bayes rule we have
  \begin{align*}
    \belief(\vec{s} \mid \vec{o}) &= \frac{\PVO(\vec{o} \mid \vec{s}) \cdot \belief(\vec{s})}{\sum_{\vec{s}\hs'} \PVO(\vec{o} \mid \vec{s}\hs') \cdot \belief(\vec{s}\hs') } \\
    &= \frac{\delta_{\vec{o}}\big( \vec{O}(\vec{s}) \big) \cdot \belief(\vec{s})}{\sum_{\vec{s}\hs'} \delta_{\vec{o}}\big( \vec{O}(\vec{s}\hs') \big) \cdot \belief(\vec{s}\hs')} \\
    &= 
    \begin{cases}
      0, \ \vec{O}(\vec{s}) \neq \vec{o} \\
      \frac{\belief(\vec{s})}{\sum_{\vec{s}\hs': \ \vec{O}(\vec{s}\hs') = \vec{o}}\belief(\vec{s}\hs')}, \ \vec{O}(\vec{s}) = \vec{o}. 
      \ 
    \end{cases}
  \end{align*}
  Thus, for any return function $G'$ and any observation sequence $\vec{o}$, we have
  \begin{align*}
    \big[\bp(G')\big](\vec{o}) & = \eval_{\vec{s} \sim \belief(\vec{s} \mid \vec{o})}\big[ G'(\vec{s}) \big] \\
    &= \sum_{\vec{s}} \belief(\vec{s} \mid \vec{o}) G'(\vec{s}) \\
    & = \sum_{\vec{s}: \ \vec{O}(\vec{s}) = \vec{o}} \frac{\belief(\vec{s})}{\sum_{\vec{s}\hs': \ \vec{O}(\vec{s}\hs') = \vec{o}} \belief(\vec{s}\hs')} G'(\vec{s})\\
    & = \Bigg(\sum_{\vec{s}\hs': \ \vec{O}(\vec{s}\hs') = \vec{o}} \belief(\vec{s}\hs')\Bigg)^{-1} \cdot \sum_{\vec{s}: \ \vec{O}(\vec{s}) = \vec{o}} \belief(\vec{s}) G'(\vec{s}).
  \end{align*}
  Thus, we have $G' \in \ker \bp$ if and only if 
  \begin{equation*}
    \belief_{\vec{o}} \cdot G'_{\vec{o}} =  \sum_{\vec{s}: \ \vec{O}(\vec{s}) = \vec{o}} \belief(\vec{s}) G'(\vec{s}) = 0
  \end{equation*}
  for all $\vec{o}$.
  That was to show.
\end{proof}

\begin{remark}
  \label{rem:free_parameters}
  One can interpret the previous proposition as follows:

  As long as $\vec{O}$ is injective, we have $\big|\{\vec{s} \in \vec{\states} \mid \vec{O}(\vec{s}) = o\}\big| = 1$ for all $\vec{o}$, meaning that $\belief_{\vec{o}}$ and $G'_{\vec{o}}$ have only one entry.
  Thus, $\belief_{\vec{o}} \cdot G'_{\vec{o}} = 0$ implies $G'_{\vec{o}} = 0$.
  If that holds for all $\vec{o}$, then $G' \in \ker \bp$ implies $G' = 0$, meaning $\bp$ is injective. 

  However, as soon as there is an $\vec{o}$ with $k_{\vec{o}} \coloneqq \big|\{\vec{s} \in \vec{\states} \mid \vec{O}(\vec{s}) = o\}\big| > 1$, the equation $\belief_{\vec{o}} \cdot G'_{\vec{o}} = 0$ leads to $k_{\vec{o}} - 1$ free parameters in $G'_{\vec{o}}$.
  $G'_{\vec{o}}$ can then be chosen freely in the hyperplane of vectors orthogonal to $\belief_{\vec{o}}$ without moving out of the kernel of $\bp$.

  Another way of writing Proposition~\ref{pro:kernel_characterization} is to write $\ker \bp$ as a direct sum of these hyperplanes perpendicular to $\belief_{\vec{o}}$:
  \begin{equation*}
    \ker \bp = \bigoplus_{\vec{o}: \ |\vec{O}^{-1}(\vec{o})| \geq 2} \belief_{\vec{o}}^{\perp}.
  \end{equation*}
\end{remark}

Recall that a return function $G$ is called \emph{time-separable} if there exists a reward function $R$ such that $\FGamma(R) = G$.

Before we discuss time-separability in more interesting examples, we want to talk about one simple case where all return functions are time-separable. 
We leave a general characterization of $\im \FGamma$ to future work.

\begin{proposition}
  \label{pro:easy_validity}
  Let there be an ordering $\vec{s}^{(1)}, \vec{s}^{(2)}, \dots$ of all sequences in $\vec{\states}$, and a function $\phi: \vec{\states} \to \states$ from sequences to states such that $\phi(\vec{s}) \in \vec{s}$ and $\phi(\vec{s}^{(k)}) \notin \vec{s}^{(i)}$ for all $i < k$.
  Then every return function is time-separable.
\end{proposition}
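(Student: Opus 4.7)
The plan is to build $R$ by induction on the enumeration. First, set $R(s) = 0$ for every state $s$ not in the image of $\phi$. Then, processing $k = 1, 2, \ldots$ in order, solve the single linear equation
\[\sum_{t=0}^{T} \gamma^{t} R(s^{(k)}_{t}) \;=\; G(\vec{s}^{(k)})\]
for the one unknown $R(\phi(\vec{s}^{(k)}))$, where $s^{(k)}_t$ denotes the $t$-th state of $\vec{s}^{(k)}$.

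Two things need to be checked at step $k$. First, every $R(s^{(k)}_t)$ with $s^{(k)}_t \neq \phi(\vec{s}^{(k)})$ must already be defined. If $s^{(k)}_t$ lies outside the image of $\phi$ it was set to $0$ in the initialization; otherwise $s^{(k)}_t = \phi(\vec{s}^{(j)})$ for some $j \neq k$, and the hypothesis $\phi(\vec{s}^{(j)}) \notin \vec{s}^{(i)}$ for $i < j$, applied with $i = k$, forces $j < k$, so the value was assigned on an earlier iteration. As a byproduct, this shows all $\phi(\vec{s}^{(k)})$ are pairwise distinct, so the unknown is genuinely fresh. Second, the coefficient of $R(\phi(\vec{s}^{(k)}))$ in the defining equation is $\sum_{t:\, s^{(k)}_t = \phi(\vec{s}^{(k)})} \gamma^{t}$, which is strictly positive whenever $\gamma > 0$ because $\phi(\vec{s}^{(k)}) \in \vec{s}^{(k)}$ supplies at least one such $t$. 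Hence the equation admits a unique solution for $R(\phi(\vec{s}^{(k)}))$.

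Next I would observe that this new assignment does not retroactively invalidate any equation solved earlier: by the same hypothesis in the direction $\phi(\vec{s}^{(k)}) \notin \vec{s}^{(i)}$ for $i < k$, the state $\phi(\vec{s}^{(k)})$ simply does not appear in any earlier sequence, so the previously fixed sums $\sum_{t} \gamma^{t} R(s^{(i)}_t)$ do not mention the newly set value. Induction then produces a global $R$ with $\FGamma(R)(\vec{s}^{(k)}) = G(\vec{s}^{(k)})$ for every $k$, which is precisely time-separability on all of $\vec{\states}$.

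The main obstacle is essentially bookkeeping: checking that the hypothesis on $\phi$ does double duty, simultaneously guaranteeing that (a) the unknown $R(\phi(\vec{s}^{(k)}))$ enters the equation at step $k$ with a nonzero coefficient while every other summand is already assigned, and (b) assignments made at step $k$ do not disturb equations solved at any step $i < k$. The degenerate case $\gamma = 0$, where $G(\vec{s}) = R(s_0)$, should be flagged separately: the argument still goes through provided one interprets $\phi(\vec{s}^{(k)})$ as $s^{(k)}_0$, which is essentially the only non-trivial possibility in that regime.
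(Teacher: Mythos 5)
Your proof is correct and follows essentially the same route as the paper's: initialize the not-yet-determined rewards to zero, inductively solve the single linear equation at step $k$ for the fresh unknown $R\big(\phi(\vec{s}^{(k)})\big)$, and use the hypothesis $\phi(\vec{s}^{(k)}) \notin \vec{s}^{(i)}$ for $i<k$ both to guarantee that the unknown is genuinely new and that later assignments do not disturb earlier equations. Your explicit flag of the $\gamma=0$ degeneracy, where the coefficient $\sum_{t:\, s^{(k)}_t = \phi(\vec{s}^{(k)})}\gamma^t$ can vanish, is a point the paper's proof silently glosses over.
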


\begin{proof}
  Let $G$ be a return function.
  Initialize $R(s) = 0$ for all $s$ and inductively update it for all $i = 1, 2, \dots$:
  \begin{align*}
    R\big( \phi(\vec{s}^{(i)}) \big) & \coloneqq \Bigg( \sum_{t: \ s^{(i)}_t = \phi(\vec{s}^{(i)})} \gamma^t \Bigg)^{-1} \cdot  \Bigg( G(\vec{s}^{(i)}) - \sum_{t: \ s^{(i)}_t \neq \phi(\vec{s}^{(i)})} \gamma^t \cdot R\big(s^{(i)}_t\big) \Bigg),
  \end{align*}
  where the inductive definition always uses $R$ as it is defined by that point in time. 
  Once $R\big( \phi(\vec{s}^{(i)}) \big)$ is defined, but not yet any future values $R\big( \phi(\vec{s}^{(k)}) \big)$, $k > i$, we have
  \begin{align*}
    \big[ \FGamma(R) \big](\vec{s}^{(i)}) &= \sum_{t = 0}^{T} \gamma^t \cdot R\big( s^{(i)}_t \big)\\
    &= 
    \Bigg(\sum_{t: \ s^{(i)}_t = \phi(\vec{s}^{(i)})}\gamma^t\Bigg) \cdot R\big(\phi(\vec{s}^{(i)})\big) + \sum_{t: \ s^{(i)}_t \neq \phi(\vec{s}^{(i)})} \gamma^t \cdot R\big( s^{(i)}_t \big) \\
    & = G(\vec{s}^{(i)}).
  \end{align*}
  Furthermore, the property $\phi(\vec{s}^{(k)}) \notin \vec{s}^{(i)}$ for all $i < k$ ensures that changes to the reward function for $k > i$ do not affect the value of $\big[\FGamma(R)\big](\vec{s}^{(i)})$.
  This shows $\FGamma(R) = G$, and thus $G$ is time-separable. 
\end{proof}

\begin{corollary}
  \label{cor:MAB_valid}
  In a multi-armed bandit, every return function is time-separable.
\end{corollary}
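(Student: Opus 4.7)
}
The plan is to reduce the corollary directly to Proposition~\ref{pro:easy_validity} by specifying, for a multi-armed bandit (MAB), an ordering of the possible state sequences and a function $\phi$ satisfying the two required properties. First I would fix a concrete MDP realization of a MAB: a fixed initial state $s_0$, an action set $\actions = \{a_1, \dots, a_n\}$, and one distinguished ``arm state'' $s_{a_i} \neq s_0$ per action, with $\Transition(\cdot \mid s_0, a_i) = \delta_{s_{a_i}}$ and each $s_{a_i}$ absorbing. Under an exploration policy that puts positive probability on every action, the set $\vec{\states}$ of possible state sequences is then exactly $\{\vec{s}^{(i)} \coloneqq (s_0, s_{a_i}, s_{a_i}, \dots, s_{a_i})\}_{i=1,\dots,n}$.

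Next I would order these sequences arbitrarily as $\vec{s}^{(1)}, \dots, \vec{s}^{(n)}$ and define $\phi(\vec{s}^{(i)}) \coloneqq s_{a_i}$. Clearly $\phi(\vec{s}^{(i)}) \in \vec{s}^{(i)}$ since $s_{a_i}$ appears at every timestep $t \geq 1$. The second condition, $\phi(\vec{s}^{(k)}) \notin \vec{s}^{(i)}$ for $i < k$, holds because $\vec{s}^{(i)}$ contains only $s_0$ and $s_{a_i}$, and $s_{a_k} \notin \{s_0, s_{a_i}\}$ by the assumption that distinct arms lead to distinct arm states. Proposition~\ref{pro:easy_validity} then yields that every return function $G \in \R^{\vec{\states}}$ is time-separable, proving the corollary.

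The only real subtlety, and the main obstacle worth flagging, is that the paper does not give a single formal definition of ``multi-armed bandit.'' I would therefore add a brief remark indicating that the argument applies to any MDP in which every possible trajectory $\vec{s}$ contains at least one state $\phi(\vec{s})$ that uniquely identifies that trajectory (i.e.\ does not appear in any other $\vec{s}\hs' \in \vec{\states}$); for standard MAB formulations --- deterministic or stochastic transitions from $s_0$ into per-arm absorbing states, horizon $T \geq 1$, with or without a final termination state --- this condition is satisfied by taking $\phi$ to return any such identifying arm state. The argument is therefore robust to the precise MDP encoding of the bandit, and no further calculation is required beyond invoking Proposition~\ref{pro:easy_validity}.
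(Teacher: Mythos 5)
Your proposal is correct and follows essentially the same route as the paper: reduce to Proposition~\ref{pro:easy_validity} by exhibiting a state $\phi(\vec{s})$ that uniquely identifies each trajectory. The only difference is the encoding of the bandit: the paper adopts the convention that states and sequences coincide (so $\phi = \id$ works immediately, and it also notes the even shorter argument that $\FGamma$ is then the identity map), whereas you use a richer start-state-plus-absorbing-arm-states MDP, for which your choice $\phi(\vec{s}^{(i)}) = s_{a_i}$ verifies the hypotheses just as well; your closing remark correctly identifies that the argument is insensitive to this modeling choice.
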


\begin{proof}
  In a multi-armed bandit, states and sequences are equivalent, and so we can choose $\phi(s) = s$ for every state/sequence $s$. 
  The result follows from Proposition~\ref{pro:easy_validity}.

  Alternatively, simply directly notice that in a multi-armed bandit, $\FGamma$ is the identity mapping, and so for every return/reward function $R$, we have $\FGamma(R) = R$.
\end{proof}

\subsection{Examples Supplementing Section~\ref{sec:return_function_identi}}\label{sec:worked_out_examples}

In this whole section, the inverse temperature parameter in the human choice probabilities is given by $\beta = 1$.
We now consider four more mathematical examples of Corollary~\ref{cor:abstract_formulation_without_O_appendix} and Theorem~\ref{thm:general_version}. 
In the first example, the ambiguity is so bad that the reward inference can become worse than simply maximizing $\val_{\obs}$ as in naive RLHF.
In Example~\ref{ex:noise_goes_well}, there is simply ``noise'' in the observations and the human's belief, the matrices $\bp$ and $\bo$ are injective, and identifiability works, as in Corollary~\ref{cor:noise_leads_to_identifiability}. 
In the third example, the matrix $\bp$ is not injective and identifiability fails, which is a minimal example showing the limits of our main theorems. 
In the fourth example, the matrix $\bp$ is not injective, but $\ker \bp \cap \im \FGamma = \{0\}$, and so identifiability works.
This example is interesting in that the identifiability simply emerges through different distributions of \emph{delay} that are caused by the different unobserved events. 

In this section, both the linear operators $\bp: \R^{\vec{\states}} \to \R^{\vec{\Omega}}$ and $\bo: \R^{\vec{\Omega}} \to \R^{\vec{\states}}$ are considered as matrices
\begin{equation*}
  \bo = \big(\PVO(\vec{o} \mid \vec{s})\big)_{\vec{s}, \vec{o}} \in \R^{\vec{\states} \times \vec{\Omega}}, \quad 
  \bp = \big(\belief(\vec{s} \mid \vec{o})\big)_{\vec{o}, \vec{s}} \in \R^{\vec{\Omega} \times \vec{\states}}. 
\end{equation*}
Notice that both have a swap in their indices.

\begin{example}\label{ex:resolving ambiguity}
  \Cref{thm:simple_special_case_result} shows that the remaining ambiguity from the human's choice probabilities is given by $\ker \bp \cap \im \FGamma$, but it doesn't explain how to proceed given this ambiguity. 
  Without further inductive biases, some reward functions within the ambiguity of the true reward function can be even worse than simply maximizing $\val_{\obs}$.

  E.g., consider a multi-armed bandit with three actions $a, b, c$, observation-kernel $o = O(a) = O(b) \neq O(c) = c$ and reward function $R(a) = R(b) < R(c)$.
  If the human belief is given by $\belief(a \mid o) = p = 1 - \belief(b \mid o)$, then $R' = \alpha \cdot (p - 1, p, 0) \in \R^{\{a, b, c\}}$ is in the ambiguity for all $\alpha \in \R$, and so $\tilde{R} \coloneqq R + R'$ is compatible with the choice probabilities.
  However, for $\alpha \ll 0$, we have $\tilde{R}(a) > \tilde{R}(b)$ and $\tilde{R}(a) > \tilde{R}(c)$, and so optimizing against this reward function leads to a suboptimal policy.

  In contrast, maximizing $\val_{\obs}$ leads to the correct policy since $a$, $b$, and $c$ all obtain their ground truth reward in this example.
  This generally raises the question of how to tie-break reward functions in the ambiguity, or how to act conservatively given the uncertainty, in order to consistently improve upon the setting in Section~\ref{sec:observation_return_function}. 
\end{example}

\begin{example}\label{ex:noise_goes_well}
  This example is a special case of Corollary~\ref{cor:noise_leads_to_identifiability}.
  Consider a multi-armed bandit with two actions (which are automatically also states and sequences) $a$ and $b$.
  In this case, the reward function and return function is the same. 

  We assume there to be two possible observations $o^{(a)}, o^{(b)}$ and the observation kernel to be non-deterministic, with probabilities
  \begin{equation*}
    \PO(o^{(j)} \mid i) = 
    \begin{cases}
      2/3, \text{ if } i = j, \\
      1/3, \text{ else}.
    \end{cases}
  \end{equation*}
  If we assume the human forms Bayesian posterior beliefs as in Section~\ref{sec:human_belief} and to have a policy prior $\belief(\pi')$ such that $\belief(a) = \int_{\pi}\pi(a) \belief(\pi') d \pi = 1/2$ and $\belief(b) = 1/2$, then it is easy to show that the human's belief is the ``reversed'' observation kernel:
  \begin{equation*}
    \belief(j  \mid o^{(i)}) = \PO(o^{(i)} \mid j).
  \end{equation*}
  We obtain
  \begin{align*}
    \bo & = \bp = 
    \begin{pmatrix}
      2/3 & 1/3 \\
      1/3 & 2/3 
    \end{pmatrix}
    = \frac{1}{3} \cdot 
    \begin{pmatrix}
      2 & 1 \\
      1 & 2
    \end{pmatrix}
  \end{align*}
  These matrices are injective since they are invertible:
  \begin{equation*}
    \bo^{-1} = \bp^{-1} = 
    \begin{pmatrix}
      2 & -1 \\
      -1 & 2
    \end{pmatrix}.
  \end{equation*}
  More generally, even if the human does not form fully rational posterior beliefs, it is easy to imagine that the matrix $\bp$ can end up being invertible. 
  Thus, Corollary~\ref{cor:abstract_formulation_without_O_appendix} guarantees that the reward function can be inferred up to an additive constant from the choice probabilities of observations, and Theorem~\ref{thm:general_version} shows that this even works when the learning system does not know what the human observed.

  In the rest of this example, we explicitly walk the reader through the process of how the reward function can be inferred, in the general case that the observations are not known. 
  In the process, we essentially recreate the proof of the theorems for this special case. 
  For this aim, we first want to compute the choice probabilities $P^{R}\big(i  \succ j\big)$ that the learning system has access to in the limit of infinite data. 
  We assume that the reward function is given by $R(a) = -1$ and $R(b) = 2$.
  We compute:
  \begin{equation*}
    \bp(R) = 
    \frac{1}{3} \cdot
      \begin{pmatrix}
	2 & 1 \\
	1 & 2
      \end{pmatrix}
      \cdot 
      \begin{pmatrix}
	-1 \\
	2
      \end{pmatrix}
      = 
      \begin{pmatrix}
	0 \\
	1
      \end{pmatrix}.
  \end{equation*}
  In other words, we have $\eval_{s \sim \belief(s \mid o^{(a)})}[R(s)] = 0$ and $\eval_{s \sim \belief(s \mid o^{(b)})}[R(s)] = 1$.
  From this, we can compute the observation-based choice probabilities $\widetilde{P}_{o^{(i)} o^{(j)}} = \sigma\big( \bp(R)(o^{(i)}) - \bp(R)(o^{(j)})\big)$, see Equation~\eqref{eq:deterministic_choice_probabilities_main_paper}, and obtain:
  \begin{equation*}
    \widetilde{P}_{o^{(a)}o^{(a)}} = \widetilde{P}_{o^{(b)}o^{(b)}} = \frac{1}{2}, \quad  \widetilde{P}_{o^{(a)}o^{(b)}} = \frac{1}{1 + e}, \quad \widetilde{P}_{o^{(b)}o^{(a)}} = \frac{e}{1 + e}.
  \end{equation*}
  We can now determine the final choice probabilities $P_{ij} \coloneqq P^{R}\big(i  \succ j\big)$ again by a matrix-vector product, with the indices ordered lexicographically, see Equation~\eqref{eq:choice_probabilities_general_case}.
  Here, $\bo \otimes \bo$ is the Kronecker product of the matrix $\bo$ with itself:
  \begin{equation*}
    P = (\bo \otimes \bo) \cdot \widetilde{P} = 
    \frac{1}{9} \cdot 
    \begin{pmatrix}
      4 & 2 & 2 & 1 \\
      2 & 4 & 1 & 2 \\
      2 & 1 & 4 & 2 \\
      1 & 2 & 2 & 4
    \end{pmatrix}
    \cdot 
    \begin{pmatrix}
      1/2 \\
      1/(1 + e) \\
      e/(1 + e) \\
      1/2
    \end{pmatrix}
    =
    \begin{pmatrix}
      1/2 \\
      1/3 \cdot (2 + e)/(1 + e) \\
      1/3 \cdot (1 + 2e)/(1 + e) \\
      1/2
    \end{pmatrix}.
  \end{equation*}
  For example, the second entry in $P$ is $P_{a b} = P^{R}\big(a \succ b\big) = \frac{2 + e}{3 \cdot (1 + e)}$.
  This is the likelihood that, for ground-truth actions $a, b$, the human will prefer $a$ after only receiving observations $o^{(a)}$ or $o^{(b)}$ according to $\bo$ and following a Boltzman-rational policy based on the belief of the real action, see Equation~\eqref{eq:choice_probabilities_general_case}.

  Over time, the learning system will be able to estimate these probabilities based on repeated human choices, assuming all state-pairs are sampled infinitely often. 
  The question of identifiability is whether the original reward function $R$ can be inferred from that data, given that the learning system knows $\bo$ and $\bp$.
  We assume that the learning system doesn't \emph{a priori} know $R$ or any of the intermediate steps in the computation. 
  First, $\widetilde{P}$ can be inferred by inverting $\bo \otimes \bo$:
  \begin{equation*}
    \widetilde{P} = (\bo \otimes \bo)^{-1} \cdot P = 
    \begin{pmatrix}
      4 & -2 & -2 & 1 \\
      -2 & 4 & 1 & -2 \\
      -2 & 1 & 4 & -2 \\
      1 & -2 & -2 & 4
    \end{pmatrix}
    \cdot 
    \begin{pmatrix}
      1/2 \\
      1/3 \cdot (2 + e)/(1 + e) \\
      1/3 \cdot (1 + 2e)/(1 + e) \\
      1/2
    \end{pmatrix} 
    = 
    \begin{pmatrix}
      1/2 \\
      1/(1 + e) \\
      e/(1 + e) \\
      1/2
    \end{pmatrix}.
  \end{equation*}
  The learning system wants to use this to infer $\bp(\tilde{R})$ (for the later-to-be inferred reward function $\tilde{R}$ that may differ from the true reward function $R$) and uses the equation
  \begin{equation*}
    \widetilde{P}_{o^{(a)}o^{(b)}} = \frac{\exp\big(\bp(\tilde{R})(o^{(a)})\big)}{\exp\big(\bp(\tilde{R})(o^{(a)})\big) + \exp\big(\bp(\tilde{R})(o^{(b)})\big)},
  \end{equation*}
  which can be rearranged to
  \begin{equation*}
    \bp(\tilde{R})(o^{(a)}) = \log \frac{\widetilde{P}_{o^{(a)}o^{(b)}}}{1 - \widetilde{P}_{o^{(a)}o^{(b)}}} + \bp(\tilde{R})(o^{(b)}) = \log \frac{1/(1 + e)}{e/(1 + e)} + \bp(\tilde{R})(o^{(b)}) = \bp(\tilde{R})(o^{(b)}) - 1.
  \end{equation*}
  This relation is all which can be inferred about $\bp(\tilde{R})(o^{(a)})$ and $\bp(\tilde{R})(o^{(b)})$; the precise value cannot be determined and $\bp(\tilde{R})(o^{(b)})$ is a free parameter. 
  One can check that for $\bp(\tilde{R})(o^{(b)}) = 1$ this coincides with the true value $\bp(R)$.
  Finally, one can invert $\bp$ to infer $\tilde{R}$ from this:
  \begin{align*}
    \tilde{R} & = \bp^{-1} \cdot \bp(\tilde{R})\\
    & = 
    \begin{pmatrix}
      2 & -1 \\
      -1 & 2
    \end{pmatrix} \cdot 
    \begin{pmatrix}
      \bp(\tilde{R})(o^{(b)}) - 1 \\
      \bp(\tilde{R})(o^{(b)})
    \end{pmatrix} \\
    & = 
    \begin{pmatrix}
      \bp(\tilde{R})(o^{(b)}) - 2 \\
      1 + \bp(\tilde{R})(o^{(b)})
    \end{pmatrix} \\
    & = 
    \begin{pmatrix}
      -1 \\
      2
    \end{pmatrix}
    +
    \begin{pmatrix}
      \bp(\tilde{R})(o^{(b)}) - 1 \\
      \bp(\tilde{R})(o^{(b)}) - 1
    \end{pmatrix} \\
    & = R + 
    \begin{pmatrix}
      \bp(\tilde{R})(o^{(b)}) - 1 \\
      \bp(\tilde{R})(o^{(b)}) - 1
    \end{pmatrix}.
  \end{align*}
  Thus, the inferred and true reward functions differ maximally by a constant, as predicted in Theorem~\ref{thm:general_version}.  
\end{example}

In the following example, we work out a case where the reward function is so ambiguous that any policy is optimal to some reward function consistent with the human feedback:

\begin{example}\label{ex:reward_learning_goes_wrong}

  Consider a multi-armed bandit with exactly three actions/states $a, b, c$.
  We assume a deterministic observation kernel with $o \coloneqq O(a) = O(c) \neq O(b) = b$.
  Assume the human has some arbitrary beliefs $B(a \mid o), B(c \mid o) = 1 - B(a \mid o)$, and can identify $b$: $B(b \mid b) = 1$.
  Then if the human makes observation comparisons with a Boltzman-rational policy, as in Theorem~\ref{thm:main_thm_appendix_version}, the resulting reward function is so ambiguous that some reward functions consistent with the feedback place the highest value on action $a$, no matter the true reward function $R$.
  Thus, even if the true reward function $R$ regards $a$ as the worst action, $a$ can result from the reward learning and subsequent policy optimization process.
\end{example}

\begin{proof}
  The matrix $\bp: \R^{\{a, b, c\}} \to \R^{\{o, b\}}$ is given by
  \begin{equation*}
    \bp = 
    \begin{pmatrix}
      \belief(a \mid o) & 0 & \belief(c \mid o) \\
      0 & 1 & 0
    \end{pmatrix}.
  \end{equation*}
  Its kernel is given by reward functions $R'$ with $R'(b) = 0$ and $R'(c) = - \frac{\belief(a \mid o)}{\belief(c \mid o)} R'(a)$, with $R'(a)$ a free parameter.
  Theorem~\ref{thm:main_thm_appendix_version} shows that, up to an additive constant, the reward functions consistent with the feedback of observation comparisons are given by $\tilde{R} = R + R'$ for any $R' \in \ker \bp$.
  Thus, whenever the free parameter $R'(a)$ satisfies $R'(a) > R(b) - R(a)$ and $R'(a) > \belief(c \mid o) \cdot \big( R(c) - R(a) \big)$, we obtain $\tilde{R}(a) > \tilde{R}(b)$ and $\tilde{R}(a) > \tilde{R}(c)$, showing the claim.
\end{proof}

We now investigate another example where $\bp$ is not injective, and yet, identifiability works because $\bp \circ \FGamma \neq \{0\}$.
We saw such cases already in Example~\ref{ex:example_modeling_somewhat_crucial}, but include this additional example since it shows a conceptually interesting case:
two different states lead to the exact same observations, but can be disambiguated since they lead to different amounts of \emph{delay} until a more informative observation is made again.

\begin{example}\label{ex:cheating} 
  In this example, we assume that the human knows the policy $\pi$ that generates the state sequences (corresponding to a policy prior $\belief(\pi') = \delta_{\pi}(\pi')$ concentrated on $\pi$), which together with knowledge of the transition dynamics of the environment determines the true state transition probabilities $\Transition^{\pi}(s' \mid s) = \sum_{a \in \actions} \Transition(s' \mid s, a) \cdot \pi(a \mid s)$.
  We consider an environment with three states $s, s', s''$ and the following transition dynamics $\Transition^{\pi}$, where $p \neq 1/2$ is a probability:
  \begin{equation*}
    \begin{tikzcd}
      & & s \ar[ddll, bend right, "1/3"'] \ar[ddrr, bend left, "1/3"] \ar[loop, "1/3"']  & & \\
      \\
      s' \ar[uurr, bend right, "1 - p"'] \ar[loop left, "p"] & & & &  s'' \ar[uull, bend left, "p"] \ar[loop right, "1-p"]
    \end{tikzcd}
  \end{equation*}
  We assume that $P_0(s) = 1$.
  Furthermore, we assume deterministic observations and $s = O(s) \neq O(s') = O(s'') \eqqcolon o$.

  Assume the time horizon $T$ is $3$, i.e., there are timesteps $0, 1, 2, 3$. 
  Assume that the human forms the belief over the true state sequence by Bayesian posterior updates as in Section~\ref{sec:human_belief}.
  In this case, $\ker \bp \neq \{0\}$ by Proposition~\ref{pro:deterministic_observation_kernel}.
  However, we will now show that $\ker(\bp \circ \FGamma) = \{0\}$. 
  If the human makes Boltzmann-rational comparisons of observation sequences, then this implies the identifiability of the return function up to an additive constant by Corollary~\ref{cor:abstract_formulation_without_O_appendix}.\footnote{We assume that the learning system knows what the human observes, which is valid since $\PO$ is deterministic. 
  Alternatively, one can argue with Proposition~\ref{pro:deterministic_observation_kernel} that $\bo$ is automatically injective, meaning one can apply Theorem~\ref{thm:general_version}.}

  Thus, let $R' \in \ker(\bp \circ \FGamma)$, i.e., $\Big[\bp\big( \FGamma(R') \big)\Big](\vec{o}) = 0$ for every observation sequence $\vec{o}$.
  For $\vec{o} = ssss $ being the observation sequence that only consists of state $s$, this implies $R'(s) = 0$.
  Consequently, for general observation sequences $\vec{o}$, we have:
  \begin{equation*}
    0 = \Big[\bp \big( \FGamma(R') \big) \Big](\vec{o}) = \eval_{\vec{s} \sim \belief(\vec{s} \mid \vec{o})}\Bigg[ \sum_{t = 0}^{3}  \delta_{s'}(s_t) \cdot \gamma^t \Bigg] \cdot R'(s') + \eval_{\vec{s} \sim \belief(\vec{s} \mid \vec{o})}\Bigg[ \sum_{t = 0}^{3} \delta_{s''}(s_t) \cdot \gamma^t \Bigg] \cdot R'(s'').
  \end{equation*}
  Now we specialize this equation to the two observation sequences $\vec{o}^{(1)} = s o s s$ and $\vec{o}^{(2)} = s o o s$.
  We start by considering $\vec{o}^{(1)}$.
  This is consistent with the two state sequences $\vec{s}^{(1), (s')} = s s' s s$ and $\vec{s}^{(1), (s'')} = s s'' s s$.
  We have posterior probabilities
  \begin{equation*}
    \belief\big(\vec{s}^{(1), (s')} \mid \vec{o}^{(1)}\big) = 1 - p, \quad \belief\big(\vec{s}^{(1), (s'')} \mid \vec{o}^{(1)}\big) = p,
  \end{equation*}
  and therefore
  \begin{equation*}
    0 = \big[ \bp\big( \FGamma(R') \big) \Big](\vec{o}^{(1)}) = (1 - p) \cdot \gamma \cdot R'(s') + p \cdot \gamma \cdot R'(s''),
  \end{equation*}
  and so 
  \begin{equation}\label{eq:first_reward_equality}
    R'(s') = \frac{p}{p - 1} \cdot R'(s'').
  \end{equation}
  Similarly, $\vec{o}^{(2)}$ is consistent with the sequences $\vec{s}^{(2), (s')} = s s' s' s $ and $\vec{s}^{(2), (s'')} = s s'' s'' s$.
  They have posterior probabilities
  \begin{equation*}
    \belief\big(\vec{s}^{(2), (s')} \mid \vec{o}^{(2)}\big) = \frac{1}{2}, \quad \belief\big(\vec{s}^{(2), (s'')} \mid \vec{o}^{(2)}\big) = \frac{1}{2},
  \end{equation*}
  leading to 
  \begin{equation*}
    0 = \frac{1}{2} \cdot (\gamma + \gamma^2) \cdot R'(s') + \frac{1}{2} \cdot (\gamma + \gamma^2) \cdot R'(s'').
  \end{equation*}
  Together with Equation~\eqref{eq:first_reward_equality}, we obtain
  \begin{equation*}
    R'(s'') = - R'(s') = \frac{p}{1 - p} \cdot R'(s''),
  \end{equation*}
  which implies $R'(s'') = 0$ because $p \neq \frac{1}{2}$, and thus also $R'(s') = 0$.
  Overall, we have showed $R' = 0$, and so $\bp \circ \FGamma$ is injective. 
  This means that reward functions are identifiable in this example up to an additive constant, see Corollary~\ref{cor:abstract_formulation_without_O_appendix}.  
\end{example}

\section{Issues of Naively Applying RLHF under Partial Observability}\label{sec:second_appendix_section}

In this section, we study the naive application of RLHF under partial observability.
Thus, most of it takes a step back from the general theory of \emph{appropriately modeled} partial observability in RLHF.
Later, we will analyze examples where we also apply the general theory, which is why this appendix section comes second.

In Section~\ref{sec:bridge}, we first briefly explain what happens when the learning system incorrectly assumes that the human observes the full environment state.
We show that as a consequence, the system is incentivized to infer what we call the \emph{observation return function} $G_{\obs}$, which evaluates a state sequence based on the human's belief of the state sequence given the human's observations. 
In the policy optimization process, the policy is then selected to maximize $\val_{\obs}$, an expectation over $G_{\obs}$.
In the interlude in Section~\ref{sec:interlude_unrealistic}, we then briefly analyze the unrealistic case that the human, when evaluating a policy $\pi$, fully knows the complete specification of that policy and all of the environment and engages in rational Bayesian reasoning;
in this case, $\val_{\obs} = \val$ is the true policy evaluation function. 

Realistically, however, maximizing $\val_{\obs}$ can lead to failure modes.
In~\Cref{sec:rlhf_theorem_proof} we prove that a suboptimal policy that is optimal according to $\val_{\obs}$ causes deceptive inflation, overjustification, or both.
In~\Cref{sec:expand_on_examples_appendix}, we expand on the analysis of the main examples in the main paper.
Finally, in Section~\ref{sec:no_ambiguity_not_enough}, we study further concrete examples where maximizing $\val_{\obs}$ reveals deceptive and overjustifying behavior by the resulting policy.

\subsection{Optimal Policies under RLHF with Deterministic Partial Observations Maximize \texorpdfstring{$\val_{\obs}$}{Gobs}}\label{sec:bridge}

Assume that $\PVO$ is deterministic and that the human makes Boltzmann-rational sequence comparisons between observation sequences.
The true choice probabilities are then given by (See Equations~\eqref{eq:deterministic_choice_probabilities_main_paper} and~\eqref{eq:choice_probabilities_general_case}):
\begin{equation}
  \label{eq:deterministic_choice_probabilities}
P^{R}\big( \vec{s} \succ \vec{s}\hs' \big) = \sigma\bigg( \beta \cdot \Big( \big(\bp \cdot G\big)\big( \vec{O}(\vec{s}) \big) - \big(\bp \cdot G\big)\big( \vec{O}(\vec{s}\hs') \big) \Big) \bigg)
\end{equation}
Now, assume that the learning system does \emph{not model the situation correctly}.
In particular, we assume:
\begin{itemize}
  \item The system is not aware that the human only observes observation sequences $\vec{O}(\vec{s})$ instead of the full state sequences.
  \item The system does not model that the human's return function is \emph{time-separable}, i.e., comes from a reward function $R$ over environment states.
\end{itemize}
The learning system then thinks that there is a return function $\tilde{G} \in \R^{\vec{S}}$ such that the choice probabilities are given by the following faulty formula:
\begin{equation*}
  P^{R}\big( \vec{s} \succ \vec{s}\hs' \big) \coloneqq \sigma\Big( \beta \big(G(\vec{s}) - G(\vec{s}\hs')\big) \Big)
\end{equation*}
Now, assume that the learning system has access to the choice probabilities and wants to infer $G$.
Inverting the sigmoid function and then plugging in the true choice probabilities from Equation~\eqref{eq:deterministic_choice_probabilities}, we obtain:
\begin{align*}
  \tilde{G}(\vec{s}) &= \frac{1}{\beta} \log \frac{P^{R}(\vec{s} \succ \vec{s}\hs')}{P^{R}(\vec{s}\hs' \succ \vec{s})} + \tilde{G}(\vec{s}\hs') \\
  &= \frac{1}{\beta} \Big[ \beta \cdot \Big( \big(\bp \cdot G\big)\big( \vec{O}(\vec{s}) \big) - \big(\bp \cdot G\big)\big( \vec{O}(\vec{s}\hs') \big) \Big)   \Big] + \tilde{G}(\vec{s}\hs') \\
  &= \big(\bp \cdot G\big)\big( \vec{O}(\vec{s}) \big) + C(\vec{s}\hs').\footnotemark
\end{align*}
\footnotetext{
Note that in the case of non-deterministic observation kernels and choice probabilities given as in Equation~\eqref{eq:choice_probabilities_general_case}, this argument does not work since the logarithm cannot be swapped with the outer expectation of the choice probabilities.}Here, $C(\vec{s}\hs')$ is some quantity that does not depend on $\vec{s}$.
Now, fix $\vec{s}\hs' $ as a reference sequence.
Then for varying $\vec{s}$, $C(\vec{s}\hs')$ is simply an additive constant.
Consequently, up to an additive constant, this determines the return function that the learning system is incentivized to infer.
We call it the \emph{observation return function} since it is the return function based on the human's observations:
\begin{equation*}
  G_{\obs}(\vec{s}) \coloneqq \big(\bp \cdot G\big)\big( \vec{O}(\vec{s}) \big).
\end{equation*}
This return function is not necessarily time-separable, but we assume that time-separability is not modeled correctly by the learning system.
Now, define the resulting policy evaluation function $\val_{\obs}$ by
\begin{equation*}
  \val_{\obs}(\pi) \coloneqq \eval_{\vec{s} \sim P^{\pi}(\vec{s})} \big[ G_{\obs}(\vec{s}) \big].
\end{equation*}
This is the policy evaluation function that would be optimized if the learning system erroneously inferred the return function $G_{\obs}$.

\subsection{Interlude: When the Human Knows the Policy and is a Bayesian Reasoner, then \texorpdfstring{$\val_{\obs} = \val$}{Jobs=J}}\label{sec:interlude_unrealistic}

In this section, we briefly consider what would happen if in $\val_{\obs}$, the human's belief $\belief$ would make use of the true policy and be a rational Bayesian posterior as in Section~\ref{sec:human_belief}.
We will show that under these conditions, we have $\val_{\obs} = \val$.
Since these are unrealistic assumptions, no other section depends on this result. 

For the analysis, we drop the assumption that the observation sequence kernel $\PVO$ is deterministic, and assume that $\val_{\obs}$ is given as follows:
\begin{equation}\label{eq:obs_value_general_form}
  \val_{\obs}(\pi) \coloneqq \eval_{\vec{s} \sim P^{\pi}(\vec{s})}
  \Bigg[ \eval_{\vec{o} \sim \PVO(\vec{o} \mid \vec{s})} \bigg[ \eval_{\vec{s}\hs' \sim \belief^{\pi}(\vec{s}\hs' \mid \vec{o})}\big[ G(\vec{s}\hs') \big] \bigg] \Bigg].
\end{equation}

In this formula, $\belief^{\pi}(\vec{s} \mid \vec{o}) \coloneqq \belief(\vec{s} \mid \vec{o}, \pi)$ with $B$ being the joint distribution from Section~\ref{sec:human_belief}.
Formally, this is the posterior of the joint distribution $\belief(\vec{s}, \vec{o} \mid \pi)$ that is given by the following hidden Markov model:
\begin{equation}\label{eq:HMM}
  \begin{tikzcd}[cells={nodes={draw=black, circle}}]
    s_0 \ar[d, "\PO"] \ar[r, "\Transition^{\pi}"] & s_1 \ar[d, "\PO"] \ar[r, "\Transition^{\pi}"] & s_2 \ar[d, "\PO"] \ar[r, "\Transition^{\pi}"] & s_3 \ar[d, "\PO"] \ar[r, "\Transition^{\pi}"] & \dots  \\
    o_0 & o_1 & o_2 & o_3 & \dots
  \end{tikzcd}
\end{equation}
Here, $\Transition^{\pi}(s' \mid s) \coloneqq \sum_{a \in \actions} \Transition(s' \mid s, a) \cdot \pi(a \mid s)$.
$s_0$ is sampled according to the known initial distribution $P_0(s_0)$.
The human's posterior $\belief^{\pi}(\vec{s}\hs' \mid \vec{o})$ is then the true posterior in this HMM.
We obtain:

\begin{proposition}
  \label{pro:VRobs=VR_really}
  Let $\pi$ be a policy that is known to the human.
  Then $\val_{\obs}(\pi) = \val(\pi)$.
\end{proposition}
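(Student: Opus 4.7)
The plan is to unfold the triple expectation in Equation~\eqref{eq:obs_value_general_form} as an explicit sum, then use the fact that when the human knows $\pi$, their posterior $\belief^{\pi}(\vec{s}\hs' \mid \vec{o})$ is derived via Bayes' rule from exactly the same joint distribution that governs the true sampling of $(\vec{s}, \vec{o})$ in the HMM~\eqref{eq:HMM}. The key identity to exploit is
\begin{equation*}
  P^{\pi}(\vec{s}) \cdot \PVO(\vec{o} \mid \vec{s}) = \belief^{\pi}(\vec{s}, \vec{o}) = \belief^{\pi}(\vec{s} \mid \vec{o}) \cdot \belief^{\pi}(\vec{o}),
\end{equation*}
so that the outer two expectations $\eval_{\vec{s} \sim P^{\pi}} \eval_{\vec{o} \sim \PVO(\cdot \mid \vec{s})}$ collapse to a single expectation $\eval_{\vec{o} \sim \belief^{\pi}(\vec{o})}$ over the marginal of the observation sequence.

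Concretely, I would proceed in three short steps. First, rewrite
\begin{equation*}
  \val_{\obs}(\pi) = \sum_{\vec{s}, \vec{o}, \vec{s}\hs'} P^{\pi}(\vec{s}) \, \PVO(\vec{o} \mid \vec{s}) \, \belief^{\pi}(\vec{s}\hs' \mid \vec{o}) \, G(\vec{s}\hs').
\end{equation*}
Second, use that $P^{\pi}(\vec{s})$ is by construction the marginal of $\belief^{\pi}(\vec{s}, \vec{o})$ over $\vec{o}$ (since the HMM in Equation~\eqref{eq:HMM} is built from the same $P_0$, $\Transition^{\pi}$, and $\PO$ that generate on-policy trajectories), so summing over $\vec{s}$ gives $\sum_{\vec{s}} P^{\pi}(\vec{s}) \PVO(\vec{o} \mid \vec{s}) = \belief^{\pi}(\vec{o})$. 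Third, recombine $\belief^{\pi}(\vec{o}) \cdot \belief^{\pi}(\vec{s}\hs' \mid \vec{o}) = \belief^{\pi}(\vec{s}\hs', \vec{o})$ and marginalize over $\vec{o}$ to obtain $\belief^{\pi}(\vec{s}\hs') = P^{\pi}(\vec{s}\hs')$, yielding
\begin{equation*}
  \val_{\obs}(\pi) = \sum_{\vec{s}\hs'} P^{\pi}(\vec{s}\hs') \, G(\vec{s}\hs') = \val(\pi).
\end{equation*}

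There is essentially no obstacle: the result is a tautology of Bayesian inference with a well-specified model. The only subtlety worth being explicit about is that the human's prior over policies must place all its mass on the true $\pi$ (equivalently, $\belief(\pi') = \delta_{\pi}(\pi')$), so that the posterior $\belief^{\pi}(\vec{s} \mid \vec{o})$ really is the posterior in the HMM of Equation~\eqref{eq:HMM} rather than one averaged over a nontrivial policy prior. Under that assumption the proof is a three-line manipulation of sums, and the pedagogical value lies in contrasting this idealized case with the realistic setting of the main paper, where the belief $\belief(\vec{s} \mid \vec{o})$ is not the true posterior and $\val_{\obs} \neq \val$ in general.
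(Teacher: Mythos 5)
Your proof is correct and is essentially identical to the paper's own argument: both unfold the triple expectation into sums, collapse $\sum_{\vec{s}} P^{\pi}(\vec{s})\PVO(\vec{o}\mid\vec{s})$ to the marginal $\belief^{\pi}(\vec{o})$, apply Bayes' rule to recover $P^{\pi}(\vec{s}\hs')\PVO(\vec{o}\mid\vec{s}\hs')$, and marginalize out $\vec{o}$. Your remark that the human's policy prior must concentrate on the true $\pi$ matches the paper's definition $\belief^{\pi}(\vec{s}\mid\vec{o}) \coloneqq \belief(\vec{s}\mid\vec{o},\pi)$, so there is nothing to add.
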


\begin{proof}
  By Equation~\eqref{eq:obs_value_general_form}, we have
  \begin{align*}
    \val_{\obs}(\pi) &= \eval_{\vec{s} \sim P^{\pi}(\vec{s})}
    \Bigg[ \eval_{\vec{o} \sim \PVO(\vec{o} \mid \vec{s})} \bigg[ \eval_{\vec{s}\hs' \sim \belief^{\pi}(\vec{s}\hs' \mid \vec{o})}\big[ G(\vec{s}\hs') \big] \bigg] \Bigg] \\
    &\overset{(1)}{=} \sum_{\vec{s}} P^{\pi}(\vec{s}) \sum_{\vec{o}} \PVO(\vec{o} \mid \vec{s}) \sum_{\vec{s}\hs'} \belief^{\pi}(\vec{s}\hs' \mid \vec{o}) G(\vec{s}\hs') \\
    &\overset{(2)}{=} \sum_{\vec{s}\hs'} \Bigg[ \sum_{\vec{o}} \belief^{\pi}(\vec{s}\hs' \mid \vec{o}) \Bigg[ \sum_{\vec{s}}\PVO(\vec{o} \mid \vec{s}) P^{\pi}(\vec{s}) \Bigg] \Bigg] G(\vec{s}\hs') \\
    &\overset{(3)}{=} \sum_{\vec{s}\hs'} \Bigg[ \sum_{\vec{o}} \belief^{\pi}(\vec{s}\hs' \mid \vec{o}) \belief^{\pi}(\vec{o}) \Bigg] G(\vec{s}\hs') \\
    &\overset{(4)}{=} \sum_{\vec{s}\hs'} \Bigg[ \sum_{\vec{o}} P^{\pi}(\vec{s}\hs') \PVO(\vec{o} \mid \vec{s}\hs')  \Bigg] G(\vec{s}\hs')  \\
    &\overset{(5)}{=} \sum_{\vec{s}\hs'} P^{\pi}(\vec{s}\hs') G(\vec{s}\hs')  \\
    &\overset{(6)}{=} \sum_{\vec{s}} P^{\pi}(\vec{s}) G(\vec{s})  \\
    &\overset{(7)}{=} \val(\pi).
  \end{align*}
  In step (1), we wrote the expectations out in terms of sums.
  In step (2), we reordered them.
  In step (3), we observed that the inner sum over $\vec{s}$ evaluates to the marginal distribution $\belief^{\pi}(\vec{o})$ of the observation sequence $\vec{o}$ in the HMM in Equation~\eqref{eq:obs_value_general_form}.
  In step (4), we used Bayes rule in the inner sum.
  This is possible since $\belief^{\pi}(\vec{s}\hs' \mid \vec{o})$ is the true posterior when $\pi$ is known. 
  In step (5), we pull $P^{\pi}(\vec{s}\hs')$ out and notice that the remaining inner sum evaluates to $1$.
  Step (6) is a relabeling and step (7) the definition of the true policy evaluation function $\val$.
\end{proof}

\subsection{Proof of \Cref{thm:rlhf_deceptive_overjustification}}\label{sec:rlhf_theorem_proof}

We first prove the following lemma.

\begin{lemma}
  \label{lem:deceptive_and_misaligned_appendix_copy}
  Let $\pi$ and $\piref$ be two policies. If $\val(\pi) < \val(\piref)$ and $\val_{\obs}(\pi) > \val_{\obs}(\piref)$, then relative to $\piref$, $\pi$ must exhibit \placeholder, overjustification, or both.
\end{lemma}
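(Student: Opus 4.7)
The plan is to reduce the statement to a single arithmetic identity relating the two value gaps to the two error gaps, and then do a case split.

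First I would record the pointwise identity $G_{\obs}(\vec s) - G(\vec s) = \error^+(\vec s) - \error^-(\vec s)$, which is immediate from \Cref{def:over_and_underestimation}: only one of $\error^+(\vec s)$, $\error^-(\vec s)$ is nonzero, and its sign matches that of $G_{\obs}(\vec s) - G(\vec s)$. Taking expectations over $\vec s \sim P^\pi$ and using the definitions of $\val$, $\val_{\obs}$, $\aerror^+$, $\aerror^-$ yields
\begin{equation*}
  \val_{\obs}(\pi) - \val(\pi) = \aerror^+(\pi) - \aerror^-(\pi),
\end{equation*}
and similarly for $\piref$.

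Subtracting the two identities gives
\begin{equation*}
  \bigl[\val_{\obs}(\pi) - \val_{\obs}(\piref)\bigr] - \bigl[\val(\pi) - \val(\piref)\bigr]
  = \bigl[\aerror^+(\pi) - \aerror^+(\piref)\bigr] - \bigl[\aerror^-(\pi) - \aerror^-(\piref)\bigr].
\end{equation*}
Under the hypotheses $\val_{\obs}(\pi) > \val_{\obs}(\piref)$ and $\val(\pi) < \val(\piref)$, the left-hand side is strictly positive (it is the sum of a positive quantity and the negation of a negative quantity), so the right-hand side is strictly positive as well.

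A positive right-hand side forces at least one of the two summands to have the right sign: either $\aerror^+(\pi) - \aerror^+(\piref) > 0$ or $-(\aerror^-(\pi) - \aerror^-(\piref)) > 0$, i.e.\ $\aerror^-(\pi) < \aerror^-(\piref)$. Combined with the value hypotheses, these are exactly the defining conditions of \placeholder\ (\Cref{def:deceptive-inflation}) and overjustification (\Cref{def:overjustification}) respectively, and so $\pi$ exhibits at least one of the two relative to $\piref$. There is no real obstacle here; the only thing to be careful about is getting the signs right in the case split, since $\error^+$ and $\error^-$ are defined as $\max(0,\cdot)$ rather than as signed parts.
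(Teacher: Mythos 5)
Your proposal is correct and follows essentially the same route as the paper's proof: the pointwise identity $\error^+ - \error^- = G_{\obs} - G$, its expectation $\aerror^+(\pi) - \aerror^-(\pi) = \val_{\obs}(\pi) - \val(\pi)$, subtraction of the two instances, and a sign case split matching \Cref{def:deceptive-inflation} and \Cref{def:overjustification}. No gaps.
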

\begin{proof}
    We start by establishing a quantitative relationship between the average overestimation and underestimation errors $\aerror^+$ and $\aerror^-$ as defined in \Cref{def:over_and_underestimation}, the true policy evaluation function $\val$, and the observation evaluation function $\val_{\obs}$ defined in \Cref{eq:observation_evaluation_function}.
    Define $\Delta: \vec\states \to \R$ by $\Delta(\vec s) = G_{\obs}(\vec s) - G(\vec s)$, where $G_{\obs}$ is as defined in \Cref{eq:obs_return_function}.
    Consider the quantity
    \begin{align*}
        \error^+(\vec s) - \error^-(\vec s) = {\max\big(0, \Delta(\vec s)\big)} - \max\big(0, -\Delta(\vec s)\big).
    \end{align*}
    If $\Delta(\vec s) > 0$, then the first term is $\Delta(\vec s)$ and the second one is 0. If $\Delta(\vec s) < 0$, then the first term is zero and the second one is $\Delta(\vec s)$. If $\Delta(\vec s) = 0$, then both terms are zero. In all cases the right-hand side is equal to $\Delta(\vec s)$. Unpacking the definition of $\Delta$ again, we have that for all $\vec s$,
    \begin{align}
        \error^+(\vec s) - \error^-(\vec s) = G_{\obs}(\vec s) - G(\vec s).
    \end{align}
    For any policy $\pi$, if we take the expectation of both sides of this equation over the on-policy distribution admitted by $\pi$, $P^\pi$, we get
    \begin{align}
        \aerror^+(\pi) - \aerror^-(\pi) = \val_{\obs}(\pi) - \val(\pi).
        \label{eq:relationship_between_policy_measures}
    \end{align}
    We now prove the lemma. Let $\pi$ and $\piref$ be two policies, and assume that $\val(\pi) < \val(\piref)$ and $\val_{\obs}(\pi) \ge \val_{\obs}(\piref)$. Equivalently, we have $\val_{\obs}(\pi) - \val_{\obs}(\piref) \ge 0$ and $\val(\piref) - \val(\pi) > 0$, which we combine to state
    \begin{align}
        \Bigl( \val_{\obs}(\pi) - \val_{\obs}(\piref) \Bigr) + \Bigl( \val(\piref) - \val(\pi) \Bigr) > 0.
        \label{eq:return_differences_positive}
    \end{align}
    Rearranging terms yields
    \begin{align*}
        \Bigl( \val_{\obs}(\pi) - \val(\pi) \Bigr) - \Bigl( \val_{\obs}(\piref) - \val(\piref) \Bigr) > 0.
    \end{align*}
    These two differences inside parentheses are equal to the right-hand side of (\ref{eq:relationship_between_policy_measures}) for $\pi$ and $\piref$, respectively. We substitute the left-hand side of (\ref{eq:relationship_between_policy_measures}) twice to obtain
    \begin{align*}
        \Bigl( \aerror^+(\pi) - \aerror^-(\pi) \Bigr) - \Bigl( \aerror^+(\piref) - \aerror^-(\piref) \Bigr) > 0.
    \end{align*}
    Rearranging terms again yields
    \begin{align}
        \Bigl( \aerror^+(\pi) - \aerror^+(\piref) \Bigr) + \Bigl( \aerror^-(\piref) - \aerror^-(\pi) \Bigr) > 0.
        \label{eq:directional_estimation_error_guarantee}
    \end{align}
    
    If $\aerror^+(\pi) - \aerror^+(\piref) > 0$ then we have $\aerror^+(\pi) > \aerror^+(\piref)$ and, by assumption, $\val_{\obs}(\pi) > \val_{\obs}(\piref)$. By \Cref{def:deceptive-inflation}, this means $\pi$ exhibits \placeholder\ relative to $\piref$.

    If $\aerror^-(\piref) - \aerror^-(\pi) > 0$ then we have $\aerror^-(\pi) < \aerror^-(\piref)$ and, by assumption, $\val(\pi) < \val(\piref)$. By \Cref{def:overjustification}, this means $\pi$ exhibits overjustification relative to $\piref$.

    At least one of the two differences in parentheses in (\ref{eq:directional_estimation_error_guarantee}) must be positive, otherwise their sum would not be positive. Thus $\pi$ must exhibit \placeholder\ relative to $\piref$, overjustification relative to $\piref$, or both.
\end{proof}

We can now combine earlier results to prove \Cref{thm:rlhf_deceptive_overjustification}, repeated here for convenience:

\begin{theorem}
    \label{thm:deceptive-inflation-or-overjustification}
    Assume that $\PO$ is deterministic.
    Let $\pi^*_{\obs}$ be an optimal policy according to a naive application of RLHF under partial observability,
    and let $\pi^*$ be an optimal policy according to the true objective $\val$. If $\pi^*_{\obs}$ is not $\val$-optimal, then relative to $\pi^*$, $\pi^*_{\obs}$ must exhibit \placeholder, overjustification, or both.
\end{theorem}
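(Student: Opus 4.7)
The plan is to reduce the theorem to a purely algebraic identity linking the over/underestimation errors to the gap between $\val_{\obs}$ and $\val$. First, I would establish the pointwise identity
\begin{equation*}
\error^+(\vec{s}) - \error^-(\vec{s}) = G_{\obs}(\vec{s}) - G(\vec{s})
\end{equation*}
by a straightforward three-case analysis on the sign of $\Delta(\vec{s}) \coloneqq G_{\obs}(\vec{s}) - G(\vec{s})$: in each case exactly one of the two $\max$ terms vanishes, and the difference collapses to $\Delta(\vec{s})$. Taking expectations under $P^{\pi}$ and using linearity gives the policy-level identity
\begin{equation*}
\aerror^+(\pi) - \aerror^-(\pi) = \val_{\obs}(\pi) - \val(\pi),
\end{equation*}
which is the workhorse of the argument.

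Next, I would extract strict inequalities from the optimality hypotheses. Since $\pi^* \in \Pi^*$ and $\pi^*_{\obs} \notin \Pi^*$, we have $\val(\pi^*) > \val(\pi^*_{\obs})$; symmetrically, since $\pi^*_{\obs} \in \Pi^*_{\obs}$ and $\pi^* \notin \Pi^*_{\obs}$, we have $\val_{\obs}(\pi^*_{\obs}) > \val_{\obs}(\pi^*)$. Adding these two strict inequalities yields
\begin{equation*}
\bigl(\val_{\obs}(\pi^*_{\obs}) - \val(\pi^*_{\obs})\bigr) - \bigl(\val_{\obs}(\pi^*) - \val(\pi^*)\bigr) > 0,
\end{equation*}
and applying the identity from the first step to each parenthesized quantity rewrites this as
\begin{equation*}
\bigl(\aerror^+(\pi^*_{\obs}) - \aerror^+(\pi^*)\bigr) + \bigl(\aerror^-(\pi^*) - \aerror^-(\pi^*_{\obs})\bigr) > 0.
\end{equation*}

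Finally, I would observe that at least one of the two bracketed summands must be strictly positive. If $\aerror^+(\pi^*_{\obs}) > \aerror^+(\pi^*)$, then together with $\val_{\obs}(\pi^*_{\obs}) > \val_{\obs}(\pi^*)$ this matches Definition 4.6 and $\pi^*_{\obs}$ exhibits deceptive inflation relative to $\pi^*$. If instead $\aerror^-(\pi^*_{\obs}) < \aerror^-(\pi^*)$, then together with $\val(\pi^*_{\obs}) < \val(\pi^*)$ this matches Definition 4.7 and $\pi^*_{\obs}$ exhibits overjustification relative to $\pi^*$. Since the two cases need not be exclusive, both may hold simultaneously, and the theorem follows. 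The hypothesis that $\PO$ is deterministic is not actually needed for the algebraic identity itself; it enters only through Proposition 4.2, which is what justifies identifying $\pi^*_{\obs}$ as a $\val_{\obs}$-maximizer in the first place. The only real subtlety to watch is making sure both inequalities in the optimality step are strict, so that the final disjunction is also strict and the definitional requirements are met.
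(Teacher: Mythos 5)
Your proof is correct and takes essentially the same route as the paper's: the pointwise identity $\error^+(\vec{s}) - \error^-(\vec{s}) = G_{\obs}(\vec{s}) - G(\vec{s})$, its on-policy expectation $\aerror^+(\pi) - \aerror^-(\pi) = \val_{\obs}(\pi) - \val(\pi)$, the summed inequality, and the final case split reproduce the paper's key lemma and its application via \Cref{pro:incentives_of_rlhf}. The one subtlety you rightly flag --- that the deceptive-inflation branch needs the \emph{strict} inequality $\val_{\obs}(\pi^*_{\obs}) > \val_{\obs}(\pi^*)$ --- is obtained in your argument from the hypothesis $\pi^* \notin \Pi^*_{\obs}$ appearing in the main-text statement (\Cref{thm:rlhf_deceptive_overjustification}); the appendix restatement above only yields $\val_{\obs}(\pi^*_{\obs}) \geq \val_{\obs}(\pi^*)$, and the paper's own proof is in fact looser than yours on exactly this point.
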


\begin{proof}
    Because $\PO$ is deterministic, $\pi^*_{\obs}$ must be optimal with respect to $\val_{\obs}$ by \Cref{pro:incentives_of_rlhf} (proved in \Cref{sec:bridge}). Thus $\val_{\obs}(\pi^*_{\obs}) \ge \val_{\obs}(\pi^*)$. Since $\pi^*$ is $\val$-optimal and $\pi^*_{\obs}$ is not, $\val(\pi^*) < \val(\pi^*_{\obs})$.
    By \Cref{lem:deceptive_and_misaligned_appendix_copy},
    relative to $\pi^*$, $\pi^*_{\obs}$ must exhibit \placeholder, overjustification, or both.
\end{proof}

\subsection{Further Examples Supplementing Section~\ref{sec:concrete_failures}}\label{sec:no_ambiguity_not_enough}

In this section, we present further mathematical examples supplementing those in Section~\ref{sec:concrete_failures}.
We found many of them before finding the examples we discuss in the main paper, and show the same and additional conceptual features with somewhat less polish.
We again assume that $\PVO$ is deterministic. 

\begin{example}\label{ex:human_model_independence}
  In the main paper, we have assumed a model where the human obeys Eq.~\eqref{eq:deterministic_choice_probabilities_main_paper} and showed that a naive application of RLHF can lead to suboptimal policies, and the specific failure modes of deceptive inflation and overjustification.
What if the human makes the choices in a different way?
Specifically, assume that all we know is that $P^{R}(\vec{o} \succ \vec{o}\hs ') + P^{R}(\vec{o} \hs' \succ \vec{o}) = 1$. 
Can the human generally choose these choice probabilities in such a way that RLHF is incentivized to infer a reward function whose optimal policies are also optimal for $R$?
The answer is no.

Take the following example:
\begin{equation*}
  \begin{tikzcd}
    & s \ar[dl] \ar[d] \ar[dr] & \\
    a & b & c
  \end{tikzcd}
\end{equation*}
In this example, there is a fixed start state $s$ and three actions $a, b, c$ that also serve as the final states.
The time horizon is $T = 1$, so the only state sequences are $sa, sb, sc$.
Assume $\Transition(a \mid s, a) = 1$, $\Transition(b \mid s, b) = 1$, $\Transition(c \mid s, c) = 1 - \epsilon$, $\Transition(a \mid s, c) = \epsilon$, i.e., selecting action $c$ sometimes leads to state $a$.
Also, assume $a = O(a) \neq O(b) = O(c) \eqqcolon o$ and $R(a) = R(b) < R(c)$.

Since $b$ and $c$ have the same observation $o$, the human choice probabilities do not make a difference between them, and so RLHF is incentivized to infer a reward function $\tilde{R}$ with $\tilde{R}(b) = \tilde{R}(c) \eqqcolon \tilde{R}(o)$.
If $\tilde{R}(o) > \tilde{R}(a)$, then the policy optimal under $\tilde{R}$ will produce action $b$ since this deterministically leads to observation $o$, whereas $c$ does not.
If $\tilde{R}(o) < \tilde{R}(a)$, then the policy optimal under $\tilde{R}$ will produce action $a$.
In both cases, the resulting policy is suboptimal compared to $\pi^*$, which deterministically chooses action $c$.
\end{example}

In the coming examples, it will also be useful to look at the \emph{misleadingness} of state sequences:

\begin{definition}
  [Misleadingness]
  \label{def:misleadingness}
  Let $\vec{s} \in \vec{\states}$ be a state sequence. 
  Then its \emph{misleadingness} is defined by
  \begin{equation*}
    \M(\vec{s}) \coloneqq G_{\obs}(\vec{s}) - G(\vec{s}) = \eval_{\vec{s}\hs ' \sim \belief(\vec{s}\hs ' \mid \vec{O}(\vec{s}))}\big[ G(\vec{s} \hs ') - G(s) \big].
  \end{equation*}
  We call a state sequence \emph{positively misleading} if $M(\vec{s}) > 0$, which means the sequence appears better than it is, and \emph{negatively misleading} if $\M(\vec{s}) < 0$.
  The \emph{misleadingness vector} is given by $\M \in \R^{\vec{\states}}$.
\end{definition}
Note that the misleadingness is related to $\error^+$ and $\error^-$, as defined in Definition~\ref{def:over_and_underestimation}:
If $\M(\vec{s}) > 0$ then $\M(\vec{s}) = \error^+(\vec{s})$, and if $\M(\vec{s}) < 0$ then $\M(\vec{s}) = - \error^{-}(\vec{s})$.

\begin{example}\label{ex:example_modeling_somewhat_crucial}
  In this example, we assume the human is a Bayesian reasoner as in Section~\ref{sec:human_belief}.
  Consider the MDP that is suggestively depicted as follows:
  \begin{equation*}
    \begin{tikzcd}
      a \ar[rr] \ar[dr] & & b \ar[dl] \\
      & c \ar[loop below] & & 
    \end{tikzcd}
  \end{equation*}
  The MDP has states $\states = \{a, b, c\}$ and actions $\actions = \{b, c\}$.
  The transition kernel is given by $\Transition(c \mid a, c) = 1$ and $\Transition(b \mid a, b) = 1$, meaning that the action determines whether to transition from $a$ to $b$ or $c$.
  All other transitions are deterministic and do not depend on the action, as depicted. 
  We assume an initial state distribution $P_0$ over states with probabilities $p_a = P_0(a), p_b = P_0(b), p_c = P_0(c)$.
  The true reward function $R \in \R^{\{a, b, c\}}$ and discount factor $\gamma \in [0, 1)$ are, for now, kept arbitrary.
  The time horizon is $T = 2$, meaning we have four possible state sequences $acc$, $abc$, $bcc$, $ccc$.

  Furthermore, assume that $o \coloneqq O(a) = O(b) \neq O(c) = c$, i.e., $c$ is observed and $a$ and $b$ are ambiguous.
  
  Finally, assume that the human has a policy prior $\belief(\lambda)$, where $\lambda = \pi_{\lambda}(c \mid a)$ is the likelihood that the policy chooses action $c$ when in state $a$, which is a parameter that determines the entire policy.

  We claim the following: 
  \begin{enumerate}
    \item If $p_b \neq \gamma \cdot \eval_{\lambda \sim \belief(\lambda)}[\lambda] \cdot p_a$, then $\ker \bp \cap \im \FGamma = \{0\}$, so there is no return function ambiguity under appropriately modeled partially observable RLHF, see Corollary~\ref{cor:abstract_formulation_without_O_appendix}.
    \item There are true reward functions $R$ for which optimizing $\val_{\obs}$ leads to a suboptimal policy according to the true policy evaluation function $\val$, a case of misalignment. 
      Thus, a naive application of RLHF under partial observability fails, see Section~\ref{sec:observation_return_function}.
    \item The failure modes are related to hiding negative information (deception) and purposefully revealing information while incuring a loss (overjustifying behavior). 
  \end{enumerate}
\end{example}

\begin{proof}
  Write $p \coloneqq \belief(bcc \mid occ)$, the human's posterior probability of state sequence $bcc$ for observation sequence $occ$.
  We have $1 - p = \belief(acc \mid occ)$.

  Consider the linear operators $\FGamma: \R^{\{a, b, c\}} \to \R^{\{abc, bcc, ccc, acc\}}$ and $\bp: \R^{\{abc, bcc, ccc, acc\}} \to \R^{\{ooc, occ, ccc\}}$ defined in the main paper.
  When ordering the states, state sequences, and observation sequences as we just wrote down, we obtain
  \begin{equation*}
    \FGamma =
    \begin{pmatrix}
      1 & \gamma & \gamma^2 \\
      0 & 1 & \gamma + \gamma^2 \\
      0 & 0 & 1 + \gamma + \gamma^2 \\
      1 & 0 & \gamma + \gamma^2
    \end{pmatrix},
    \quad
    \bp = 
    \begin{pmatrix}
      1 & 0 & 0 & 0 \\
      0 & p & 0 & 1 - p \\
      0 & 0 & 1 & 0
    \end{pmatrix},
    \quad
    \bp \circ \FGamma = 
    \begin{pmatrix}
      1 & \gamma & \gamma^2 \\
      1 - p & p & \gamma + \gamma^2 \\
      0 & 0 & 1 + \gamma + \gamma^2
    \end{pmatrix}.
  \end{equation*}
  By Corollary~\ref{cor:abstract_formulation_without_O_appendix}, if $\bp \circ \FGamma$ is injective, then there is no reward function ambiguity.
  Clearly, this is the case if and only if $p \neq \gamma \cdot (1 - p)$.
  From Bayes rule, we have
  \begin{equation*}
    p = \frac{\belief(bcc)}{\belief(acc) + \belief(bcc)}, \quad 1 - p = \frac{\belief(acc)}{\belief(acc) + \belief(bcc)}.
  \end{equation*}
  So the condition for injectivity holds if and only if
  \begin{equation*}
    \belief(bcc) \neq \gamma \cdot \belief(acc).
  \end{equation*}
  Now, notice
  \begin{equation*}
    \belief(bcc) = \int_{\lambda} \belief(\lambda) \cdot \belief(bcc \mid \lambda) d \lambda = \int_{\lambda} \belief(\lambda) \cdot p_{b} d \lambda = p_b
  \end{equation*}
  and
  \begin{equation*}
    \belief(acc) = \int_{\lambda} \belief(\lambda) \belief(acc \mid \lambda) d \lambda = \int_{\lambda} \belief(\lambda) \cdot p_a \cdot \lambda d \lambda = p_a \cdot \eval_{\lambda \sim \belief(\lambda)} \big[ \lambda \big].
  \end{equation*}
  This shows the first result.

  For the second statement, we explicitly compute $\val_{\obs}$ up to an affine transformation, which does not change the policy ordering.
  Let $R$ be the true reward function, $G = \FGamma(R)$ the corresponding return function, and $\bp(G)$ the resulting return function at the level of observations.
  For simplicity, assume $R(c) = 0$, which can always be achieved by adding a constant.
  We have:
  \begin{align*}
    \val_{\obs}(\lambda) & = \eval_{\vec{s} \sim P^{\lambda}(\vec{s})} \Big[ \bp(G)\big(\vec{O}(\vec{s})\big) \Big] \\
    & = P^{\lambda}(abc) \cdot \bp(G)(ooc) + P^{\lambda}(bcc) \cdot \bp(G)(occ) + P^{\lambda}(ccc) \cdot \bp(G)(ccc) + P^{\lambda}(acc) \cdot \bp(G)(occ) \\
    & = p_a \cdot (1 - \lambda) \cdot G(abc) + p_b \cdot \bp(G)(occ) + p_c \cdot G(ccc) + p_a \cdot \lambda \cdot \bp(G)(occ) \\
    & \propto \lambda \cdot \Big[ \bp(G)(occ) - G(abc) \Big].
  \end{align*}
  We have
  \begin{equation*}
    G(abc) = R(a) + \gamma R(b), \quad \bp(G)(occ) = (1 - p) \cdot G(acc) + p \cdot G(bcc) = (1 - p) \cdot R(a) + p \cdot R(b).
  \end{equation*}
  Thus, the condition $\bp(G)(occ) > G(abc)$ is equivalent to
  \begin{equation*}
    R(a) < \frac{p - \gamma}{p} \cdot R(b).
  \end{equation*}
  Thus, we have
  \begin{equation*}
    \argmax_{\lambda \in [0, 1]} \val_{\obs}(\lambda) = 
    \begin{cases}
      1, \text{ if } R(a) < \frac{p - \gamma}{p} \cdot R(b), \\
      0, \text{ else. }
    \end{cases}
  \end{equation*}
  Now consider the case $R(b) > 0$.
  In this case, $\lambda = 0$ gives rise to the optimal policy according to $G$ since going to $b$ gives extra reward that one misses when going to $c$ directly.
  However, when $R(a) \ll 0$, then $\val_{\obs}$ selects for $\lambda = 1$.
  Intuitively, the policy tries to ``hide that the episode started in $a$'' by going directly to $c$, which leads to ambiguity between $acc$ and $bcc$. 
  This is a case of deceptive inflation as in Theorem~\ref{thm:rlhf_deceptive_overjustification}.

  Now, consider the case $R(b) < 0$.
  In this case, $\lambda = 1$ gives rise to the optimal policy according to $G$.
  However, when $R(a) \gg 0$, then $\val_{\obs}$ selects for $\lambda = 0$.
  Intuitively, the policy tries to ``reveal that the episode started with $a$'' by going to $b$, which is positive information to the human, but negative from the perspective of optimizing $G$.
  As in Theorem~\ref{thm:rlhf_deceptive_overjustification}, we see that this is a case of overjustification.
\end{proof}

\begin{example}\label{ex:all_quadrants}
  In this example, we consider an MDP that's similar to a multi-armed bandit with four states/actions $a, b, c, d$ and observation kernel $O(a) = O(b) \neq O(c) = O(d)$.
  Formally, we can imagine that it is given by the MDP
  \begin{equation*}
    \begin{tikzcd}
      & & & s \ar[dlll] \ar[dl] \ar[dr] \ar[drrr] \\
      a & & b & & c & & d
    \end{tikzcd}
  \end{equation*}
  with $R(s) = 0$ and a time-horizon of $T = 1$.
  In this example, we reveal that misleadingness and non-optimality (according to the true reward $R$, or $\val$) are in principle orthogonal concepts.
  We consider the following four example cases. 
  In each one, we vary some environment parameters and then determine $a^*_{\obs}$, the action that results from optimizing $\val_{\obs}$ (corresponding to a naive application of RLHF under partial observability, see Section~\ref{sec:observation_return_function}), its misleadingness $\M(a^*_{\obs})$ (see Definition~\ref{def:misleadingness}), and the action $a^*$ that would result from optimizing $\val$. 
  If $a^*_{\obs} = a^*$, then $\val_{\obs}$ selects for the optimal action.
  For simplicity, we can imagine that the human has a uniform prior over what action results eventually (out of the action taken and potentially a deviation defined by $\epsilon$, see below) is taken before making an observation, i.e. $\belief(a) = \belief(b) = \belief(c) = \belief(d) = \frac{1}{4}$.
  \begin{enumerate}[(a)]
    \item Assume $R(a) > R(c) > R(d) \gg R(b)$.
      Also assume that action $d$ leads with probability $\epsilon > 0$ to state $b$, whereas all other actions lead deterministically to the specified state. 
      Then $a^*_{\obs} = c$, $\M(c) < 0$ and $a^* = a$.
    \item Assume $R(d) > R(a) > R(c) \gg R(b)$.
      Again, assume there is a small probability $\epsilon > 0$ that action $d$ leads to state $b$.
      Then $a^*_{\obs} = c$, $\M(c) > 0$, and $a^* = d$ or $a^* = a$, depending on the size of $\epsilon$.
    \item Assume $R(a) > R(b) > R(c) > R(d)$. 
      Additionally, assume that there is a \emph{large} probability $\epsilon > 0$ that action $a$ leads to state $d$, whereas all other actions lead to what's specified.
      If $\epsilon$ is large enough, then $a^* = b$.
      Additionally, we have $a^*_{\obs} = b$ and $\M(b) > 0$.
    \item Assume $R(a) > R(b) > R(c) > R(d)$.
      Also, assume some probability $\epsilon > 0$ that action $b$ leads to state $d$, whereas all other actions lead deterministically to what's specified.
      Then $a^*_{\obs} = a$, $\M(a) < 0$, and $a^* = a$.
  \end{enumerate}
  Overall, we notice:
  \begin{itemize}
    \item Example (a) shows a high regret and negative misleadingness of $a^*_{\obs} = c$.
      The action is better then it seems, but action $a$ would be better still but cannot be selected because it can be confused with the very bad action $b$.
    \item Example (b) shows a high regret and high misleadingness of $a^*_{\obs} = c$.
      The action is worse than it seems and also not optimal.
    \item Example (c) shows zero regret and high misleadingness of $a^*_{\obs} = b$.
      The action is worse than it seems because it can be confused with $a$, but it is still the optimal action because $a$ can turn into $d$.
    \item Example (d) shows zero regret negative misleadingness of $a^*_{\obs} = a$.
      The action is chosen even though it seems worse than it is, and is also optimal.
  \end{itemize}
  Thus, we showed all combinations of regret and misleadingness of the action optimized for under $\val_{\obs}$.

  We can also notice the following:
  Examples (a) and (b) only differ in the placement of $R(d)$.
  In particular, the \emph{reason} that $a^*_{\obs} = c$ is structurally the same in both, but the misleadingness changes. 
  This indicates that misleadingness is not \emph{on its own} contributing to what $\val_{\obs}$ optimizes for. 
\end{example}

The following is the smallest example we found with the following properties:
\begin{itemize}
  \item There is a unique start state and terminal state.
  \item A naive application of RLHF fails in a way that shows deception and overjustification.
  \item Modeling partial observability resolves the problems. 
\end{itemize}

\begin{example}
  \label{ex:has_all_properties} 
  Consider the following graph:
  \begin{equation*}
    \begin{tikzcd}
      & & A \ar[rrd] \ar[rrrrd, bend left = 25] & & \\
      S \ar[rru] \ar[rrd] \ar[rrrr] \ar[rrrrrr, bend left = 50] & & & & C \ar[rr] & & T \\
      & & B \ar[rru] \ar[rrrru, bend right = 25] & & 
    \end{tikzcd}
  \end{equation*}
  This depicts an MDP with start state $S$, terminal state $T$ and possible state sequences $STTT, SATT, SACT, SCTT, SBCT, SBTT$ and no discount, i.e. $\gamma = 1$.
  Assume that $S, B, C$ are observed, i.e. $O(S) = S$, $O(B) = B$, $O(C) = C$,
  and that $A$ and $T$ are ambiguous: $O(A) = O(T) = X$.
  Then there are five observation sequences $SXXX, SXCX, SCXX, SBCX, SBXX$.
  Assume that the human can identify all observation sequences except $SXXX$, with belief $b = \belief(STTT \mid SXXX)$ and $1 - b = \belief(SATT \mid SXXX)$.

  Then the return function is identifiable under these conditions when the human's belief is correctly modeled.
  However, for some choices of the true reward function $R$ and transition dynamics of this MDP, we can obtain deceptive or overjustified behavior for a naive application of RLHF.
\end{example}

\begin{proof} 
  We apply Corollary~\ref{cor:abstract_formulation_without_O_appendix}.
  We order states, state sequences, and observation sequences as follows:
  \begin{align*}
    \states &= S, A, B, C, T, \\
    \vec{\states} &= STTT, SATT, SACT, SCTT, SBCT, SBTT, \\
    \vec{\Omega} &= SXXX, SXCX, SCXX, SBCX, SBXX.
  \end{align*}
  As can easily be verified, with this ordering the matrices $\bp \in \R^{\vec{\Omega} \times \vec{\states}}$ and $\FGamma \in \R^{\vec{\states} \times \states}$ are given by:
  \begin{equation*}
    \bp = 
    \begin{pmatrix}
      b & 1 - b & 0 & 0 & 0 & 0 \\
      0 & 0     & 1 & 0 & 0 & 0 \\
      0 & 0     & 0 & 1 & 0 & 0 \\
      0 & 0     & 0 & 0 & 1 & 0 \\
      0 & 0     & 0 & 0 & 0 & 1
    \end{pmatrix},
    \quad
    \FGamma = 
    \begin{pmatrix}
      1 & 0 & 0 & 0 & 3 \\
      1 & 1 & 0 & 0 & 2 \\
      1 & 1 & 0 & 1 & 1 \\
      1 & 0 & 0 & 1 & 2 \\
      1 & 0 & 1 & 1 & 1 \\
      1 & 0 & 1 & 0 & 2 
    \end{pmatrix}.
  \end{equation*}
  To show identifiability, we need to show that $\ker \bp \cap \im \FGamma = \{0\}$.
  Clearly, the kernel of $\bp$ is given by all return functions in $\R^{\vec{\states}}$ that are multiples of $G' = (b - 1, b, 0, 0, 0, 0)$.
  Assume $G' \in \im \FGamma$, meaning there is a reward function $R' \in \R^{\vec{\states}}$ with $\FGamma \cdot R' = G'$.
  We need to deduce from this a contradiction.
  The assumption means we obtain the following equations:
  \begin{align*}
    & (i) \ \  R'(S) + 3R'(T) = b - 1, \\
    & (ii) \ \ R'(S) + R'(A) + 2R'(T) = b, \\
    & (iii) \ \ R'(S) + R'(A) + R'(C) + R'(T) = 0, \\
    & (iv) \ \ R'(S) + R'(C) + 2R'(T) = 0, \\
    & (v) \ \ R'(S) + R'(B) + R'(C) + R'(T) = 0\\
    & (vi) \ \ R'(S) + R'(B) + 2R'(T) = 0
  \end{align*}
  (iii) and (v) together imply $R'(A) = R'(B)$;
  (iv) and (vi) together imply $R'(B) = R'(C)$;
  (v) and (vi) together imply $R'(C) = R'(T)$;
  so together, we have $R'(A) = R'(T)$.
  Thus, replacing $R'(A)$ in (ii) by $R'(T)$ and comparing (i) and (ii), we obtain $b - 1 = b$, a contradiction.
  Overall, this shows $\ker \bp \cap \im \FGamma = \{0\}$, and thus identifiability of the return function by Corollary~\ref{cor:abstract_formulation_without_O_appendix}.

  Now we investigate the case of unmodeled partial observability.
  
  For demonstrating overjustification, assume deterministic transition dynamics in which every arrow in the diagram can be chosen by the policy.
  Also, assume $R(A) \ll 0$, $R(T) > 0$, $R(S) = 0$, $R(B) = 0$, and $R(C) = 0$.
  Then the optimal policy chooses the state sequence $STTT$.
  However, this trajectory has low observation value since $G_{\obs}(STTT) = (\bp \cdot G)(SXXX) = b G(STTT) + (1 - b) G(SATT)$, which is low since $R(A) \ll 0$.
  $\val_{\obs}$ then selects for the suboptimal policies choosing $SBTT$ or $SCTT$, which is overjustified behavior that makes sure that the human does not think state $A$ was accessed.
  
  For demonstrating deception, assume that $R(A) \gg 0$, $R(T) < 0$, $R(S) = R(B) = R(C) = 0$ and that the transition dynamics are such that when the policy \emph{attempts} to transition from $S$ to $A$, it will sometimes transition to $B$, with all other transitions deterministic.
  In this case, the optimal behavior attempts to enter state $A$ since this has very high value.
  $\val_{\obs}$, however, will select for the policy that chooses $STTT$.
  This is deceptive behavior.
\end{proof}

\newpage

\section{NeurIPS Paper Checklist}

\begin{enumerate}

\item {\bf Claims}
    \item[] Question: Do the main claims made in the abstract and introduction accurately reflect the paper's contributions and scope?
    \item[] Answer: \answerYes{} % Replace by \answerYes{}, \answerNo{}, or \answerNA{}.
    \item[] Justification: This can be verified by reading the paper.
    \item[] Guidelines:
    \begin{itemize}
        \item The answer NA means that the abstract and introduction do not include the claims made in the paper.
        \item The abstract and/or introduction should clearly state the claims made, including the contributions made in the paper and important assumptions and limitations. A No or NA answer to this question will not be perceived well by the reviewers. 
        \item The claims made should match theoretical and experimental results, and reflect how much the results can be expected to generalize to other settings. 
        \item It is fine to include aspirational goals as motivation as long as it is clear that these goals are not attained by the paper. 
    \end{itemize}

\item {\bf Limitations}
    \item[] Question: Does the paper discuss the limitations of the work performed by the authors?
    \item[] Answer: \answerYes{} % Replace by \answerYes{}, \answerNo{}, or \answerNA{}.
    \item[] Justification: In Section~\ref{sec:conclusion_and_future_work} we have a paragraph on limitations.
    \item[] Guidelines:
    \begin{itemize}
        \item The answer NA means that the paper has no limitation while the answer No means that the paper has limitations, but those are not discussed in the paper. 
        \item The authors are encouraged to create a separate "Limitations" section in their paper.
        \item The paper should point out any strong assumptions and how robust the results are to violations of these assumptions (e.g., independence assumptions, noiseless settings, model well-specification, asymptotic approximations only holding locally). The authors should reflect on how these assumptions might be violated in practice and what the implications would be.
        \item The authors should reflect on the scope of the claims made, e.g., if the approach was only tested on a few datasets or with a few runs. In general, empirical results often depend on implicit assumptions, which should be articulated.
        \item The authors should reflect on the factors that influence the performance of the approach. For example, a facial recognition algorithm may perform poorly when image resolution is low or images are taken in low lighting. Or a speech-to-text system might not be used reliably to provide closed captions for online lectures because it fails to handle technical jargon.
        \item The authors should discuss the computational efficiency of the proposed algorithms and how they scale with dataset size.
        \item If applicable, the authors should discuss possible limitations of their approach to address problems of privacy and fairness.
        \item While the authors might fear that complete honesty about limitations might be used by reviewers as grounds for rejection, a worse outcome might be that reviewers discover limitations that aren't acknowledged in the paper. The authors should use their best judgment and recognize that individual actions in favor of transparency play an important role in developing norms that preserve the integrity of the community. Reviewers will be specifically instructed to not penalize honesty concerning limitations.
    \end{itemize}

\item {\bf Theory Assumptions and Proofs}
  \leon{This deserves a closer look}
    \item[] Question: For each theoretical result, does the paper provide the full set of assumptions and a complete (and correct) proof?
    \item[] Answer: \answerYes{} % Replace by \answerYes{}, \answerNo{}, or \answerNA{}.
    \item[] Justification: All theorems come with a full set of assumptions, with full proofs in the appendix linked. Sometimes, ``background assumptions'', like the fact that we study an underlying MDP with an additional observation kernel $\PO(\vec{o} \mid \vec{s})$, or that the human comes with a belief kernel $\belief(\vec{s} \mid \vec{o})$, are omitted in the theorem statements since they apply throughout to the whole paper. 
    \item[] Guidelines:
    \begin{itemize}
        \item The answer NA means that the paper does not include theoretical results. 
        \item All the theorems, formulas, and proofs in the paper should be numbered and cross-referenced.
        \item All assumptions should be clearly stated or referenced in the statement of any theorems.
        \item The proofs can either appear in the main paper or the supplemental material, but if they appear in the supplemental material, the authors are encouraged to provide a short proof sketch to provide intuition. 
        \item Inversely, any informal proof provided in the core of the paper should be complemented by formal proofs provided in appendix or supplemental material.
        \item Theorems and Lemmas that the proof relies upon should be properly referenced. 
    \end{itemize}

    \item {\bf Experimental Result Reproducibility}
    \item[] Question: Does the paper fully disclose all the information needed to reproduce the main experimental results of the paper to the extent that it affects the main claims and/or conclusions of the paper (regardless of whether the code and data are provided or not)?
    \item[] Answer: \answerNA{} % Replace by \answerYes{}, \answerNo{}, or \answerNA{}.
    \item[] Justification: The paper does not include experiments.
    \item[] Guidelines:
    \begin{itemize}
        \item The answer NA means that the paper does not include experiments.
        \item If the paper includes experiments, a No answer to this question will not be perceived well by the reviewers: Making the paper reproducible is important, regardless of whether the code and data are provided or not.
        \item If the contribution is a dataset and/or model, the authors should describe the steps taken to make their results reproducible or verifiable. 
        \item Depending on the contribution, reproducibility can be accomplished in various ways. For example, if the contribution is a novel architecture, describing the architecture fully might suffice, or if the contribution is a specific model and empirical evaluation, it may be necessary to either make it possible for others to replicate the model with the same dataset, or provide access to the model. In general. releasing code and data is often one good way to accomplish this, but reproducibility can also be provided via detailed instructions for how to replicate the results, access to a hosted model (e.g., in the case of a large language model), releasing of a model checkpoint, or other means that are appropriate to the research performed.
        \item While NeurIPS does not require releasing code, the conference does require all submissions to provide some reasonable avenue for reproducibility, which may depend on the nature of the contribution. For example
        \begin{enumerate}
            \item If the contribution is primarily a new algorithm, the paper should make it clear how to reproduce that algorithm.
            \item If the contribution is primarily a new model architecture, the paper should describe the architecture clearly and fully.
            \item If the contribution is a new model (e.g., a large language model), then there should either be a way to access this model for reproducing the results or a way to reproduce the model (e.g., with an open-source dataset or instructions for how to construct the dataset).
            \item We recognize that reproducibility may be tricky in some cases, in which case authors are welcome to describe the particular way they provide for reproducibility. In the case of closed-source models, it may be that access to the model is limited in some way (e.g., to registered users), but it should be possible for other researchers to have some path to reproducing or verifying the results.
        \end{enumerate}
    \end{itemize}

\item {\bf Open access to data and code}
    \item[] Question: Does the paper provide open access to the data and code, with sufficient instructions to faithfully reproduce the main experimental results, as described in supplemental material?
    \item[] Answer: \answerNA{} % Replace by \answerYes{}, \answerNo{}, or \answerNA{}.
    \item[] Justification: The paper does not include experiments requiring code.
    \item[] Guidelines:
    \begin{itemize}
        \item The answer NA means that paper does not include experiments requiring code.
        \item Please see the NeurIPS code and data submission guidelines (\url{https://nips.cc/public/guides/CodeSubmissionPolicy}) for more details.
        \item While we encourage the release of code and data, we understand that this might not be possible, so “No” is an acceptable answer. Papers cannot be rejected simply for not including code, unless this is central to the contribution (e.g., for a new open-source benchmark).
        \item The instructions should contain the exact command and environment needed to run to reproduce the results. See the NeurIPS code and data submission guidelines (\url{https://nips.cc/public/guides/CodeSubmissionPolicy}) for more details.
        \item The authors should provide instructions on data access and preparation, including how to access the raw data, preprocessed data, intermediate data, and generated data, etc.
        \item The authors should provide scripts to reproduce all experimental results for the new proposed method and baselines. If only a subset of experiments are reproducible, they should state which ones are omitted from the script and why.
        \item At submission time, to preserve anonymity, the authors should release anonymized versions (if applicable).
        \item Providing as much information as possible in supplemental material (appended to the paper) is recommended, but including URLs to data and code is permitted.
    \end{itemize}

\item {\bf Experimental Setting/Details}
    \item[] Question: Does the paper specify all the training and test details (e.g., data splits, hyperparameters, how they were chosen, type of optimizer, etc.) necessary to understand the results?
    \item[] Answer: \answerNA{} % Replace by \answerYes{}, \answerNo{}, or \answerNA{}.
    \item[] Justification: The paper does not include experiments.
    \item[] Guidelines:
    \begin{itemize}
        \item The answer NA means that the paper does not include experiments.
        \item The experimental setting should be presented in the core of the paper to a level of detail that is necessary to appreciate the results and make sense of them.
        \item The full details can be provided either with the code, in appendix, or as supplemental material.
    \end{itemize}

\item {\bf Experiment Statistical Significance}
    \item[] Question: Does the paper report error bars suitably and correctly defined or other appropriate information about the statistical significance of the experiments?
    \item[] Answer: \answerNA{} % Replace by \answerYes{}, \answerNo{}, or \answerNA{}.
    \item[] Justification: The paper does not include experiments.
    \item[] Guidelines:
    \begin{itemize}
        \item The answer NA means that the paper does not include experiments.
        \item The authors should answer "Yes" if the results are accompanied by error bars, confidence intervals, or statistical significance tests, at least for the experiments that support the main claims of the paper.
        \item The factors of variability that the error bars are capturing should be clearly stated (for example, train/test split, initialization, random drawing of some parameter, or overall run with given experimental conditions).
        \item The method for calculating the error bars should be explained (closed form formula, call to a library function, bootstrap, etc.)
        \item The assumptions made should be given (e.g., Normally distributed errors).
        \item It should be clear whether the error bar is the standard deviation or the standard error of the mean.
        \item It is OK to report 1-sigma error bars, but one should state it. The authors should preferably report a 2-sigma error bar than state that they have a 96\% CI, if the hypothesis of Normality of errors is not verified.
        \item For asymmetric distributions, the authors should be careful not to show in tables or figures symmetric error bars that would yield results that are out of range (e.g. negative error rates).
        \item If error bars are reported in tables or plots, The authors should explain in the text how they were calculated and reference the corresponding figures or tables in the text.
    \end{itemize}

\item {\bf Experiments Compute Resources}
    \item[] Question: For each experiment, does the paper provide sufficient information on the computer resources (type of compute workers, memory, time of execution) needed to reproduce the experiments?
    \item[] Answer: \answerNA{} % Replace by \answerYes{}, \answerNo{}, or \answerNA{}.
    \item[] Justification: The paper does not include experiments.
    \item[] Guidelines:
    \begin{itemize}
        \item The answer NA means that the paper does not include experiments.
        \item The paper should indicate the type of compute workers CPU or GPU, internal cluster, or cloud provider, including relevant memory and storage.
        \item The paper should provide the amount of compute required for each of the individual experimental runs as well as estimate the total compute. 
        \item The paper should disclose whether the full research project required more compute than the experiments reported in the paper (e.g., preliminary or failed experiments that didn't make it into the paper). 
    \end{itemize}
    
\item {\bf Code Of Ethics}
    \item[] Question: Does the research conducted in the paper conform, in every respect, with the NeurIPS Code of Ethics \url{https://neurips.cc/public/EthicsGuidelines}?
    \item[] Answer: \answerYes{} % Replace by \answerYes{}, \answerNo{}, or \answerNA{}.
    \item[] Justification: This research does not involve human subjects, does not make use of data, and does not propose a practical method that could be misused or have a negative impact. As such, the paper does not give rise to any ethical concerns.
    \item[] Guidelines:
    \begin{itemize}
        \item The answer NA means that the authors have not reviewed the NeurIPS Code of Ethics.
        \item If the authors answer No, they should explain the special circumstances that require a deviation from the Code of Ethics.
        \item The authors should make sure to preserve anonymity (e.g., if there is a special consideration due to laws or regulations in their jurisdiction).
    \end{itemize}

\item {\bf Broader Impacts}
    \item[] Question: Does the paper discuss both potential positive societal impacts and negative societal impacts of the work performed?
    \item[] Answer: \answerYes{} % Replace by \answerYes{}, \answerNo{}, or \answerNA{}.
    \item[] Justification: The last paragraph of the main paper is an impact statement, listing the positive impact we hope to see from our work. 
      As our work is theoretical and does not provide a method, no negative impact arises from it.
    \item[] Guidelines:
    \begin{itemize}
        \item The answer NA means that there is no societal impact of the work performed.
        \item If the authors answer NA or No, they should explain why their work has no societal impact or why the paper does not address societal impact.
        \item Examples of negative societal impacts include potential malicious or unintended uses (e.g., disinformation, generating fake profiles, surveillance), fairness considerations (e.g., deployment of technologies that could make decisions that unfairly impact specific groups), privacy considerations, and security considerations.
        \item The conference expects that many papers will be foundational research and not tied to particular applications, let alone deployments. However, if there is a direct path to any negative applications, the authors should point it out. For example, it is legitimate to point out that an improvement in the quality of generative models could be used to generate deepfakes for disinformation. On the other hand, it is not needed to point out that a generic algorithm for optimizing neural networks could enable people to train models that generate Deepfakes faster.
        \item The authors should consider possible harms that could arise when the technology is being used as intended and functioning correctly, harms that could arise when the technology is being used as intended but gives incorrect results, and harms following from (intentional or unintentional) misuse of the technology.
        \item If there are negative societal impacts, the authors could also discuss possible mitigation strategies (e.g., gated release of models, providing defenses in addition to attacks, mechanisms for monitoring misuse, mechanisms to monitor how a system learns from feedback over time, improving the efficiency and accessibility of ML).
    \end{itemize}
    
\item {\bf Safeguards}
    \item[] Question: Does the paper describe safeguards that have been put in place for responsible release of data or models that have a high risk for misuse (e.g., pretrained language models, image generators, or scraped datasets)?
    \item[] Answer: \answerNA{} % Replace by \answerYes{}, \answerNo{}, or \answerNA{}.
    \item[] Justification: The paper poses no such risks.
    \item[] Guidelines:
    \begin{itemize}
        \item The answer NA means that the paper poses no such risks.
        \item Released models that have a high risk for misuse or dual-use should be released with necessary safeguards to allow for controlled use of the model, for example by requiring that users adhere to usage guidelines or restrictions to access the model or implementing safety filters. 
        \item Datasets that have been scraped from the Internet could pose safety risks. The authors should describe how they avoided releasing unsafe images.
        \item We recognize that providing effective safeguards is challenging, and many papers do not require this, but we encourage authors to take this into account and make a best faith effort.
    \end{itemize}

\item {\bf Licenses for existing assets}
    \item[] Question: Are the creators or original owners of assets (e.g., code, data, models), used in the paper, properly credited and are the license and terms of use explicitly mentioned and properly respected?
    \item[] Answer: \answerNA{} % Replace by \answerYes{}, \answerNo{}, or \answerNA{}.
    \item[] Justification: The paper does not use existing assets.
    \item[] Guidelines:
    \begin{itemize}
        \item The answer NA means that the paper does not use existing assets.
        \item The authors should cite the original paper that produced the code package or dataset.
        \item The authors should state which version of the asset is used and, if possible, include a URL.
        \item The name of the license (e.g., CC-BY 4.0) should be included for each asset.
        \item For scraped data from a particular source (e.g., website), the copyright and terms of service of that source should be provided.
        \item If assets are released, the license, copyright information, and terms of use in the package should be provided. For popular datasets, \url{paperswithcode.com/datasets} has curated licenses for some datasets. Their licensing guide can help determine the license of a dataset.
        \item For existing datasets that are re-packaged, both the original license and the license of the derived asset (if it has changed) should be provided.
        \item If this information is not available online, the authors are encouraged to reach out to the asset's creators.
    \end{itemize}

\item {\bf New Assets}
    \item[] Question: Are new assets introduced in the paper well documented and is the documentation provided alongside the assets?
    \item[] Answer: \answerNA{} % Replace by \answerYes{}, \answerNo{}, or \answerNA{}.
    \item[] Justification: The paper does not release new assets.
    \item[] Guidelines:
    \begin{itemize}
        \item The answer NA means that the paper does not release new assets.
        \item Researchers should communicate the details of the dataset/code/model as part of their submissions via structured templates. This includes details about training, license, limitations, etc. 
        \item The paper should discuss whether and how consent was obtained from people whose asset is used.
        \item At submission time, remember to anonymize your assets (if applicable). You can either create an anonymized URL or include an anonymized zip file.
    \end{itemize}

\item {\bf Crowdsourcing and Research with Human Subjects}
    \item[] Question: For crowdsourcing experiments and research with human subjects, does the paper include the full text of instructions given to participants and screenshots, if applicable, as well as details about compensation (if any)? 
    \item[] Answer: \answerNA{} % Replace by \answerYes{}, \answerNo{}, or \answerNA{}.
    \item[] Justification: The paper does not involve crowdsourcing nor research with human subjects.
    \item[] Guidelines:
    \begin{itemize}
        \item The answer NA means that the paper does not involve crowdsourcing nor research with human subjects.
        \item Including this information in the supplemental material is fine, but if the main contribution of the paper involves human subjects, then as much detail as possible should be included in the main paper. 
        \item According to the NeurIPS Code of Ethics, workers involved in data collection, curation, or other labor should be paid at least the minimum wage in the country of the data collector. 
    \end{itemize}

\item {\bf Institutional Review Board (IRB) Approvals or Equivalent for Research with Human Subjects}
    \item[] Question: Does the paper describe potential risks incurred by study participants, whether such risks were disclosed to the subjects, and whether Institutional Review Board (IRB) approvals (or an equivalent approval/review based on the requirements of your country or institution) were obtained?
    \item[] Answer: \answerNA{} % Replace by \answerYes{}, \answerNo{}, or \answerNA{}.
    \item[] Justification: The paper does not involve crowdsourcing nor research with human subjects.
    \item[] Guidelines:
    \begin{itemize}
        \item The answer NA means that the paper does not involve crowdsourcing nor research with human subjects.
        \item Depending on the country in which research is conducted, IRB approval (or equivalent) may be required for any human subjects research. If you obtained IRB approval, you should clearly state this in the paper. 
        \item We recognize that the procedures for this may vary significantly between institutions and locations, and we expect authors to adhere to the NeurIPS Code of Ethics and the guidelines for their institution. 
        \item For initial submissions, do not include any information that would break anonymity (if applicable), such as the institution conducting the review.
    \end{itemize}

\end{enumerate}

\end{document}